\DeclareMathOperator{\sinc}{sinc}
\newcommand{\rvs}[1]{\textcolor{black}{#1}}
\newcommand\qed{\BlackBox}
\newcommand\qedhere{\BlackBox}
\begin{document}

\title{Revisiting RIP Guarantees\\ for Sketching Operators on Mixture Models}
\author{\name Ayoub Belhadji \email ayoub.belhadji@gmail.com\\
\name Rémi Gribonval \email remi.gribonval@inria.fr\\
\addr Univ Lyon, ENS de Lyon,  Inria, CNRS, UCBL, LIP UMR 5668, Lyon, France}

\editor{Florence d'Alché-Buc}

 \maketitle

\begin{abstract}%
In the context of sketching for compressive mixture modeling, we revisit existing proofs
 of the Restricted Isometry Property of sketching operators with respect to certain mixtures models.
 After examining the shortcomings of existing guarantees, we propose an alternative analysis that circumvents the need to assume importance sampling when drawing random Fourier features to build random sketching operators.
 Our analysis is based on new deterministic bounds on the restricted isometry constant that depend solely on the set of frequencies used to define the sketching operator; then we leverage these bounds to establish concentration inequalities for random sketching operators that lead to the desired RIP guarantees. 
 Our analysis also opens the door to theoretical guarantees for structured sketching with frequencies associated to fast random linear operators.
 \end{abstract}

 \begin{keywords}
  Sketching operators, Compressive learning,  Fourier features, Maximum mean discrepancy, Restricted isometry property
\end{keywords}

 % \normalem
% \tableofcontents

% \normalsize	

\section{Introduction}

Building up linear operators that preserve the distances between two sets is at heart of many problems in the field of inverse problems. The fetch for such linear operators gave birth to a rich literature at the intersection of signal processing and machine learning  \citep{Ach2000,BiMa01,Sar06,AiCh06,Can08,MaMu09,FoRa13}. Recently, a new family of inverse problems emerged in the context of compressive learning, also called sketched learning \citep{KeBoGrPe18,GrBlKeTr21,GrBlKeTr20}. These inverse problems are tailored to be used in the field of mixture modeling. In a nutshell, sketched learning is a paradigm aiming to scale up these learning tasks by conducting the learning task on a low dimensional vector, also called a \emph{sketch}, that contains a "\emph{gist}" of the initial dataset and that is suited for a specific learning task: \emph{sketching} is the procedure that outputs the sketch for a given dataset.
In practice, sketching boils down to embed a probability distribution $\pi$ typically on $\mathcal{X} = \mathbb{R}^{d}$ into $\mathbb{C}^{m}$ by considering a \emph{sketching operator} $\mathcal{A}$ such that\footnote{Integrability is treated informally in this introduction and will be more formally discussed in Section~\ref{sec:skop}.}
\begin{equation}\label{eq:DefSketchingOperator}
        \mathcal{A}\pi :=         \int_{\mathcal{X}}\Phi(x)\mathrm{d}\pi(x) \in  \mathbb{C}^m
\end{equation}
where $\Phi$ is a $\mathbb{C}^{m}$-valued function defined on $\mathcal{X}$ called the \emph{feature map}. As shown
 in \citep{GrBlKeTr21}, building up the linear operator $\mathcal{A}$ is indirectly constrained by the targeted learning task (e.g.: $k$-means clustering, or Gaussian Mixture Modeling). This constraint can be expressed using the Maximum Mean Discrepancy (MMD) \citep{GrBoRaScSm12} defined as follows: considering a positive definite kernel\footnote{In the rest of this paper when we write  \emph{kernel} we implicitly assume a positive definite kernel.}  \citep{BerTh11}  $\kappa: \mathcal{X} \times \mathcal{X} \rightarrow \mathbb{R}$, the MMD with respect to $\kappa$ between two probability distributions $\pi$ and $\pi'$ on $\mathcal{X}$ is defined using the norm $\|\cdot\|_{\kappa}$ on the reproducing kernel Hilbert space 
 (RKHS) $\mathcal{H}$ associated to $\kappa$ by
\begin{equation}\label{eq:MMD}
\|\pi - \pi'\|_{\kappa} := \sup\limits_{f \in \mathcal{H}, \|f\|_{\kappa} \leq 1} \bigg| \int_{\mathcal{X}} f(x) \mathrm{d}\pi(x)- \int_{\mathcal{X}} f(x) \mathrm{d}\pi'(x) \bigg|.
\end{equation}
Equiped with the MMD, the general theory of \citep{GrBlKeTr21} suggests to
look for sketching operators that satisfy\footnote{\rvs{The astute reader will notice that \eqref{eq:MMD_preservation} is analog to the usual RIP property in compressive sensing \cite{FoRa13}, see e.g. \citep{GrChKeScJaSc21}.}}
\begin{equation}\label{eq:MMD_preservation}
\forall \pi,\pi' \in \mathfrak{G}, \:\: (1-\delta)\|\pi-\pi'\|_{\kappa}^{2} \leq \|\mathcal{A}\pi - \mathcal{A}\pi'\|_{2}^2 \leq (1+\delta)\|\pi-\pi'\|_{\kappa}^2,
\end{equation}
where $\mathfrak{G}$ is a particular set of probability measures on $\mathcal{X}$ and $\delta \in [0,1)$. 

When the kernel $\kappa$ is shift-invariant, it is possible to build up a sketching operator satisfying~\eqref{eq:MMD_preservation} by considering random Fourier features \citep{ZhMe15,KeBoGrPe18,GrBlKeTr20}. Initially, random Fourier features were introduced to scale up kernel methods \citep{RaRe07}. This family of approximations is suitable for a shift-invariant kernel $\kappa$ for which Bochner's theorem \citep{Wen04,Rud17} holds:
\begin{equation}\label{eq:bochner_decomposition}
\forall x,y \in \mathcal{X} = \mathbb{R}^{d}, \:\: \kappa(x,y) = \int_{\mathbb{R}^d}  e^{\iota \omega (x-y)} \hat{\kappa}(\omega) \mathrm{d}\omega,
\end{equation}
with $\hat{\kappa}$ a non-negative function.
Although the framework of \citep{GrBlKeTr21} holds for more general kernels, the focus of this paper is indeed on shift-invariant kernels, for which we have $\kappa(x,x) = \kappa(0,0) = \int_{\mathbb{R}^d} \hat{\kappa}(\omega) \mathrm{d}\omega$ for every $x \in \mathcal{X} = \mathbb{R}^{d}$. Moreover,  we often simplify the analysis by assuming a normalized kernel, {\em i.e.}, $\kappa(0,0) = 1$. The results are easily extended to the non-normalized case. With the normalization assumption, the function $\hat{\kappa}$ satisfies $\int_{\mathbb{R}^d} \hat{\kappa}(\omega) \mathrm{d}\omega =1$ and can be interpreted as a probability density function on the \emph{frequency vector} $\omega \in \mathbb{R}^{d}$. Based on the identity~\eqref{eq:bochner_decomposition}, the \emph{random Fourier feature map}
is constructed as follows:  let $\omega_1, \dots, \omega_m \in \mathbb{R}^{d}$ to be  i.i.d. random variables with probability density $\hat{\kappa}(\omega)$ with respect to the Lebesgue measure on $\mathbb{R}^d$, and define the random feature map 
\begin{equation}
\label{eq:DefGenericRFMap}
\Phi(x) := \frac{1}{\sqrt{m}} (\phi_{\omega_i}(x))_{i \in [m]} \in \mathbb{C}^{m},
\end{equation}
where here $\phi_{\omega}(x) = e^{\imath \omega^{\top} x} \in \mathbb{C}$ for each $\omega \in \mathbb{R}^{d}$, and for any integer $n \in \mathbb{N}$ we denote $[n] := \{1,\ldots,n\}$.
With this design, the (random) empirical kernel $\kappa_{\Phi}(x,y) := \langle \Phi(x),\Phi(y)\rangle$ (where for any $u,v \in \mathbb{C}^{m}$ $\langle u,v\rangle := u^{\top}\overline{v}$, with $\overline{v}$ the complex conjugate of $v$) satisfies for every pair of vector $ x,y \in \mathbb{R}^d$
\begin{equation}\label{eq:ExpectedKernelProp}
\mathbb{E} \kappa_{\Phi}(x,y)  = \kappa(x,y).
\end{equation}
The study of the approximation $\kappa_{\Phi}(x,y) \approx \kappa(x,y)$ is a well established area of research. Indeed, since the publication of~\citep{RaRe07}, many works followed tackling various aspects of this class of approximations.
In particular, sharp uniform error bounds on compact sets for RFF were derived in~\citep{SuSc15,SrSz15}, and connexions with kernel-based quadrature were established in~\citep{Bac17}. Moreover, the quest to various designs of the frequencies were proposed: frequencies based on quasi-Monte Carlo sequences~\citep{YaSiAvMa14}, frequencies based on structured matrices~\citep{LeSaSm13,ChSi16,ChRoSaSiTuWe18}. We refer the reader to~\citep{LiHuChSu21} for an exhaustive review on this topic.

As it was shown in \citep{SrGrFuScLa10}, the identity~\eqref{eq:ExpectedKernelProp} somehow extends to pairs of probability distributions $\pi,\pi'$ on $\mathcal{X}$
\begin{equation*}\label{eq:expected_MMD_MMD}
\mathbb{E} \| \pi-\pi' \|_{\kappa_\Phi}^2 = \| \pi-\pi' \|_{\kappa}^2.
\end{equation*}
This formula was crucial in the study of \emph{characteristic} kernels carried in \citep{SrGrFuScLa10}. The study of the fluctuations of $\| \pi-\pi' \|_{\kappa_\Phi}^2$ around its expected value $\| \pi-\pi' \|_{\kappa}^2$ was carried in~\citep{ZhMe15} and~\citep{SuSc15}. Nevertheless, these results do not imply~\eqref{eq:MMD_preservation}: the established guarantees  have an additive form $|\| \pi-\pi' \|_{\kappa_\Phi}^2 - \| \pi-\pi' \|_{\kappa}^2| \leq t$ for a given pair $(\pi,\pi')$ of probability distributions, while~\eqref{eq:MMD_preservation} have a multiplicative form and holds uniformly on $\mathfrak{G} \times \mathfrak{G}$. 
A study of conditions under which~\eqref{eq:MMD_preservation} holds was undertaken in \citep{GrBlKeTr20} with a 
focus on the case where $\mathfrak{G} = \mathfrak{G}_k$ is a set of mixtures with $k$ components that depend on  parameters that belong to $\mathbb{R}^d$. In this context, it was shown that \eqref{eq:MMD_preservation} holds with high probability provided that:
\begin{enumerate}
\item the sketch dimension, $m$, is large enough: a sufficient sketch size was proved to satisfy $m = \mathcal{O}(k^2d)$ (up to logarithmic factors); and
\item a variant of the RFF \eqref{eq:DefGenericRFMap} is used, with an appropriate \emph{importance sampling scheme}.
\end{enumerate}
The dependency of this provably good sketch size in $k$ and $d$ does not match the empirical simulations carried out (\emph{without} importance sampling) in \citep{KeTrTrGr17}, which suggest that \eqref{eq:MMD_preservation} can hold with high probability for some $m = \mathcal{O}(kd)$ (again, possibly up to logarithmic factors). 

To bridge the gap between theory and practice, one would ideally like to both get rid of the importance sampling assumption, which does not seem needed in practice, and to achieve guarantees for sketch sizes with a better dependency in $k$. 
In this work, we revisit the analysis of \citep{GrBlKeTr20} with two main contributions: 
\begin{itemize}
\item we prove that $m = \mathcal{O}(k^{2}d)$ remains a provably good size \emph{even without importance sampling}; 
\item we explain why the high level structure of the proof technique of  \citep{GrBlKeTr20}, as well as the structure of our new approach, prevents them from achieving better dependencies in $k$ of provably good sketch sizes. 
\end{itemize}
As we shall see later, our analysis is based on tools inspired from the literature of sparse recovery with incoherent dictionaries~\citep[see][Chapter 5]{FoRa13}. This quadratic dependency on number $k$ of mixture components, which plays the role of a  sparsity level, is thus not surprising. Whether the ``right'' dependency in $k$ of provably good sketch sizes remain an open question.

These contributions are obtained by introducing several technical ingredients. First, we provide deterministic sufficient conditions on the sketching operator $\mathcal{A}$ so that the Restricted Isometry Property (RIP)~\eqref{eq:MMD_preservation} holds for mixture models using weighted Fourier features under some conditions. This is achieved thanks to a parametrization of the so-called \emph{normalized secant set} of the set $\mathfrak{G}_k$ with respect to the MMD. This parametrization uses the notion of \emph{dipoles} defined and used in \citep{GrBlKeTr20}. We extend the use of these tools and show how the proof of~\eqref{eq:MMD_preservation} boils down to the study of suprema of functions defined on $\mathbb{R}^d$. The benefit of this approach is to reduce the study from the normalized secant set, which is a set of signed measures with a geometry that is hard to grasp, to the much easier study of a few explicit functions defined on subsets of $\mathbb{R}^d$. 
Second, we leverage these deterministic sufficient conditions to establish the RIP when the sketching operator is random. This result is instantiated to carry out the proof of~\eqref{eq:MMD_preservation} in the case of a sketching operator built using i.i.d. frequencies. 
To constrast our result, which \emph{does not} 
require the use of importance sampling with the analysis given in \citep{GrBlKeTr20}, we establish that the high level structure of the latter \emph{requires importance sampling}. The technique we propose is thus both more general and much closer to practice. 
Moreover, we establish lower bounds showing the impossibility to achieve sharper estimates of provably good sketch sizes sufficient for a sketching operator based on Fourier feature maps: these impossibility results hold both for our analysis and existing one \citep{GrBlKeTr20}. 
Finally, we discuss the few steps that remain open to exploit our analysis to prove the RIP even when the frequencies are not necessarily independent, e.g. in the context of structured random Fourier features \citep{LeSaSm13,ChSi16,ChRoSaSiTuWe18}.

This article is structured as follows. In Section~\ref{sec:main_tools} we recall some notions and results from \citep{GrBlKeTr21,GrBlKeTr20} that are relevant to our study, as well as their main limitations which motivate this work.
In Section~\ref{sec:main_results} we present our results. We conclude and discuss some perspectives in Section~\ref{sec:conclusion}.

 \section{Main tools}\label{sec:main_tools}

We recall some results and definitions relevant to position our contributions.

\subsection{Sketching operator}\label{sec:skop}
The (random) feature maps that we will consider will bear special relations with the considered kernel $\kappa$, this will soon be discussed. For the moment we observe that
\begin{itemize}
\item  for \emph{any} bounded vector-valued function $\Phi: \mathcal{X} \to \mathbb{C}^{m}$, one can define a {\em sketching operator} 
\begin{equation*}\label{eq:DefSketchOp}
\mathcal{A}:  \left\{
    \begin{array}{ll}
       \mathcal{P}(\mathcal{X}) &\rightarrow \mathbb{C}^m  \\
        \pi & \mapsto \int_{\mathcal{X}}\Phi(x)\mathrm{d}\pi(x),
    \end{array}
\right.
\end{equation*}
with $\mathcal{P}(\mathcal{X})$ the set of probability distributions on $\mathcal{X}$. Jordan decomposition \citep{Hal50} allows to extend $\mathcal{A}$ to the set $\mathcal{M}(\mathcal{X})$ of finite signed measures~\citep[see][Appendix A.2.]{GrBlKeTr21}. 
\item for \emph{any} bounded kernel $\kappa$, one can define for every probability distributions $\pi,\pi' \in \mathcal{P}(\mathcal{X})$
\begin{equation}\label{eq:MMDInnerProd}
\langle \pi,\pi'\rangle_{\kappa} := \mathbb{E}_{X \sim \pi} \mathbb{E}_{Y \sim \pi'} \kappa(X,Y).
\end{equation}
 and extend this ``inner product'', as well as the definition~\eqref{eq:MMD} of the MMD,  to all finite signed measures in $\mathcal{M}(\mathcal{X})$. These can be manipulated as usual inner products and norms\footnote{Note that $\mathcal{M}(\mathcal{X})$ equipped with $\|\cdot\|_{\kappa}$ is not necessarily a Hilbert space, since  $\mathcal{M}(\mathcal{X})$ is not necessarily complete with respect to $ \|\cdot\|_{\kappa}$~\citep[see][Theorem 3.1 for details.]{StZi21}} thanks to a polarization identity: $\langle \nu,\nu'\rangle_{\kappa} = \frac{1}{4}\left(\|\nu+\nu'\|_{\kappa}^{2}-\|\nu-\nu'\|_{\kappa}^{2}\right)$.
\end{itemize}
We will write $\langle f,\pi\rangle := \mathbb{E}_{X \sim \pi}f(X)$, with implicit integrability assumption of function $f$ with respect to the probability distribution $\pi$. This is extended by a Jordan decomposition to $\langle f,\nu \rangle$ with $\nu \in \mathcal{M}(\mathcal{X})$. It should always be clear whether the bracket notation $\langle \cdot,\cdot\rangle$ stands for this shorthand or for the classical Hermitian inner product between vectors in $\mathbb{C}^{m}$.
\subsection{Separated mixture model, normalized secant set, and dipoles}
We focus our analysis on mixture modeling with a location-based family \citep[Definition 6.1]{GrBlKeTr20}: given a base probability distribution $\pi_{0}$ on $\mathbb{R}^{d}$ (for example $\pi_{0}$ may be the Dirac at zero, or a centered Gaussian) and a family $\Theta \subseteq \mathbb{R}^{d}$
of translation parameters, we consider $(\pi_\theta)_{\theta \in \Theta}$ where $\pi_{\theta}$ is the distribution of $X+\theta$ where $X \sim \pi_{0}$ and  observe that the map $\mathcal{I}: \theta \mapsto \mathcal{I}(\theta) = \pi_{\theta}$ is injective. 
Given a translation invariant metric $\rho$ on $\Theta \subseteq \mathbb{R}^{d}$ (for example, $\rho(\theta,\theta')$ may be the Euclidean distance between $\theta$ and $\theta'$, or $\|\theta'-\theta\|$ with any norm $\|\cdot\|$ on $\mathbb{R}^{d}$), we 
denote $\mathcal{T}:= (\Theta, \rho, \mathcal{I})$ and consider the set of {\em $2$-separated $k$-mixtures}~\citep{GrBlKeTr20} defined as
\begin{equation}\label{eq:DefMixtureModel}
\mathfrak{G}_{k} = \Bigg\{ \sum\limits_{i =1}^k u_i \pi_{\theta_i}; u_i \geq 0, \sum\limits_{i \in [k]} u_i =1, \theta_i \in \Theta, \forall i \neq i' \in [k], \rho(\theta_i, \theta_{i'}) \geq 2 \Bigg\}.
\end{equation}
More general separated mixture models of the form~\eqref{eq:DefMixtureModel} can be defined \citep[Section 5.2]{GrBlKeTr20} with $\mathcal{T}:= (\Theta, \rho, \mathcal{I})$ for any metric space $(\Theta,\rho)$ and injective map $\mathcal{I}:  \Theta \to \mathcal{P}(\mathcal{X})$, in which case we also denote $\pi_{\theta} := \mathcal{I}(\theta)$.
The study of \eqref{eq:MMD_preservation}  for general separated mixture models motivates the introduction of the normalized secant set $\mathcal{S}_{k}$ defined as follows
\begin{equation*}
\mathcal{S}_{k}
 :=  \Bigg\{ \frac{\nu-\nu'}{\| \nu-\nu'\|_{\kappa}}; \nu,\nu' \in \mathfrak{G}_{k}, \| \nu-\nu'\|_{\kappa} >0 \Bigg\}.
\end{equation*}
Indeed, \eqref{eq:MMD_preservation} is equivalent to $\sup_{\nu \in 
\mathcal{S}_{k}
} \big|\|\mathcal{A}\nu\|_{2}^2-1\big| \leq \delta$. In the following,
for every set $\mathfrak{T} \subset \{\frac{\nu}{\|\nu\|_{\kappa}}: \nu \in \mathcal{M}(\mathcal{X}),\ \|\nu\|_{\kappa}>0\}$ of normalized finite signed measures
and every sketching operator $\mathcal{A}$ we denote 
 \begin{equation}\label{eq:DefDelta}
 \delta(\mathfrak{T}|\mathcal{A}) := \sup_{\nu \in \mathfrak{T}} \big|\|\mathcal{A}\nu\|_{2}^2-1\big|.
 \end{equation}
  As we shall see, the elements of the normalized secant set may be approximated as a mixture of elementary measures called normalized dipoles.

\begin{definition}[Dipoles { \citep[Definitions 5.3, 5.6]{GrBlKeTr20} }]\label{def:Dipoles}
A finite signed measure $\iota \in \mathcal{M}(\mathcal{X})$ is a \emph{dipole} w.r.t. $\mathcal{T}= (\Theta, \rho, \mathcal{I})$ if 
$\iota = \alpha_{1} \pi_{\theta_1} - \alpha_{2} \pi_{\theta_2}$,
where $\theta_1, \theta_2 \in \Theta$, $\rho(\theta_1,\theta_2) \leq 1$ and $\alpha_{1}, \alpha_2 \geq 0$.
Two dipoles $\iota, \iota'$, are \emph{$1$-separated} if 
$\iota = \alpha_1\pi_{\theta_1} - \alpha_2 \pi_{\theta_2}, \:\: \iota' = \alpha_1'\pi_{\theta_1'} - \alpha_2' \pi_{\theta_2'}$, 
where $\rho(\theta_{i}, \theta_{j}') \geq 1$ for $i,j \in \{1,2\}$. The set of \emph{normalized dipoles} (with respect to kernel $\kappa$) is denoted
\begin{equation*}\label{eq:DefNormalizedDipoleSet}
\mathfrak{D} = \mathfrak{D}(\mathcal{T}) := \Big\{ \iota = \tilde{\iota}/\|\tilde{\iota}\|_{\kappa}, \tilde{\iota} \text{ is a dipole such that}\ \|\tilde{\iota}\|_{\kappa}>0  \Big\},
\end{equation*} 
and $\mathfrak{D}_{\neq}^{2} \subseteq \mathfrak{D} \times \mathfrak{D}$ denotes the set of pairs of 1-separated normalized dipoles.
\end{definition}
Dipoles offer a convenient parametrization of the (un-normalized) secant set.
\begin{lemma}[{\citep[Lemma 5.4]{GrBlKeTr20}}]
Let $\pi, \pi' \in \mathfrak{G}_{k}$. There exist $\ell \leq 2k$ nonzero dipoles $(\iota_{l})_{l \in [\ell]}$ that are pairwise $1$-separated and satisfy 
\begin{equation}\label{eq:dipole_decomposition}
\pi - \pi' = \sum\limits_{i=1}^{\ell} \iota_i.
\end{equation}
\end{lemma}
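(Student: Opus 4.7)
The plan is to express both mixtures in their canonical forms $\pi = \sum_{i=1}^{k} u_i \pi_{\theta_i}$ and $\pi' = \sum_{j=1}^{k} u_j' \pi_{\theta_j'}$ and to group the atoms of $\pi-\pi'$ into dipoles by matching up centers of $\pi$ with nearby centers of $\pi'$. More precisely, I would introduce the bipartite graph $G$ whose two sides are indexed by $\{\theta_i\}_{i\in[k]}$ and $\{\theta_j'\}_{j\in[k]}$, with an edge $(i,j)$ whenever $\rho(\theta_i,\theta_j') \leq 1$, and then fix a (set-inclusion) \emph{maximal} matching $M$ in $G$.

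From $M$ I would build the dipoles as follows: each matched pair $(i,j)\in M$ yields $\iota_{i,j} := u_i \pi_{\theta_i} - u_j' \pi_{\theta_j'}$ (a dipole since $\rho(\theta_i,\theta_j')\leq 1$); each unmatched center $\theta_i$ on the $\pi$ side yields a singleton $u_i \pi_{\theta_i}$, and each unmatched $\theta_j'$ yields $-u_j' \pi_{\theta_j'}$, both fitting the dipole definition by taking one of the two weights to be $0$ (with $\theta_2=\theta_1$ in the representation). Zero terms are discarded. The identity $\pi-\pi' = \sum_l \iota_l$ is immediate by linearity, and the count $|M| + (k-|M|) + (k-|M|) = 2k - |M| \leq 2k$ gives the bound on $\ell$.

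The bulk of the work is verifying pairwise $1$-separation, which I would do by a case analysis on the types of the two dipoles being compared. Centers coming from the same mixture are automatically at distance $\geq 2 \geq 1$ by the $2$-separated mixture assumption. For two matched dipoles with centers $(\theta_i,\theta_j')$ and $(\theta_{i'},\theta_{j'}')$, if one assumed $\rho(\theta_i,\theta_{j'}')<1$ then the triangle inequality together with $\rho(\theta_{i'},\theta_{j'}')\leq 1$ would give $\rho(\theta_i,\theta_{i'})<2$, contradicting $2$-separation; the analogous case with a matched dipole and a singleton is handled by the same argument. The remaining case, an unmatched $\theta_i$ and an unmatched $\theta_j'$ at distance $<1$, would provide an edge $(i,j)$ in $G$ with both endpoints free, contradicting the \emph{maximality} of $M$.

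The main subtlety I expect is the interaction between the non-strict inequalities in the definitions of $2$-separation ($\rho\geq 2$) and of a dipole ($\rho\leq 1$): the triangle inequality arguments above yield the required strict contradictions only because one inequality is assumed strict ($<1$), so the chain collapses to $<2$. This is also why relying on maximality of $M$ (rather than a specific greedy rule) is convenient in the last case, and why for singleton dipoles one must use the degenerate representation with $\alpha_2=0$ and $\theta_2=\theta_1$ so that the $1$-separation condition never involves a spurious second center.
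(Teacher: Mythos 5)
Your argument is correct. A small caveat on the framing: the paper you are reading does not contain its own proof of this lemma---it is cited from \citep[Lemma~5.4]{GrBlKeTr20}---so there is no in-text argument to compare against line by line. Judged on its own merits, the proposal is complete and sound.

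The maximal-matching device is a clean way to organize what is otherwise a slightly delicate grouping. Note in particular that the bipartite graph $G$ you define need not itself be a matching: with non-strict inequalities $\rho(\theta_i,\theta_{i'}) \geq 2$ and $\rho(\theta_i,\theta_j') \leq 1$, a single $\theta_i$ can simultaneously be at distance exactly $1$ from two distinct $\theta_j'$, $\theta_{j''}'$ that are at distance exactly $2$ from each other, so $G$ can have degree-$2$ vertices. Your choice to pass to a maximal matching inside $G$ (rather than, say, pairing every close pair) is exactly what sidesteps this. The case analysis for $1$-separation is then exhaustive and each triangle-inequality step uses strictness in the right place: you assume the separation fails strictly ($<1$), chain it with the dipole constraint ($\leq 1$), and collapse to $<2$, contradicting $\geq 2$; the residual boundary case $\rho = 1$ is not a violation and needs no argument. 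The only case not handled by a triangle chain---two singletons on opposite sides---is precisely the one covered by maximality of $M$. The choice of degenerate representation $\alpha_2 = 0$, $\theta_2 = \theta_1$ for singleton dipoles is the right one, since the $1$-separation condition quantifies over the explicit representation and a spurious second center would break the argument. The count $2k - |M| \leq 2k$ and discarding of zero terms complete the proof.
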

In other words, every element of the (unnormalized) secant set $\pi-\pi'$ is the sum of at most $2k$ dipoles. The decomposition \eqref{eq:dipole_decomposition} is convenient when calculating the squared MMD norm $\|\pi-\pi'\|_{\kappa} ^2 = \sum_{i=1}^{\ell} \|\iota_{i}\|_{\kappa}^2 + \sum_{i\neq i'} \langle \iota_{i}, \iota_{i'} \rangle_{\kappa}$ . In particular, under some additional assumptions on the kernel that we now discuss, the cross scalar products $\langle \iota_{i}, \iota_{i'}\rangle_{\kappa}$ are close to $0$ so that $\|\pi-\pi'\|_{\kappa}^2$ can be approximated by $\sum_{i=1}^{\ell} \|\iota_{i}\|_{\kappa}^2$.

\subsection{Kernel coherence}\label{sec:kernel_coherence}
To conduct our analysis, we require further assumptions on the compatibility of the positive definite
kernel $\kappa$ with the parameterized family of distributions $\mathcal{T}:= (\Theta, \rho, \mathcal{I})$. 
\begin{definition}\label{def:kernel_assumptions}
Given a family $\mathcal{T}:= (\Theta, \rho, \mathcal{I})$, 
a kernel $\kappa$ is said to be
\begin{enumerate}
\item \emph{non-degenerate}  with respect to $\mathcal{T}$ if $\|\pi_\theta\|_{\kappa} >0 $ for every $\theta \in \Theta$. This allows to define the \emph{$\mathcal{T}$-normalized kernel} $\bar{\kappa}$ as 
\begin{equation}\label{eq:DefNormalizedKernel}
\forall \theta, \theta' \in \Theta, \:\: \bar{\kappa}(\theta, \theta') := \frac{\langle\pi_\theta, \pi_{\theta'}\rangle_{\kappa}}{\|\pi_{\theta}\|_{\kappa}\|\pi_{\theta'}\|_{\kappa}}.
\end{equation}
 \item \emph{locally characteristic} with respect to $\mathcal{T}$ \citep[Definition 5.5]{GrBlKeTr20} if it is non-degenerate with respect to $\mathcal{T}$ and $ |\bar{\kappa}(\theta, \theta')| < 1$ for every $\theta, \theta' \in \Theta$ such that $0< \rho(\theta,\theta') \leq 1$. 
 This ensures that $\| \pi_{\theta}-\alpha\pi_{\theta'}\|_{\kappa}>0$ for every $\alpha \in \mathbb{R}$ whenever $0<\rho(\theta,\theta')\leq 1$.

\end{enumerate}
\end{definition}

\begin{definition}[Coherence { \citep[Definition 5.7]{GrBlKeTr20}}]\label{eq:DefCoherence}
Given an integer $\ell \geq 1$ the \emph{$\ell$-coherence of $\kappa$ with respect to $\mathcal{T}= (\Theta, \rho, \mathcal{I})$}, denoted $c_{\ell} = c_{\ell}(\kappa)$ is the smallest $c \geq 0$ such that, for any pairwise $1$-separated dipoles $(\iota_{i})_{i \in [\ell]}$ such that $\sum_{i = 1}^{\ell} \|\iota_{i}\|_{\kappa}^2 >0$, we have
\begin{equation}\label{eq:almost_pythagorian_formal}
1-c \leq \frac{\|\sum_{i = 1}^{\ell} \iota_i\|_{\kappa}^2}{\sum_{i = 1}^{\ell}\| \iota_i\|_{\kappa}^2} \leq 1+c.
\end{equation}
The kernel $\kappa$ has \emph{mutual coherence $\mu$ with respect to $\mathcal{T}$} if it is locally characteristic
wrt $\mathcal{T}$ and
\begin{equation}\label{eq:DefMutualCoherenceKernel}
\mu = \mu(\mathfrak{D}_{\neq}^{2}|\kappa) := \sup\limits_{(\iota,\iota') \in \mathfrak{D}_{\neq}^{2}}|\langle \iota, \iota' \rangle_{\kappa}|
\end{equation}
\end{definition}
By analogy with the kernel coherence we define the coherence of a sketching operator:
\begin{definition}[Operator Coherence]\label{eq:DefCoherenceSkOp}
For any sketching operator $\mathcal{A}$ and any set $\mathfrak{T} \subseteq \mathfrak{D}_{\neq}^{2}$ 
\begin{equation}\label{eq:DefGamma}
\mu(\mathfrak{T}|\mathcal{A}) := \sup\limits_{(\iota,\iota') \in \mathfrak{T}}|\mathfrak{Re} \langle \mathcal{A}\iota, \mathcal{A}\iota' \rangle|.
\end{equation}
\end{definition}
As shown in  \citep[Lemma 5.8]{GrBlKeTr20}, if the kernel $\kappa$ has mutual coherence $\mu$ with respect to $\mathcal{T}$ then $\kappa$ has $\ell$-coherence bounded by $c' := \mu(\ell -1)$. This is reminiscent (and indeed inspired by) classical results on incoherent dictionaries in sparse recovery~\citep[see][Chapter 5]{FoRa13}. In particular, if $\kappa$ has mutual coherence bounded by $\mu < 1/(2k-1)$ then the quasi-Pythagorean property~\eqref{eq:almost_pythagorian_formal} holds for $\ell = 2k$ with 
\begin{equation}\label{eq:LCoherenceFromMutualCoherence}
c = c_{2k} \leq (2k -1)\mu < 1.
\end{equation}
This implies that the normalized secant set $\mathcal{S}_{k}$ is made of ``nice'' mixtures of separated dipoles.

\begin{proposition}[{\citep[Lemma B.1]{GrBlKeTr20}}]\label{prop:embedding_normalized_secant_set}
Let $k \geq 1$ be an integer, and denote by $c = c_{2k}$ the $2k$-coherence of the kernel $\kappa$ with respect to $\mathcal{T}$. Under the assumption that $c <1$, we have
\begin{equation*}
\mathcal{S}_{k}
\subset \Bigg\{ \sum\limits_{i =1}^{2k} \alpha_{i} \iota_{i} :  (1+c)^{-1} \leq \sum\limits_{i = 1}^{2k} \alpha_{i}^2 \leq (1-c)^{-1}, \:\: \alpha_{i} \geq 0,\:\: (\iota_{i},\iota_{j}) \in \mathfrak{D}^{2}_{\neq}, \:\: 1 \leq i \neq j \leq 2k \Bigg\}.
\end{equation*}

\end{proposition}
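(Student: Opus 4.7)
The plan is to combine the dipole decomposition lemma with the defining inequality of the $2k$-coherence. Concretely, I start from an arbitrary $\nu \in \mathcal{S}_{k}$, written as $\nu = (\pi - \pi')/\|\pi-\pi'\|_{\kappa}$ with $\pi, \pi' \in \mathfrak{G}_{k}$ and $\|\pi-\pi'\|_{\kappa} > 0$. The dipole decomposition lemma (equation~\eqref{eq:dipole_decomposition}) supplies $\ell \leq 2k$ nonzero, pairwise $1$-separated dipoles $(\tilde{\iota}_{i})_{i \in [\ell]}$ with $\pi - \pi' = \sum_{i=1}^{\ell}\tilde{\iota}_{i}$. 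This already exhibits $\nu$ as a linear combination of the correct elementary objects; what remains is to normalize and to control the sum of squared coefficients.

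Next I would renormalize: set $\iota_{i} := \tilde{\iota}_{i}/\|\tilde{\iota}_{i}\|_{\kappa} \in \mathfrak{D}$ and $\alpha_{i} := \|\tilde{\iota}_{i}\|_{\kappa}/\|\pi-\pi'\|_{\kappa} \geq 0$, so that $\nu = \sum_{i=1}^{\ell} \alpha_{i}\iota_{i}$ and, since the $\tilde{\iota}_{i}$ are pairwise $1$-separated, $(\iota_{i},\iota_{j}) \in \mathfrak{D}^{2}_{\neq}$ for every $i \neq j$. Invoking the quasi-Pythagorean inequality~\eqref{eq:almost_pythagorian_formal} applied to the family $(\tilde{\iota}_{i})_{i \in [\ell]}$ yields
\begin{equation*}
(1-c)\sum_{i=1}^{\ell}\|\tilde{\iota}_{i}\|_{\kappa}^{2} \;\leq\; \Big\|\sum_{i=1}^{\ell}\tilde{\iota}_{i}\Big\|_{\kappa}^{2} \;=\; \|\pi-\pi'\|_{\kappa}^{2} \;\leq\; (1+c)\sum_{i=1}^{\ell}\|\tilde{\iota}_{i}\|_{\kappa}^{2},
\end{equation*}
where I use the monotonicity $c_{\ell} \leq c_{2k} = c$ of the $\ell$-coherence. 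Dividing through by $\|\pi-\pi'\|_{\kappa}^{2}$ and using that $c < 1$ makes the upper bound finite, I obtain $(1+c)^{-1} \leq \sum_{i=1}^{\ell}\alpha_{i}^{2} \leq (1-c)^{-1}$, which is the sought two-sided bound on the squared coefficients.

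Finally, if $\ell < 2k$ I pad the sum up to $2k$ terms by appending $2k - \ell$ arbitrary normalized dipoles that are $1$-separated from all the previous ones, each assigned the coefficient $\alpha_{i} = 0$; this neither changes the value $\sum_{i=1}^{2k}\alpha_{i}\iota_{i} = \nu$ nor the sum $\sum \alpha_{i}^{2}$. The only delicate points I foresee are (i) the monotonicity $c_{\ell} \leq c_{2k}$ and (ii) the existence of $2k-\ell$ further $1$-separated dipole positions in $\Theta$ needed for the padding. Both issues are controlled as soon as the parameter space is sufficiently rich, as is the case for the typical setting $\Theta \subseteq \mathbb{R}^{d}$ endowed with a translation-invariant metric that underlies the definition of $\mathfrak{G}_{k}$ in~\eqref{eq:DefMixtureModel}; alternatively, both can be sidestepped by reading the containing set as allowing at most $2k$ nontrivial terms, which is exactly what the construction actually produces.
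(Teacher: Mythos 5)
Your proposal is correct and follows what is, in all likelihood, exactly the route of the cited GrBlKeTr20 Lemma B.1 (the paper does not reproduce that proof): decompose $\pi-\pi'$ into at most $2k$ pairwise $1$-separated dipoles, normalize to write $\nu=\sum_i\alpha_i\iota_i$ with $\alpha_i = \|\tilde\iota_i\|_\kappa/\|\pi-\pi'\|_\kappa$, and squeeze $\sum\alpha_i^2$ between $(1+c)^{-1}$ and $(1-c)^{-1}$ via the quasi-Pythagorean inequality~\eqref{eq:almost_pythagorian_formal}, padding to exactly $2k$ terms.

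Two refinements on the delicate points you rightly flag. First, the monotonicity $c_\ell\leq c_{2k}$ is not recorded anywhere in the paper, and its justification is precisely the padding device you defer to the end; a cleaner order is to pad the \emph{unnormalized} decomposition to $2k$ terms up front --- append $2k-\ell$ zero dipoles placed at locations $1$-separated from the rest --- and apply the $2k$-coherence inequality directly to that family. Since zero dipoles contribute neither to $\|\sum_i\tilde\iota_i\|_\kappa^2$ nor to $\sum_i\|\tilde\iota_i\|_\kappa^2$, the two-sided bound is unchanged, and no unstated monotonicity is needed; the zero-coefficient padding of the \emph{normalized} sum with elements of $\mathfrak{D}$ is then a separate, last step, as you describe. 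Second, the normalization $\iota_i := \tilde\iota_i/\|\tilde\iota_i\|_\kappa$ silently uses $\|\tilde\iota_i\|_\kappa>0$. The hypothesis $c_{2k}<1$ alone does not force this: a nonzero dipole with $\|\tilde\iota\|_\kappa=0$ is $\kappa$-orthogonal to everything (by Cauchy--Schwarz), so the coherence ratio is unaffected and the hypothesis is not violated. The positivity of dipole norms is instead guaranteed by the locally characteristic assumption of Definition~\ref{def:kernel_assumptions}, which is a standing assumption inherited from GrBlKeTr20 and should be stated explicitly when reproducing the argument.
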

In other words, the normalized secant set $\mathcal{S}_{k}$ is made of mixtures of $2k$ normalized dipoles with weights of controlled $\ell^{2}$ norm. This decomposition comes in handy when looking for an upper bound of high order moments  
as we will study soon for measure concentration arguments.

\subsection{Location-based families and shift-invariant kernels}

In most of this paper we focus on shift-invariant kernels, generally assumed to be normalized ($\kappa(0,0)=1$). As often we use the abuse of notation $\kappa(x,y) = \kappa(x-y)$. When the family $(\pi_{\theta})_{\theta \in \Theta}$ used to define the mixture model~\eqref{eq:DefMixtureModel} is location-based, we have   \citep[Proposition 6.2]{GrBlKeTr20}
\begin{equation}\label{eq:ShiftInvariantKernelNorm}
\forall \theta \in \Theta,\quad \|\pi_{\theta}\|_{\kappa} = \|\pi_{0}\|_{\kappa},
\end{equation}
hence $\kappa$ is non-degenerate with respect to $\mathcal{T}$ if, and only if, $\|\pi_0\|_{\kappa} >0 $. Moreover,  the $\mathcal{T}$-normalized kernel $\bar{\kappa}$ itself is also shift-invariant. We also abuse notations and denote
$\bar{\kappa}(\theta-\theta') = \bar{\kappa}(\theta, \theta') = \frac{1}{\|\pi_{0}\|_{\kappa}^{2}} \kappa(\pi_\theta, \pi_{\theta'})$.
The low-coherence property is satisfied by classical shift-invariant kernels and location-based families $\mathcal{T}$ \citep{GrBlKeTr20}. 

\begin{example}{(\textbf{Mixtures of Diracs and the Gaussian kernel {\citep[Definition 6.9]{GrBlKeTr20}}})} \label{ex:DiracMixture}
In this case, $\pi_{0}$ is the Dirac distribution at $0$, $\rho = \|\cdot\|_{2}/\epsilon$ where $\epsilon >0$, and $\kappa$ is the Gaussian kernel:
\begin{equation*}\label{eq:def_kappa_diracs}
\kappa(x,x') := \mathrm{exp}\Big(- \frac{\|x-x'\|_{2}^2}{2s^2}\Big),
\end{equation*}
with $s>0$ a scale parameter.  The normalized kernel writes \citep[Section 6.3.1]{GrBlKeTr20}
\begin{equation*}\label{eq:def_bar_kappa_diracs}
\bar{\kappa}(\theta-\theta') = \exp\Big(-\frac{  \|\theta - \theta'\|_{2}^2}{2s^2}\Big).
\end{equation*}
By \citep[Theorem 5.16, Lemma 6.10, Theorem 6.11]{GrBlKeTr20}, $\kappa$ is locally characteristic and its mutual coherence with respect to $\mathcal{T}$ is smaller than or equal to $12/(16(2k-1))$ as soon as
\begin{equation*}\label{eq:SeparationConditionDirac}
\epsilon \geq s/s_{k}^{\star}\quad \text{with}\ s_{k}^{*}:= (4\sqrt{\log(ek)})^{-1}.
\end{equation*}
\end{example}

\begin{example}{(\textbf{Mixtures of Gaussians and the Gaussian kernel  {\citep[Definition 6.9]{GrBlKeTr20}}})}\label{ex:KernelExamplesIncoherent} In this case, 
we consider the Mahalanobis norm $\|\cdot\|_{\bm{\Sigma}}$, defined by $\|x\|_{\bm{\Sigma}} := \sqrt{\langle x, \bm{\Sigma}^{-1}x \rangle} = \|\bm{\Sigma}^{-1/2}x\|_{2}$ for $x \in \mathbb{R}^d$,  where $\bm{\Sigma} \in \mathbb{R}^d$ is a positive definite matrix, $\rho = \|\cdot\|_{\bm{\Sigma}}/\epsilon$ where $\epsilon >0$, and $\pi_{0} = \mathcal{N}(0,\bm{\Sigma})$.  Finally $\kappa$ is the Gaussian kernel:
\begin{equation*}\label{eq:def_kappa_gaussians}
\kappa(x,x') := \mathrm{exp}\Big(-\frac{\|x-x'\|_{\bm{\Sigma}}^2}{2s^2}\Big),
\end{equation*}
with $s>0$ a scale parameter. The normalized kernel writes \citep[Section 6.3.1]{GrBlKeTr20}
\begin{equation*}\label{eq:def_bar_kappa_gaussians}
\bar{\kappa}(\theta-\theta') = \exp\Big(-\frac{  \|\theta - \theta'\|_{\bm{\Sigma}}^2}{2(2+s^2)}\Big).
\end{equation*}
By \citep[Theorem 5.16, Lemma 6.10, Theorem 6.11]{GrBlKeTr20}, $\kappa$ is locally characteristic and its mutual coherence with respect to $\mathcal{T}$ is smaller than or equal to $12/(16(2k-1))$ as soon as 
\begin{equation*}\label{eq:SeparationConditionGaussian}
\epsilon \geq \frac{\sqrt{s^{2}+2}}{s_{k}^{\star}}\quad \text{with}\ s_{k}^{*}:= (4\sqrt{\log(ek)})^{-1}.
\end{equation*}

\end{example}

\subsection{Random Fourier features}\label{sec:RFF}

The analysis in~\citep{KeBoGrPe18,GrBlKeTr20} is conducted using a variant of the random Fourier feature map described in the introduction, using importance sampling. It also uses the more general notion of a $\kappa$-compatible random sketching operator.
\begin{definition}\label{def:KernelCompatibleRFF}
Consider a kernel $\kappa$ on $\mathcal{X} \times \mathcal{X}$ and a random feature map $\Phi$ defined as in~\eqref{eq:DefGenericRFMap} from a parametric family $\{\phi_{\omega}: \mathcal{X} \to \mathbb{C}\}_{\omega \in \Omega}$ and random parameters $(\omega_{1},\ldots,\omega_{m})$, i.i.d. or not.
The feature map (and by extension the resulting sketching operator $\mathcal{A}$) is said to be $\kappa$-compatible if the expected value (with respect to the draw of frequencies) of the hermitian inner-product $\langle \Phi(x),\Phi(y)\rangle$ is exactly $\kappa(x,y)$, cf~\eqref{eq:ExpectedKernelProp}. 
\end{definition}

\begin{definition}\label{def:RFFsketching}
Given a weight function $w: \mathbb{R}^{d} \to (0,+\infty)$, a sketching operator $\mathcal{A}$ is a $w$-weighted Fourier feature ($w$-FF) sketching operator if it is built from a feature map $\Phi$ as in \eqref{eq:DefGenericRFMap} with some frequency vectors $\omega_{1}, \ldots,\omega_{m}$ and individual components defined as
 \begin{equation}\label{eq:FFComponent}
\phi_{\omega} = \phi_{\omega}^{w}: x \mapsto e^{\imath \omega^{\top} x}/w(\omega).
\end{equation}

If the frequency components are jointly drawn (i.i.d. or not) from some probability distribution then $\mathcal{A}$ is called a  $w$-weighted \emph{random} Fourier feature ($w$-RFF) sketching operator. 

We will often drop the the dependency of $\phi_{\omega}$ in $w$ for brevity of notation, and call $\mathcal{A}$ a WFF (or RFF) sketching operator when there is no need to specify the corresponding $w$.

\end{definition}

\begin{definition}\label{def:KappaCompatible}
Consider a normalized shift-invariant kernel $\kappa$ on $\mathcal{X} = \mathbb{R}^{d}$. A weight function
$w: \mathbb{R}^{d} \rightarrow (0,\infty)$ is said to be $\kappa$-compatible if 
 \begin{equation}
\label{eq:DefGammaNormalized}
\int_{\mathbb{R}^d} w^{2}(\omega) \hat{\kappa}(\omega) \mathrm{d}\omega =1.
\end{equation} 
\end{definition}

\begin{example}\label{ex:KCompatibleRFF}
Given a normalized shift-invariant kernel $\kappa$ on $\mathcal{X} = \mathbb{R}^{d}$, if $w$ is $\kappa$-compatible then
\begin{equation}\label{eq:ImportanceSampling}
\Lambda(\omega) := w^{2}(\omega)  \hat{\kappa}(\omega)
\end{equation}
defines a probability density function. For frequency vectors drawn (i.i.d. or not) with common marginal probability density function $\Lambda$ and any $x,y$ we have
\begin{align}\label{eq:kappa_generalized_phi}
\mathbb{E}_{\omega \sim \Lambda} \phi_{\omega}^{w}(x)\overline{\phi_{\omega}^{w}(y)}
&=  \int w^{-2}(\omega) e^{\imath \omega^{\top}(x-y)} \Lambda(\omega)\mathrm{d}\omega
=  \int e^{\imath \omega^{\top}(x-y)} \hat{\kappa}(\omega)d\omega = \kappa(x-y),
\end{align}
hence the random feature map $\Phi$
is $\kappa$-compatible (\Cref{def:KernelCompatibleRFF})
\begin{align*}
\mathbb{E}_{\omega_{1},\ldots,\omega_{m}} \langle \Phi(x),\Phi(y)\rangle
& = \frac{1}{m} \sum_{j=1}^{m}\mathbb{E}_{\omega_{1},\ldots,\omega_{m}} \phi_{\omega_{j}}^{w}(x)\overline{\phi_{\omega_{j}}^{w}(y)}
=  \frac{1}{m} \sum_{j=1}^{m}\mathbb{E}_{\omega_{j} \sim \Lambda} \phi_{\omega_{j}}^{w}(x)\overline{\phi_{\omega_{j}}^{w}(y)} = \kappa(x,y),
\end{align*}
moreover, 
\begin{equation}\label{eq:UnbiasedEmpiricalMMD2}
 \forall \nu,\nu' \in \mathcal{S}_{k}, \:\: 
 \mathbb{E}_{\omega \sim \Lambda}  \langle \phi_{\omega}^{w}, \nu \rangle \overline{\langle \phi_{\omega}^{w},\nu\rangle} =\langle \nu,\nu'\rangle_{\kappa}.
\end{equation}
Specializing this to $\nu = \nu'$ yields 
$\mathbb{E}_{\omega \sim \Lambda}  |\langle \phi_{\omega}^{w},\nu \rangle|^{2} =\| \nu\|_{\kappa}^{2}$.

With possibly distinct marginal densities $\omega_{j} \sim \Lambda_{j}$, we similarly get that the expectation 
of $ \langle \Phi(x),\Phi(y)\rangle$ is $\int w^{-2}(\omega) e^{\imath \omega^{\top}(x-y)} \frac{1}{m}(\sum_{j=1}^{m}\Lambda_{j}(\omega)) d\omega$ hence the same conclusion holds if, and only if, the ``average marginal density'' satisfies almost everywhere 
\begin{equation}\label{eq:ImportanceSamplingStructured}
\frac{1}{m}(\sum_{j=1}^{m}\Lambda_{j}(\omega)) = w^{2}(\omega) \hat{\kappa}(\omega).
\end{equation}
\end{example}

It can also occur that the frequencies $\omega_{1}, \dots, \omega_m$ are not independent, for example using \emph{structured random features} \citep{LeSaSm13,ChSi16,ChRoSaSiTuWe18}. Assuming here for simplicity that $m$ is a multiple of $d$, the construction of such frequencies is such that the matrix $\bm{\Omega} \in \mathbb{R}^{d \times m}$ with columns $\omega_{j}$, $1 \leq j \leq m$ is defined as the concatenation of $m/d$ i.i.d. random matrices $\bm{B}_i \in \mathbb{R}^{d\times d}$, $1 \leq i \leq m/d$. This is advantageous when each $\bm{B}_{i}$ is structured in such a way that the product $\bm{B} z$ for $z \in \mathbb{R}^d$ costs $\mathcal{O}(d\log(d))$ instead of $\mathcal{O}(d^2)$, e.g. when $d$ is a power of two and $\bm{B} = \bm{D}_{1}\bm{H}\bm{D}_{2}\bm{H}\bm{D}_{3}\bm{H}$ with $\bm{H}$ the matrix associated to the (fast) Hadamard transform, and $\bm{D}_{\ell}$, $1 \leq \ell \leq 3$ appropriate random diagonal matrices. When $d$ is not a power of $2$ and/or $m$ is not a multiple of $d$ the construction can be adapted using padding techniques. We refer the reader to \citep[Chapter 5]{Cha20}, where an overview of such techniques is summarized. It can also be proved that under appropriate conditions the resulting random feature map $\Phi$ is still $\kappa$-compatible (\Cref{def:KernelCompatibleRFF}), see e.g. \citep[Lemma 7]{LeSaSm13} and \citep[Lemma 5.6]{Cha20} for results when $\kappa$ is a Gaussian kernel.

\subsection{Existing results and their limitations}\label{sec:existing_results}
To establish a bound of the type~\eqref{eq:MMD_preservation} with a general mixture model, or equivalently to bound the constant $ \delta(\mathcal{S}_{k}|\mathcal{A})$ of~\eqref{eq:DefDelta} the strategy deployed in \citep{GrBlKeTr20} exploits covering numbers, pointwise concentration, and a deterministic bound on a certain Lipschitz constant. Indeed, if a family $(\nu_i)_{i \in [\mathcal{N}]}$ of elements of the secant set $\mathcal{S}_{k}$ satisfies
\begin{equation}\label{eq:delta_tau_tau_over_two_explanation}
\forall \nu \in \mathcal{S}_{k}
, \:\: \exists i \in [\mathcal{N}], \:\: 
\big|
\|\mathcal{A}\nu\|_{2}^{2}-\|\mathcal{A}\nu_{i}\|_{2}^{2}
  \big| \leq \frac{\tau}{2}, 
\end{equation}
then
\begin{equation*}\label{eq:prip_implies_rip}
\sup\limits_{\nu \in \mathcal{S}_{k}} 
\big| 
\|\mathcal{A}\nu\|_{2}^{2}
 -1 \big| \leq \sup\limits_{i \in [\mathcal{N}]}\big| 
\|\mathcal{A}\nu_{i}\|_{2}^{2}
  -1 \big| +\frac{\tau}{2}.
\end{equation*}
hence proving that $\delta(\{\nu_1, \dots, \nu_{\mathcal{N}}\}|\mathcal{A}) \leq \tau/2$ holds
is sufficient to deduce that $\delta(\mathcal{S}_{k}|\mathcal{A}) \leq \tau$.
Moreover,  assuming there is $v>0$ such that for every sketch size $m \geq 1$ the corresponding RFF sketching operator $\mathcal{A}$ satisfies a punctual concentration estimate of the form
\begin{equation}\label{eq:punctual_concentration_generic} 
\sup_{\nu \in \mathcal{S}_{k}}\mathbb{P}\big( \big|\|\mathcal{A}\nu\|_{2}^2 - 1\big| > \frac{\tau}{2} \big) 
\leq 2
 \exp\bigg(-\frac{m}{v \rvs{(\tau)}} \bigg)\, ,
\end{equation}
a union bound allows to deduce that $\delta(\{\nu_1, \dots, \nu_{\mathcal{N}}\}|\mathcal{A}) \leq \tau/2$ holds with probability at least $1-2\mathcal{N} \cdot \exp(-m/v\rvs{(\tau)})$ on the draw of $\mathcal{A}$. These arguments show that, for any $0<\eta<1$,  if the sketch size satisfies $m \geq v\rvs{(\tau)} \log 2\mathcal{N}/\eta$ then $\delta(\mathcal{S}_{k}|\mathcal{A}) \leq \tau$ with probility at least $1-\eta$.

The smallest size of a family satisfying \eqref{eq:delta_tau_tau_over_two_explanation} is a covering number of $X = \mathcal{S}_{k}$ with the pseudo-metric $d(\nu,\nu') := | \|\mathcal{A}\nu\|_{2}^{2}-\|\mathcal{A}\nu'\|_{2}^{2}|$ (see e.g. \citep{CuSm01} for the well-known definition of coverings in a pseudo-metric space $(X,d)$, and covering numbers, denoted $\mathcal{N}(X,d,\epsilon)$ at scale $\epsilon>0$). However, in the case of random sketching, this pseudo-metric \emph{depends on the random feature map $\Phi$} (or equivalently on the random sketching operator $\mathcal{A}$). To circumvent this difficulty the approach of  \citep[Proof of Lemma B.4]{GrBlKeTr20}  is to observe that 
\begin{equation}\label{eq:MetricInequalityExploitingImportanceSampling}
\forall \nu, \nu' \in \mathcal{S}_{k}
, \:\: \big|\|\mathcal{A}\nu\|_{2}^2 -\|\mathcal{A}\nu'\|_{2}^2\big|  \leq M  \|\nu - \nu' \|_{\mathcal{F}} ,
\qquad \text{with}\ M := 2 \sup_{\nu \in  \mathcal{S}_{k}} \|\nu\|_{\mathcal{F}},
\end{equation}
where $\|\nu \|_{\mathcal{F}} := \sup_{\omega \in \mathbb{R}^{d}} |\langle\phi_{\omega}, \nu \rangle|$ defines a \emph{deterministic} pseudo-norm on finite signed measures. 
Therefore, a covering $(\nu_i)$ of $\mathcal{S}_k$ with respect to $d'(\nu,\nu') := \|\nu-\nu'\|_{\mathcal{F}}$ at scale $\tau/2M$ satisfies  \eqref{eq:delta_tau_tau_over_two_explanation}, with $\mathcal{N} = \mathcal{N}(\mathcal{S}_{k},d',\|\cdot\|_{\mathcal{F}},\tau/2M)$. Inequality~\eqref{eq:MetricInequalityExploitingImportanceSampling} means that the (random) function $\nu \mapsto \|\mathcal{A}\nu\|_{2}^{2}$ is Lipschitz with respect to the metric $d'$, with (deterministic) Lipschitz constant $M$. 

For RFF sketching with location-based families, we will see that getting a finite $M$ highly constrains the function $w$ (cf~\eqref{eq:FFComponent}). \emph{A primary objective of this paper is to relax this constraint. }

For the concentration estimate~\eqref{eq:punctual_concentration_generic}, the approach of \citep{GrBlKeTr20} is generic for general mixture models and general sketching operators defined with a feature map as in \eqref{eq:DefGenericRFMap} using a parameterized family $\phi_{\omega}$ where $\omega \sim \Lambda$ are i.i.d. parameters. It
combines a Bernstein inequality with an assumption on higher order moments on normalized dipoles.  Indeed, a consequence of Proposition~\ref{prop:embedding_normalized_secant_set} is that if the $2k$-coherence of $\kappa$ is bounded by $c$ then\footnote{See before Eq. (109) in the proof of Theorem 5.11 in \citep[Section B.1]{GrBlKeTr20}} 
\begin{equation*}\label{eq:upper_bound_moment_dipole}
\forall q \geq 2,\:\: \sup_{\nu \in \mathcal{S}_{k}}\mathbb{E}_{\omega \sim \Lambda} | \langle \phi_{\omega},\nu \rangle |^{2q} \leq \Big(\frac{2k}{1-c} \Big)^{2q} \sup\limits_{\iota \in \mathfrak{D}} \mathbb{E}_{\omega \sim \Lambda} | \langle \phi_{\omega},\iota \rangle |^{2q},
\end{equation*} 
i.e., controlling the moments on normalized dipoles is enough to control them on the normalized secant.
Assuming a $\kappa$-compatible random feature map, this was shown to imply a concentration of the squared norm of the sketch $\|\mathcal{A}\nu\|_{2}^2$ around its expected value  \citep[Theorem 5.11]{GrBlKeTr20}.

The following is the specialization of  \citep[Theorem 5.11]{GrBlKeTr20} to the case of a $\kappa$-compatible RFF sketching operator with i.i.d.  frequencies $\omega_{j} \sim \Lambda$, $1 \leq j \leq m$.
\begin{theorem}[{ \citet[Theorem 5.11]{GrBlKeTr20}}]\label{thm:punctual_fluctuation_result}
Let $\mathcal{T} := (\Theta,\rho,\mathcal{I})$ 
be a location-based family with base distribution $\pi_{0}$ on $\mathbb{R}^{d}$, with $\Theta \subseteq \mathbb{R}^{d}$ a bounded subset, $\rho(\cdot,\cdot) := \|\cdot-\cdot\|$ where
 $\|\cdot\|$ is some norm on $\mathbb{R}^{d}$. Consider a normalized shift-invariant kernel $\kappa$ with an integer $k \geq 1$ such that $\kappa$ has its 
$2k$-coherence with respect to the location-based family $\mathcal{T}$ bounded by $0 \leq c \leq 3/4$.

Consider $w$ a $\kappa$-compatible weight function, $\Lambda=w^{2}\hat{\kappa}$, and assume that there exist $\beta_{1}>0$ and $\beta_{2} \geq 1$ such that\begin{equation}\label{eq:dipole_moments_condition_0}
\forall q \geq 2, \:\: \sup_{ \iota \in \mathfrak{D}}\mathbb{E}_{\omega \sim \Lambda}| \langle \phi_{\omega}^{w}, \iota \rangle |^{2q} \leq \frac{q!}{2} \beta_{1} \beta_{2}^{q-1}.
\end{equation}
Then for every $m \geq 1$ the RFF sketching operator $\mathcal{A}$ built with i.i.d. random frequencies $\omega_{j} \sim \Lambda$ is $\kappa$-compatible and
\begin{equation}\label{eq:punctual_concentration} 
\forall \tau>0,\  \sup_{\nu \in \mathcal{S}_{k}} \mathbb{P}\big( \big|\|\mathcal{A}\nu\|^2_{2} - 1\big| > \tau \big) 
\leq 2
 \exp\bigg(-\frac{m}{v} \bigg),
\end{equation}
where $v = v(k,\tau) := 2v_{k} (1+\tau/3)/\tau^2$, with $v_k = 16ek \beta_{2} \log^2(4ek \beta_{1})$.
\end{theorem}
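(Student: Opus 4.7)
The plan is to realize $\|\mathcal{A}\nu\|_2^2$ as an empirical mean of i.i.d.\ random variables of mean $1$ and invoke a Bernstein-type concentration inequality, reducing the required moment controls from the full secant $\mathcal{S}_k$ to normalized dipoles via Proposition~\ref{prop:embedding_normalized_secant_set}.

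First I would verify that $\mathcal{A}$ is $\kappa$-compatible: the assumption $\int w^2\hat\kappa = 1$ makes $\Lambda := w^2\hat\kappa$ a probability density, and the computation in Example~\ref{ex:KCompatibleRFF}, specialized via~\eqref{eq:UnbiasedEmpiricalMMD2} to $\nu' = \nu$, gives $\mathbb{E}_{\omega\sim\Lambda}|\langle\phi_\omega^w,\nu\rangle|^2 = \|\nu\|_\kappa^2 = 1$ for every $\nu \in \mathcal{S}_k$. Setting $Y_j := |\langle\phi_{\omega_j}^w,\nu\rangle|^2$ for $j\in[m]$, the variables $(Y_j)$ are i.i.d.\ nonnegative with mean $1$ and $\|\mathcal{A}\nu\|_2^2 - 1 = \tfrac{1}{m}\sum_{j=1}^m (Y_j - 1)$.

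Next, I would control the moments $\mathbb{E}Y_j^q = \mathbb{E}|\langle\phi_\omega^w,\nu\rangle|^{2q}$ for $q \geq 2$. Decomposing $\nu = \sum_{i=1}^{2k}\alpha_i\iota_i$ with $\sum_i\alpha_i^2 \leq (1-c)^{-1}$ and pairwise $1$-separated dipoles via Proposition~\ref{prop:embedding_normalized_secant_set}, then combining Cauchy--Schwarz with a power-mean inequality reduces the question to $\sup_{\iota\in\mathfrak{D}}\mathbb{E}|\langle\phi_\omega^w,\iota\rangle|^{2q}$, which is controlled by the hypothesis~\eqref{eq:dipole_moments_condition_0}. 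Using $c \leq 3/4$ and passing from $Y_j$ to the mean-zero variable $Y_j - 1$ via $|Y_j - 1|^q \leq 2^{q-1}(Y_j^q + 1)$ yields a Bernstein-type moment condition $\mathbb{E}|Y_j - 1|^q \leq \tfrac{q!}{2}\sigma^2 b^{q-2}$ for appropriate parameters $\sigma^2, b$ depending on $k$, $\beta_1$, $\beta_2$. Applying the classical Bernstein inequality for sums of i.i.d.\ mean-zero variables satisfying this condition gives $\mathbb{P}(|\tfrac{1}{m}\sum(Y_j - 1)| > \tau) \leq 2\exp(-m\tau^2/(2(\sigma^2 + b\tau)))$, which, after packaging $\sigma^2$ and $b$ into a single scale $v_k$, rewrites as~\eqref{eq:punctual_concentration} with $v = 2v_k(1+\tau/3)/\tau^2$.

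The main obstacle is achieving a \emph{linear} (rather than quadratic) dependence of $v_k$ on $k$, together with the squared-logarithmic correction $\log^2(4ek\beta_1)$. The naive Cauchy--Schwarz propagation recalled in Section~\ref{sec:existing_results} yields $\sup_{\nu\in\mathcal{S}_k}\mathbb{E}|\langle\phi_\omega^w,\nu\rangle|^{2q} \leq (2k/(1-c))^{2q}\sup_{\iota\in\mathfrak{D}}\mathbb{E}|\langle\phi_\omega^w,\iota\rangle|^{2q}$, and this only delivers $\sigma^2, b$ of order $k^2$, hence $v_k = O(k^2)$. Extracting the sharper scale requires either a finer variance analysis that separates the diagonal contribution $\sum_i \alpha_i^2 |\langle\phi_\omega^w,\iota_i\rangle|^2$ (whose variance is controlled by $\sum\alpha_i^4\leq(\sum\alpha_i^2)^2$ and does not accumulate in $k$) from off-diagonal quadratic-form contributions damped by the $2k$-coherence $c$, or a truncation of $Y_j$ at an $O(\log)$ scale before applying Bernstein, which is the classical route to squared-logarithmic corrections in sub-exponential concentration bounds.
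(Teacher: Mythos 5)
You are attempting to re-derive a result that this paper explicitly \emph{cites} from \citep{GrBlKeTr20} (Theorem~5.11 there); the paper supplies no proof of its own, only the high-level recapitulation of Section~\ref{sec:existing_results} (dipole decomposition of the secant via Proposition~\ref{prop:embedding_normalized_secant_set}, propagation of moments to normalized dipoles, Bernstein-type concentration). Your first two paragraphs follow precisely that recapitulation and are sound: the identification of $\|\mathcal{A}\nu\|_2^2-1$ as a centered empirical mean of i.i.d.\ variables $Y_j = |\langle\phi^w_{\omega_j},\nu\rangle|^2$ with $\mathbb{E}Y_j=1$ is correct, and passing the moment control from $\mathcal{S}_k$ to $\mathfrak{D}$ via the decomposition $\nu=\sum_{i=1}^{2k}\alpha_i\iota_i$ is the right primitive.

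The genuine gap is the one you yourself flag in your last paragraph: you cannot deliver the stated variance parameter $v_k = 16ek\beta_2\log^2(4ek\beta_1)$. A Jensen/power-mean propagation of moments from $\nu$ to dipoles yields at best $\mathbb{E}|\langle\phi^w_\omega,\nu\rangle|^{2q}\lesssim(2k/(1-c))^{q}\cdot\tfrac{q!}{2}\beta_1\beta_2^{q-1}$, and the paper's own displayed version uses the still larger factor $(2k/(1-c))^{2q}$. Fitting either to the Bernstein moment condition $\tfrac{q!}{2}\sigma^2b^{q-2}$ forces $\sigma^2$ of order at least $k^2\beta_1\beta_2$, so the classical Bernstein inequality gives $v_k = \Omega(k^2\beta_1\beta_2)$, not the claimed $\Theta(k\beta_2\log^2(k\beta_1))$. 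The squared-logarithmic factor in the target signals that the intended argument truncates, or optimizes over the moment order, at scale $q^*\sim\log(k\beta_1)$ before concentrating; this is exactly the step you name as a possible remedy but do not carry out. Without it, the proposal yields a concentration bound of the correct qualitative shape but with a variance proxy that is off by (at least) a factor of order $k\beta_1/\log^2(k\beta_1)$, so it does not establish the statement as written. Two minor points: the centering bound $|Y_j-1|^q\leq 2^{q-1}(Y_j^q+1)$ is wasteful -- since $Y_j\geq 0$ and $\mathbb{E}Y_j=1$, splitting on $\{Y_j\geq 1\}$ gives $|Y_j-1|^q\leq Y_j^q+1$ directly -- and, since the statement is a citation, there is no paper-internal proof to compare against; the proposal's merit rests entirely on whether it closes the gap you identify, which it does not.
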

The quantity $v(k,\tau) >0$ is reminiscent of a variance and depends on $k,\tau$ (as displayed by the notation) but also more implicitly on $\sup_{\iota \in \mathfrak{D}} \mathbb{E}_{\omega \sim \Lambda} | \langle \phi_{\omega}^{w}, \iota \rangle |^{2q}$, $q \geq 2$ via the constants $\beta_{1},\beta_{2}$. 
As will shortly see, the dependence in $k$ is something we pay when estimating the sketch size, and it is natural to wonder whether this is due to the analysis or intrinsic. \emph{A side contribution of this paper is to show that this is somehow inevitable for this type of result.}

The above arguments from  \citep{GrBlKeTr20} lead to the following result.
\begin{theorem}\label{thm:old_fluctuation_result} Consider $\mathcal{T}$ $\kappa$, $w$, $\Lambda$, $\beta_{1}>0,\beta_{2} \geq 1$ as in~\Cref{thm:punctual_fluctuation_result}. Assume that~\eqref{eq:dipole_moments_condition_0} holds,  
and that the constant $M$ defined in~\eqref{eq:MetricInequalityExploitingImportanceSampling} is finite.
Then, with $v(k,\cdot)$ as in~\Cref{thm:punctual_fluctuation_result}, we have
\begin{equation*}
\forall \tau >0,\ 
\mathbb{P}\big( \delta(\mathcal{S}_k|\mathcal{A}) > \tau \big) 
\leq 2  \mathcal{N}\Big(\mathcal{S}_k, \|\cdot\|_{\mathcal{F}}, \frac{\tau}{2M}\Big)
 \exp\bigg(-\frac{m}{v(k,\tau/2)} \bigg).
\end{equation*}

\end{theorem}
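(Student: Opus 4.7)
The plan is a standard covering plus union bound argument that directly assembles the two ingredients recalled just before the statement: the deterministic Lipschitz-type bound \eqref{eq:MetricInequalityExploitingImportanceSampling} (which requires $M<\infty$) and the punctual concentration inequality \eqref{eq:punctual_concentration} from \Cref{thm:punctual_fluctuation_result}. The punctual inequality is a property of a single normalized element of the secant, while we want a uniform statement over $\mathcal{S}_k$; the role of the covering is exactly to reduce uniformity to a finite union.

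First, set $\epsilon := \tau/(2M)$ and let $\mathcal{N} := \mathcal{N}(\mathcal{S}_k,\|\cdot\|_{\mathcal{F}},\epsilon)$. Let $(\nu_i)_{i\in[\mathcal{N}]} \subset \mathcal{S}_k$ be an $\epsilon$-covering of $\mathcal{S}_k$ for the deterministic pseudo-norm $\|\cdot\|_{\mathcal{F}}$, so that for every $\nu \in \mathcal{S}_k$ there exists $i\in[\mathcal{N}]$ with $\|\nu-\nu_i\|_{\mathcal{F}} \leq \epsilon$. Applying \eqref{eq:MetricInequalityExploitingImportanceSampling} (whose validity is ensured by $M<\infty$) to this pair yields
\begin{equation*}
\bigl| \|\mathcal{A}\nu\|_2^2 - \|\mathcal{A}\nu_i\|_2^2 \bigr| \leq M \|\nu-\nu_i\|_{\mathcal{F}} \leq M\epsilon = \tau/2 .
\end{equation*}
This is precisely the covering condition \eqref{eq:delta_tau_tau_over_two_explanation} with threshold $\tau/2$, and therefore the deterministic reduction recalled in Section~\ref{sec:existing_results} gives
\begin{equation*}
\delta(\mathcal{S}_k|\mathcal{A}) \;\leq\; \max_{i\in[\mathcal{N}]} \bigl|\|\mathcal{A}\nu_i\|_2^2 - 1 \bigr| + \frac{\tau}{2}.
\end{equation*}
Consequently, the event $\{\delta(\mathcal{S}_k|\mathcal{A}) > \tau\}$ is contained in the event $\{\max_{i\in[\mathcal{N}]} |\|\mathcal{A}\nu_i\|_2^2 - 1| > \tau/2\}$.

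Second, each $\nu_i$ lies in $\mathcal{S}_k$, so by \Cref{thm:punctual_fluctuation_result} (whose hypotheses \eqref{eq:dipole_moments_condition_0} and the coherence assumption are exactly those we are assuming), applied with threshold $\tau/2$, we obtain
\begin{equation*}
\mathbb{P}\bigl( \bigl|\|\mathcal{A}\nu_i\|_2^2 - 1 \bigr| > \tau/2 \bigr) \;\leq\; 2 \exp\!\left(-\frac{m}{v(k,\tau/2)}\right).
\end{equation*}
A union bound over the $\mathcal{N}$ covering points then gives
\begin{equation*}
\mathbb{P}\bigl(\delta(\mathcal{S}_k|\mathcal{A}) > \tau\bigr) \;\leq\; 2\mathcal{N}\exp\!\left(-\frac{m}{v(k,\tau/2)}\right),
\end{equation*}
which is the announced bound once we substitute $\mathcal{N} = \mathcal{N}(\mathcal{S}_k,\|\cdot\|_{\mathcal{F}},\tau/(2M))$.

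There is essentially no hard step here: the proof is the textbook covering-plus-union-bound argument, and all the substance has already been packed into \Cref{thm:punctual_fluctuation_result} (concentration on a single point of the secant) and into \eqref{eq:MetricInequalityExploitingImportanceSampling} (the deterministic Lipschitz bound exploiting that $M<\infty$). The only mild subtlety is that the pseudo-metric $d'(\nu,\nu') = \|\nu-\nu'\|_{\mathcal{F}}$ is deterministic, so the covering $(\nu_i)$ can be chosen before drawing the random frequencies, which is what makes the union bound legitimate; this is precisely what motivates introducing $\|\cdot\|_{\mathcal{F}}$ rather than working directly with the random pseudo-metric $|\|\mathcal{A}\cdot\|_2^2 - \|\mathcal{A}\cdot\|_2^2|$.
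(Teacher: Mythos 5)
Your proof is correct and follows exactly the route the paper sketches in Section~\ref{sec:existing_results}: build a deterministic $\tau/(2M)$-covering of $\mathcal{S}_k$ with respect to $\|\cdot\|_{\mathcal{F}}$, use the Lipschitz bound~\eqref{eq:MetricInequalityExploitingImportanceSampling} to reduce to~\eqref{eq:delta_tau_tau_over_two_explanation}, then apply the punctual concentration of~\Cref{thm:punctual_fluctuation_result} at threshold $\tau/2$ and a union bound over the covering points. Your remark that the covering can be fixed before drawing $\bm{\Omega}$ because $\|\cdot\|_{\mathcal{F}}$ is deterministic is precisely the point the paper emphasizes when introducing this pseudo-norm.
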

In particular, under the assumptions of~\Cref{thm:old_fluctuation_result}, the property $\delta(\mathcal{S}_k|\mathcal{A})\leq \tau$ holds with probability at least $1-\eta$ where $0<\eta<1$ as soon as the sketch size satisfies
\begin{equation*}
m \geq v(k,\tau/2)\log \Big(2\mathcal{N}\Big(\mathcal{S}_k, \|\cdot\|_{\mathcal{F}}, \frac{\tau}{2M}\Big)/\eta\Big).
\end{equation*}

However, this estimate on a sufficient sketch size is only relevant when the constant $M$ defined  in~\eqref{eq:MetricInequalityExploitingImportanceSampling} is finite
and $\mathcal{N}(\mathcal{S}_k, \|\cdot\|_{\mathcal{F}}, \tau/(2M))<+\infty$.
By \citep[Theorem 5.12, Theorem 5.15, Lemma 6.4, Lemma 6.7]{GrBlKeTr20}, this is the case when $\kappa$ is {\em strongly locally characteristic} with respect to $\mathcal{T}$ (a property satisfied in \Cref{ex:DiracMixture,ex:KernelExamplesIncoherent}, cf \citep[Theorem 6.11]{GrBlKeTr20} ) under the assumption that
\begin{equation}\label{eq:phi_omega_conditions}
\sup_{\omega \in \mathbb{R}^{d}} 
|\langle  \phi_{\omega}^{w},\pi_{0} \rangle| \cdot \max(1,\|\omega\|_{\star},\|\omega\|_{\star}^2)<+\infty
\end{equation}
with $\|\cdot\|_{\star}$ the dual norm of $\|\cdot\|$ defined by $\|u\|_{\star} := \sup_{\|v\|\leq 1} u^\top v$. 
Then, by \citep[Theorem 5.12, Lemma 6.7]{GrBlKeTr20}, we have  for fixed $\epsilon>0$, $\log\mathcal{N}(\mathcal{S}_{k}, \|\cdot\|_{\mathcal{F}},\epsilon) = \mathcal{O}(kd)$ up to logarithmic factors in $k,d,1/\epsilon$, so that a sufficient sketch size to have the desired result with high probability can be shown to satisfy $m = \mathcal{O}(k^2d)$ up to logarithmic factors\footnote{If the sup in~\eqref{eq:phi_omega_conditions} is not only finite but at most polynomial in $k,d$ then $\log M/\tau$ is also logarithmic in $k,d$.}  in $k,d,M/\tau$.

Condition~\eqref{eq:phi_omega_conditions} constrains the choice of weight function $w$ for models such as mixtures of Diracs. Indeed, for this family of mixtures $|\langle \phi_{\omega}^{w},\pi_{0} \rangle| = 1/w(\omega)$ so that~\eqref{eq:phi_omega_conditions} implies 
$w(\omega) \gtrsim \max(1,\|\omega\|_{\star},\|\omega\|_{\star}^{2})$ (which imposes constraints on the behavior of $w$ both around $\omega \to 0$ and $\|\omega\|_{\star} \to +\infty$). 
This is the main reason why the analysis in \citep{GrBlKeTr20} is limited to random Fourier features \emph{with importance sampling}, while plain sampling seems enough in practical experiments. \emph{The main contribution of this paper is to establish results valid without assuming~\eqref{eq:phi_omega_conditions}.}

In summary, besides a (strongly locally characteristic) kernel with bounded $2k$-coherence and a $\kappa$-compatible weight function $w$, existing results are built on the following assumptions:
\begin{enumerate}
\item moment conditions~\eqref{eq:dipole_moments_condition_0} on $\Lambda = w^{2}\hat{\kappa}$, to establish punctual concentration;
\item growth conditions~\eqref{eq:phi_omega_conditions} on $w$, to control the Lipschitz constant $M$ from~\eqref{eq:MetricInequalityExploitingImportanceSampling} and the associated covering numbers;
\item i.i.d. frequencies $\omega_{j} \sim \Lambda$, $1 \leq j \leq m$.
\end{enumerate}
Under these assumptions, a sketch size of the order of $k^{2}d$ (up to logarithmic factors) is proved to be sufficient.
As experiments conducted in \citep{KeTrTrGr17} suggested that the same result should hold with a smaller sufficient sketch size $m = \mathcal{O}(kd)$, this raises the question whether the theoretical bound of Theorem~\ref{thm:old_fluctuation_result} can be refined.  
An approach which would provide an easy fix would be if we could simply remove a $k$ factor in the punctual concentration estimate of~\Cref{thm:punctual_fluctuation_result} via a more subtle analysis. 
This would indeed naturally insert in the analysis of either \citep{GrBlKeTr20} or of this work to yield the desired order of magnitude of $m$.
A second contribution of this work is  to show that such a uniform improvement of concentration estimates is simply not possible.

\section{Main results}\label{sec:main_results}

This work aims to overcome some shortcomings of the theoretical analysis given in~\citep{GrBlKeTr20}. First, we show in Section~\ref{sec:Mfinite} that some of the growth conditions~\eqref{eq:phi_omega_conditions} are necessary to exploit the analysis given in \citep{GrBlKeTr20}. Then, in Section~\ref{sec:generic_result_random_sketching} we give our main contribution: we provide an alternative analysis that allows to completely relax the growth conditions~\eqref{eq:phi_omega_conditions} on $w$. This yields results (still with a sketch size $m$ of the order of $k^{2}d$) for a much less constrained family of importance sampling schemes, including plain sampling $w \equiv 1$. This is primarily achieved by circumventing the deterministic control of the Lipschitz constant $M$ from~\eqref{eq:MetricInequalityExploitingImportanceSampling}: instead, we provide a stochastic control of a ``typical'' Lipschitz constant, thanks to a reduction of the stochastic control of $\delta(\mathcal{S}_{k}|\mathcal{A})$ to a stochastic control of its equivalent on dipoles, $\delta(\mathfrak{D}|\mathcal{A})$, and of the coherence of the sketching operator, $\mu(\mathfrak{D}_{\neq}^{2}|\mathcal{A})$. This yields a substantial streamlining of the analysis which is then further reduced to the equivalent quantities for so-called normalized monopoles and balanced normalized dipoles. This is achieved thanks to deterministic bounds on $\delta(\mathcal{S}_k|\mathcal{A})$ established in Section~\ref{sec:sharp_determistic_bounds}. Finally, we show in \Cref{sec:lower_bound} that both the analysis given in \citep{GrBlKeTr20} and the one given in this work cannot be fixed by a simple improvement of concentration estimates to close the gap between sufficient sketch sizes endowed with theoretical guarantees, which scale essentially as $O(k^{2}d)$, and practically observed sketch sizes, which scale as $O(kd)$~\citep{KeTrTrGr17}.

\subsection{On the necessity of conditions~\eqref{eq:phi_omega_conditions}} \label{sec:Mfinite}
As mentioned in Section~\ref{sec:existing_results}, the analysis of~\citep{GrBlKeTr20} assumes that condition~\eqref{eq:phi_omega_conditions} holds to obtain that
$M := \sup_{\nu \in \mathcal{S}_k} \|\nu\|_{\mathcal{F}}$ and the covering numbers $\mathcal{N}(\mathcal{S}_k, \|\cdot\|_{\mathcal{F}}, \epsilon)$ are finite. Here we establish a partial converse.
The following proposition is proved in Appendix~\ref{proof:lower_bound_L_constant}.
\begin{proposition}\label{prop:lower_bound_L_constant}
Consider a normalized shift-invariant kernel  $\kappa$.
Consider a location-based family $\mathcal{T} = (\Theta,\varrho,\mathcal{I})$ with base distribution $\pi_{0}$ where $\Theta$ contains a neighborhood of zero and $\varrho(\theta,\theta') := \|\theta-\theta'\|$ for some arbitrary norm $\|\cdot\|$ on $\mathbb{R}^{d}$. Assume that $\bar{\kappa}$, as defined in~\eqref{eq:DefNormalizedKernel}, is $C^{2}$ at zero, and assume that $\nabla^2\bar{\kappa}(0) \in \mathbb{R}^{d\times d}$, the Hessian matric of $\bar{\kappa}$ at $0$, is non-zero. Then, for weighted Fourier features, we have for every integer $k \geq 1$, and separated $k$-mixture model $\mathfrak{G}_{k}$ from~\eqref{eq:DefMixtureModel} 
\begin{equation*}
\sup_{\nu \in \mathcal{S}_k} \|\nu\|_{\mathcal{F}}  \geq \sup_{\omega \in \mathbb{R}^{d}} \frac{|\langle \phi^{w}_{\omega},\pi_{0} \rangle|}{\|\pi_{0}\|_{\kappa}}
\max\left(1, \frac{1}{\sqrt{\|\nabla^2 \bar{\kappa}(0)\|_{\mathrm{op}}}} \|\omega\|_{\star}
\right).
\end{equation*}
\end{proposition}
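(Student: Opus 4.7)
The plan is to parametrize an explicit subfamily of $\mathcal{S}_k$ by normalized dipoles, derive a master formula for $|\langle \phi^{w}_{\omega},\cdot\rangle|$ on that family, and then extract the two terms inside the $\max(1,\cdot)$ by letting the dipole parameter tend to $0$ and to infinity respectively. Fix $\tau$ with $0,\tau \in \Theta$; then $\pi_{0},\pi_{\tau} \in \mathfrak{G}_{k}$ for every $k \geq 1$ (trivially for $k=1$, and for $k \geq 2$ by padding with zero-weight components placed far from $0$ and $\tau$ within $\Theta$ to meet the separation constraint), so $\nu_{\tau} := (\pi_{\tau}-\pi_{0})/\|\pi_{\tau}-\pi_{0}\|_{\kappa} \in \mathcal{S}_{k}$. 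Combining the location-based identity $\langle \phi^{w}_{\omega},\pi_{\theta}\rangle = e^{\imath\omega^{\top}\theta}\langle \phi^{w}_{\omega},\pi_{0}\rangle$ with $\|\pi_{\tau}-\pi_{0}\|_{\kappa}^{2} = 2\|\pi_{0}\|_{\kappa}^{2}(1-\bar{\kappa}(\tau))$ coming from~\eqref{eq:ShiftInvariantKernelNorm}--\eqref{eq:DefNormalizedKernel} yields the master formula
\begin{equation*}
 |\langle \phi^{w}_{\omega},\nu_{\tau}\rangle| \;=\; \frac{2|\sin(\omega^{\top}\tau/2)|}{\sqrt{2(1-\bar{\kappa}(\tau))}} \cdot \frac{|\langle \phi^{w}_{\omega},\pi_{0}\rangle|}{\|\pi_{0}\|_{\kappa}}.
\end{equation*}
Since $\sup_{\nu \in \mathcal{S}_{k}}\|\nu\|_{\mathcal{F}} \geq |\langle\phi^{w}_{\omega},\nu_{\tau}\rangle|$ for every admissible pair $(\tau,\omega)$, the task reduces to driving the trigonometric ratio beyond $\max(1, \|\omega\|_{\star}/\sqrt{\|\nabla^{2}\bar{\kappa}(0)\|_{\mathrm{op}}})$.

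For the $\|\omega\|_{\star}/\sqrt{\|\nabla^{2}\bar{\kappa}(0)\|_{\mathrm{op}}}$ term I would pick $\tau = \epsilon u$ and let $\epsilon \to 0^{+}$. The $C^{2}$ smoothness, with $\bar{\kappa}(0)=1$ and $\nabla\bar{\kappa}(0)=0$ (from the symmetry $\bar{\kappa}(-\tau)=\bar{\kappa}(\tau)$ and from $0$ being the maximum of $\bar{\kappa}$), produces $1-\bar{\kappa}(\epsilon u) = \tfrac{\epsilon^{2}}{2}u^{\top}Hu + o(\epsilon^{2})$ with $H := -\nabla^{2}\bar{\kappa}(0) \succeq 0$, while $2\sin(\epsilon\omega^{\top}u/2) = \epsilon\omega^{\top}u + o(\epsilon)$. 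The ratio thus converges to $|\omega^{\top}u|/\sqrt{u^{\top}Hu}$. Bounding $u^{\top}Hu \leq \|H\|_{\mathrm{op}}\|u\|^{2}$ and taking the supremum over $u\neq 0$ (using $\sup_{u}|\omega^{\top}u|/\|u\| = \|\omega\|_{\star}$) yields the desired lower bound of $\|\omega\|_{\star}/\sqrt{\|H\|_{\mathrm{op}}}$ on the ratio, valid for every $\omega \in \mathbb{R}^{d}$.

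For the ``$1$'' term I would instead take $\tau$ large: for each $\omega \neq 0$, select $\tau_{n} \in \Theta$ with $\omega^{\top}\tau_{n} \equiv \pi \pmod{2\pi}$ and $\|\tau_{n}\| \to \infty$, so that $|\sin(\omega^{\top}\tau_{n}/2)|=1$. Bochner's representation~\eqref{eq:bochner_decomposition} presents $\hat{\kappa}$ as an $L^{1}$ probability density, so the Riemann--Lebesgue lemma forces $\bar{\kappa}(\tau_{n}) \to 0$ and the trigonometric ratio tends to $2/\sqrt{2}=\sqrt{2} \geq 1$. The case $\omega=0$ is recovered by continuity of $\omega \mapsto |\langle \phi^{w}_{\omega},\pi_{0}\rangle|/\|\pi_{0}\|_{\kappa}$ at the origin. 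Pointwise in $\omega$ we have then established $\sup_{\nu \in \mathcal{S}_k}\|\nu\|_{\mathcal{F}} \geq \frac{|\langle \phi^{w}_{\omega},\pi_{0}\rangle|}{\|\pi_{0}\|_{\kappa}}\cdot\max\bigl(1, \|\omega\|_{\star}/\sqrt{\|H\|_{\mathrm{op}}}\bigr)$, and taking the supremum over $\omega$ closes the proof.

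The delicate step is the large-$\tau$ construction of the third paragraph: it tacitly requires $\Theta$ to contain sequences escaping to infinity inside the affine hyperplane $\{\omega^{\top}\tau \equiv \pi \!\pmod{2\pi}\}$, which goes beyond the standing hypothesis that $\Theta$ merely contains a neighborhood of $0$. In the paper's examples with $\Theta=\mathbb{R}^{d}$ this is automatic, but for a generically bounded $\Theta$ one would need either to restrict attention to $\omega$ with $\|\omega\|_{\star}/\sqrt{\|H\|_{\mathrm{op}}} \geq 1$ (where the ``$1$'' in the $\max$ is inactive and the Taylor argument alone suffices) or to replace Riemann--Lebesgue by a Parseval-type lower bound built from $\mathbb{E}_{\omega' \sim \Lambda}|\langle\phi^{w}_{\omega'},\nu\rangle|^{2} = \|\nu\|_{\kappa}^{2} = 1$; a secondary but milder subtlety in the second paragraph is handling directions $u \in \ker H$, which should make the lower bound trivially infinite whenever $\omega^{\top}u \neq 0$.
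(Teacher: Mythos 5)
Your Taylor-expansion argument for the $\|\omega\|_\star$ factor is essentially the paper's own: the paper, too, forms the balanced dipole $\tilde{\iota}_x = \pi_x - \pi_0$ for small $x$, uses $\|\pi_x - \pi_0\|_\kappa^2 = 2\|\pi_0\|_\kappa^2(1-\bar{\kappa}(x))$ together with $|e^{\imath\langle\omega,x\rangle}-1| = 2|\sin(\langle\omega,x\rangle/2)|$, and sends the scale to zero to extract $|\langle\omega, x/\|x\|\rangle|$ divided by $\sqrt{\|\nabla^2\bar{\kappa}(0)\|_{\mathrm{op}}/2}$. That part of your proposal is sound and matches the paper line by line.

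The genuine gap is the one you flag in your final paragraph. Your route to the constant ``$1$'' in the $\max$ needs $\Theta$ to contain translates $\tau_n$ with $\|\tau_n\|\to\infty$ on a prescribed arithmetic progression, yet the hypothesis only grants a neighborhood of zero. The paper sidesteps asymptotics altogether: it notes that the normalized monopole $\pi_0/\|\pi_0\|_\kappa$ is itself a normalized dipole (take $\alpha_2 = 0$ in Definition~\ref{def:Dipoles}), asserts $\mathfrak{D}\subseteq\mathcal{S}_k$, and tests $\|\cdot\|_{\mathcal{F}}$ directly against this monopole, yielding $|\langle\phi^w_\omega,\pi_0\rangle|/\|\pi_0\|_\kappa$ for every $\omega$ at once. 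This is simpler and requires nothing beyond $0\in\Theta$. Be aware, however, that the paper's asserted inclusion is itself delicate precisely for the monopole: every element of $\mathcal{S}_k$ is a normalized difference of two probability measures and so has total mass zero, whereas $\pi_0/\|\pi_0\|_\kappa$ has total mass $\|\pi_0\|_\kappa^{-1}\neq 0$; only the balanced subset $\hat{\mathfrak{D}}$ unambiguously lands in $\mathcal{S}_k$. So the obstruction you ran into is not an artifact of your large-$\tau$ construction but reflects something subtle in the statement itself. Finally, your proposed fallback via $\sup_\omega|\langle\phi^w_\omega,\nu\rangle|^2 \geq \mathbb{E}_{\omega\sim\Lambda}|\langle\phi^w_\omega,\nu\rangle|^2 = \|\nu\|_\kappa^2 = 1$ does give $\|\nu\|_{\mathcal{F}}\geq 1$ for all $\nu\in\mathcal{S}_k$, but it does not produce the multiplicative factor $|\langle\phi^w_\omega,\pi_0\rangle|/\|\pi_0\|_\kappa$ attached to the ``$1$'' in the claimed bound, so it is not a complete repair either.
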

A direct consequence of  Proposition~\ref{prop:lower_bound_L_constant} is that if $\sup_{\nu \in \mathcal{S}_k} \|\nu\|_{\mathcal{F}} <+\infty$ then
\begin{equation*}
\sup_{\omega \in \mathbb{R}^{d}} |\langle \phi^{w}_{\omega},\pi_{0} \rangle|
\max\left(1, \|\omega\|_{\star}\right) <+\infty
\end{equation*}
which is reminiscent of~\eqref{eq:phi_omega_conditions} and plays the role of a partial converse.

In the setting of mixtures of Diracs defined in Example~\ref{ex:DiracMixture}, we have $|\langle \phi^{w}_{\omega}, \pi_0 \rangle| = 1/w(\omega)$ (cf \eqref{eq:FFComponent}) and $\bar{\kappa}$ is $C^2$ at $0$ with $\nabla^2 \bar{\kappa}(0) \neq 0$, thus we can apply Proposition~\ref{prop:lower_bound_L_constant}, and we get that there exists a constant $C>0$ such that
\begin{equation*}
\forall \omega \in \mathbb{R}^{d}, \:\: w(\omega) \geq C
\max\left(1, \|\omega\|_{\star}\right).
\end{equation*}
Thus, the proof technique of  \citep{GrBlKeTr20}, which is summarized in \Cref{thm:old_fluctuation_result}, indeed \emph{requires} the weight functions $w(\omega)$ to grow with $\|\omega\|_{\star}$ to provide non-trivial results. It is in particular inapplicable to the ``flat'' weight function $w(\omega)=1$. In contrast, this weight function is covered by our \Cref{cor:bounds_dirac}.
It is an interesting challenge left to future work to determine if~\eqref{eq:phi_omega_conditions} is in fact fully necessary to have both $\sup_{\nu \in \mathcal{S}_{\kappa}}\|\nu\|_{\mathcal{F}} < +\infty$ \emph{and}   $\mathcal{N}(\mathcal{S}_k, \|\cdot\|_{\mathcal{F}}, \epsilon) < \infty$ for some $\epsilon>0$.

\subsection{Sharp deterministic bounds on $\delta(\mathcal{S}_{k}|\mathcal{A})$}\label{sec:sharp_determistic_bounds}

In light of Proposition~\ref{prop:lower_bound_L_constant}, we propose an alternative analysis to control $\delta(\mathcal{S}_{k}|\mathcal{A})$ that does not require condition~\eqref{eq:phi_omega_conditions}.
This subsection focuses on the deterministic part of this analysis: first, we upper bound  $\delta(\mathcal{S}_{k}|\mathcal{A})$ (which is defined as a supremum over the normalized secant set) using quantities defined on simpler sets made of dipoles (\Cref{prop:generic_upper_bound_worst_sketching_error}); then the latter are themselves controlled in terms of even simpler, quantities defined in terms of monopoles and balanced dipoles (\Cref{thm:upper_bound_sup_monopoles_dipoles}); finally all considered quantities are explicitly written as suprema of empirical averages over frequency vectors $\omega_{j}$ (\Cref{thm:parametrization_functions_cross_dipoles}). 
This will be used in the next subsection to control all quantities in the context of a random sketching operator $\mathcal{A}$. As we will see, the main price to pay for this alternative analysis is that (unlike in~\Cref{thm:old_fluctuation_result}, and more generally in results of the same flavor inspired by compressive sensing) $\delta(\mathcal{S}_{k}|\mathcal{A})$ is no longer proved to be \emph{arbitrarily small} with high probability when the sketch size $m$ is large enough, but only \emph{arbitrarily close to a quantity (smaller than one) depending on the $2k$-coherence} of the kernel $\kappa$.

\subsubsection{From the normalized secant set to normalized monopoles and balanced dipoles}

As a first step we bound the targeted quantity, which is defined as a supremum on the normalized secant set, in terms of two suprema defined on simpler sets of normalized dipoles.
\begin{proposition}[From the secant set to normalized dipoles]\label{prop:generic_upper_bound_worst_sketching_error}
Consider a kernel $\kappa$, a family $\mathcal{T}$, and an integer $k \geq 1$ such that $\kappa$ has its 
$2k$-coherence with respect to $\mathcal{T}$ bounded by $0 \leq c< 1$.
Consider a sketching operator $\mathcal{A}$ defined via \eqref{eq:DefSketchingOperator} with any feature map $\Phi(\cdot)$ such that $\mathcal{A}\pi_{\theta}$ is well-defined for every probability distribution in the family $\mathcal{T}$. We have
\begin{equation}\label{eq:upper_bound_worst_sketching_error_two_sides}
\delta(\mathcal{S}_k| \mathcal{A}) \leq \frac{1}{1-c}\Big( c +\delta(\mathfrak{D}| \mathcal{A}) + (2k-1) \mu(\mathfrak{D}_{\neq}^2| \mathcal{A}) \Big).
\end{equation}

\end{proposition}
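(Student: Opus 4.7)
The plan is to unfold the supremum over $\mathcal{S}_{k}$ using the dipole parametrization provided by \Cref{prop:embedding_normalized_secant_set}, and then separate the ``diagonal'' and ``off-diagonal'' contributions of $\|\mathcal{A}\nu\|_{2}^{2}$, controlling them respectively by $\delta(\mathfrak{D}|\mathcal{A})$ and $\mu(\mathfrak{D}_{\neq}^{2}|\mathcal{A})$. The slack between these pieces and the target value $1$ should be absorbed by the quasi-Pythagorean bound $(1+c)^{-1}\le \sum_i \alpha_i^{2}\le (1-c)^{-1}$.

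Concretely, fix $\nu\in\mathcal{S}_{k}$. By \Cref{prop:embedding_normalized_secant_set} I can write $\nu=\sum_{i=1}^{2k}\alpha_{i}\iota_{i}$ where $(\iota_{i},\iota_{j})\in\mathfrak{D}_{\neq}^{2}$ for $i\neq j$, $\alpha_{i}\ge 0$, and $A:=\sum_{i}\alpha_{i}^{2}\in[(1+c)^{-1},(1-c)^{-1}]$. Expanding bilinearly, and using that $\nu$ is real-signed so the cross terms contribute only through their real part, I get
\[
\|\mathcal{A}\nu\|_{2}^{2}=\sum_{i}\alpha_{i}^{2}\,\|\mathcal{A}\iota_{i}\|_{2}^{2}+\sum_{i\neq j}\alpha_{i}\alpha_{j}\,\mathfrak{Re}\langle \mathcal{A}\iota_{i},\mathcal{A}\iota_{j}\rangle.
\]
Writing $\|\mathcal{A}\iota_{i}\|_{2}^{2}=1+\varepsilon_{i}$ with $|\varepsilon_{i}|\le \delta(\mathfrak{D}|\mathcal{A})$ (since each $\iota_{i}\in\mathfrak{D}$), and using $|\mathfrak{Re}\langle \mathcal{A}\iota_{i},\mathcal{A}\iota_{j}\rangle|\le \mu(\mathfrak{D}_{\neq}^{2}|\mathcal{A})$ for $i\neq j$, triangle inequality yields
\[
\bigl|\|\mathcal{A}\nu\|_{2}^{2}-1\bigr|\le |A-1|+\delta(\mathfrak{D}|\mathcal{A})\,A+\mu(\mathfrak{D}_{\neq}^{2}|\mathcal{A})\sum_{i\neq j}\alpha_{i}\alpha_{j}.
\]

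The off-diagonal $\ell^{1}$ mass is handled by Cauchy--Schwarz: $\sum_{i\neq j}\alpha_{i}\alpha_{j}=(\sum_{i}\alpha_{i})^{2}-A\le 2kA-A=(2k-1)A$. The remaining step is to control $|A-1|$ and $A$ via the quasi-Pythagorean window: $|A-1|\le \max(1-(1+c)^{-1},(1-c)^{-1}-1)=c/(1-c)$ and $A\le 1/(1-c)$. Substituting gives exactly the claimed bound
\[
\bigl|\|\mathcal{A}\nu\|_{2}^{2}-1\bigr|\le \frac{c+\delta(\mathfrak{D}|\mathcal{A})+(2k-1)\mu(\mathfrak{D}_{\neq}^{2}|\mathcal{A})}{1-c},
\]
after which taking the supremum over $\nu\in\mathcal{S}_{k}$ concludes.

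There is no real obstacle here; every ingredient is supplied by \Cref{prop:embedding_normalized_secant_set} and the definitions of $\delta$ and $\mu$. The only subtlety worth stating carefully in the write-up is why only the real part of the cross terms appears (so one is allowed to replace $\langle\cdot,\cdot\rangle$ by $\mathfrak{Re}\langle\cdot,\cdot\rangle$ in the definition of $\mu(\mathfrak{D}_{\neq}^{2}|\mathcal{A})$), and keeping track of the fact that the $(2k-1)$ factor arises combinatorially from Cauchy--Schwarz on $\sum_{i}\alpha_{i}$ rather than from a crude bound $\sum_{i\neq j}|\alpha_{i}\alpha_{j}|\le (2k)^{2}\max_{i}\alpha_{i}^{2}$, which would cost an extra factor of $k$.
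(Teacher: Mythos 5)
Your proof is correct and follows essentially the same route as the paper's: decompose $\nu$ via \Cref{prop:embedding_normalized_secant_set}, expand $\|\mathcal{A}\nu\|_2^2$ bilinearly, separate the diagonal/off-diagonal contributions, bound the off-diagonal $\ell^1$ mass $\sum_{i\neq j}\alpha_i\alpha_j$ by $(2k-1)\sum_i\alpha_i^2$ via Cauchy--Schwarz, and absorb $|A-1|$ and the factor $A$ using the window $[(1+c)^{-1},(1-c)^{-1}]$. The one point you flagged as a subtlety — why only $\mathfrak{Re}$ of the cross terms matters — is handled in the paper by pairing $(i,j)$ with $(j,i)$, since $\langle\mathcal{A}\iota_i,\mathcal{A}\iota_j\rangle+\langle\mathcal{A}\iota_j,\mathcal{A}\iota_i\rangle = 2\mathfrak{Re}\langle\mathcal{A}\iota_i,\mathcal{A}\iota_j\rangle$; your invocation of ``$\nu$ is real-signed'' gestures at the same fact but the conjugate-pairing phrasing is what you should write down.
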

The proof, which is given in~\Cref{sec:proof dipole decomp of secant bound}, is essentially an adaptation of 
a bound of the restricted isometry constant for incoherent dictionaries in sparse recovery, see e.g. \citep[Chapter 5]{FoRa13}. The minor technicality is to take into account deviations to the normalization of dictionary columns, which is captured by the term $\delta(\mathfrak{D}|\mathcal{A})$.

The upper bound~\eqref{eq:upper_bound_worst_sketching_error_two_sides} reduces the study of $\delta(\mathcal{S}_{k}| \mathcal{A})$ to that of $\delta(\mathfrak{D}| \mathcal{A})$ and $\mu(\mathfrak{D}_{\neq}^2| \mathcal{A})$. Note that this bound is only relevant if we can ensure that $\delta(\mathcal{S}_{k}|\mathcal{A}) < 1$ when $\delta(\mathfrak{D}| \mathcal{A})$ and $\mu(\mathfrak{D}_{\neq}^2| \mathcal{A})$ are sufficiently small, i.e., if  $c/(1-c) <1$, which is possible to achieve in practice by a proper selection of the parameters of the kernel (see~\Cref{ex:KernelExamplesIncoherent}).

We now push the analysis further to scrutinize $\delta(\mathfrak{D}|\mathcal{A})$ and $\mu(\mathfrak{D}_{\neq}^{2}| \mathcal{A})$.  As these two quantities are defined as suprema of a function defined on $\mathfrak{D}$ and $\mathfrak{D}_{\neq}^{2}$ respectively, which are abstract sets of measures for which the topology is hard to grasp intuitively, we show that both $\delta(\mathfrak{D}|\mathcal{A})$ and $\mu(\mathfrak{D}_{\neq}^{2}| \mathcal{A})$ boil down to suprema of functions defined on subsets of $\mathbb{R}^d$. 

\emph{From now on we specialize to a location-based family $\mathcal{T}$
and a shift-invariant kernel $\kappa$ that is locally characteristic with respect to $\mathcal{T}$}. In this setting we have the following property of normalized dipoles  \citep[Lemma C.1]{GrBlKeTr20}
\begin{equation*}\label{prop:ndipole_param}
\mathfrak{D} = \Bigg\{\frac{\nu}{\|\nu\|_{\kappa}}, \:\: \nu = \|\pi_0\|_{\kappa}^{-1}s( \pi_{\theta'}- \alpha \pi_{\theta}); s \in \{-1,1\}, 0 \leq \alpha \leq 1, 0 <\varrho(\theta,\theta')\leq 1 \Bigg\}\, ,
\end{equation*}
where since $\kappa$ is locally characteristic we have $\|\nu\|_{\kappa}>0$.
In other words, for such $\kappa$ and $\mathcal{T}$, a normalized dipole is characterized by a sign $s \in \{-1,1\}$, the two nodes $\theta, \theta' \in \Theta$ that satisfies $0<\varrho(\theta,\theta') \leq 1$ and a parameter $\alpha \in [0,1]$ that characterizes how balanced is the normalized dipole. This suggests the following definitions.
\begin{definition}\label{def:NormMonoBalancedDipo}
Given a location-based family $\mathcal{T}$ and a shift-invariant kernel $\kappa$ that is locally characteristic with respect to $\mathcal{T}$, the set of normalized monopoles is defined by
\begin{equation}\label{prop:nmonopole_param}
\mathfrak{M} = \Bigg\{\frac{\nu}{\|\nu\|_{\kappa}}, \:\: \nu = \|\pi_0\|_{\kappa}^{-1}s \pi_{\theta}; s \in \{-1,1\}, \theta \in \Theta \Bigg\}.
\end{equation}
The set of balanced normalized dipoles is defined by
\begin{equation}\label{prop:bdipole_param}
\hat{\mathfrak{D}} = \Bigg\{\frac{\nu}{\|\nu\|_{\kappa}}, \:\: \nu = \|\pi_0\|_{\kappa}^{-1}s( \pi_{\theta'}- \pi_{\theta}); s \in \{-1,1\}, 0 <\varrho(\theta,\theta') \leq 1 \Bigg\}.
\end{equation}
\end{definition}
In a nutshell, normalized dipoles correspond to $\alpha = 0$, while normalized balanced dipoles correspond to $\alpha = 1$. 
Moreover, with a slight abuse of notation we define shorthands to denote the sets of all elements $(\iota,\iota') \in \mathfrak{D}_{\neq}^2$ (i.e., of pairs of separated normalized dipoles) where each elements is resticted to be either a monopole or a balanced dipole:
\begin{equation}\label{eq:DefineMDfamilies}
\mathfrak{M}_{\neq}^{2} := \mathfrak{M}^2\cap \mathfrak{D}_{\neq}^2, \:\: \mathfrak{M}\times\hat{\mathfrak{D}}_{\neq}:= (\mathfrak{M}\times\hat{\mathfrak{D}})\cap \mathfrak{D}_{\neq}^2, \:\: \hat{\mathfrak{D}}_{\neq}^{2} := \hat{\mathfrak{D}}^2\cap \mathfrak{D}_{\neq}^2.
\end{equation}
Now, we are ready to state the following result which is proved in~\Cref{app:pf_Th3-4}.

\begin{proposition}{(\textbf{From normalized dipoles to normalized monopoles and balanced dipoles})} \label{thm:upper_bound_sup_monopoles_dipoles}
Consider $\mathcal{T}= (\Theta, \rho, \mathcal{I})$ a location-based family with base distribution $\pi_{0}$ where $\rho(\cdot,\cdot) = \|\cdot-\cdot\|$ for some norm $\|\cdot\|$, and $\kappa$ a normalized shift-invariant kernel that is locally characteristic with respect to $\mathcal{T}$. Considering the sets of (normalized) monopoles and dipoles associated to $\mathcal{T}$,
and  $\mathcal{A}$ a WFF sketching operator (cf~\Cref{def:RFFsketching}) with arbitrary frequencies $\omega_{1},\ldots,\omega_{m}$, we have
\begin{equation}\label{eq:bound_diagonal_sketch_simplification_2}
\delta(\mathfrak{D}|\mathcal{A}) = \max\big(\delta(\mathfrak{M}|\mathcal{A}), \delta(\hat{\mathfrak{D}}|\mathcal{A})\big).
\end{equation}
If in addition $\kappa \geq 0$ we also have
\begin{equation}\label{eq:bound_cross_sketch_simplification}
1 \leq \frac{\mu(\mathfrak{D}_{\neq}^2| \mathcal{A})}{\max(\mu(\mathfrak{M}_{\neq}^2| \mathcal{A}), \mu(\hat{\mathfrak{D}}_{\neq}^2| \mathcal{A}), \mu(\mathfrak{M}\times\hat{\mathfrak{D}}_{\neq}| \mathcal{A}))}  \leq 3.
\end{equation}
The lower bound holds regardless of the assumption $\kappa \geq 0$.

\end{proposition}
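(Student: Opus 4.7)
The strategy is to linearize every normalized dipole in the two-dimensional subspace spanned by one normalized monopole and one normalized balanced dipole, and then exploit a key identity specific to WFF sketching on a location-based family. Using the parametrization \eqref{prop:ndipole_param} and the algebraic rewriting $\pi_{\theta'} - \alpha\pi_\theta = (1-\alpha)\pi_{\theta'} + \alpha(\pi_{\theta'} - \pi_\theta)$, every $\iota_\alpha \in \mathfrak{D}$ decomposes as
\begin{equation*}
\iota_\alpha = a(\alpha)\, m_{\theta'} + b(\alpha)\, d_{\theta,\theta'},\qquad a(\alpha) = \tfrac{1-\alpha}{N_\alpha},\ b(\alpha) = \tfrac{\alpha\sqrt{2(1-\bar{\kappa})}}{N_\alpha},\ N_\alpha := \sqrt{1+\alpha^2 - 2\alpha\bar{\kappa}},
\end{equation*}
where $m_{\theta'} := s\pi_{\theta'}/\|\pi_0\|_\kappa \in \mathfrak{M}$ and $d_{\theta,\theta'} := s(\pi_{\theta'} - \pi_\theta)/(\|\pi_0\|_\kappa\sqrt{2(1-\bar{\kappa})}) \in \hat{\mathfrak{D}}$ share the node $\theta'$; the coefficients $a,b\geq 0$ satisfy $a^2 + b^2 + 2ab\sqrt{(1-\bar{\kappa})/2} = 1$ since $\langle m_{\theta'}, d_{\theta,\theta'}\rangle_\kappa = \sqrt{(1-\bar{\kappa})/2}$. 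The second crucial input is the WFF identity $|\langle \phi^w_\omega, \pi_\theta\rangle|^2 = |\psi(\omega)|^2/w(\omega)^2$ with $\psi(\omega) := \int e^{\imath \omega^\top x}\,\mathrm{d}\pi_0(x)$, which follows from the location-based structure and forces $\|\mathcal{A}\pi_\theta\|_2^2$ to be \emph{independent of $\theta$}.

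The equality \eqref{eq:bound_diagonal_sketch_simplification_2} then boils down to algebra. The inclusion $\mathfrak{M}\cup\hat{\mathfrak{D}}\subseteq\mathfrak{D}$ gives the ``$\geq$'' direction. For ``$\leq$'', set $p := \|\mathcal{A}\pi_{\theta'}\|_2^2/\|\pi_0\|_\kappa^2 = \|\mathcal{A}m_{\theta'}\|_2^2$ (the common squared monopole norm, by the $\theta$-independence above) and $q := \mathfrak{Re}\langle\mathcal{A}\pi_{\theta'}, \mathcal{A}\pi_\theta\rangle/\|\pi_0\|_\kappa^2$; direct expansion gives $\|\mathcal{A}\iota_\alpha\|_2^2 = (p(1+\alpha^2) - 2\alpha q)/N_\alpha^2$. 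The relation $\|\mathcal{A}d_{\theta,\theta'}\|_2^2 = (p-q)/(1-\bar{\kappa})$ eliminates $q$, and the algebraic identity $N_\alpha^2 = (1-\alpha)^2 + 2\alpha(1-\bar{\kappa})$ collapses the expression into the convex combination
\begin{equation*}
\|\mathcal{A}\iota_\alpha\|_2^2 - 1 = \lambda(\alpha)\bigl(\|\mathcal{A}m_{\theta'}\|_2^2 - 1\bigr) + (1-\lambda(\alpha))\bigl(\|\mathcal{A}d_{\theta,\theta'}\|_2^2 - 1\bigr),\quad \lambda(\alpha) := \frac{(1-\alpha)^2}{N_\alpha^2} \in [0,1].
\end{equation*}
The triangle inequality then closes the case since $m_{\theta'}\in\mathfrak{M}$ and $d_{\theta,\theta'}\in\hat{\mathfrak{D}}$.

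For \eqref{eq:bound_cross_sketch_simplification}, the lower bound is again a direct inclusion, requiring no sign assumption on $\kappa$. For the upper bound, apply the decomposition to both members of a 1-separated pair $(\iota_\alpha^{(1)}, \iota_\beta^{(2)}) \in \mathfrak{D}_{\neq}^2$; expanding $\mathfrak{Re}\langle\mathcal{A}\iota_\alpha^{(1)}, \mathcal{A}\iota_\beta^{(2)}\rangle$ yields four terms indexed by $\{m_1, d_1\}\times\{m_2, d_2\}$. The $1$-separation of active nodes of the original dipoles transfers verbatim to each of these four pairs, placing them in $\mathfrak{M}_{\neq}^2$, $\mathfrak{M}\times\hat{\mathfrak{D}}_{\neq}$ (twice), and $\hat{\mathfrak{D}}_{\neq}^2$ respectively, so each cross-inner-product is bounded by the corresponding operator coherence. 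Writing $\mu^\star$ for the maximum of the three coherences appearing in the denominator of \eqref{eq:bound_cross_sketch_simplification}, factoring yields
\begin{equation*}
|\mathfrak{Re}\langle\mathcal{A}\iota_\alpha^{(1)}, \mathcal{A}\iota_\beta^{(2)}\rangle| \leq \mu^\star (a_1+b_1)(a_2+b_2).
\end{equation*}
When $\kappa \geq 0$ we have $\bar{\kappa} \geq 0$, hence $\langle m_{\theta'}, d_{\theta,\theta'}\rangle_\kappa \geq 0$; combined with $a,b\geq 0$ and the normalization identity this forces $a^2+b^2 \leq 1$, so Cauchy--Schwarz gives $(a+b)^2 \leq 2(a^2+b^2) \leq 2$ and therefore $(a_1+b_1)(a_2+b_2) \leq 2 \leq 3$, with room to spare.

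The main obstacle is the convex-combination step in \eqref{eq:bound_diagonal_sketch_simplification_2}. The naive bilinear expansion $\|\mathcal{A}(am_{\theta'} + bd_{\theta,\theta'})\|_2^2 = a^2\|\mathcal{A}m_{\theta'}\|_2^2 + b^2\|\mathcal{A}d_{\theta,\theta'}\|_2^2 + 2ab\,\mathfrak{Re}\langle\mathcal{A}m_{\theta'}, \mathcal{A}d_{\theta,\theta'}\rangle$ carries a cross term that has no a priori control in terms of $\delta(\mathfrak{M}|\mathcal{A})$ and $\delta(\hat{\mathfrak{D}}|\mathcal{A})$ alone. What rescues the argument, and is genuinely specific to the WFF/location-based setting, is precisely the $\theta$-independence of $\|\mathcal{A}\pi_\theta\|_2^2$: eliminating $q$ via this identity (rather than trying to control $\mathfrak{Re}\langle\mathcal{A}m_{\theta'}, \mathcal{A}d_{\theta,\theta'}\rangle$ directly) is what converts \eqref{eq:bound_diagonal_sketch_simplification_2} from an inequality with slack into a clean equality; everything else is bookkeeping.
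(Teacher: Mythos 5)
Your argument for \eqref{eq:bound_diagonal_sketch_simplification_2} takes essentially the same route as the paper's (\Cref{sec:proofs_second_decomposition}): both rely on the parametrization in \Cref{prop:param_delta} and arrive at the convex combination
$\|\mathcal{A}\iota_\alpha\|_2^2 = \lambda(\alpha)\,\|\mathcal{A}\iota_{\mathfrak{M}}\|_2^2 + (1-\lambda(\alpha))\,\|\mathcal{A}\iota_{\hat{\mathfrak{D}}}\|_2^2$
with $\lambda(\alpha) = (1-\alpha)^2/N_\alpha^2$; you just reach it by eliminating $q$ rather than by manipulating the ratio directly.

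Your argument for the upper bound in \eqref{eq:bound_cross_sketch_simplification} is genuinely different and, as it turns out, stronger. The paper substitutes $e=\bar\kappa(x')$, $f=\bar\kappa(x)$ into the parametrization \eqref{eq:cross_dipole_expression_using_alphas_and_x_proof} and invokes the two-variable ratio bound \Cref{prop:bound_on_ratio_2D}, built from the one-variable \Cref{lemma:bound_on_ratio_1D}, whose proof uses $1+\gamma \geq 1$ and therefore needs $\gamma \in [0,1)$ --- this is the only place $\kappa\geq 0$ enters. You bypass that calculus lemma entirely: you write $\iota^{(i)} = a_i m_i + b_i d_i$ in the $(m,d)$-basis, check that the $1$-separation of $(\iota^{(1)},\iota^{(2)})$ transfers to each of the four cross-pairs (since $m_i$ and $d_i$ reuse the original nodes), bound each cross-term by the matching operator coherence, and factor to get
$|\mathfrak{Re}\langle\mathcal{A}\iota^{(1)},\mathcal{A}\iota^{(2)}\rangle| \leq \mu^\star(a_1+b_1)(a_2+b_2) \leq 2\mu^\star$,
a constant of $2$ rather than $3$. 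Even better: your invocation of $\kappa\geq 0$ is not actually needed. The inner product $\langle m_{\theta'},d_{\theta,\theta'}\rangle_\kappa = \sqrt{(1-\bar\kappa(x))/2}$ is positive \emph{unconditionally}, because local characteristicity already gives $|\bar\kappa(x)|<1$ on $\Theta_{\mathrm{d}}$ regardless of the sign of $\kappa$; equivalently,
$a^2+b^2 = \bigl((1-\alpha)^2+2\alpha^2(1-\bar\kappa)\bigr)\big/\bigl((1-\alpha)^2+2\alpha(1-\bar\kappa)\bigr) \leq 1$
follows from $\alpha\in[0,1]$ and $1-\bar\kappa>0$ alone. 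Your route therefore answers, for this particular upper bound, the open question raised just after the proposition about whether it survives without $\kappa\geq 0$ --- a concrete payoff of the bilinear-expansion approach over the paper's ratio-calculus machinery, and worth flagging as a genuine improvement rather than merely a restyling.
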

Inspecting the proof shows that 
$\delta(\mathfrak{D}|\mathcal{A}) \geq \max\big(\delta(\mathfrak{M}|\mathcal{A}), \delta(\hat{\mathfrak{D}}|\mathcal{A})\big)$
is valid for any sketching operator such that $\mathcal{A} \pi_{\theta}$ is well defined for any distribution in the family $\mathcal{T}$. Similarly the lower bound in~\eqref{eq:bound_cross_sketch_simplification} holds under this relaxed assumption. It remains open whether the converse bounds extend (possibly with weaker constants) beyond the case of WFF operators and location-based families.
It also remains open whether the upper bound in~\eqref{eq:bound_cross_sketch_simplification} (or a qualitatively equivalent but larger bound) still holds without the assumption that $\kappa \geq 0$. This is left to future work, as this assumption is satisfied by all concrete kernels we will work with.

\subsubsection{Expression using the supremum of certain empirical processes}\label{sec:introducing_suprema}

The main overall consequence of~\Cref{prop:generic_upper_bound_worst_sketching_error} and~\Cref{thm:upper_bound_sup_monopoles_dipoles} is that under the appropriate assumptions we have
\begin{equation}\label{eq:MainDeterministicBound}
\delta(\mathcal{S}_{k}|\mathcal{A}) \leq \frac{1}{1-c} \Big(
c+ \max\big(\delta(\mathfrak{M}|\mathcal{A}),\delta(\hat{\mathfrak{D}}|\mathcal{A})\big)
+(6k-3)\max\big(\mu(\mathfrak{M}_{\neq}^2| \mathcal{A}),\mu(\hat{\mathfrak{D}}_{\neq}^2| \mathcal{A}),\mu(\mathfrak{M}\times\hat{\mathfrak{D}}_{\neq}| \mathcal{A}))\big)
\Big)
\end{equation}
As we now show, the advantage behind this dissection is that the study of the quantities $\delta(\hat{\mathfrak{D}}|\mathcal{A}), \mu(\mathfrak{M}_{\neq}^2|\mathcal{A}), \mu(\mathfrak{M}\times \hat{\mathfrak{D}}|\mathcal{A}), \mu(\hat{\mathfrak{D}}_{\neq}^2|\mathcal{A})$ boils down to the study of suprema of the absolute value of auxiliary functions  defined as empirical means.
We prove in~\Cref{proof:cor_parametrization_functions_cross_dipoles} the following result.

\begin{proposition}\label{thm:parametrization_functions_cross_dipoles}
Consider $\kappa$ a normalized shift-invariant kernel, $\mathcal{T}= (\Theta, \rho, \mathcal{I})$ a location-based family with base distribution $\pi_{0}$ where $\rho(\cdot,\cdot) = \|\cdot-\cdot\|$ for some norm $\|\cdot\|$, and assume that  $\kappa$ is locally characteristic with respect to $\mathcal{T}$. Let $\bm{\Omega} \in \mathbb{R}^{d \times m}$ be a matrix with arbitrary columns $\omega_{1},\ldots,\omega_{m}$ and
$\mathcal{A} =\mathcal{A}_{\bm{\Omega}}$ be a WFF sketching operator (cf~\Cref{def:RFFsketching}) with frequencies $\omega_{1},\ldots,\omega_{m}$. With $\phi_{\omega}$ defined as in~\eqref{eq:FFComponent},
define for $\omega \in \mathbb{R}^d$, $x,x' \in \mathbb{R}^{d}$ such that $\bar{\kappa}(x) < 1$ and $y \in \mathbb{R}^{d}$  
\begin{align}
\psi(\omega) &:= \frac{|\langle \pi_0,\phi_\omega \rangle|^2}{\|\pi_0\|_\kappa^2} \label{eq:DefRho}\\
 f_{\mathrm{d}}(x|\omega)&:= \frac{2 \sin^{2}\big(\langle \omega, x \rangle/2 \big) }{1-\bar{\kappa}(x)}   \label{eq:DefPsiD} \\
 f_{\mathrm{mm}}(y|\omega)&:= \cos\big(\langle \omega, y \rangle \big)  \label{eq:DefPsiMM} \\
 f_{\mathrm{md}}(x,y|\omega)&:= 2 \frac{\sin \big(\langle \omega, x\rangle/2 \big)\sin \big(\langle \omega, y+x/2\rangle \big) }{ \sqrt{2(1-\bar{\kappa}(x))}} \label{eq:DefPsiMD} \\
  f_{\mathrm{dd}}(x_1,x_2,y|\omega)&:= 
4 \frac{\sin\big(\langle \omega, x_1/2 \rangle \big)\sin\big(\langle\omega, x_2/2\rangle \big)\cos\big( \langle \omega, y+x_2/2-x_1/2 \rangle \big)}{ \sqrt{2(1-\bar{\kappa}(x_1))}\sqrt{2(1-\bar{\kappa}(x_2))}}. \label{eq:DefPsiDD}
\end{align}
Denote $\bm{\Omega} \in \mathbb{R}^{d \times m}$ the matrix with columns $\omega_{j}$, $1 \leq j \leq m$ and $\Psi_{\mathrm{m}}(\bm{\Omega}):= \frac{1}{m}\sum\limits_{j=1}^{m} \psi(\omega_j)$ and for $\ell \in \{\mathrm{d,mm,md,dd}\}$ 
\begin{equation}\label{eq:DefPsiEll}
\Psi_{\ell}(\cdot|\bm{\Omega}):= \frac{1}{m}\sum\limits_{j=1}^{m} \psi(\omega_{j}) f_{\ell}(\cdot|\omega_j).
\end{equation}
Denote $\Theta-\Theta := \{x-x': (x,x') \in \Theta^{2}\}$.
With the sets of (normalized) monopoles and dipoles associated to $\mathcal{T}$ as defined in \eqref{prop:nmonopole_param}, \eqref{prop:bdipole_param} and ~\eqref{eq:DefineMDfamilies}, we have 
\begin{align}\label{eq:bound_gammma_m}
\delta(\mathfrak{M}|\mathcal{A})& = 
\big|1-\Psi_{\mathrm{m}}(\bm{\Omega}) \big|,\\
\label{eq:bound_gammma_d}
\text{with}\:\: \Theta_{\mathrm{d}} := \Big\{ x \in \Theta-\Theta, \:\: 0< \|x\| \leq 1  \Big\}, \:\:\:\: \text{we have}  \:\: \delta(\hat{\mathfrak{D}}|\mathcal{A}) & = \sup\limits_{x \in \Theta_{\mathrm{d}}} \big| 1-\Psi_{\mathrm{d}}(x|\bm{\Omega})\big|,\\ 
\label{eq:bound_gammma_mm}
\text{with} \:\: \Theta_{\mathrm{mm}} := \Big\{ y \in 
\Theta-\Theta, \:\: 1\leq \|y\| \Big\}, \:\:\:\: \text{we have} \:\: \mu(\mathfrak{M}_{\neq}^2|\mathcal{A}) &  = \sup\limits_{y \in \Theta_{\mathrm{mm}}} \big|\Psi_{\mathrm{mm}}(y|\bm{\Omega})\big|,\\
\label{eq:bound_gammma_md}
\text{there exists a set}\:\:  \Theta_{\mathrm{md}} \subset \Theta_{\mathrm{d}} \times \Theta_{\mathrm{mm}}, \:\: \text{s.t.} \:\: \mu(\mathfrak{M}\times\hat{\mathfrak{D}}_{\neq}|\mathcal{A}) & 
 =\sup\limits_{(x,y) \in \Theta_{\mathrm{md}}}\big|\Psi_{\mathrm{md}}(x,y|\bm{\Omega})\big|, \\
\label{eq:bound_gammma_dd}
\text{there exists a set}\:\: \Theta_{\mathrm{dd}} \subset \Theta_{\mathrm{d}} \times \Theta_{\mathrm{d}} \times \Theta_{\mathrm{mm}}, \:\: \text{s.t.} \:\: \mu(\hat{\mathfrak{D}}_{\neq}^2|\mathcal{A}) &= \sup\limits_{(x_1,x_2,y)\in \Theta_{\mathrm{dd}}}\big|\Psi_{\mathrm{dd}}(x_1,x_2,y|\bm{\Omega})\big|. 
\end{align}
\end{proposition}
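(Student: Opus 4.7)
The plan is to unpack, for each of the five quantities on the left-hand sides, the definition of $\|\mathcal{A}\nu\|_2^2$ (respectively $\mathfrak{Re}\langle \mathcal{A}\iota,\mathcal{A}\iota'\rangle$) when $\mathcal{A}$ is a WFF sketching operator and $\nu$ (or $\iota, \iota'$) is a normalized monopole/balanced dipole of the explicit parametric form given in \eqref{prop:nmonopole_param}--\eqref{prop:bdipole_param}. The shift-invariance of $\kappa$ together with the location-based structure of $\mathcal{T}$ will make every quantity depend only on the \emph{differences} $\theta_i-\theta_j$ (or $\theta_i-\theta_j'$), not on the individual parameters, so that the supremum really lives on a subset of $\mathbb{R}^d$. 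Throughout, I will use the shorthand $g(\omega):=\int_{\mathbb{R}^d} e^{\imath\omega^\top x}\,\mathrm{d}\pi_0(x)$, so that $\langle \phi_\omega^w,\pi_\theta\rangle = e^{\imath\omega^\top\theta} g(\omega)/w(\omega)$ and $|\langle \phi_\omega^w,\pi_0\rangle|^2/w^2(\omega) = |g(\omega)|^2/w^4(\omega)$. In particular $\|\pi_0\|_\kappa^{-2}|\langle \phi_\omega^w,\pi_0\rangle|^2=\psi(\omega)$.

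The monopole case is the starting point. Writing $\iota=s\pi_\theta/(\|\pi_0\|_\kappa\,\|\pi_\theta\|_\kappa\cdot\|\pi_0\|_\kappa^{-1}) = s\pi_\theta/\|\pi_0\|_\kappa$ (using \eqref{eq:ShiftInvariantKernelNorm} to ensure $\|\iota\|_\kappa=1$), a direct computation gives $|\langle \phi_\omega^w,\iota\rangle|^2 = \psi(\omega)$, independently of $s$ and $\theta$. Averaging over $j\in[m]$ yields $\|\mathcal{A}\iota\|_2^2=\Psi_{\mathrm{m}}(\bm{\Omega})$, hence \eqref{eq:bound_gammma_m}. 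For a balanced normalized dipole $\iota = s(\pi_{\theta'}-\pi_\theta)/(\|\pi_0\|_\kappa\|\tilde\iota\|_\kappa)$ with $x:=\theta'-\theta$, the shift-invariance gives $\|\tilde\iota\|_\kappa^2 = 2(1-\bar\kappa(x))$ (which is positive by the locally characteristic assumption), and the elementary identity $|e^{\imath a}-e^{\imath b}|^2 = 4\sin^2((a-b)/2)$ then yields $|\langle \phi_\omega^w,\iota\rangle|^2 = \psi(\omega) f_{\mathrm{d}}(x|\omega)$. Taking the supremum over $s\in\{\pm 1\}$, $\theta,\theta'\in\Theta$ with $0<\|x\|\leq 1$ only reaches $x\in\Theta-\Theta$ with $0<\|x\|\leq 1$, which gives \eqref{eq:bound_gammma_d}.

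For the three cross-term quantities the strategy is the same: write out the product $\langle \phi_\omega^w,\iota\rangle\overline{\langle \phi_\omega^w,\iota'\rangle}$, factor out the sign factor $ss'$ and the common $\psi(\omega)$, then use the identity $e^{\imath a}-e^{\imath b}=2\imath\,e^{\imath(a+b)/2}\sin((a-b)/2)$ to turn each dipole factor into a $\sin(\omega^\top x_i/2)$, with the remaining complex exponential being $e^{\imath\omega^\top y}$ for some \emph{affine combination} $y$ of the parameters $\theta_i,\theta_j'$. Taking the real part collapses this to $\cos(\omega^\top y)$ in the mm and dd cases and to $-\sin(\omega^\top y')$ (up to a global sign) in the md case; the parametrization of $y$ is then chosen so as to match the formulas \eqref{eq:DefPsiMM}--\eqref{eq:DefPsiDD}. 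Concretely, for mm one takes $y:=\theta-\theta'$; for md, with $x=\theta_2-\theta_1$, one sets $y:=\theta-\theta_2$ so that $(\theta_1+\theta_2)/2-\theta = -(y+x/2)$; for dd, with $x_i:=\theta_2^{(i)}-\theta_1^{(i)}$, one sets $y:=\theta_2-\theta_2'$ so that $(\theta_1+\theta_2)/2-(\theta_1'+\theta_2')/2 = y+x_2/2-x_1/2$. After dividing by the appropriate norms $\sqrt{2(1-\bar\kappa(x_i))}$, the real part of each product is exactly $ss'\psi(\omega)f_{\ell}(\cdot|\omega)$ with $\ell\in\{\mathrm{mm,md,dd}\}$; averaging over $j$ and taking $\sup_{s,s'\in\{\pm 1\}}$ (which trivially kills the sign) gives the claimed identities.

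The only step requiring some care is the identification of the parameter sets $\Theta_{\mathrm{md}}$ and $\Theta_{\mathrm{dd}}$ in \eqref{eq:bound_gammma_md}--\eqref{eq:bound_gammma_dd}: the $1$-separation of two normalized dipoles (Definition~\ref{def:Dipoles}) imposes \emph{four} constraints of the form $\|\theta_i-\theta_j'\|\geq 1$ for $i,j\in\{1,2\}$, which translate, via the above change of variables, into several simultaneous constraints of the form $\|y\|\geq 1$, $\|y\pm x_i\|\geq 1$, $\|y+x_2-x_1\|\geq 1$; likewise the dipole constraints force $x_i\in \Theta_{\mathrm{d}}$. The resulting $\Theta_{\mathrm{md}}$ (resp.\ $\Theta_{\mathrm{dd}}$) is then, by construction, a subset of $\Theta_{\mathrm{d}}\times\Theta_{\mathrm{mm}}$ (resp.\ $\Theta_{\mathrm{d}}\times\Theta_{\mathrm{d}}\times\Theta_{\mathrm{mm}}$), which is the only inclusion claimed in the statement; I will define these sets explicitly so as to keep the proof self-contained. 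The main (mild) obstacle is bookkeeping in the dd case, where one must carefully verify that every 1-separated pair of balanced dipoles indeed gives rise to a triple $(x_1,x_2,y)$ in the set just defined, and conversely that every such triple can be realized: the former is immediate from the change of variables, the latter uses that $\Theta$ is assumed to be large enough (implicit in the location-based family) that the realizing $\theta_i,\theta_j'$ stay in $\Theta$, which is where one obtains a strict subset of $\Theta_{\mathrm{d}}\times\Theta_{\mathrm{d}}\times\Theta_{\mathrm{mm}}$ rather than the full product. This is the reason the statement only asserts existence of such a set rather than a closed-form description.
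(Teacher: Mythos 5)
Your proposal takes essentially the same route as the paper's proof: write out $\|\mathcal{A}\iota\|_2^2$ and $\mathfrak{Re}\langle\mathcal{A}\iota,\mathcal{A}\iota'\rangle$ for normalized monopoles and balanced dipoles, use shift-invariance of $\kappa$ and the location-based structure to reduce to differences, and apply the identities $|e^{\imath a}-e^{\imath b}|^2=4\sin^2((a-b)/2)$ and $e^{\imath a}-e^{\imath b}=2\imath e^{\imath(a+b)/2}\sin((a-b)/2)$. The paper arrives at these same trigonometric reductions but factors the argument through the general dipole formula of Proposition~\ref{prop:param_delta} (specializing $\alpha=0$ and $\alpha=1$), whereas you compute directly for monopoles and balanced dipoles. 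Since Proposition~\ref{prop:param_delta} is needed elsewhere in the paper this is just an organizational choice; your direct computation is arguably more self-contained and is correct.

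One small inefficiency worth flagging: for $\Theta_{\mathrm{md}}$ and $\Theta_{\mathrm{dd}}$ you plan to give a constraint-based description and then verify both that every $1$-separated pair of dipoles lands in the set \emph{and} that every tuple in the set is realized. The second verification is not required by the statement, and it is actually the harder direction (realizability involves geometric conditions on $\Theta$ that the hypotheses do not guarantee). The paper avoids it entirely by \emph{defining} $\Theta_{\mathrm{md}}$ and $\Theta_{\mathrm{dd}}$ as the image of the parametrization map $(\iota,\iota')\mapsto(x,y)$ (resp.\ $(x_1,x_2,y)$), which makes the supremum identity tautological; the only thing left to check is the inclusion in $\Theta_{\mathrm{d}}\times\Theta_{\mathrm{mm}}$ (resp.\ $\Theta_{\mathrm{d}}\times\Theta_{\mathrm{d}}\times\Theta_{\mathrm{mm}}$), which follows immediately from the dipole and $1$-separation constraints as you note. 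Adopting this definition would trim your argument and remove the implicit appeal to $\Theta$ being ``large enough.''
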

NB: Since $\kappa$ is locally characteristic, $\bar{\kappa}(x) < 1$ for $x \in \Theta_{\mathrm{d}}$ hence all of the above functions are well defined.

\subsubsection{Lipschitz property and covering numbers}

The study of the suprema of functions $\Psi_{\ell}(z|\bm{\Omega})$ (as defined in~\eqref{eq:DefPsiEll}) for random i.i.d. frequencies $\omega_{j}$ is classical and fits within the general theory of empirical processes.  It typically relies on establishing pointwise concentration inequalities for $\Psi_{\ell}(z|\bm{\Omega})$ and showing that with high probability on the draw of frequencies $\bm{\Omega}$ the function $\Psi_{\ell}(\cdot|\bm{\Omega})$ is Lipschitz with respect to a metric $\Delta_{\ell}$ on $\Theta_{\ell}$ such that the covering numbers of $\Theta_{\ell}$ with respect to $\Delta_{\ell}$ are well controlled.

The following result establishing a Lipschitz bound is proved in~\Cref{sec:proofthemdeltabounds}.

\begin{theorem}[Lipschitz bound]\label{thm:delta_bounds}
Let $\mathcal{T} := (\Theta,\rho,\mathcal{I})$ 
be a location-based family with base distribution $\pi_{0}$ on $\mathbb{R}^{d}$, with $\Theta \subseteq \mathbb{R}^{d}$ a bounded subset, $\rho(\cdot,\cdot) := \|\cdot-\cdot\|$ where
 $\|\cdot\|$ is some norm on $\mathbb{R}^{d}$.
Let  $\kappa$ be a non-degenerate normalized shift-invariant kernel on $\mathbb{R}^{d}$. Assume that there is some norm $\|\cdot\|_{a}$ on $\mathbb{R}^{d}$ and a function $\tilde{\kappa}: [0,+\infty) \to \mathbb{R}$ such that, with $R := \sup_{x \in \Theta_{\mathrm{d}}} \|x\|_{a}$, the normalized kernel $\bar{\kappa}(x)$ defined in~\eqref{eq:DefNormalizedKernel} satisfies for every $x \in \mathbb{R}^{d}$ such that $\|x\|_{a} \leq R$
\begin{equation}\label{eq:DefRadialKernel}
\bar{\kappa}(x) = \tilde{\kappa}(\|x\|_{a})\, .
\end{equation}
Assume that the following function is of class $\mathcal{C}^1$ on $(0,R)$
\begin{equation}
\alpha: r >0 \mapsto \alpha(r) := \frac{r}{\sqrt{1-\tilde{\kappa}(r)}}\label{eq:DefAlphaFn}
\end{equation}
 and that
\begin{equation}
\label{eq:conditions_on_alpha}
C_{\kappa}
 := 
 \sup_{0<r\leq R} \max(1,\alpha^{2}(r),|\alpha'(r)|^{2}) < \infty.
\end{equation}
Consider $\psi(\cdot)$ defined as in~\eqref{eq:DefRho} with $\phi_{\omega}$ defined as in~\eqref{eq:FFComponent}, $\bm{\Omega} \in \mathbb{R}^{d \times m}$ with arbitrary columns $\omega_1, \dots, \omega_m$, and $\Psi_{\ell}(\cdot|\bm{\Omega}), \Theta_{\ell}$ defined as in~\Cref{thm:parametrization_functions_cross_dipoles}  for $ \ell \in \{\mathrm{d,mm,md,dd}\}$.
Then, we have for each $ \ell \in \{\mathrm{d,mm,md,dd}\}$
\begin{equation}\label{eq:MetricDomination}
\forall z,z' \in \Theta_{\ell}, \:\:
\left| \Psi_{\ell}(z|\bm{\Omega})-\Psi_{\ell}(z'|\bm{\Omega})\right|
 \leq 
6 
\Psi_{0}(\bm{\Omega})
\cdot C_{\kappa} \cdot\ 
 \Delta_{\ell}(z,z'),
\end{equation}
where 
\begin{equation}\label{eq:M_Omega_def}
\Psi_{0}(\bm{\Omega})
:= 
\frac{1}{m} \sum_{j=1}^{m} \psi(\omega_{j}) 
f_{0}(\omega_{j}),\quad
f_{0}(\omega) :=  \sum_{i=1}^{3} \|\omega\|_{a,\star}^{i}\, .
\end{equation}
and the metrics $\Delta_{\ell}$ on the domains $\Theta_{\ell}$, $\ell \in \{\mathrm{d,mm,md,dd}\}$
are defined as
\begin{align}
 \Delta_{\mathrm{d}}(x,x') & := |\|x\|_{a}-\|x'\|_{a}|
 + \bigg\|\frac{x}{\|x\|_{a}} - \frac{x'}{\|x'\|_{a}} \bigg\|_{a}, \label{eq:DefMetricD}\\  
 \Delta_{\mathrm{mm}}(y,y') & := \|y-y'\|_{a}, \label{eq:DefMetricMM}\\
 \Delta_{\mathrm{md}}((x,y),(x',y')) & := \Delta_{\mathrm{d}}(x,x') + \Delta_{\mathrm{mm}}(y,y'), \label{eq:DefMetricMD}\\
 \Delta_{\mathrm{dd}}((x_1,x_2,y),(x_{1}',x_{2}',y')) & := \Delta_{\mathrm{d}}(x_1,x_1') + \Delta_{\mathrm{d}}(x_2,x_2') + \Delta_{\mathrm{mm}}(y,y'), \label{eq:DefMetricDD}
\end{align}

\end{theorem}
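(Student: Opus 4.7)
The plan is to reduce the claim to a pointwise (in $\omega$) Lipschitz estimate on each kernel $f_\ell$. Since $\Psi_\ell(\cdot|\bm{\Omega})=\tfrac{1}{m}\sum_j \psi(\omega_j)\,f_\ell(\cdot|\omega_j)$, the triangle inequality gives
\[
|\Psi_\ell(z|\bm{\Omega})-\Psi_\ell(z'|\bm{\Omega})|\;\leq\;\tfrac{1}{m}\sum_{j=1}^{m}\psi(\omega_j)\,|f_\ell(z|\omega_j)-f_\ell(z'|\omega_j)|,
\]
so the theorem follows once I show that $|f_\ell(z|\omega)-f_\ell(z'|\omega)|\leq 6\,C_\kappa\,f_0(\omega)\,\Delta_\ell(z,z')$ uniformly in $\omega$ for each $\ell\in\{\mathrm{d,mm,md,dd}\}$. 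The case $\ell=\mathrm{mm}$ is handled directly by $|\cos a-\cos b|\leq |a-b|$, giving $|f_\mathrm{mm}(y|\omega)-f_\mathrm{mm}(y'|\omega)|\leq \|\omega\|_{a,\star}\|y-y'\|_a$, which is absorbed by the linear term of $f_0(\omega)$ and the factor $C_\kappa\geq 1$ guaranteed by~\eqref{eq:conditions_on_alpha}.

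For the remaining cases $\ell\in\{\mathrm{d,md,dd}\}$, the kernels share the common building block $h(x|\omega):=\sin(\langle\omega,x\rangle/2)/\sqrt{1-\bar{\kappa}(x)}$, since $f_\mathrm{d}=2\,h(x)^2$, $f_\mathrm{md}=\sqrt{2}\,h(x)\sin(\langle\omega,y+x/2\rangle)$, and $f_\mathrm{dd}=2\,h(x_1)h(x_2)\cos(\langle\omega,y+x_2/2-x_1/2\rangle)$. The main obstacle is that $1/\sqrt{1-\bar{\kappa}(x)}$ is singular as $\|x\|_a\to 0$, and this is precisely what hypothesis~\eqref{eq:conditions_on_alpha} is designed to tame. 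Writing $x=ru$ with $r=\|x\|_a\in(0,R]$, $\|u\|_a=1$, and setting $\zeta:=\langle\omega,u\rangle/2$, the radial assumption~\eqref{eq:DefRadialKernel} gives the factorization
\[
h(ru|\omega)\;=\;\zeta\,\alpha(r)\,\sinc(r\zeta),
\]
which removes the singularity, since the ratio $r/\sqrt{1-\tilde\kappa(r)}=\alpha(r)$ is bounded by $\sqrt{C_\kappa}$. Combining $|\sinc|\leq 1$ with $|\alpha|\leq \sqrt{C_\kappa}$ gives the pointwise bound $|h(x|\omega)|\leq \tfrac{1}{2}\sqrt{C_\kappa}\,\|\omega\|_{a,\star}$. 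Differentiating in $r$ yields $\partial_r h=\zeta\,\alpha'(r)\sinc(r\zeta)+\zeta^2\alpha(r)\sinc'(r\zeta)$, and the angular gradient equals $(\alpha(r)/2)\cos(r\zeta)\,\omega$; using $|\alpha'|\leq\sqrt{C_\kappa}$ together with the uniform boundedness of $\sinc'$, and splitting $h(x)-h(x')=[h(ru|\omega)-h(r'u|\omega)]+[h(r'u|\omega)-h(r'u'|\omega)]$ along radial then angular directions, yields
\[
|h(x|\omega)-h(x'|\omega)|\;\leq\;\sqrt{C_\kappa}\bigl(\|\omega\|_{a,\star}+\|\omega\|_{a,\star}^2\bigr)\,\Delta_\mathrm{d}(x,x').
\]

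Equipped with both a pointwise and a Lipschitz bound on $h$, the three cases $f_\mathrm{d},f_\mathrm{md},f_\mathrm{dd}$ are assembled via the elementary inequality $|AB-A'B'|\leq|A|\cdot|B-B'|+|B'|\cdot|A-A'|$ applied to their product structure, together with the bounds $|\sin|,|\cos|\leq 1$ and the fact that the sine/cosine factors are $\|\omega\|_{a,\star}$-Lipschitz in both $y$ and the $x_i$. Each $h$ contributes powers of $\|\omega\|_{a,\star}$ up to $2$ and each sine/cosine factor contributes an extra power of $\|\omega\|_{a,\star}$, so the final powers range within $\{1,2,3\}$, all absorbed by $f_0(\omega)=\sum_{i=1}^{3}\|\omega\|_{a,\star}^i$. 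A careful bookkeeping of the numerical prefactors ($\sqrt{2},2,4$ in the definitions and $1/2$ in $\zeta$), together with the reverse triangle inequality $|\|x\|_a-\|x'\|_a|\leq \|x-x'\|_a$ and the decomposition $x-x'=(r-r')u+r'(u-u')$ used to translate between the Lipschitz bounds expressed in $\Delta_\mathrm{d}$ and those expressed in $\|x-x'\|_a$, collects into the announced universal constant $6$ in~\eqref{eq:MetricDomination}.
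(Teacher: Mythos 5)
Your overall strategy is closely parallel to the paper's: both proofs reduce to controlling $f_\ell(\cdot|\omega)$ along radial and angular moves, and both rely on the factorization $\sin(\langle\omega,x\rangle/2)/\sqrt{1-\bar{\kappa}(x)}=\zeta\,\alpha(r)\,\sinc(r\zeta)$ (with $x=ru$, $r=\|x\|_a$, $\|u\|_a=1$, $\zeta=\langle\omega,u\rangle/2$) to tame the singularity at $r=0$ and collect powers of $\|\omega\|_{a,\star}$ into $f_0$. The organizational difference is that the paper constructs an explicit chain of at most six intermediate points and smooth interpolating functions $f_{\ell,t}(u|\omega)$ and bounds the derivatives $|f'_{\ell,t}|$, whereas you state a direct Lipschitz estimate on the building block $h$ and assemble $f_{\mathrm{d}},f_{\mathrm{md}},f_{\mathrm{dd}}$ via the product rule $|AB-A'B'|\le|A||B-B'|+|B'||A-A'|$.

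There is, however, a gap in the angular step of your argument for $f_{\mathrm{md}}$ and $f_{\mathrm{dd}}$. Moving $x$ from $r'n$ to $r'n'$ with $\|n\|_a=\|n'\|_a=1$ contributes exactly $\|n-n'\|_a$ to $\Delta_{\mathrm{d}}(x,x')$, but it changes the argument of $\sin(\langle\omega,y+x/2\rangle)$ by $r'|\langle\omega,n-n'\rangle|/2$. The Lipschitz constant of the sine/cosine factor \emph{with respect to $\Delta_{\mathrm{d}}$} is therefore of order $r'\|\omega\|_{a,\star}$, not $\|\omega\|_{a,\star}$ as you assert, and $r'$ can be as large as $R=\sup_{x\in\Theta_{\mathrm{d}}}\|x\|_a$. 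The decomposition $x-x'=(r-r')u+r'(u-u')$ you invoke makes the factor $r'$ visible but does not remove it. The paper sidesteps the issue by rewriting, via the product-to-sum identity, $f_{\mathrm{md}}(x,y|\omega)=\frac{\alpha(\|x\|_a)}{\sqrt{2}\,\|x\|_a}\big(\cos\langle\omega,y\rangle-\cos\langle\omega,y+x\rangle\big)$: when the bracket is differentiated in the angular direction the $r'$ it produces is exactly cancelled by the $1/r'$ in front, leaving only $\alpha(r')\le\sqrt{C_\kappa}$, and the same device (via $2\sin v\cos w=\sin(v+w)+\sin(v-w)$) handles $f_{\mathrm{dd}}$. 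Your route can be salvaged without this reparametrization by invoking explicitly that $R\le\sqrt{2C_\kappa}$ (since $\kappa$ is a normalized positive-definite kernel, $|\bar{\kappa}|\le 1$, hence $1-\tilde{\kappa}(R)\le 2$ and $C_\kappa\ge\alpha^2(R)\ge R^2/2$); the $r'\le\sqrt{2C_\kappa}$ then combines with the $\sqrt{C_\kappa}$ coming from the adjacent $|h|$ bound to yield an $O(C_\kappa)$ coefficient. But this is a nontrivial and necessary observation that your sketch omits, and without it your ``careful bookkeeping'' cannot produce a constant that is a function of $C_\kappa$ alone.
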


Covering numbers are controlled using the following result established in~\Cref{pf:prop:covering_delta_metric}.

\begin{proposition}[Covering numbers]\label{prop:covering_delta_metric}

Define $D := \operatorname{diam}_{a}(\Theta) = \sup_{x \in \Theta-\Theta} \|x\|_{a}$. 
For every $\tau>0$, and for each $\ell \in \{\mathrm{d,mm,md,dd}\}$, we have
\begin{equation*}
 \mathcal{N}_{\ell}(\tau):= \mathcal{N}( \Theta_{\ell}, \Delta_{\ell},\tau) \leq 
  \big(1+64(D+1)/\tau\big)^{3d+2} ,
\end{equation*}
\rvs{where $\mathcal{N}( \Theta_{\ell}, \Delta_{\ell},\tau)$ is the covering number of $\Theta_{\ell}$ at scale $\tau$ with respect to $\Delta_{\ell}$.}

\end{proposition}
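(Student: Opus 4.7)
The plan is to construct explicit product-type coverings of each $\Theta_{\ell}$ adapted to the structure of $\Delta_{\ell}$, reducing everything to the classical volume bound $\mathcal{N}(B, \|\cdot\|_a, \varepsilon) \leq (1 + 2R/\varepsilon)^d$ for a $\|\cdot\|_a$-ball $B$ of radius $R$ in $\mathbb{R}^d$, and then unifying the constants.

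The two simplest cases are straightforward. Since $\Theta_{\mathrm{mm}} \subseteq \Theta - \Theta$ sits inside the $\|\cdot\|_a$-ball of radius $D$ and $\Delta_{\mathrm{mm}} = \|\cdot - \cdot\|_a$, the volume bound gives $\mathcal{N}_{\mathrm{mm}}(\tau) \leq (1 + 2D/\tau)^d$ at once. The real work is $\Theta_{\mathrm{d}}$, whose metric decouples the radial part $\|x\|_a$ from the angular part $x/\|x\|_a$. I would treat these two parts independently via the polar decomposition $x = \|x\|_a \cdot (x/\|x\|_a)$: take a $(\tau/2)$-cover of the interval $(0, D]$ in $|\cdot|$ (cardinality at most $1 + 2D/\tau$) and a $(\tau/2)$-cover of the unit $\|\cdot\|_a$-sphere (cardinality at most $(1 + 4/\tau)^d$ since the sphere sits inside the unit ball, up to a harmless constant if one insists on on-sphere centers), and form all products $R_i \cdot U_j$. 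This is a $\tau$-cover of $\Theta_{\mathrm{d}}$ for $\Delta_{\mathrm{d}}$, yielding $\mathcal{N}_{\mathrm{d}}(\tau) \leq (1 + 2D/\tau)(1 + 4/\tau)^d$, whose exponent $d+1$ ultimately drives the final rate.

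For the product sets I would invoke the standard multiplicativity $\mathcal{N}(X \times Y, d_X + d_Y, \tau) \leq \mathcal{N}(X, d_X, \tau/2)\, \mathcal{N}(Y, d_Y, \tau/2)$ (and its three-fold analogue at scale $\tau/3$) applied to $\Theta_{\mathrm{md}} \subseteq \Theta_{\mathrm{d}} \times \Theta_{\mathrm{mm}}$ and $\Theta_{\mathrm{dd}} \subseteq \Theta_{\mathrm{d}} \times \Theta_{\mathrm{d}} \times \Theta_{\mathrm{mm}}$. This produces
\begin{equation*}
\mathcal{N}_{\mathrm{md}}(\tau) \leq (1 + 4D/\tau)^{d+1}(1 + 8/\tau)^d,
\qquad
\mathcal{N}_{\mathrm{dd}}(\tau) \leq (1 + 6D/\tau)^{d+2}(1 + 12/\tau)^{2d},
\end{equation*}
the worst exponent being $(d+2) + 2d = 3d+2$ coming from $\Theta_{\mathrm{dd}}$.

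Finally I would unify the four estimates into the single form $(1 + 64(D+1)/\tau)^{3d+2}$: both $D/\tau$ and $1/\tau$ are dominated by $(D+1)/\tau$, and all numerical constants appearing in the numerators (namely $2,4,6,8,12$) are at most $64$, so every factor of the form $1 + cD/\tau$ or $1 + c'/\tau$ is bounded by $1 + 64(D+1)/\tau$; since this number is at least $1$, the smaller exponents of the other three cases can be inflated to $3d+2$. No real obstacle is expected; the only substantive ingredient is the polar covering of $\Theta_{\mathrm{d}}$, which supplies the extra $+1$ in each factor's exponent (and hence the $+2$ in $3d+2$) and must be adapted to $\Delta_{\mathrm{d}}$ rather than to the ambient norm $\|\cdot\|_a$.
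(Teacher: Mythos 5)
Your proof is correct and follows the paper's route: polar decomposition of $\Theta_{\mathrm{d}}$ into radius times direction, standard ball coverings, product coverings for $\Theta_{\mathrm{md}}$ and $\Theta_{\mathrm{dd}}$, and crude unification to the common exponent $3d+2$. In fact your treatment of $\Theta_{\mathrm{d}}$ is a touch cleaner: since $\Delta_{\mathrm{d}}$ already decouples $\|x\|_a$ from $x/\|x\|_a$, the unit sphere only needs to be covered at scale $\tau/2$, whereas the paper's proof detours through $\|rn-r_is_j\|_a$ (an upper bound on the radial term $|\|rn\|_a - \|r_is_j\|_a|$, not an equality as written there) and consequently covers the sphere at the $R_{\mathrm{d}}$-dependent scale $\tau/(2(R_{\mathrm{d}}+1))$ --- but both land comfortably under the generous constant $64(D+1)$.
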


\subsection{Results for random sketching}\label{sec:generic_result_random_sketching}

In this section, we leverage~\Cref{sec:sharp_determistic_bounds} to establish RIP results for random sketching. 
We first establish a generic result before exploiting it for specific examples and showing that it allows to improve upon and to extend existing work from the literature.

\begin{theorem}\label{thm:RIP_without_weights}
Consider $\mathcal{T}= (\Theta, \rho, \mathcal{I})$, $\kappa \geq 0$ and $\|\cdot\|_{a}$ satisfying the assumptions of~\Cref{thm:delta_bounds}, and $C_{\kappa}$ as in~\eqref{eq:conditions_on_alpha}. 
Assume that $\kappa$ has its mutual coherence with respect to $\mathcal{T}$ bounded by $0<\mu<1/10$.
Let $k \geq 1$ be an integer such that $1 \leq k <\frac{1}{10 \mu}$, and denote $c := (2k-1)\mu$.
Let $w$ be a $\kappa$-compatible weight function (cf \Cref{def:KappaCompatible}). 
Consider an integer $m \geq 1$  and $\bm{\Omega} \in \mathbb{R}^{d \times m}$  a random matrix (possibly with non i.i.d. columns $\omega_{1},\ldots,\omega_{m}$), such that the average marginal density of the $\omega_{j}$'s satisfies~\eqref{eq:ImportanceSamplingStructured}. Denote $\Psi_{\ell}(\cdot|\bm{\Omega}), \Theta_{\ell}$ as in~\Cref{thm:parametrization_functions_cross_dipoles}  for $ \ell \in \{\mathrm{d,mm,md,dd}\}$, and $\Psi_{0}(\bm{\Omega})$ as in~\Cref{thm:delta_bounds}. 

Given any $M>0$, $0<\tau<1-5c$, $v>0$, $\gamma >0$, if the following inequalities hold
\begin{align}\label{eq:MOmega_leq_M}
\mathbb{P}\Big(\Psi_{0}(\bm{\Omega})  > M\Big) & \leq \gamma \exp\Big(-\frac{m}{v}\Big)\, , \\ \label{eq:assumption_psi_m_Omega}
\mathbb{P}\Big(|\Psi_{\mathrm{m}}(\bm{\Omega})-\mathbb{E}\Psi_{\mathrm{m}}(\bm{\Omega})| > \frac{\tau}{4} \Big) & \leq 2\exp\Big(-\frac{m}{v}\Big)\, ,\\
\label{eq:assumption_psi_d_Omega}
\forall z \in \Theta_{\mathrm{d}}, \:\:  \mathbb{P}\Big(|\Psi_{\mathrm{d}}(z|\bm{\Omega})-\mathbb{E}\Psi_{\mathrm{d}}(z|\bm{\Omega})| > \frac{\tau}{8} \Big) 
&\leq 2\exp\Big(-\frac{m}{v}\Big)\, ,\\
\label{eq:assumption_psi_ell_Omega}
\forall \ell \in \{\mathrm{mm,md,dd}\}, \:\: \forall z \in \Theta_{\ell}, \:\:  
\mathbb{P}\Big(|\Psi_{\ell}(z|\bm{\Omega}) - \mathbb{E}\Psi_{\ell}(z|\bm{\Omega})| > \frac{\tau}{16k}\Big) 
& \leq 2\exp\Big(-\frac{m}{v}\Big)\, ,
\end{align}
then the $w$-FF sketching operator  $\mathcal{A}$ (cf~\Cref{def:RFFsketching}) with frequencies $\omega_{1},\ldots,\omega_{m}$ satisfies
\begin{equation*}\label{eq:MainProbaControl}
\mathbb{P} \bigg( \delta(\mathcal{S}_{k}|\mathcal{A}) >
\frac{4c+\tau}{1-c}
\bigg) \leq 
(10+\gamma) \cdot
  \exp\Big(-\frac{m}{v}\Big) \big(1+C/\tau)^{3d+2} ,
\end{equation*}
where
\begin{equation*}\label{eq:ConstantGenericRandomSketch}
C:= 6144 C_{\kappa}M \cdot k(1+\operatorname{diam}_{a}(\Theta)).
\end{equation*}
\end{theorem}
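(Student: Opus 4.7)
The plan is to combine the deterministic inequality~\eqref{eq:MainDeterministicBound} with a net-and-Lipschitz argument built on~\Cref{thm:parametrization_functions_cross_dipoles},~\Cref{thm:delta_bounds} and~\Cref{prop:covering_delta_metric}. Using~\eqref{eq:MainDeterministicBound}, it suffices to show that, on an event of probability at least $1-(10+\gamma)(1+C/\tau)^{3d+2}\exp(-m/v)$, one has
\[
\max\!\big(\delta(\mathfrak{M}|\mathcal{A}),\delta(\hat{\mathfrak{D}}|\mathcal{A})\big)\leq \tfrac{\tau}{4},\qquad \max_{\ell\in\{\mathrm{mm,md,dd}\}}\mu_\ell(\mathcal{A}) \leq \mu+\tfrac{\tau}{8k},
\]
where $\mu_\ell(\mathcal{A})$ denotes the three coherence terms appearing in~\eqref{eq:MainDeterministicBound}; then $(6k-3)(\mu+\tau/(8k))\leq 3c+3\tau/4$ gives $A+(6k-3)B\leq 3c+\tau$ and division by $1-c$ concludes.

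\textbf{Computing expectations.} The first step is to identify the deterministic values $\mathbb{E}\,\Psi_{\mathrm{m}}$, $\mathbb{E}\,\Psi_{\mathrm{d}}(\cdot)$, $\mathbb{E}\,\Psi_\ell(\cdot)$ around which concentration will take place. Using the average-marginal identity~\eqref{eq:ImportanceSamplingStructured} together with $\psi(\omega)w^2(\omega)\hat{\kappa}(\omega) = |\hat{\pi}_0(\omega)|^2\hat{\kappa}(\omega)/\|\pi_0\|_\kappa^2$ and Bochner's identities $\|\pi_0\|_\kappa^2=\int|\hat{\pi}_0(\omega)|^2\hat{\kappa}(\omega)\,d\omega$ and $\|\pi_0\|_\kappa^2\bar{\kappa}(x)=\int\cos(\omega^\top x)|\hat{\pi}_0(\omega)|^2\hat{\kappa}(\omega)\,d\omega$, one directly obtains $\mathbb{E}\,\Psi_{\mathrm{m}}(\bm{\Omega})=1$; the expansion $2\sin^2(\omega^\top x/2)=1-\cos(\omega^\top x)$ then cancels the denominator $1-\bar{\kappa}(x)$ of~\eqref{eq:DefPsiD}, yielding $\mathbb{E}\,\Psi_{\mathrm{d}}(x|\bm{\Omega})=1$ for every $x\in\Theta_{\mathrm{d}}$. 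For the cross quantities, applying product-to-sum trigonometric identities to~\eqref{eq:DefPsiMM}--\eqref{eq:DefPsiDD} and using the shift-invariance formula $\|\pi_x-\pi_0\|_\kappa=\|\pi_0\|_\kappa\sqrt{2(1-\bar{\kappa}(x))}$ shows that $\mathbb{E}\,\Psi_\ell(z|\bm{\Omega})$ equals (up to a sign) the MMD inner product $\langle\iota,\iota'\rangle_\kappa$ between the pair of $1$-separated normalized monopoles or balanced dipoles encoded by $z$; hence $|\mathbb{E}\,\Psi_\ell(z|\bm{\Omega})|\leq\mu$ uniformly over $\Theta_\ell$ for $\ell\in\{\mathrm{mm,md,dd}\}$.

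\textbf{From pointwise to uniform.} For each $\ell\in\{\mathrm{d,mm,md,dd}\}$ fix a minimal $\epsilon_\ell$-net of $\Theta_\ell$ in the metric $\Delta_\ell$, with $\epsilon_{\mathrm{d}}:=\tau/(48MC_\kappa)$ and $\epsilon_\ell:=\tau/(96kMC_\kappa)$ for the three cross cases. By~\Cref{prop:covering_delta_metric} and $k\geq 1$ all four cardinalities are bounded by $(1+C/\tau)^{3d+2}$ with $C=64\cdot 96\cdot M C_\kappa\,k\,(1+\operatorname{diam}_a(\Theta))=6144\,MC_\kappa\,k\,(1+\operatorname{diam}_a(\Theta))$. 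Applying the pointwise hypotheses~\eqref{eq:assumption_psi_d_Omega}--\eqref{eq:assumption_psi_ell_Omega} via a union bound over each net ensures that, outside an event of probability at most $2(1+C/\tau)^{3d+2}\exp(-m/v)$ per net, the pointwise deviation $|\Psi_\ell-\mathbb{E}\Psi_\ell|$ is at most $\tau/8$ ($\ell=\mathrm{d}$) or $\tau/(16k)$ (otherwise) on the net. Conditionally on the event $\{\Psi_0(\bm{\Omega})\leq M\}$, which holds with probability $\geq 1-\gamma\exp(-m/v)$ by~\eqref{eq:MOmega_leq_M}, the Lipschitz bound~\eqref{eq:MetricDomination} of~\Cref{thm:delta_bounds} has constant $6MC_\kappa$ and, combined with the choice of $\epsilon_\ell$, doubles the pointwise deviation at most, extending it to all of $\Theta_\ell$.

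\textbf{Assembly and main obstacle.} Plugging these controls into the identities~\eqref{eq:bound_gammma_m}--\eqref{eq:bound_gammma_dd} of~\Cref{thm:parametrization_functions_cross_dipoles} gives $\delta(\mathfrak{M}|\mathcal{A})=|1-\Psi_{\mathrm{m}}|\leq\tau/4$ (directly from~\eqref{eq:assumption_psi_m_Omega}, without covering), $\delta(\hat{\mathfrak{D}}|\mathcal{A})\leq \tau/4$, and $\mu_\ell(\mathcal{A})\leq|\mathbb{E}\,\Psi_\ell|+\tau/(8k)\leq \mu+\tau/(8k)$. A final union bound over the six events ($\Psi_0\leq M$, the control of $\Psi_{\mathrm{m}}$, and the four uniform controls) bounds the failure probability by $(\gamma+2+8(1+C/\tau)^{3d+2})\exp(-m/v)\leq (10+\gamma)(1+C/\tau)^{3d+2}\exp(-m/v)$, concluding the proof. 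The main technical difficulty is the expectation step: reducing the cross-term expectations to MMD inner products between \emph{separated} normalized dipoles (so that the kernel mutual coherence bound $\mu$ can be applied) requires carefully tracking the trigonometric rearrangements underlying~\eqref{eq:DefPsiMD}--\eqref{eq:DefPsiDD} and the shift-invariance normalization; the remaining balancing of the four tolerance levels to recover exactly the announced constant $6144$ is routine but needs the split above.
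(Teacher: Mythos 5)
Your proposal is correct and follows essentially the same route as the paper's own proof: compute the expectations $\mathbb{E}\Psi_{\mathrm{m}}=1$, $\mathbb{E}\Psi_{\mathrm{d}}=1$, $|\mathbb{E}\Psi_{\ell}|\le\mu$, cover each $\Theta_{\ell}$, transfer concentration from the net to all of $\Theta_{\ell}$ via the Lipschitz bound on the event $\{\Psi_{0}(\bm{\Omega})\le M\}$, union-bound, and plug into~\eqref{eq:MainDeterministicBound}. The only cosmetic difference is that the paper uses a single covering scale $\tau'=\tau/(96MkC_{\kappa})$ for all four indices $\ell$ while you use a slightly larger scale for $\ell=\mathrm{d}$; this changes nothing since $k\ge1$ keeps all covering numbers below $(1+C/\tau)^{3d+2}$ and the final tolerances agree.
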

Under the assumptions of the theorem, a consequence is that for any $0<\eta<1$
\begin{equation*}
m \geq v \Big( (3d+2) \log(1+C/\tau) + \log((10+\gamma)/\eta) \Big) \implies \mathbb{P} \bigg( \delta(\mathcal{S}_{k}|\mathcal{A}) >  \frac{4c+\tau}{1-c} \bigg) \leq \eta.
\end{equation*}
% <<<<<<< HEAD
% hence estimating the order of magnitude of $v$, $C$ and $\tau$ satisfying the assumptions of the theorem is key to estimate a sufficient sketch size. \rvs{The assumptions \eqref{eq:assumption_psi_m_Omega}, \eqref{eq:assumption_psi_d_Omega} and \eqref{eq:assumption_psi_ell_Omega} in \Cref{thm:RIP_without_weights} can be satisfied whenever the assumption $(62)$ of \citep{GrBlKeTr21}[Theorem 5.11] is assumed. The assumptions on the kernel $\kappa$ are similar too. The only additional assump tion is \eqref{eq:MOmega_leq_M}, which can be proved to hold when the weighted RFFs of \citep{GrBlKeTr21} are used, so there is no price to pay. Besides, the additional assumption \eqref{eq:MOmega_leq_M} also holds {\em without} weighting the RFFs, hence the improvement.} The proof of Theorem~\ref{thm:RIP_without_weights} is given in Appendix~\ref{sec:proof_RIP_without_weights}.
% =======
hence estimating the order of magnitude of $v$, $C$ and $\tau$ satisfying the assumptions of the theorem is key to estimate a sufficient sketch size. \rvs{The assumptions \eqref{eq:assumption_psi_m_Omega}, \eqref{eq:assumption_psi_d_Omega} and \eqref{eq:assumption_psi_ell_Omega} in \Cref{thm:RIP_without_weights} hold under assumption $(66)$ of \citep{GrBlKeTr21}[Theorem 5.13]. The assumptions on the kernel $\kappa$ are similar too. Assumption \eqref{eq:MOmega_leq_M} does not appear as such in \citep{GrBlKeTr21}[Theorem 5.13], but it does hold when the weighted RFFs of \citep{GrBlKeTr21} are used. Moreover, since the quantity $\Psi_{0}(\bm{\Omega})$ appearing in Assumption \eqref{eq:MOmega_leq_M} is essentially a Lipschitz constant (cf~\eqref{eq:MetricDomination}), this assumption is weaker than assumption $\sup_{\nu \in \mathcal{S}_k} \|\nu\|_{\mathcal{F}} <+\infty$ required in \citep{GrBlKeTr21}, which ensures a {\em deterministic control} on the Lipschitz constant.  Overall, \Cref{thm:RIP_without_weights} thus recovers  \citep{GrBlKeTr21}[Theorem 5.13]. As \eqref{eq:MOmega_leq_M} also holds {\em without} weighting the RFFs, \Cref{thm:RIP_without_weights} indeed improves over  \citep{GrBlKeTr21}[Theorem 5.13].}  The proof of Theorem~\ref{thm:RIP_without_weights} is given in Appendix~\ref{sec:proof_RIP_without_weights}.
% >>>>>>> 4f80cb0e324527a87cff93a59b4540e242176068

 Our next result establishes the concentration inequalities~\eqref{eq:assumption_psi_m_Omega}-\eqref{eq:assumption_psi_d_Omega}-\eqref{eq:assumption_psi_ell_Omega} under a sub-exponentiality assumption on functions associated to the random frequencies 
$\omega_{j}$, $1 \leq j \leq m$. \begin{definition}\label{eq:DefSubExp}
A real-valued random variable is sub-exp($\nu,\beta$), where $\nu,\beta \geq 0$,  if 
\begin{equation*}\label{eq:sub_exp_def}
\mathbb{E}e^{\lambda(X-\mathbb{E}X)} \leq e^{\frac{\lambda^2\nu^2}{2}}, \:\: \forall |\lambda| \leq \frac{1}{\beta}.
\end{equation*}
The case $\beta = 0$ corresponds to a sub-Gaussian variable.
\end{definition}
If $X$ is sub-exp($\nu,\beta$) then by the standard Cramér-Chernoff method\footnote{See e.g. \citep[Section 2.2]{BoLuMa13} and the proof of Theorem 2.8.1. in \citep{Ver18}.}
\begin{equation}\label{eq:ConcentrationSubExp}
\forall t > 0, \:\: \mathbb{P}\Big(\big|X-\mathbb{E}X\big|>t\Big) \leq 2\max\Big(e^{-\frac{t^2}{2\nu^2}}, e^{-\frac{t}{\beta}} \Big)
\leq 2 \exp\left(-\frac{t^{2}}{2\nu^{2}+\beta t}\right).
\end{equation} 
We establish the inequalities~\eqref{eq:assumption_psi_m_Omega}-\eqref{eq:assumption_psi_d_Omega}-\eqref{eq:assumption_psi_ell_Omega} by showing that $\Psi_{\mathrm{m}}(\bm{\Omega})$ and $\Psi_{\mathrm{\ell}}(z|\bm{\Omega})$, $\ell \in \{\mathrm{d},\mathrm{mm},\mathrm{md},\mathrm{dd}\}$, $z \in \Theta_{\ell}$, are sub-exponential with controlled expectations. A well-known property of sub-exp random variables is that if $X_{1}, \dots, X_{m}$ are independent sub-exp($\nu,\beta$)  then  $\frac{1}{m}\sum_{j=1}^{m}X_j$ is sub-exp($\nu/\sqrt{m},\beta/m$). Thus, when the frequencies $\omega_{j} \sim \Lambda$, $1 \leq j \leq m$ are i.i.d. random variables, in order to prove that $\Psi_{m}(\bm{\Omega})$ is sub-exp($\nu/\sqrt{m},\beta/m$), it is enough to prove that the random variables $\psi(\omega_j)$, $1 \leq j \leq m$, are sub-exp($\nu,\beta$). Similarly, for $\ell \in \{ \mathrm{d},\mathrm{mm},\mathrm{md},\mathrm{dd}\}$ and $z \in \Theta_{\ell}$, in order to prove that $\Psi_{\ell}(z|\bm{\Omega})$ is sub-exp($\nu/\sqrt{m},\beta/m$) it is enough to prove that the random variables $\psi(\omega) f_{\ell}(z|\omega)$, $\omega \sim \Lambda$, are sub-exp($\nu,\beta$). For this purpose, the following lemma (proved in  \Cref{proof:BoundFellOmegaP}) will be crucial.
\begin{lemma}\label{lem:BoundFellOmegaP}
Under the assumptions of  \Cref{thm:RIP_without_weights_random_bis} (see below), for each
 $\ell \in \{\mathrm{d},\mathrm{mm},\mathrm{md}\}$ and $z \in \Theta_{\ell}$, there exists  $x'_{0} \in \mathbb{R}^{d}$ satisfying $\|x'_{0}\|_{a}=1$ such that
\begin{equation}\label{eq:PfThm5MainStep}
|f_{\ell}(z|\omega)| \leq (\sqrt{C_\kappa} |\langle \omega, x'_{0}\rangle|)^{p_{\ell}}, \quad \forall \omega \in \mathbb{R}^{d},\quad
\text{with}\ p_{\mathrm{d}}=2,\ p_{\mathrm{mm}}=0,\ p_{\mathrm{md}}=1.
\end{equation}
Moreover, for each $z \in \Theta_{\mathrm{dd}}$, there are $x'_{i}\in \mathbb{R}^{d}$ such that $\|x'_{i}\|_{a}=1$, $i =1,2$ and
\[
|f_{\mathrm{dd}}(z|\omega)| \leq \frac{C_{\kappa}}{4} \Big(\langle \omega,x'_1\rangle^2+\langle \omega,x'_2 \rangle^2\Big), \quad \forall \omega \in \mathbb{R}^{d}.
\]
\end{lemma}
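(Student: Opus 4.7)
The statement is essentially an elementary book-keeping exercise built on two trigonometric bounds ($|\sin t|\le |t|$ and $|\cos t|\le 1$) combined with the scaling hypothesis encoded in \eqref{eq:DefRadialKernel}--\eqref{eq:conditions_on_alpha}, which guarantees $\alpha(r)=r/\sqrt{1-\tilde\kappa(r)}\le \sqrt{C_\kappa}$ for every $r\in(0,R]$. The plan is to treat each $f_\ell$ separately, each time factoring out $\|x\|_a$ from the arguments of the sines so that an $\alpha(\|x\|_a)$ appears and gets absorbed into $\sqrt{C_\kappa}$.

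For $\ell=\mathrm{mm}$, the bound $|f_{\mathrm{mm}}(y|\omega)|=|\cos\langle\omega,y\rangle|\le 1$ gives the claim with $p_{\mathrm{mm}}=0$; here $x'_0$ can be chosen arbitrarily with $\|x'_0\|_a=1$ (the right-hand side is $1$ regardless). For $\ell=\mathrm{d}$, I would set $x'_0:=x/\|x\|_a$ and use $\sin^2 t\le t^2$ to write
\begin{equation*}
|f_{\mathrm{d}}(x|\omega)| \;\le\; \frac{2\cdot(\langle\omega,x\rangle/2)^2}{1-\bar\kappa(x)}
\;=\;\frac{\|x\|_a^2}{1-\tilde\kappa(\|x\|_a)}\,\langle\omega,x'_0\rangle^2
\;=\;\alpha^2(\|x\|_a)\langle\omega,x'_0\rangle^2
\;\le\; C_\kappa\langle\omega,x'_0\rangle^2,
\end{equation*}
which is the $p_{\mathrm{d}}=2$ case (after noting $(\sqrt{C_\kappa}|\langle\omega,x'_0\rangle|)^2=C_\kappa\langle\omega,x'_0\rangle^2$). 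For $\ell=\mathrm{md}$ the same choice $x'_0:=x/\|x\|_a$ combined with $|\sin t|\le|t|$ on the first sine factor and $|\sin|\le 1$ on the second gives
\begin{equation*}
|f_{\mathrm{md}}(x,y|\omega)| \;\le\; \frac{2\cdot|\langle\omega,x\rangle|/2}{\sqrt{2(1-\bar\kappa(x))}}
\;=\;\frac{1}{\sqrt 2}\,\alpha(\|x\|_a)\,|\langle\omega,x'_0\rangle|
\;\le\; \sqrt{C_\kappa}\,|\langle\omega,x'_0\rangle|,
\end{equation*}
which is the $p_{\mathrm{md}}=1$ case.

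For $\ell=\mathrm{dd}$, set $x'_i:=x_i/\|x_i\|_a$ for $i=1,2$, apply $|\sin t|\le|t|$ to both sine factors and $|\cos|\le 1$ to the cosine, to obtain
\begin{equation*}
|f_{\mathrm{dd}}(x_1,x_2,y|\omega)|
\;\le\; \frac{|\langle\omega,x_1\rangle|\,|\langle\omega,x_2\rangle|}{\sqrt{2(1-\bar\kappa(x_1))}\sqrt{2(1-\bar\kappa(x_2))}}
\;=\;\tfrac12\,\alpha(\|x_1\|_a)\alpha(\|x_2\|_a)\,|\langle\omega,x'_1\rangle|\,|\langle\omega,x'_2\rangle|.
\end{equation*}
Using $\alpha(\|x_i\|_a)\le\sqrt{C_\kappa}$ and the AM--GM inequality $|ab|\le(a^2+b^2)/2$ with $a=\langle\omega,x'_1\rangle$, $b=\langle\omega,x'_2\rangle$ then yields the announced bound $\tfrac{C_\kappa}{4}(\langle\omega,x'_1\rangle^2+\langle\omega,x'_2\rangle^2)$.

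There is no real obstacle beyond keeping track of normalizations. The only subtle point is verifying that every argument $r=\|x\|_a$ (or $\|x_1\|_a,\|x_2\|_a$) that appears indeed lies in the range $(0,R]$ on which \eqref{eq:conditions_on_alpha} gives $\alpha^2(r)\le C_\kappa$: this follows from $z\in\Theta_\ell$ together with the definition $R=\sup_{x\in\Theta_{\mathrm{d}}}\|x\|_a$, since in all relevant sets the dipole-separation coordinate lies in $\Theta_{\mathrm{d}}$. Once this is checked, the displayed inequalities above exhaust the content of the lemma.
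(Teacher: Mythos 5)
Your argument is correct and follows the paper's proof essentially verbatim: bound $|\cos|\le 1$, bound $|\sin t|\le|t|$ to pull out $\alpha(\|x\|_a)\le\sqrt{C_\kappa}$ via \eqref{eq:conditions_on_alpha}, and apply AM--GM for the $\mathrm{dd}$ case; you also correctly flag the need to verify $\|x\|_a\le R$, which holds because in each relevant product set the dipole coordinate lies in $\Theta_{\mathrm{d}}$. One small slip: in the $\ell=\mathrm{d}$ display the first ``$=$'' sign after $\frac{2\cdot(\langle\omega,x\rangle/2)^2}{1-\bar\kappa(x)}$ should be ``$\le$'' (a factor $\tfrac12$ is silently dropped, as $\frac{2(\langle\omega,x\rangle/2)^2}{1-\tilde\kappa(\|x\|_a)}=\tfrac12\alpha^2(\|x\|_a)\langle\omega,x'_0\rangle^2$), but since the error goes in the slack direction the stated conclusion $|f_{\mathrm{d}}|\le C_\kappa\langle\omega,x'_0\rangle^2$ is unaffected.
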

Using \Cref{lem:BoundFellOmegaP}, we obtain that the random variables $\psi(\omega)f_{\ell}(z|\omega)$ are (almost surely) bounded by random variables of the form $\psi(\omega)(\sqrt{C_{\kappa}}|\langle \omega, x \rangle|)^{p}$, with $x \in \mathbb{R}^d$ and $p \in \{0,1,2\}$, allowing to leverage the following lemma proved in \Cref{sec:proofsubexps}.
\begin{lemma}\label{lemma:sub_exp_2}
Consider real-valued random variables $X,Y$ where $Y$ is sub-exp($\nu,\beta$) and
$|X| \leq Y$ almost surely.
Then $X$ is sub-exp($\nu',\beta$), where
\begin{equation*}\label{eq:SubExpXfromBoundY}
\nu':= \sqrt{2\nu^{2}+16(\mathbb{E}(Y))^{2}}.
\end{equation*}
\end{lemma}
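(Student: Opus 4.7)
Set $\mu_X := \mathbb{E}X$ and $\mu_Y := \mathbb{E}Y$. Note $Y \geq |X| \geq 0$ almost surely so $\mu_Y \geq |\mu_X|$. The plan is to use a standard symmetrization trick to reduce the MGF of $X - \mu_X$ to a quantity involving only $Y$, and then to leverage the sub-exponentiality of $Y$ together with the elementary inequality $\cosh(t) \leq e^{t^2/2}$.

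First, I would introduce an independent copy $X'$ of $X$ (and correspondingly an independent copy $Y'$ of $Y$ with $|X'| \leq Y'$). By Jensen's inequality applied to the convex function $e^{-\lambda \cdot}$:
\begin{equation*}
\mathbb{E}e^{\lambda(X-\mu_X)} = \mathbb{E}e^{\lambda X} \cdot e^{-\lambda\mathbb{E}X'} \leq \mathbb{E}e^{\lambda X} \cdot \mathbb{E}e^{-\lambda X'} = \mathbb{E}e^{\lambda(X-X')}.
\end{equation*}
Since $Z := X - X'$ is symmetric (equal in law to $-Z$), one has $\mathbb{E}e^{\lambda Z} = \mathbb{E}\cosh(\lambda Z)$. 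Moreover $|Z| \leq Y + Y'$ a.s., and since $\cosh$ is even and increasing on $[0,\infty)$ we get $\cosh(\lambda Z) \leq \cosh(\lambda(Y+Y'))$ a.s.

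Next I would expand $\mathbb{E}\cosh(\lambda(Y+Y')) = \frac{1}{2}[\mathbb{E}e^{\lambda(Y+Y')} + \mathbb{E}e^{-\lambda(Y+Y')}]$, and use the sub-exp($\nu,\beta$) assumption on $Y$ together with the independence of $Y$ and $Y'$ to obtain, for $|\lambda| \leq 1/\beta$,
\begin{equation*}
\mathbb{E}e^{\pm\lambda(Y+Y')} = \bigl(\mathbb{E}e^{\pm\lambda Y}\bigr)^{2} \leq e^{\pm 2\lambda\mu_Y + \lambda^2\nu^2}.
\end{equation*}
Collecting, $\mathbb{E}\cosh(\lambda(Y+Y')) \leq \cosh(2\lambda\mu_Y)\, e^{\lambda^2\nu^2}$. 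Finally the standard bound $\cosh(t) \leq e^{t^2/2}$ gives $\cosh(2\lambda\mu_Y) \leq e^{2\lambda^2\mu_Y^2}$, hence
\begin{equation*}
\mathbb{E}e^{\lambda(X-\mu_X)} \leq e^{\lambda^2(\nu^2+2\mu_Y^2)} \leq e^{\lambda^2\nu'^2/2},
\end{equation*}
where the last inequality uses $\nu'^2/2 = \nu^2 + 8\mu_Y^2 \geq \nu^2 + 2\mu_Y^2$ (there is in fact slack in the claimed constant $16$).

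The main obstacle is really only conceptual: a direct attempt that writes $\mathbb{E}e^{\lambda X} \leq \mathbb{E}e^{|\lambda|Y}$ produces a linear term $|\lambda|\mu_Y$ in the exponent which cannot be absorbed into a quadratic bound uniformly over $\lambda$. The symmetrization step is what creates the factorization that turns this linear contribution into a genuine quadratic $\lambda^2\mu_Y^2$ term via $\cosh$, and this is the only nontrivial ingredient; the rest is bookkeeping.
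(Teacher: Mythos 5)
Your proof is correct, and it takes a genuinely different route from the paper's. The paper bounds $|X-\mathbb{E}X| \leq Y+\mathbb{E}Y$ pointwise, expands $\mathbb{E}e^{\lambda(X-\mathbb{E}X)}$ as a power series, rearranges via the binomial theorem into $\mathbb{E}e^{|\lambda|(Y-\mathbb{E}Y)}e^{2|\lambda|\mathbb{E}Y} - 2|\lambda|\mathbb{E}Y$, and then invokes a nontrivial auxiliary inequality (\Cref{lemma:very_useful_bound_on_exp}, $e^{\alpha t^2/2 + t}-t \leq e^{(\alpha+2)t^2}$ for $\alpha>0$) to absorb the linear term into a quadratic exponent. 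You instead symmetrize by introducing an independent copy $(X',Y')$ of the pair $(X,Y)$, pass to $Z = X-X'$ via Jensen, replace $e^{\lambda Z}$ by $\cosh(\lambda Z)$ using symmetry of $Z$, and then the elementary bounds $\cosh(\lambda Z) \leq \cosh(\lambda(Y+Y'))$ and $\cosh(t)\leq e^{t^2/2}$ finish the argument. Your approach avoids the auxiliary inequality entirely; the only standard tools are Jensen, independence, and $\cosh(t)\leq e^{t^2/2}$. It also yields the sharper conclusion $\mathbb{E}e^{\lambda(X-\mathbb{E}X)} \leq e^{\lambda^2(\nu^2 + 2(\mathbb{E}Y)^2)}$ for $|\lambda|\leq 1/\beta$, i.e.\ $X$ is sub-exp$(\nu'',\beta)$ with $\nu''^2 = 2\nu^2 + 4(\mathbb{E}Y)^2$, strictly smaller than the paper's $\nu'^2 = 2\nu^2 + 16(\mathbb{E}Y)^2$; a fortiori $X$ is sub-exp$(\nu',\beta)$, so the lemma as stated follows. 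The one point you should make fully explicit is that the independent copy must be of the joint pair $(X,Y)$ so that $|X'|\leq Y'$ holds a.s.\ and $Y'$ inherits both the sub-exponential bound and independence from $Y$; you say this informally and it is correct, but it is worth writing out.
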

The following theorem considers a slightly generalized case with block-i.i.d. variables, covering structured random features, \rvs{ which are mentioned at the end of \Cref{sec:RFF}}.

\begin{theorem}\label{thm:RIP_without_weights_random_bis}
Consider $\mathcal{T} := (\Theta,\rho,\mathcal{I})$  a location-based family with base distribution $\pi_{0}$ on $\mathbb{R}^{d}$, with $\Theta \subseteq \mathbb{R}^{d}$ a bounded subset, $\rho(\cdot,\cdot) := \|\cdot-\cdot\|$ where
 $\|\cdot\|$ is some norm on $\mathbb{R}^{d}$.
Let  $\kappa \geq 0$ be a non-degenerate normalized shift-invariant kernel on $\mathbb{R}^{d}$, and assume that there is some norm $\|\cdot\|_{a}$ on $\mathbb{R}^{d}$ and a function $\tilde{\kappa}: [0,+\infty) \to \mathbb{R}$ such that, with $R := \sup_{x \in \Theta_{\mathrm{d}}} \|x\|_{a}$, the normalized kernel $\bar{\kappa}(x)$ defined in~\eqref{eq:DefNormalizedKernel} satisfies 
$\bar{\kappa}(x) = \tilde{\kappa}(\|x\|_{a})$ for every $x \in \mathbb{R}^{d}$ such that $\|x\|_{a} \leq R$. Assume that the function $\alpha$ defined in \eqref{eq:DefAlphaFn} is of class $\mathcal{C}^1$ on $(0,R)$ and the constant $C_{\kappa}$ defined in \eqref{eq:conditions_on_alpha} is finite.
Moreover, assume that $\kappa$ has its mutual coherence with respect to $\mathcal{T}$ bounded by $\mu$ where $0<\mu <\frac{1}{10}$.
Let $1 \leq k < \frac{1}{10\mu}$ and define $c := (2k-1)\mu$.

Let $w$ be a $\kappa$-compatible weight function and $m$ be an integer multiple of $b \in \mathbb{N}^{*}$, and consider $\mathcal{A}$ a $w$-FF sketching operator (Definition~\ref{def:RFFsketching}) associated to the frequencies $(\omega_{1},\ldots,\omega_{m})$ that are block-i.i.d. corresponding to $m/b$ i.i.d. $d\times b$ random matrices $\bm{B}_i$, $1 \leq i \leq m/b$ such that~\eqref{eq:ImportanceSamplingStructured} holds.
Let $\tau \in (0,1-5c)$, and assume that 
\begin{enumerate}
\item \label{cond:new_thm_0} there exists $\nu,B>0$ and $\beta \geq 0$ such that for each $x \in \mathbb{R}^{d}$ satisfying $\|x\|_{a}=1$, the following random variables are sub-exp($\nu,\beta$) with $|\mathbb{E}(Z_{p})| \leq B$
\begin{equation}\label{eq:Z_def_structured}
Z_{p} := \frac{1}{b}\sum_{j=1}^{b}\psi(\omega_{j})(\sqrt{C_{\kappa}}|\langle \omega_{j},x \rangle|)^{p}\, , \quad
p \in \{0,1,2\}. 
\end{equation} \label{cond:new_thm_1}
\item there exists $\gamma, M>0$ such that \begin{equation}\label{eq:assumption_on_psi0_thm5}
\mathbb{P}\Big(\Psi_{0}(\bm{\Omega})  > M\Big)  \leq \gamma\exp\Big(-\frac{m}{v}\Big).
\end{equation}
\begin{equation}\label{eq:variance_structured}
v:= \frac{256k^2b(2\nu'^2+\beta\tau)}{\tau^2}, \:\: \textit{where} \:\: \nu' := \sqrt{2}\sqrt{\nu^2 + 8 B^2}.
\end{equation} \label{cond:new_thm_2}
\end{enumerate}
Then $\mathcal{A}$ satisfies
\begin{equation}\label{eq:the_RIP}
\mathbb{P} \bigg( \delta(\mathcal{S}_{k}|\mathcal{A}) >
\frac{4c+\tau}{1-c}
\bigg) \leq 
(10+\gamma) \cdot
  \exp\Big(-\frac{m}{v}\Big) \big(1+C/\tau)^{3d+2},
\end{equation}
where
\begin{equation}\label{eq:C_def_thm5}
C:= 6144 C_{\kappa}M \cdot k(1+\operatorname{diam}_{a}(\Theta)).
\end{equation}
When $B_{\psi}:=\sup_{\omega \in \mathbb{R}^d} \psi(\omega) <+\infty$, \eqref{eq:the_RIP} also holds with $v$ from~\eqref{eq:variance_structured} by
replaced with
\begin{equation}\label{eq:variance_structured_bis}
v':= \frac{256k^2b(2B_{\psi}^{2}\nu'^2+B_{\psi}\beta\tau)}{\tau^2}
\end{equation} 
if we assume~\eqref{eq:assumption_on_psi0_thm5} with $v'$ instead of $v$ and replace
\Cref{cond:new_thm_0}
by the same assumption on the random variables
\begin{equation}\label{eq:Z_def_structured_bis}
Z'_{p} := \frac{1}{b}\sum_{j=1}^{b}(\sqrt{C_{\kappa}}|\langle \omega_{j},x \rangle|)^{p}\, , \quad
p \in \{0,1,2\},
\end{equation}
\end{theorem}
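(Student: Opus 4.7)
The plan is to reduce everything to an application of Theorem~\ref{thm:RIP_without_weights}. Hypothesis~\eqref{eq:MOmega_leq_M} is already provided by assumption~\eqref{eq:assumption_on_psi0_thm5}, so the entire task is to establish the three punctual concentration inequalities \eqref{eq:assumption_psi_m_Omega}, \eqref{eq:assumption_psi_d_Omega} and \eqref{eq:assumption_psi_ell_Omega} under the block-i.i.d.\ structure, using the sub-exponentiality hypothesis on the scalar variables $Z_{p}$ and Lemmas~\ref{lem:BoundFellOmegaP} and~\ref{lemma:sub_exp_2}.

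The key rewriting is to regroup the frequencies by blocks: for each $\ell \in \{\mathrm{m,d,mm,md,dd}\}$ and each $z$ in the corresponding domain, define
\[
\bar Z_{\ell}^{(i)}(z):=\frac{1}{b}\sum_{j\in\text{block }i}\psi(\omega_{j})\,f_{\ell}(z\,|\,\omega_{j}),\qquad \Psi_{\ell}(z\,|\,\bm{\Omega})=\frac{1}{m/b}\sum_{i=1}^{m/b}\bar Z_{\ell}^{(i)}(z),
\]
with the convention $f_{\mathrm{m}}\equiv 1$. Since the blocks $\bm{B}_{i}$ are i.i.d., the $\bar Z_{\ell}^{(i)}(z)$ are i.i.d.\ across $i$. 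For $\ell\in\{\mathrm{m,d,mm,md}\}$, Lemma~\ref{lem:BoundFellOmegaP} gives the pointwise bound $|f_{\ell}(z\,|\,\omega_{j})|\le (\sqrt{C_{\kappa}}|\langle\omega_{j},x'\rangle|)^{p_{\ell}}$ for some unit vector $x'\in\mathbb{R}^{d}$ depending on $\ell,z$. Summing over the block yields $|\bar Z_{\ell}^{(i)}(z)|\le Z_{p_{\ell}}^{(i)}(x')$, a copy of the variable $Z_{p_{\ell}}$ from~\eqref{eq:Z_def_structured}, which by hypothesis is sub-exp$(\nu,\beta)$ with $|\mathbb{E} Z_{p_{\ell}}|\le B$. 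Applying Lemma~\ref{lemma:sub_exp_2} then shows $\bar Z_{\ell}^{(i)}(z)$ is sub-exp$(\nu',\beta)$ with $\nu'=\sqrt{2\nu^{2}+16B^{2}}$, as in~\eqref{eq:variance_structured}. For $\ell=\mathrm{dd}$, Lemma~\ref{lem:BoundFellOmegaP} instead yields $|\bar Z_{\mathrm{dd}}^{(i)}(z)|\le\frac{1}{4}(Z_{2}^{(i)}(x'_{1})+Z_{2}^{(i)}(x'_{2}))$; the (possibly dependent) sum of two sub-exp$(\nu,\beta)$ variables is sub-exp$(2\nu,2\beta)$ by Cauchy--Schwarz on MGFs, and after the factor $1/4$ and another application of Lemma~\ref{lemma:sub_exp_2} one obtains that $\bar Z_{\mathrm{dd}}^{(i)}(z)$ is also sub-exp$(\nu',\beta)$ (in fact with strictly smaller constants).

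Because sums of independent sub-exponentials scale in the standard way, $\Psi_{\ell}(z\,|\,\bm{\Omega})$ is sub-exp$(\nu'\sqrt{b/m},\beta b/m)$. The Cramér--Chernoff bound~\eqref{eq:ConcentrationSubExp} then gives
\[
\mathbb{P}\!\left(|\Psi_{\ell}(z\,|\,\bm{\Omega})-\mathbb{E}\Psi_{\ell}(z\,|\,\bm{\Omega})|>t\right)\le 2\exp\!\left(-\frac{m\,t^{2}}{b(2\nu'^{2}+\beta t)}\right).
\]
The tightest threshold required by Theorem~\ref{thm:RIP_without_weights} is $t=\tau/(16k)$ (for $\ell\in\{\mathrm{mm,md,dd}\}$), which, using $\beta\tau/(16k)\le\beta\tau$, leads exactly to the value $v=256k^{2}b(2\nu'^{2}+\beta\tau)/\tau^{2}$ in~\eqref{eq:variance_structured}; the coarser thresholds $\tau/4$ and $\tau/8$ needed for $\Psi_{\mathrm{m}}$ and $\Psi_{\mathrm{d}}$ are automatically covered since $(2\nu'^{2}+\beta t)/t^{2}$ is decreasing in $t$ for $t>0$ (and $k\ge 1$). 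With all four concentration hypotheses of Theorem~\ref{thm:RIP_without_weights} verified, its conclusion yields~\eqref{eq:the_RIP} with the constant $C$ of~\eqref{eq:C_def_thm5}.

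For the alternative statement with $B_{\psi}=\sup_{\omega}\psi(\omega)<\infty$, the same argument is carried out after the pointwise majorization $\psi(\omega)|f_{\ell}(z\,|\,\omega)|\le B_{\psi}(\sqrt{C_{\kappa}}|\langle\omega,x'\rangle|)^{p_{\ell}}$: the dominating variable is now $B_{\psi}Z'_{p_{\ell}}$ defined in~\eqref{eq:Z_def_structured_bis}, which is sub-exp$(B_{\psi}\nu,B_{\psi}\beta)$ with mean bounded by $B_{\psi}B$, and Lemma~\ref{lemma:sub_exp_2} leads to $\bar Z_{\ell}^{(i)}(z)$ being sub-exp$(B_{\psi}\nu',B_{\psi}\beta)$, producing the modified $v'$ of~\eqref{eq:variance_structured_bis}. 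The main book-keeping obstacle is the case $\ell=\mathrm{dd}$, where the two-term bound in Lemma~\ref{lem:BoundFellOmegaP} forces a careful treatment of the sum of possibly dependent sub-exponentials before invoking Lemma~\ref{lemma:sub_exp_2}; everything else is routine tracking of the constants to match the scales $\tau/4,\tau/8,\tau/(16k)$ demanded by Theorem~\ref{thm:RIP_without_weights}.
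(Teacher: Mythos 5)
Your proposal is correct and follows essentially the same route as the paper's proof: reduce to \Cref{thm:RIP_without_weights}, regroup the frequencies into i.i.d.\ blocks, dominate each per-block average via \Cref{lem:BoundFellOmegaP} by $Z_{p_\ell}$ (or by $(Z_{1,2}+Z_{2,2})/4$ for $\ell=\mathrm{dd}$, using \Cref{lemma:sub_exp_sum}), apply \Cref{lemma:sub_exp_2} to get sub-exp$(\nu',\beta)$, and finish with Cram\'er--Chernoff at the tightest threshold $\tau/(16k)$. The only cosmetic difference is that you name the H\"older step ``Cauchy--Schwarz on MGFs,'' which is accurate here since the two variances coincide so that \Cref{lemma:sub_exp_sum} reduces to the $p=q=2$ case.
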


\Cref{thm:RIP_without_weights_random_bis} is obtained by applying \Cref{thm:RIP_without_weights}, see \Cref{proof:RIP_without_weights_random_bis}.

Next we give two examples: for mixtures of Gaussians, \Cref{cond:new_thm_2} is established using sub-exponentiality; for mixtures of Diracs, \Cref{cond:new_thm_2} requires a bit more work.

\paragraph{The case of a mixture of Gaussians.}
We consider the kernel $\kappa$ and the overall setting of Example~\ref{ex:KernelExamplesIncoherent}, and a $w$-RFF sketching operator $\mathcal{A}$ with ``flat'' $\kappa$-compatible weight function $w \equiv 1$ and i.i.d. frequencies $\omega \sim \mathcal{N}(0,\bm{\Sigma}^{-1}/s^{2})$. In this setting, the function $\psi$ defined in~\eqref{eq:DefRho} satisfies\footnote{See  \citep[Section 6.3.1]{GrBlKeTr20}.}
\begin{equation}\label{eq:psi_expression_gaussian}
\forall \omega \in \mathbb{R}^{d},\:\: \psi(\omega) = \frac{|\langle \pi_0, \phi_{\omega} \rangle|^2}{\|\pi_{0}\|_{\kappa}^2} = \frac{e^{-\omega^{\mathrm{T}} \bm{\Sigma} \omega}}{(1+2s^{-2})^{-\frac{d}{2}}}.
\end{equation}
Given the definition~\eqref{eq:M_Omega_def} of $f_{0}(\cdot)$, we deduce that $\psi(\omega)$ and  $\psi(\omega) f_{0}(\omega)$ are bounded, so that Hoeffding's inequality yields~\Cref{cond:new_thm_2} for an explicit $M$ independent of $m$ and for any $v>0$, while the variant of \Cref{cond:new_thm_1} with $Z'_{p}$ follows from the sub-exponentiality of $|\langle\omega,x\rangle|^{2}$, $s \in \{0,1,2\}$, since $\omega \sim \mathcal{N}(0,\bm{\Sigma}^{-1}/s^{2})$. Details are given in \Cref{proof:bounds_gaussian}, including sub-exponentiality constants and a proof that  $\mathcal{T}$ and $\kappa$ satisfy the assumptions of \Cref{thm:RIP_without_weights_random_bis}.
When all is said and done, we obtain the following result.

\begin{corollary}\label{cor:bounds_gaussian}
Consider $\Theta \subseteq \mathbb{R}^{d}$, an integer $k \geq 1$, a scale $s>0$, and
\begin{equation}\label{eq:SepVsScaleGMM}
\epsilon \geq \sqrt{2+s^2} (4\sqrt{\log(5ek)}).
\end{equation}
With $\mathcal{T}$, $\kappa$, $\bm{\Sigma}$  as in \Cref{ex:KernelExamplesIncoherent}  and $\mathcal{A}$ the $w$-RFF sketching operator  with ``flat'' $\kappa$-compatible weight function $w \equiv 1$ and $m$ i.i.d. frequencies $\omega_j \sim \Lambda := \mathcal{N}(0,\bm{\Sigma}^{-1}/s^2)$, the mutual coherence of $\kappa$ with respect to $\mathcal{T}$ is bounded by $\mu$ where $0<\mu <\frac{1}{10k}$.\\
Moreover, for each $0<\tau<1-5c$, where $c := (2k-1)\mu$, we have
\begin{equation}\label{eq:cor_bound_delta_s_k_gaussian}
\mathbb{P} \bigg( \delta(\mathcal{S}_{k}|\mathcal{A}) >
\frac{4c+\tau}{1-c}
\bigg) \leq 11  \exp\Big(-\frac{m}{v}\Big) \big(1+C/\tau)^{3d+2} ,
\end{equation}
where $v = v_{k}(\tau)$ with
\begin{align}\label{eq:DefVKTau}
v_{k}(\tau) := 512k^2  &\left((C_{0}/\tau)^{2}+\frac{1}{3}(C_{0}/\tau)\right), \:\: \textit{with} \:\:  C_{0}  \leq 7\sqrt{3}\epsilon^{2}s^{-2}(1+2s^{-2})^{d/2}, \\
\label{eq:DefCofVKTau}C& \leq \left(43000\epsilon^2(1+2s^{-2})^{d/2}\right) \cdot k(1+\operatorname{diam}_{a}(\Theta)).
\end{align}

\end{corollary}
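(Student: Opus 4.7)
The plan is to derive the corollary as a direct application of Theorem~\ref{thm:RIP_without_weights_random_bis} with block size $b=1$, using the variant based on $Z'_p$ (rather than $Z_p$) since $\psi$ will turn out to be uniformly bounded. Three verification tasks must be carried out: (i) realize the assumptions on the kernel structure (identification of $\|\cdot\|_{a}$ and $\tilde{\kappa}$, finiteness of $C_{\kappa}$, and bound on the mutual coherence $\mu<1/(10k)$); (ii) establish sub-exponentiality of the random variables $Z'_{p}$ for $p\in\{0,1,2\}$ under $\omega\sim\Lambda=\mathcal{N}(0,\bm{\Sigma}^{-1}/s^{2})$; (iii) establish the almost-sure bound on $\Psi_{0}(\bm{\Omega})$.

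For (i), I would take $\|x\|_{a} := \|x\|_{\bm{\Sigma}}/\sqrt{2(2+s^{2})}$, so that from Example~\ref{ex:KernelExamplesIncoherent} we get $\bar{\kappa}(x)=\exp(-\|x\|_{a}^{2})=:\tilde{\kappa}(\|x\|_{a})$. The map $\alpha(r)=r/\sqrt{1-e^{-r^{2}}}$ extends to $\alpha(0)=1$ and is $\mathcal{C}^{1}$ on $(0,R)$ with $R=\sup_{x\in\Theta_{\mathrm{d}}}\|x\|_{a}$; a direct study shows $C_{\kappa}=\sup_{0<r\leq R}\max(1,\alpha^{2}(r),|\alpha'(r)|^{2})<\infty$ and can be bounded in terms of $R=\epsilon/\sqrt{2(2+s^{2})}$. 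The coherence bound $\mu<1/(10k)$ follows from the separation assumption \eqref{eq:SepVsScaleGMM}: using the results of Gribonval et al.~(Theorem 5.16, Lemma 6.10, Theorem 6.11 of \citep{GrBlKeTr20}), the factor $4\sqrt{\log(5ek)}$ (strictly greater than the $4\sqrt{\log(ek)}$ used in Example~\ref{ex:KernelExamplesIncoherent}) yields enough room to sharpen the resulting coherence bound from $12/(16(2k-1))$ to a value strictly less than $1/(10k)$.

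For (ii), I would use that $\psi$ given in \eqref{eq:psi_expression_gaussian} is globally bounded by $B_{\psi}=(1+2s^{-2})^{d/2}$, which unlocks the variant of \Cref{thm:RIP_without_weights_random_bis} involving the $Z'_{p}$. For $x\in\mathbb{R}^{d}$ with $\|x\|_{a}=1$ one has $\|x\|_{\bm{\Sigma}}^{2}=2(2+s^{2})$, so that under $\omega\sim\mathcal{N}(0,\bm{\Sigma}^{-1}/s^{2})$ the scalar product $\langle\omega,x\rangle$ is centered Gaussian with variance $\sigma^{2}=2(2+s^{2})/s^{2}$. Then $Z'_{0}\equiv 1$, $Z'_{1}=\sqrt{C_{\kappa}}|\langle\omega,x\rangle|$ is sub-Gaussian (hence sub-exp with $\beta=0$), and $Z'_{2}=C_{\kappa}\langle\omega,x\rangle^{2}$ is sub-exponential as a rescaled $\chi^{2}_{1}$. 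The expectations $\mathbb{E}Z'_{p}$ are bounded by explicit functions of $C_{\kappa}$ and $\sigma^{2}$, yielding a common $(\nu,\beta,B)$. Plugging into $\nu'=\sqrt{2}\sqrt{\nu^{2}+8B^{2}}$ and grouping factors gives the constant $C_{0}$ in \eqref{eq:DefVKTau}, with the announced dependence $7\sqrt{3}\,\epsilon^{2}s^{-2}(1+2s^{-2})^{d/2}$ after absorbing $C_{\kappa}$, $\sigma^{2}$, and $B_{\psi}$.

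For (iii), because $\psi(\omega)=B_{\psi}\,e^{-\omega^{\top}\bm{\Sigma}\omega}$ decays faster than any polynomial in $\|\omega\|_{a,\star}$, the function $\omega\mapsto\psi(\omega)f_{0}(\omega)=\psi(\omega)\sum_{i=1}^{3}\|\omega\|_{a,\star}^{i}$ is uniformly bounded on $\mathbb{R}^{d}$; setting $M$ to be its deterministic supremum ensures $\Psi_{0}(\bm{\Omega})\leq M$ almost surely, so \eqref{eq:assumption_on_psi0_thm5} holds with $\gamma$ arbitrarily small (say $\gamma=1$). Tracking $M$ and multiplying by $6144\,C_{\kappa}\,k(1+\operatorname{diam}_{a}(\Theta))$ yields the claimed $C$ in \eqref{eq:DefCofVKTau}. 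The final probabilistic bound \eqref{eq:cor_bound_delta_s_k_gaussian} (with $10+\gamma\leq 11$) is then the direct output of \Cref{thm:RIP_without_weights_random_bis}. The main obstacle I anticipate is bookkeeping: extracting a clean, explicit value for $C_{0}$ requires combining the polynomial bound on $C_{\kappa}$ (from the behavior of $\alpha,\alpha'$ near $0$ and at $R$), the explicit Gaussian moments for $p\in\{1,2\}$, and the factor $B_{\psi}^{2}=(1+2s^{-2})^{d}$ coming from the variant with $Z'_{p}$, and bounding the resulting expression by the compact form $7\sqrt{3}\,\epsilon^{2}s^{-2}(1+2s^{-2})^{d/2}$.
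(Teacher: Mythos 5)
Your high-level plan is the same as the paper's: apply the $Z'_p$-variant of \Cref{thm:RIP_without_weights_random_bis} with $b=1$, using the boundedness of $\psi$ for Gaussian mixtures. The three verification tasks you identify are exactly the ones the paper carries out, and your treatment of (ii) and the union of (iii)/the covering numbers is correct in spirit. However, two issues are worth flagging.

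\textbf{Choice of $\|\cdot\|_a$.} You take $\|x\|_a := \|x\|_{\bm{\Sigma}}/\sqrt{2(2+s^2)}$ so that $\tilde{\kappa}(r)=e^{-r^2}$. The paper instead takes $\|\cdot\|_a := \|\cdot\|_{\bm{\Sigma}}$ and uses the general form $\tilde{\kappa}(r)=e^{-r^2/\sigma^2}$ with $\sigma := \sqrt{2(2+s^2)}$, applying \Cref{lemma:sup_alpha_alpha_prime} at that $\sigma$. Both are admissible inputs to \Cref{thm:delta_bounds}, and the product $\sqrt{C_\kappa}|\langle\omega,x\rangle|$ (for $\|x\|_a=1$) is normalization-invariant, so the sub-exponential constants $\nu,\beta,B$ and hence $C_0$ come out the same either way. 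But the three ingredients of the constant $C:=6144\,C_\kappa M\, k(1+\operatorname{diam}_a(\Theta))$ rescale differently under your choice: $C_\kappa$ shrinks by $\sigma^{-2}$, $\operatorname{diam}_a(\Theta)$ shrinks by $\sigma^{-1}$, but $M$ grows roughly like $\sigma^3 B_\psi$ because $f_0(\omega)=\sum_{i=1}^3 \|\omega\|_{a,\star}^i$ picks up the factors $\sigma^i$. These do not cancel: you would end up with $C$ carrying an extra $\sigma$ factor relative to the paper's bound when $\operatorname{diam}_{\bm{\Sigma}}(\Theta)$ is small. The paper's choice $\|\cdot\|_a=\|\cdot\|_{\bm{\Sigma}}$ keeps $M \leq 4B_\psi$ uniformly in $s$, which is exactly what makes the clean form \eqref{eq:DefCofVKTau} hold. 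You should switch to $\|\cdot\|_a=\|\cdot\|_{\bm{\Sigma}}$ to match the stated constants (and the implicit meaning of $\operatorname{diam}_a(\Theta)$ in the corollary).

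\textbf{Coherence argument.} Your explanation that the $5e$ (versus $e$) in the separation condition "yields enough room" is the right idea but needs to be made explicit: \eqref{eq:SepVsScaleGMM} is precisely $\epsilon \geq \sqrt{2+s^2}/s_{5k}^*$, so the cited results of \citep{GrBlKeTr20} applied with $5k$ in place of $k$ give $\mu \leq 12/(16(2(5k)-1)) = 12/(16(10k-1))$, and one checks $12/(16(10k-1)) < 1/(10k)$ iff $40k>16$, true for $k\geq 1$. Without this step the assertion $\mu<1/(10k)$ is unsupported.

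The remaining gap is the bookkeeping you explicitly defer: to land on $C_0 \leq 7\sqrt{3}\epsilon^2 s^{-2}(1+2s^{-2})^{d/2}$ and the $43000$ constant, you need the concrete computations $B=\max(1,C_\kappa s^{-2}) \leq \sqrt{3}\epsilon^2 s^{-2}$ (using $\epsilon \geq \max(s,1)$ from \eqref{eq:SepVsScaleGMM}), $\nu'=\sqrt{2(\nu^2+8B^2)} \leq 7B$, $C_0=7BB_\psi$, and $M \leq 4B_\psi$ (via the elementary bound $\sup_t (t+t^3)e^{-t^2/2}\leq 8/3$). These are not hard, but they are load-bearing: without them the stated numerical inequalities are not established.
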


In contrast to \citep[Theorem 6.11]{GrBlKeTr20}, the RIP constant here is not guaranteed to be (with high probability) arbitrarily close to zero, but only smaller than the quantity $(4c+\tau)/(1-c)$, which can be made arbitrarily close to $4c/(1-c) < 1$ (since $c=(2k-1)\mu<(2k-1)/10k < 1/5$). 
The assumption \eqref{eq:SepVsScaleGMM} relating the separation parameter
 $\epsilon$ and the scale parameter $s$ is essential to guarantee that $10k \mu <1$.
This technical condition is important since our bounds are only valid under the assumption that $5c = 5(2k-1)\mu<1$. In particular, we deduce that the probability that the event $\big\{\delta(\mathcal{S}_{k}|\mathcal{A}) \leq (4c+\tau)/(1-c)\big\}$ holds is larger than $1-\eta$, with $\eta \in ]0,1]$, whenever
$$m \geq (3d+2)v_{k}(\tau)\log \big(1+C/\tau\big)  + \log(11/\eta).  $$ 
In other words, a sufficient sketch size $m$ scales as $\mathcal{O}(dv_{k}(\tau)\log(C/\tau))$. Typically, we seek to determine the dependency of the sketch size in terms of the sparsity $k$ and the dimension $d$. Considering a near minimum separation parameter $\epsilon$ according to \eqref{eq:SepVsScaleGMM}, a fixed diameter, and $1 \leq \log k = \mathcal{O}(d)$, let us highlight two regimes: 
\begin{enumerate}
\item the regime $\sqrt{d} \lesssim s = \mathcal{O}(\mathtt{poly}(d))$: then $(1+2s^{-2})^{d/2} = \mathcal{O}(1)$ so that $C_{0} = \mathcal{O}(\log k)$, $v_{k}(\tau) = \mathcal{O}((\tau^{-1} k\log k)^2)$ and $\log(C)= \mathcal{O}(\log(kd))$ and the the sufficient sketch size scales as $\mathcal{O}( (\tau^{-1} k \log k)^{2} \log kd)$. 
\item  the regime where $s$ is of the order of one: then $(1+2s^{-2})^{d/2} = \mathcal{O}(e^{c d/2})$, with $c = \log(1+2s^{-2})$ so that $v_{k}(\tau) = \mathcal{O}((\tau^{-1}k \log k)^2 d e^{cd})$ and $\log(C)= \mathcal{O}(d)$ so that the sufficient sketch size scales as $\mathcal{O}((\tau^{-1}k \log k)^{2}d^2 e^{cd})$;
\end{enumerate}
In both regimes, we obtain similar results as in \citep[Table 1]{GrBlKeTr20}. 
There exists an intermediate regime, $c_{1} d^{1/4} \leq s^2 \leq c_{2}\sqrt{d}$, for large $d$, where we expect that  \Cref{thm:RIP_without_weights_random_bis} can be leveraged to obtain better sketch size estimates, that would not be achievable with the techniques of \citep{GrBlKeTr20}. A closer inspection of our proof techniques indeed suggests that better dependencies can be obtained 
by relying on \Cref{cond:new_thm_0} of \Cref{thm:RIP_without_weights_random_bis} (with $Z_{p}$) rather than on its variant with $Z'_{p}$.
Concretely, this means obtaining better sub-exponentiality constants for the random variables $\psi(\omega)|\langle \omega,x\rangle|^{p}$ . 
 
 As an example, for $p=0$, the proof given in \Cref{proof:bounds_gaussian} only relies on the crude uniform deterministic bound $\psi(\omega) \leq B_{\psi} := \sup_{\omega \in \mathbb{R}^d} \psi(\omega)  = 
 (1+2s^{-2})^{d/2}$, hence it cannot yield a better result than Hoeffding's bound \citep{Hoef94}
 \begin{equation*}
\forall \epsilon>0, \:\: \mathbb{P}\Big(\psi(\omega) -\mathbb{E}\psi(\omega) > \epsilon\Big) \leq \exp\Big(-\frac{2\epsilon^2}{B_{\psi}^2}\Big).
\end{equation*}
It is well known that exploiting the variance of $\psi(\omega)$, $V_{\psi}:=\mathbb{V}\psi(\omega)$ can lead to better results using Bernstein's concentration inequality~\citep[Theorem 2.10]{BoLuMa13}
\begin{equation*}
\forall \epsilon>0, \:\: \mathbb{P}\Big(\psi(\omega) -\mathbb{E}\psi(\omega) > \epsilon\Big) \leq \exp\Big(-\frac{\epsilon^2}{2(V_{\psi}+B_{\psi}\epsilon)}\Big).
\end{equation*}
 In the setting of \Cref{cor:bounds_gaussian}, 
since $\omega \sim \Lambda$ is Gaussian, we can compute explicitly $\mathbb{E}_{\omega \sim \Lambda}\psi(\omega) =1$ and $V_{\psi}:=\mathbb{V}\psi(\omega) \leq 
 \mathbb{E}_{\omega \sim \Lambda} \psi^{2}(\omega)= (1+4s^{-4}(1+4s^{-2})^{-1})^{d/2}$. Thus $\log(V_{\psi}) = d\log(1+4s^{-4}(1+4s^{-2})^{-1})/2$, while $\log(B_{\psi}) = d \log(1+2s^{-2})/2$, hence
 in the regime $c_{1} d^{1/4} \leq s^2 \leq c_{2}\sqrt{d}$ we have $V_{\psi} = \mathcal{O}(1)$ while $B_{\psi} \geq e^{c_{3}\sqrt{d}}$. 
Empirical experiments further suggest that even this Bernstein bound remains crude, and even essentially vacuous (on the order of one): this is illustrated on \Cref{fig:psi_histogram}, as well the behavior of the following \emph{conjectured} upper bound 
\begin{equation}
\mathbb{P}\Big(\psi(\omega) -\mathbb{E}\psi(\omega) > \epsilon\Big) \leq \exp\Big(-\frac{\epsilon^2}{2(V_{\psi}+\sqrt{V_{\psi}}\epsilon)}\Big).
\tag{Conjecture}
\end{equation}
which remains to be proved for a range of $0<\epsilon<\epsilon_{\max}$ to be determined. Proving this conjecture and similar ones for $\langle \omega,x\rangle^{p} \psi(\omega)$, $p=1,2$ would lead to definite improvements to the bounds derived in \Cref{cor:bounds_gaussian} and consecrate the advantage of the analysis developed in this work over the analysis given in \citep{GrBlKeTr21}, justifying the supplementary assumption on the RIP constant (to be larger than $4c/(1-c)$) that our analysis requires.
This is however left to future work.

\begin{figure}[h]
\centering
\includegraphics[width= 0.45\textwidth]{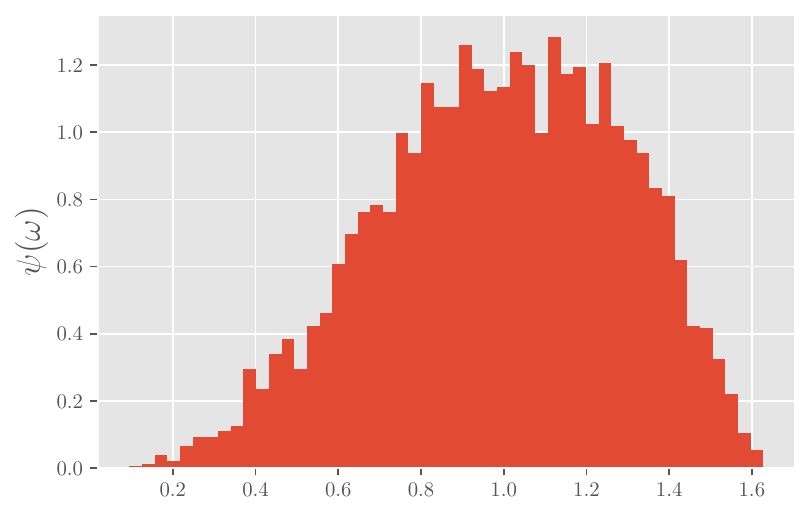}~\includegraphics[width= 0.45\textwidth]{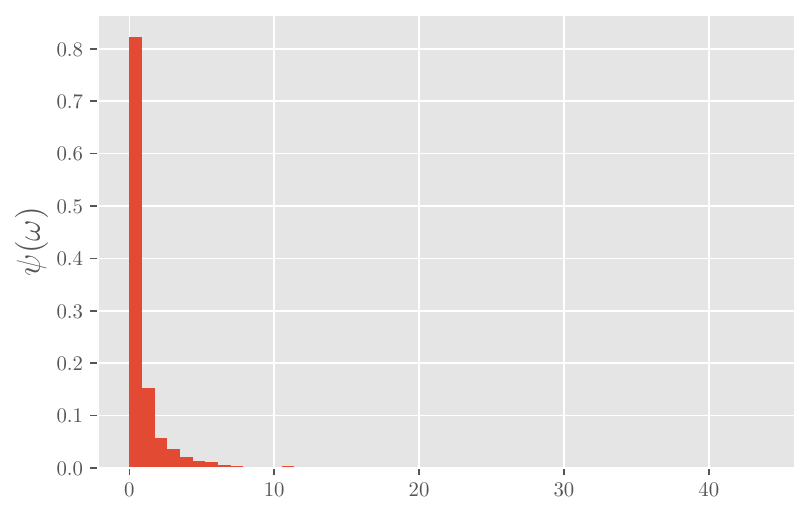}\\
\includegraphics[width= 0.45\textwidth]{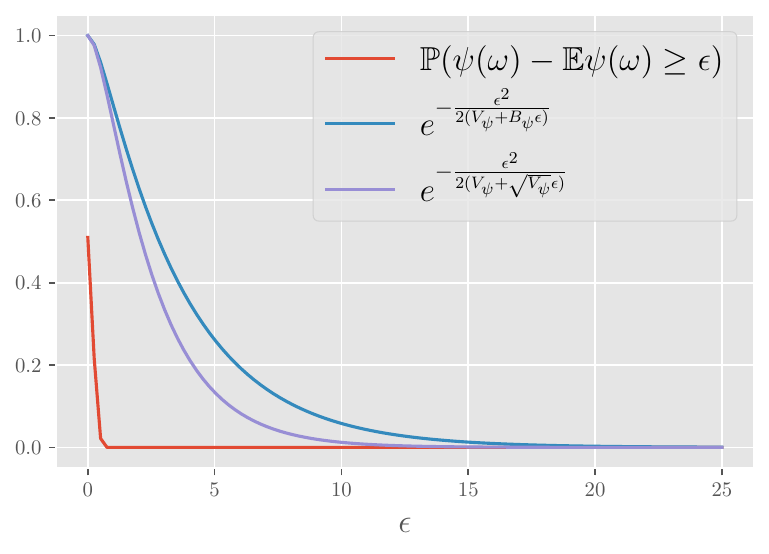}~\includegraphics[width= 0.45\textwidth]{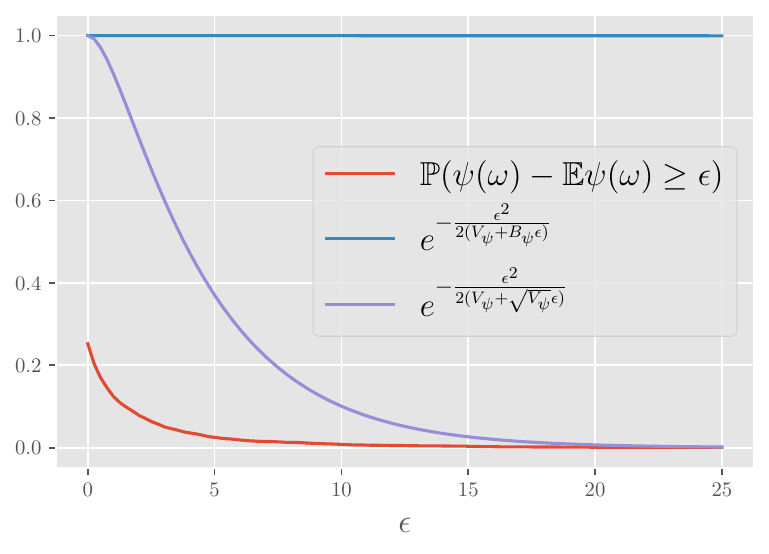}\\
\caption{(top) Histogram of $\psi(\omega)$ when $\omega \sim \mathcal{N}(0,s^{-2}\mathbb{I}_d)$; (bottom) Empirical graph of $\mathbb{P}(\psi(\omega) - \mathbb{E}\psi(\omega) \geq \epsilon)$ and two candidate
 analytic bounds for $s =3$ and $d = 5$ (left), $d=100$ (right).\label{fig:psi_histogram}}
\end{figure}

\paragraph{The case of a mixture of Diracs}
We now consider the setting of \Cref{ex:DiracMixture}, and a $w$-RFF sketching operator $\mathcal{A}$ corresponding to the ``flat'' $\kappa$-compatible weight function $w \equiv 1$ and i.i.d. frequencies $\omega \sim \mathcal{N}(0,s^{-2}\mathbb{I}_{d})$. In this setting, the function $\psi$ defined in~\eqref{eq:DefRho} satisfies\footnote{See again \citep[Section 6.3.1]{GrBlKeTr20}.} 
\begin{equation}\label{eq:psi_expression_dirac}
\forall \omega \in \mathbb{R}^{d},\:\: \psi(\omega) = \frac{|\langle \pi_0, \phi_{\omega} \rangle|^2}{\|\pi_{0}\|_{\kappa}^2} = 1.
\end{equation}
This is a setting where the analysis adopted in \citep{GrBlKeTr20} cannot be applied, since the condition~\eqref{eq:phi_omega_conditions} does not hold. Indeed, by~\eqref{eq:psi_expression_dirac} and by the fact that $\|\pi_0\|_{\kappa} =1$ we have $\sup_{\omega \in \mathbb{R}^{d}} 
|\langle  \phi_{\omega}^{w},\pi_{0} \rangle| \cdot \max(1,\|\omega\|_{\star},\|\omega\|_{\star}^2) = +\infty. $

Reasoning as in the case of a mixture of Gaussians allows to establish the variant of~\Cref{cond:new_thm_1} with $Z'_{p}$ required in \Cref{thm:RIP_without_weights_random_bis}.
However, in this setting, $\psi(\omega) f_{0}(\omega)$ is no longer sub-exponential hence establishing~\Cref{cond:new_thm_2} in \Cref{thm:RIP_without_weights_random_bis} requires more work and a choice of $M>0$ that depends at most polynomially on the sketch size $m$. As detailed in  \Cref{proof:bounds_gaussian}, we get the following result as a corollary of 
\Cref{thm:RIP_without_weights_random_bis}.

\begin{corollary}\label{cor:bounds_dirac}
Consider $\Theta \subseteq \mathbb{R}^{d}$, an integer $k \geq 1$, a scale $s>0$, and
\begin{equation}\label{eq:SepVsScaleDiracs}
\epsilon \geq s (4\sqrt{\log(5ek)}).
\end{equation}
With $\mathcal{T}$, $\kappa$, $\bm{\Sigma}$  as in \Cref{ex:DiracMixture}  and $\mathcal{A}$ the $w$-RFF sketching operator  with ``flat'' $\kappa$-compatible weight function $w \equiv 1$ and $m$ i.i.d. frequencies $\omega_j \sim \Lambda :=
\mathcal{N}(0,s^{-2} \mathbb{I}_{d})$,  where $\mathbb{I}_{d}$ is the identity matrix of dimension $d$, the mutual coherence of $\kappa$ with respect to $\mathcal{T}$ is bounded by $\mu$ where $0<\mu <\frac{1}{10k}$.\\
Moreover, for each $0<\tau<1-5c$, where $c := (2k-1)\mu$, we have

\begin{equation}\label{eq:bound_cor_diracs_delta}
\mathbb{P} \bigg( \delta(\mathcal{S}_{k}|\mathcal{A}) >
\frac{4c+\tau}{1-c}
\bigg) \leq 11 \exp\Big(-\frac{m}{v}\Big) \big(1+C(\tau,m)/\tau)^{3d+2} ,
\end{equation}
where
\begin{align}
v = v_{k}(\tau) &:= 512k^2  \left((C_{0}/\tau)^{2}+\frac{1}{3}(C_{0}/\tau)\right),
\ \text{with}\ C_{0} \leq 
7s^{-2}\max(1,\sqrt{3}\epsilon^{2}),
\label{eq:DefVKTauBis}\\
C(\tau,m) &:= \left(6144 (1+2s^{-1})^3 \max(1,\sqrt{3}\epsilon^{2})(2d^{3/2}+\sqrt{m}\tau^{3/2})\right) \cdot k(1+\operatorname{diam}_{a}(\Theta)).
\label{eq:DefCofVKTauBis}
\end{align}
\end{corollary}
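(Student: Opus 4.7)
The plan is to apply \Cref{thm:RIP_without_weights_random_bis} with block size $b=1$ (the i.i.d.\ case), using the variant built on the variables $Z'_{p}$ of~\eqref{eq:Z_def_structured_bis}. This variant is available because for a mixture of Diracs with flat weight $w\equiv 1$, the function $\psi$ collapses to $\psi(\omega)\equiv 1$ by~\eqref{eq:psi_expression_dirac}, so $B_{\psi}=1<\infty$. To invoke the theorem, I will (i) verify the abstract kernel and coherence hypotheses; (ii) certify the sub-exponentiality of the $Z'_{p}$ to obtain the constants $\nu,B,\beta$ and hence $v_{k}(\tau)$; and (iii) produce the probabilistic bound~\eqref{eq:assumption_on_psi0_thm5} on $\Psi_{0}(\bm{\Omega})$ for a suitable $M$. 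Step (iii) is the main obstacle.

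For (i), I would choose the auxiliary norm $\|\cdot\|_{a}:=\|\cdot\|_{2}/s$, so that $\bar{\kappa}(x)=\exp(-\|x\|_{2}^{2}/(2s^{2}))=\tilde{\kappa}(\|x\|_{a})$ with $\tilde{\kappa}(r):=e^{-r^{2}/2}$ and $R\leq\epsilon/s$ (since $\|x\|_{2}\leq\epsilon$ on $\Theta_{\mathrm{d}}$). A short asymptotic inspection of $\alpha(r)=r/\sqrt{1-e^{-r^{2}/2}}$ on $(0,R]$ (using $\alpha(0^{+})=\sqrt{2}$, $\alpha'(0^{+})=0$, and $\alpha(r)\sim r$ at the upper end) yields that $\alpha$ is of class $\mathcal{C}^{1}$ and gives the explicit bound on $C_{\kappa}$ entering in $C_{0}$. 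For coherence, \Cref{ex:DiracMixture} already gives $\mu\leq 12/(16(2k-1))$ under $\epsilon\geq s/s_{k}^{\star}$; the strengthened separation~\eqref{eq:SepVsScaleDiracs}, which inflates the logarithmic factor from $\log(ek)$ to $\log(5ek)$, is enough to lower the coherence bound to $\mu<1/(10k)$, by tracking the dependence on the logarithmic factor in the proof of \citep[Theorem 5.16]{GrBlKeTr20}.

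For (ii), with $\omega\sim\mathcal{N}(0,s^{-2}\mathbb{I}_{d})$ and $x\in\mathbb{R}^{d}$ such that $\|x\|_{a}=1$ (i.e.\ $\|x\|_{2}=s$), the marginal $\langle\omega,x\rangle\sim\mathcal{N}(0,1)$. Hence $Z'_{0}\equiv 1$ is trivial, $Z'_{1}=\sqrt{C_{\kappa}}|G|$ is sub-Gaussian with absolute constants, and $Z'_{2}=C_{\kappa}G^{2}$ inherits its sub-exponential parameters from the standard $\chi_{1}^{2}$. Uniformly over $p\in\{0,1,2\}$ and $x$, this furnishes explicit constants $(\nu,B,\beta)$ proportional to powers of $C_{\kappa}$; plugging them into~\eqref{eq:variance_structured_bis} yields the expression of $v_{k}(\tau)$ claimed in~\eqref{eq:DefVKTauBis}.

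The hard part is (iii). With $\psi\equiv 1$, one has $\Psi_{0}(\bm{\Omega})=\frac{1}{m}\sum_{j=1}^{m}\sum_{i=1}^{3}\|\omega_{j}\|_{a,\star}^{i}$, where $\|\omega_{j}\|_{a,\star}=s\|\omega_{j}\|_{2}\sim\chi_{d}$. Unlike in the Gaussian-mixture setting, the cubic term $\|\omega_{j}\|_{a,\star}^{3}$ is \emph{not} sub-exponential (its tail decays only as $\exp(-c\,t^{2/3})$), so no $m$-independent deterministic bound on $M$ is achievable and standard Bernstein concentration is not directly applicable to its empirical mean. The plan is a truncation argument combined with a Gaussian maximum bound: Lipschitz concentration of $\omega\mapsto\|\omega\|_{a,\star}$ gives $\mathbb{P}(\|\omega_{j}\|_{a,\star}>\sqrt{d}+t)\leq e^{-t^{2}/2}$, and a union bound produces $T_{0}:=\max_{j}\|\omega_{j}\|_{a,\star}\leq\sqrt{d}+O(\sqrt{m/v})$ with probability at least $1-\tfrac{\gamma}{2}\exp(-m/v)$. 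On this event I use the pointwise inequality $\|\omega_{j}\|_{a,\star}^{3}\leq T_{0}\cdot\|\omega_{j}\|_{a,\star}^{2}$ and control the sub-exponential average $\frac{1}{m}\sum_{j}\|\omega_{j}\|_{a,\star}^{2}$ by Bernstein; the linear and constant contributions are handled analogously but dominate by lower-order terms. Combining, an admissible $M$ of the form $M\lesssim(1+2s^{-1})\bigl(d^{3/2}+\sqrt{m}\,\tau^{3/2}\bigr)$ emerges, yielding the explicit $m$-dependence in $C(\tau,m)$ of~\eqref{eq:DefCofVKTauBis}. Feeding these constants into~\Cref{thm:RIP_without_weights_random_bis} then delivers the claimed bound~\eqref{eq:bound_cor_diracs_delta}.
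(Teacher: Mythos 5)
Your plan for steps (i) and (ii) is essentially correct and parallels the paper's: apply \Cref{thm:RIP_without_weights_random_bis} with block size $b=1$ and the $Z'_p$ variant, noting $\psi\equiv 1$ hence $B_{\psi}=1$. Choosing $\|\cdot\|_a=\|\cdot\|_2/s$ rather than $\|\cdot\|_2$ (the paper's choice, with $\sigma=\sqrt{2}s$) is just a rescaling: it shifts the $s$-factors between $R,C_{\kappa}$ and the variance of $\langle\omega,x\rangle$, but leads to equivalent $B$ and $C_0$; the ``asymptotic inspection'' of $\alpha$ is the content of \Cref{lemma:sup_alpha_alpha_prime}, and the claimed sub-exponentiality of $Z'_p$ is \Cref{lemma:sub_exp_example}. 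The coherence observation is also what the paper does.

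The genuine gap is in step (iii), where your truncation argument does not deliver the $M$ you claim. The paper's route (\Cref{prop:tail_sum_norm_omega_alpha}) applies Gaussian Lipschitz concentration \emph{directly} to the map $\Omega\in\mathbb{R}^{dm}\mapsto\bigl(\sum_{j}\|\omega_j\|_2^3\bigr)^{1/3}$, which is $1$-Lipschitz, giving a one-shot tail bound $\mathbb{P}\bigl(\tfrac{1}{m}\sum_j\|\omega_j\|^3>\tfrac{4}{s^3}(d^{3/2}\sqrt{8/\pi}+t)\bigr)\leq\exp(-(mt)^{2/3}/2)$; evaluating at $t=\tau^{3/2}\sqrt{m}$ yields failure probability $\exp(-m\tau/2)\leq\exp(-m/v_k(\tau))$ with $M\approx d^{3/2}+\tau^{3/2}\sqrt{m}$, matching~\eqref{eq:DefCofVKTauBis}. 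Your approach first bounds $T_0:=\max_j\|\omega_j\|_{a,\star}$ by a union bound over $m$ terms and then multiplies by the empirical second moment, which concentrates around $d$. To make the union bound's failure probability compatible with $\exp(-m/v)$ you must push $T_0\lesssim\sqrt{d}+\sqrt{m/v}$, and the resulting product is $M\lesssim d^{3/2}+d\sqrt{m/v}$. With $v\propto k^2(C_0/\tau)^2$ this is $d^{3/2}+d\tau\sqrt{m}/(kC_0)$, which does \emph{not} reduce to $d^{3/2}+\tau^{3/2}\sqrt{m}$: you have an extra $d$ multiplying the $\sqrt{m}$ term, and the $\tau$-exponent is also off. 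So the claim that ``$M\lesssim(1+2s^{-1})(d^{3/2}+\sqrt{m}\tau^{3/2})$ emerges'' from the truncation is not justified; in general your $M$ is strictly larger (worse), and would not yield the $C(\tau,m)$ stated in~\eqref{eq:DefCofVKTauBis}. The fix is to use the direct Lipschitz-function-of-a-Gaussian argument on the $\ell^3$-norm, which avoids both the union-bound over $m$ and the detour through the quadratic average.
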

As in the case of Gaussian mixtures, and in contrast to \citep[Theorem 6.11]{GrBlKeTr20}, the RIP constant is not guaranteed to be (with high probability) arbitrarily close to zero: it can only be made arbitrarily close to $4c/(1-c) < 1$. This is the price we pay for being able to handle plain importance sampling with $w \equiv 1$.
Observe that $1+C(\tau,m)/\tau \leq \sqrt{m} (1+C(\tau,1)/\tau) \leq m (1+C(\tau,1)/\tau)$ hence the r.h.s. of ~\eqref{eq:bound_cor_diracs_delta} is upper bounded by 
\begin{equation*}\label{eq:DiracSketchSize}
11 \exp\bigg(-\frac{m}{v} \Big( 1- (3d+2)v \frac{\log(m)}{m}\Big)+(3d+2)\log \big(1+C(\tau,1)/\tau\big) \Big)\bigg).
\end{equation*}
We deduce that for $\eta \in (0,1]$, and for the $w-RFF$ sketching operator described in Corollary~\ref{cor:bounds_dirac}, the probability that the event $\big\{\delta(\mathcal{S}_{k}|\mathcal{A}) \leq (4c+\tau)/(1-c)\big\}$ holds is larger than $1-\eta$, as soon as
\[
\frac{m}{\log m} \gtrsim (3d+2)v\log \big(1+C(\tau,1)/\tau\big)  + \log(11/\eta) =\Omega(k^2 d).  
\]
 In other words, a sufficient sketch size is $\mathcal{O}(k^2d)$: our analysis allows to obtain the same dependencies on $k$ and $d$ 
 as the analysis developed in~\citep{GrBlKeTr20} but \emph{without} assuming
  the conditions~\eqref{eq:phi_omega_conditions} that imposes constraints on the importance sampling weight $w$.
 It would be tempting to think that a judicious choice of the weight function $w$ would allow to further improve the dependency on $k$ of the sketch size from $\mathcal{O}(k^2d)$ to $\mathcal{O}(kd)$. Unfortunately, as we shall see in Section~\ref{sec:lower_bound}, our analysis does not allow us to make such an improvement. The investigation of the role of the weight function is deferred for future work. 
Finally, observe that the constant $\log(1+C(\tau,1)/\tau)$ depends logarithmically on $D$ (the diameter of $\Theta$), on $s^{-1}$, and on $d\log k$. 
Note that the logarithmic dependency on $D$ is well known in the Fourier features literature \citep{SrSz15}, while the dependency on $s^{-1}$ has empirical implications: the parameter $s$ should be choosen small enough so that the mutual coherence of $\kappa$ with respect to $\mathcal{T}$ is bounded by $\mu$, yet not too small. Interestingly, this phenomenon was documented in several empirical investigations~\citep{KeTrTrGr17,Cha20}.

\subsubsection{Towards bounds for structured random sketching}
A benefit of the theoretical analysis presented in this work is to pave the way to
 theoretical guarantees on the RIP of sketching operators based on structured features. 
Indeed, as evoked in \Cref{sec:RFF}, there are now many constructions of structured random Fourier features
 where $m$ is a multiple of $d$ and the frequencies $\omega_{1}, \dots, \omega_{m}$ are block-i.i.d. 
 with blocks of size $b=d$: the matrix $\bm{\Omega} \in \mathbb{R}^{d \times m}$ with columns $\omega_{j}$, $1 \leq j \leq m$ is the concatenation of i.i.d. random matrices $\bm{B}_i \in \mathbb{R}^{d\times d}$, $1 \leq i \leq m/d$. 
 
 Such constructions can be designed to lead to (non independent but) identically distributed Gaussian frequencies $\omega_{j} \sim \mathcal{N}(0,\sigma^{2}\mathbb{I}_{d})$, so that the average marginal density satisfies \eqref{eq:ImportanceSamplingStructured} with a Gaussian kernel and the simplest $\kappa$-compatible weight function $w \equiv 1$, see e.g. \citep[Chapter 5]{Cha20}. This allows to reuse the proof techniques used to establish \Cref{cor:bounds_gaussian}-\Cref{cor:bounds_dirac}, showing as an intermediate step that each $X_{j} := \psi(\omega_{j})(\sqrt{C_{\kappa}}|\langle \omega_{j},x \rangle|)^{p}$
 (resp. $X_{j} := (\sqrt{C_{\kappa}}|\langle \omega_{j},x \rangle|)^{p}$, when $B_{\psi}< +\infty$), with arbitrary $x \in \mathbb{R}^{p}$ satisifying $\|x\|_{a} = 1$ and $p \in \{0,1,2\}$, is sub-exp($\nu,\beta$) with $|\mathbb{E}X| \leq B$.

 \begin{corollary}\label{cor:bounds_str}
Let $m$ be a multiple of $d$. 
\begin{itemize}
\item Consider the same setting as in \Cref{cor:bounds_gaussian}, and assume that the frequencies $\omega_{1}, \dots, \omega_{m}$ are \emph{block-i.i.d.}
 with blocks of size $b=d$ such that the $\omega_{j}$ are identically (but not necessarily independently) distributed Gaussian frequencies $\omega_{j} \sim \mathcal{N}(0,\bm{\Sigma}^{-1}/s^{2})$. Then, for each $0<\tau<1-5c$, where $c := (2k-1)\mu$, we have
\begin{equation}\label{eq:cor_bound_delta_s_k_gaussian_str}
\mathbb{P} \bigg( \delta(\mathcal{S}_{k}|\mathcal{A}) >
\frac{4c+\tau}{1-c}
\bigg) \leq 11 \exp\Big(-\frac{m}{dv}\Big) \big(1+C/\tau)^{3d+2} ,
\end{equation}
where $v, C$ are defined as in \Cref{cor:bounds_gaussian}.

\item Consider the same setting as in \Cref{cor:bounds_dirac}, and assume that the frequencies $\omega_{1}, \dots, \omega_{m}$ are block-i.i.d. with blocks of size $b=d$ such that the $\omega_{j}$ are identically (but not necessarily independently) distributed Gaussian frequencies $\omega_{j} \sim \mathcal{N}(0,s^{-2}\mathbb{I}_{d})$. Then, for each $0<\tau<1-5c$, where $c := (2k-1)\mu$, we have
\begin{equation}\label{eq:bound_cor_str_delta}
\mathbb{P} \bigg( \delta(\mathcal{S}_{k}|\mathcal{A}) >
\frac{4c+\tau}{1-c}
\bigg) \leq (10+d)  \exp\Big(-\frac{m}{dv}\Big) \big(1+C(\tau,m/d)/\tau)^{3d+2} ,
\end{equation}
where $v,C$ are defined as in \Cref{cor:bounds_dirac}.
\end{itemize}

\end{corollary}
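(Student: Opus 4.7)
The plan is to invoke \Cref{thm:RIP_without_weights_random_bis} with block size $b=d$, verifying its hypotheses essentially by recycling the i.i.d.\ proofs of \Cref{cor:bounds_gaussian,cor:bounds_dirac} and accounting for dependence within blocks via a Jensen-type reduction. The assumptions on the family $\mathcal{T}$, the kernel $\kappa$, the auxiliary norm $\|\cdot\|_{a}$, and the mutual-coherence bound $\mu<1/(10k)$ depend only on the model and not on the sampling scheme, so they transfer verbatim from the i.i.d.\ corollaries.

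The central new ingredient handles \Cref{cond:new_thm_1} for block-i.i.d., identically distributed frequencies with $b=d$. The observation is that if $X_{1},\ldots,X_{b}$ are identically distributed (possibly dependent) real-valued random variables, each sub-exp$(\nu,\beta)$ with $|\mathbb{E} X_{j}| \leq B$, then applying Jensen's inequality to the convex map $t \mapsto e^{\lambda t}$ yields, for every $|\lambda|\leq 1/\beta$,
\begin{equation*}
\mathbb{E}\big[e^{\lambda(\bar{X}-\mathbb{E}\bar{X})}\big]
\leq \frac{1}{b}\sum_{j=1}^{b}\mathbb{E}\big[e^{\lambda(X_{j}-\mathbb{E}X_{j})}\big]
\leq e^{\lambda^{2}\nu^{2}/2},
\end{equation*}
so $\bar{X}=\tfrac{1}{b}\sum_{j}X_{j}$ is again sub-exp$(\nu,\beta)$, with $|\mathbb{E}\bar{X}|\leq B$. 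Applied to $X_{j}=\psi(\omega_{j})(\sqrt{C_{\kappa}}|\langle\omega_{j},x\rangle|)^{p}$ (resp.\ to its weightless analogue in the Dirac case where $B_{\psi}<+\infty$), this lets us import the single-sample constants $(\nu,\beta,B)$ computed in the proofs of \Cref{cor:bounds_gaussian,cor:bounds_dirac} to $Z_{p}$ (resp.\ $Z'_{p}$) with $b=d$. Since the structured frequencies have the same Gaussian marginals as in the i.i.d.\ setting, the sub-exponentiality constants are unchanged. Plugging $b=d$ into the variance \eqref{eq:variance_structured} produces $v_{\mathrm{struct}}=d\cdot v_{\mathrm{iid}}$, which is exactly the source of the $\exp(-m/(dv))$ factors in \eqref{eq:cor_bound_delta_s_k_gaussian_str} and \eqref{eq:bound_cor_str_delta}.

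It remains to verify \Cref{cond:new_thm_2} on $\Psi_{0}(\bm{\Omega})$. In the Gaussian mixture case, $\psi(\omega)f_{0}(\omega)$ is deterministically bounded (via \eqref{eq:psi_expression_gaussian} and the definition of $f_{0}$), so block-averaging and then Hoeffding's inequality over the $m/d$ i.i.d.\ blocks give the same $M$ as in \Cref{cor:bounds_gaussian} with $\gamma=1$, yielding the $11$ prefactor in \eqref{eq:cor_bound_delta_s_k_gaussian_str}. In the Dirac case, $\psi\equiv 1$ but $f_{0}(\omega)$ has only Gaussian tails; the i.i.d.\ proof of \Cref{cor:bounds_dirac} chose $M$ polynomially in $m$ via concentration of $\|\omega_{j}\|_{a,\star}$. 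For $b=d$ identically distributed but dependent $\omega_{j}$'s inside a block, the natural fix is a union bound over the $d$ marginals, which yields precisely the factor $\gamma=d$ (hence the prefactor $10+d$) appearing in \eqref{eq:bound_cor_str_delta}, while leaving $M$ (and hence $C(\tau,\cdot)$) to depend on $m/d$ rather than on $m$.

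The main obstacle I anticipate is bookkeeping around within-block dependence: concretely, confirming that the Jensen reduction really does preserve $(\nu,\beta,B)$ with no hidden loss, and that the only price paid elsewhere is the union-bound factor $d$ visible in the Dirac prefactor. Once these points are verified, the two claims follow by direct substitution $b=d$ in \Cref{thm:RIP_without_weights_random_bis} with the $(\nu,\beta,B)$ and $M$ inherited from \Cref{cor:bounds_gaussian,cor:bounds_dirac}.
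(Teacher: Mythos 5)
Your plan is sound, and all three verification steps land in the right place, but you verify \Cref{cond:new_thm_1} by a different device than the paper. You control the within-block average $\bar{X}=\frac{1}{b}\sum_j X_j$ by applying Jensen's inequality to $t\mapsto e^{\lambda t}$ pointwise and using identical marginals, which yields that $\bar{X}$ is sub-exp$(\nu,\beta)$ \emph{directly}. The paper instead invokes \Cref{lemma:sub_exp_sum} (a Hölder argument), concludes that the \emph{sum} $\sum_j X_j$ is sub-exp$(d\nu,d\beta)$, and rescales by $1/d$ to recover sub-exp$(\nu,\beta)$ for the average. The two routes give identical constants; yours is cleaner when the summands are identically distributed (exchangeability does the work for free), while \Cref{lemma:sub_exp_sum} is the more general tool (it does not need identical distributions) and is the one the paper already has in hand. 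Two small points of precision: in the Gaussian case, no Hoeffding step is needed --- the proof of \Cref{cor:bounds_gaussian} establishes $\psi(\omega)f_0(\omega)\leq M=4B_{\psi}$ deterministically, so $\mathbb{P}(\Psi_0(\bm{\Omega})>M)=0$ outright and \Cref{cond:new_thm_2} holds with $\gamma=1$ for any $v>0$; and in the Dirac case, what you call ``a union bound over the $d$ marginals'' is, concretely, the decomposition of $\bm{\Omega}$ into $d$ interleaved sub-matrices $\bm{\Omega}_i=\{\omega_i,\omega_{i+d},\dots\}$ (one column per block, $i\in[d]$), each of which has $m/d$ genuinely independent columns; one writes $\Psi_0(\bm{\Omega})=\tfrac{1}{d}\sum_i\Psi_0(\bm{\Omega}_i)$, applies \Cref{prop:tail_sum_norm_omega_alpha} to each $\bm{\Omega}_i$ (hence the $m/d$ in $M=\tilde{M}(\tau^{3/2}(m/d)^{1/2})$ and in $C(\tau,m/d)$), and union-bounds over the $d$ pieces to get $\gamma=d$. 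Your description is consistent with this but elides the interleaving, which is what actually makes the columns of each sub-matrix i.i.d.
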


The proof of \Cref{cor:bounds_str} is given in \Cref{proof:bounds_str}. While this result does not assume the independence of the random variables $X_{j}$, it comes with a non-negligible cost: compared to~\eqref{eq:cor_bound_delta_s_k_gaussian} and~\eqref{eq:bound_cor_diracs_delta}, the "variance" term \eqref{eq:variance_structured} in~\eqref{eq:cor_bound_delta_s_k_gaussian_str} and~\eqref{eq:bound_cor_str_delta} worsens by a factor $d$ compared to the fully i.i.d. setting. There is also a more benign $\log d$ factor in sketch sizes due to the appearance of the $(10+d)$ factor in~\eqref{eq:bound_cor_str_delta}. Thus, a refined analysis of the expectation and of the sub-exponentiality constants of the random variables $Z_{p}$ and $Z'_{p}$ (cf \eqref{eq:Z_def_structured}-\eqref{eq:Z_def_structured_bis}) is required in order to prove competitive bounds on sketch sizes for structured sketching. 
Ideally, we may hope to prove that $Z_{p}$ and $Z'_{p}$ are sub-exp($\nu,\beta$) with $\nu = \mathcal{O}(1/\sqrt{d})$ and $\beta = \mathcal{O}(1/d)$ for some families of structured random matrices. 
Note however that this would require to slightly sharpen the bound obtained in \Cref{thm:RIP_without_weights_random_bis}. Indeed, the main bottleneck in the sketch size would be the constant $\nu'$ defined in \eqref{eq:variance_structured} that involves the term $B$ which is a constant independent of $d$. Improving \Cref{thm:RIP_without_weights_random_bis} would either require to refine \Cref{lemma:sub_exp_2} (using more taylored assumptions) in order to circumvent the presence of this constant in $\nu'$, or to more directly rely on \Cref{thm:RIP_without_weights} and in establishing autonomous concentration bounds. This is left to future work.

 \subsection{Lower bounds}
\label{sec:lower_bound}
To conclude, we provide several lower bounds that complete the picture established in this section. 

First, we show that condition~\eqref{eq:phi_omega_conditions}, which is known to be sufficient to control the covering numbers appearing in \Cref{thm:old_fluctuation_result}, is indeed close to necessary for these covering numbers to be well-defined and finite. This shows that existing theory (such as \citep[Theorem 5.13]{GrBlKeTr20}) is simply too restrictive to provide guarantees for perhaps the most natural setting where there is ``no'' importance sampling, i.e., $w \equiv 1$, which is in contrast covered by our new results. 

Second, we investigate the gap between sufficient sketch sizes endowed with theoretical guarantees, which scale as $O(k^{2}d)$, and practically observed sketch sizes, which scale as $O(kd)$. We demonstrate that a proof route which could seem natural to bridge this gap is in fact a dead-end, leaving possible improvements to further work.

\subsubsection{Lower bounds on variance terms}\label{sec:lower_bound_variances}
The empirical investigations in~\citep{KeTrTrGr17} showed that a practically sufficient sketch size scales as $\mathcal{O}(dk)$ compared to the theoretically sufficient sketch size $\mathcal{O}(dk^2)$ obtained by the analysis given in \citep{GrBlKeTr20} and the analysis given in this work. This suggest that there is still room for improvement on the theoretical bounds of sketching. We investigate below theoretical approaches that may seem natural ways to improve the proof techniques respectively introduced in \citep{GrBlKeTr20} and in this work. Our main conclusion is that these approaches \emph{cannot} lead to the desired explanation of the empirical findings of~\citep{KeTrTrGr17}.

 \paragraph{Limits of the proof technique of~\citep{GrBlKeTr20}}  After a careful examination of the proof given in \citep{GrBlKeTr20}, 
it may be tempting to improve the concentration inequality~\eqref{eq:punctual_concentration} and target one of the form
\begin{equation*}
 \forall \tau>0,\  \:\: 
\sup_{\nu \in \mathcal{S}_k}  \mathbb{P} \Big( \big|\|\mathcal{A}\nu\|_{2}^2 -1\big| > \frac{\tau}{2} \Big) \leq 2\exp \bigg( -\frac{m}{v_{0}(\tau)} \bigg),
\end{equation*}
with $v_{0}(\tau)$ \emph{independent of $k$} (under appropriate incoherence assumptions on $\kappa$, that depend of $k$). This would indeed easily provide the desired result by combining the technical ingredients as in the proof of~\Cref{thm:old_fluctuation_result}, however under standard assumptions\footnote{A subgaussian tail or a sub-exponential tail.} on the growth of $v_{0}(\tau)$  when $\tau \to \infty$, it is a classical exercice\footnote{See  \citep[Theorem 2.3]{BoLuMa13} and \citep[Lemma 1]{BoKlZh20}.} to show that this implies bounded moments $\mathbb{E}\|\mathcal{A}\nu\|_{2}^{2q}$, for $q \geq 2$, depending only on $v_{0}(\tau)$ and $m$, in particular this would also imply the existence of a constant $C>0$, \emph{independent of} $k$, such that
\begin{equation*}
\forall \nu \in \mathcal{S}_{k},\:\: \mathbb{V}\|\mathcal{A} \nu\|_{2}^2 \leq \frac{C}{m}.
\end{equation*}
where $\mathbb{V}(\cdot)$ denotes the variance of a scalar random variable.
However, as we now show, under typical assumptions on the $2k$-coherence of the kernel, this variance grows linearly  with $k$.

We begin with a technical lemma proved in~\Cref{proof:tildekernel}. 
\begin{lemma}\label{lem:TildeKernel}
Consider a normalized shift-invariant kernel  $\kappa$ and $\pi_{0} \in \mathcal{P}(\mathbb{R}^{d})$. If $w: \mathbb{R}^{d}\to (0,+\infty)$ is $\kappa$-compatible and satisfies
\begin{equation}
\label{eq:FiniteVarianceAssumptionW}
\int
 \big|\langle\phi_{\omega}^{1}, \pi_0\rangle\big|^{4} w^{-2}(\omega)\hat{\kappa}(\omega) \mathrm{d}\omega < +\infty,
 \end{equation} 
then the following shift-invariant kernel is well-defined
 \begin{align}
\label{eq:tilde_kernel}
\kappa_w(\theta,\theta') &:= \int_{\mathbb{R}^d} \big|\langle\phi_{\omega}^{1}, \pi_0\rangle\big|^{4} 
w^{-2}(\omega)
\hat{\kappa}(\omega)  e^{ \imath \omega^{\top} (\theta-\theta')}  \mathrm{d}\omega,\quad \theta,\theta' \in \mathbb{R}^{d}
\end{align}
and satisfies $\kappa_w(0,0) \geq \|\pi_{0}\|_{\kappa}^{4}$. The following weight function
is $\kappa$-compatible and satisfies~\eqref{eq:FiniteVarianceAssumptionW} 
\begin{equation}\label{eq:DefBasicWeighting}
w_{0}(\omega) := \|\pi_{0}\|_{\kappa}^{-1} \cdot |\langle \phi_{\omega}^{1},\pi_{0}\rangle|.
\end{equation}
Moreover $\kappa_{w_{0}}(0,0) = \|\pi_{0}\|_{\kappa}^{4}$ and more generally
\begin{equation*}
\kappa_{w_{0}}(\theta,\theta') = \|\pi_{0}\|_{\kappa}^{2}\langle \pi_{\theta},\pi_{\theta'}\rangle_{\kappa}, \quad \theta,\theta' \in \mathbb{R}^{d}.
\end{equation*}
\end{lemma}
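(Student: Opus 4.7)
The plan is to reduce every claim to direct Fourier-analytic computations built on two ingredients: Bochner's identity~\eqref{eq:bochner_decomposition} applied both to $\|\pi_{0}\|_{\kappa}^{2}$ and to $\langle \pi_{\theta},\pi_{\theta'}\rangle_{\kappa}$, and the translation identity $\langle \phi_{\omega}^{1},\pi_{\theta}\rangle = e^{\imath \omega^{\top}\theta}\langle \phi_{\omega}^{1},\pi_{0}\rangle$, which follows from $\pi_{\theta}$ being the pushforward of $\pi_{0}$ under $x \mapsto x+\theta$. Under~\eqref{eq:FiniteVarianceAssumptionW}, the integrand in~\eqref{eq:tilde_kernel} is dominated uniformly in $(\theta,\theta')$ by the integrable nonnegative function $|\langle \phi_{\omega}^{1},\pi_{0}\rangle|^{4} w^{-2}(\omega)\hat{\kappa}(\omega)$, so $\kappa_{w}$ is well defined and shift-invariant by construction (being a Fourier transform in $\theta-\theta'$).

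The slightly nontrivial piece is the lower bound $\kappa_{w}(0,0) \geq \|\pi_{0}\|_{\kappa}^{4}$. I would derive it from Cauchy--Schwarz applied to the factorization
\begin{equation*}
|\langle \phi_{\omega}^{1},\pi_{0}\rangle|^{2}\hat{\kappa}(\omega) = \bigl(|\langle \phi_{\omega}^{1},\pi_{0}\rangle|^{2} w^{-1}(\omega)\hat{\kappa}^{1/2}(\omega)\bigr)\cdot \bigl(w(\omega)\hat{\kappa}^{1/2}(\omega)\bigr),
\end{equation*}
so that integrating, squaring, and using the $\kappa$-compatibility $\int w^{2}\hat{\kappa}\,\mathrm{d}\omega = 1$ together with the Bochner identity $\|\pi_{0}\|_{\kappa}^{2} = \int |\langle \phi_{\omega}^{1},\pi_{0}\rangle|^{2}\hat{\kappa}(\omega)\,\mathrm{d}\omega$ yields $\|\pi_{0}\|_{\kappa}^{4} \leq \kappa_{w}(0,0)$.

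For the specific choice $w_{0}(\omega) = \|\pi_{0}\|_{\kappa}^{-1}|\langle \phi_{\omega}^{1},\pi_{0}\rangle|$, the verifications collapse to algebraic manipulations. The Bochner identity immediately gives $\int w_{0}^{2}\hat{\kappa}\,\mathrm{d}\omega = 1$, establishing $\kappa$-compatibility. Substituting $w_{0}^{-2}(\omega) = \|\pi_{0}\|_{\kappa}^{2}|\langle \phi_{\omega}^{1},\pi_{0}\rangle|^{-2}$ into~\eqref{eq:tilde_kernel} cancels two of the four factors of $|\langle \phi_{\omega}^{1},\pi_{0}\rangle|$, producing
\begin{equation*}
\kappa_{w_{0}}(\theta,\theta') = \|\pi_{0}\|_{\kappa}^{2}\int |\langle \phi_{\omega}^{1},\pi_{0}\rangle|^{2}\hat{\kappa}(\omega) e^{\imath \omega^{\top}(\theta-\theta')}\,\mathrm{d}\omega.
\end{equation*}
At this point the translation identity rewrites $|\langle \phi_{\omega}^{1},\pi_{0}\rangle|^{2} e^{\imath \omega^{\top}(\theta-\theta')} = \langle \phi_{\omega}^{1},\pi_{\theta}\rangle\overline{\langle \phi_{\omega}^{1},\pi_{\theta'}\rangle}$, and a second application of Bochner to $\int\!\!\int \kappa(x-y)\,\mathrm{d}\pi_{\theta}(x)\,\mathrm{d}\pi_{\theta'}(y)$ identifies the remaining integral with $\langle \pi_{\theta},\pi_{\theta'}\rangle_{\kappa}$, yielding the announced identity. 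Specializing to $\theta=\theta'=0$ gives $\kappa_{w_{0}}(0,0) = \|\pi_{0}\|_{\kappa}^{4}$, which in particular verifies~\eqref{eq:FiniteVarianceAssumptionW} for $w_{0}$.

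I do not expect a serious obstacle: the computation is self-contained once Bochner's theorem and the translation identity are in hand. The only place where minor care is required is the Cauchy--Schwarz step, where the factorization must be chosen so that both factors live in $L^{2}(\hat{\kappa}\,\mathrm{d}\omega)$; the factorization above is the natural one and inserts the $\kappa$-compatibility of $w$ exactly where it is needed.
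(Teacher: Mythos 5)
Your proof is correct and follows essentially the same route as the paper: dominated convergence for the well-definedness of $\kappa_w$, the identical Cauchy--Schwarz factorization of $|\langle\phi_\omega^1,\pi_0\rangle|^2\hat\kappa(\omega)$ into an $L^2(\mathrm{d}\omega)$-pair to obtain $\kappa_w(0,0)\geq\|\pi_0\|_\kappa^4$, and direct substitution of $w_0$ into~\eqref{eq:tilde_kernel} combined with the Bochner/translation identities to recover $\|\pi_0\|_\kappa^2\langle\pi_\theta,\pi_{\theta'}\rangle_\kappa$. The paper additionally observes that $w_0$ arises as the equality case of the Cauchy--Schwarz step, but that is a remark rather than a needed ingredient, so the proofs coincide in substance.
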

In the special case of a Dirac base distribution $\pi_{0}$,~\eqref{eq:DefBasicWeighting} simply defines a ``flat'' weight function $w_{0} \equiv 1$ since $\|\pi_{0}\|_{\kappa}^{2} = \langle \pi_{0},\pi_{0}\rangle_{\kappa} = \kappa(0,0) = 1 = |\langle \phi_{\omega}^{1},\pi_{0}\rangle|$.

\begin{theorem}\label{thm:initial_lower_bound}
Consider a normalized shift-invariant kernel  $\kappa$.
Consider a location-based family $\mathcal{T}$ with base distribution $\pi_{0}$, an integer $k \geq 1$, and the 
separated $k$-mixture model $\mathfrak{G}_{k}$ from~\eqref{eq:DefMixtureModel} where $\varrho(\theta,\theta') := \|\theta-\theta'\|$ for some arbitrary norm $\|\cdot\|$ on $\mathbb{R}^{d}$. Consider a vector $\theta^{*} \in \mathbb{R}^{d}$ such that $\|\theta^{*}\| \geq 1$, and observe that the following two $k$-mixtures are $1$-separated with respect to $\varrho$, i.e., $\nu_{k,1},\nu_{k,2} \in \mathfrak{G}_{k}$, so that $\nu_{k}:= (\nu_{k,1}- \nu_{k,2})/\|\nu_{k,1}- \nu_{k,2}\|_{\kappa} \in \mathcal{S}_{k}$:  
\begin{equation*}
\nu_{k,1} = \sum\limits_{i =1}^{k}\frac{1}{k}\pi_{(2i-2)\theta^{*}}, \:\: \nu_{k,2} = \sum\limits_{i =1}^{k}\frac{1}{k}\pi_{(2i-1)\theta^{*}}.
\end{equation*}

\begin{enumerate}
\item If the $2k$-coherence of $\kappa$ with respect to $\mathcal{T}$ is bounded by $0 \leq c<1$ then for any $\kappa$-compatible weight function $w$ that satisfies~\eqref{eq:FiniteVarianceAssumptionW}, 
the  $w$-RFF sketching operator $\mathcal{A}$ with $m$ i.i.d frequencies $\omega_{j} \sim \Lambda := w^{2} \hat{\kappa}$ satisfies
\begin{align}
\label{eq:MainThm2Var}
\mathbb{V}\| \mathcal{A} \nu_k \|^{2} 
&\geq
\frac{1}{m} \Big(
C_{w} k-1
\Big),\\ 
\label{eq:TildeKernelCoherence}
\text{where} \qquad
C_{w}  &:= \frac{\kappa_w(0,0)}{\|\pi_{0}\|_{\kappa}^{4}} \cdot \frac{4/3-2c_{w}}{(1+c)^{2}}\, \qquad
\text{with}\ 
c_{w} := 2k  \sup_{\theta,\theta': \|\theta-\theta'\| \geq 1} \frac{|\kappa_{w}(\theta,\theta')|}{\kappa_w(0,0)}.
\end{align}

\item   If the mutual coherence of $\kappa$ is bounded by $c/2k$ we have $c_{w_{0}} \leq c$ with $w_{0}$ as in~\eqref{eq:DefBasicWeighting}.  
\end{enumerate}
\end{theorem}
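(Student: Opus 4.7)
The plan is to express the variance in terms of a single fourth moment, use the $2k$-coherence to upper bound $\|\nu_{k,1}-\nu_{k,2}\|_\kappa$, and lower bound the fourth moment through the kernel $\kappa_w$ of \Cref{lem:TildeKernel}. Since the frequencies $\omega_j$ are i.i.d.\ from $\Lambda = w^2\hat\kappa$ and the resulting RFF operator is $\kappa$-compatible (see \Cref{ex:KCompatibleRFF}), one has $\mathbb{E}\|\mathcal{A}\nu_k\|_2^2 = \|\nu_k\|_\kappa^2 = 1$, and writing $\nu := \nu_{k,1}-\nu_{k,2}$,
\begin{equation*}
\mathbb{V}\|\mathcal{A}\nu_k\|_2^2 = \frac{1}{m}\bigg(\frac{\mathbb{E}|\langle \phi_\omega^w,\nu\rangle|^4}{\|\nu\|_\kappa^4}-1\bigg),
\end{equation*}
so the goal reduces to showing $\mathbb{E}|\langle \phi_\omega^w,\nu\rangle|^4 \geq C_w k \,\|\nu\|_\kappa^4$.

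To upper bound $\|\nu\|_\kappa$, decompose $\nu = \sum_{l=1}^{2k}\epsilon_l\pi_{\theta_l}$ with $\epsilon_l := (-1)^{l-1}/k$ and $\theta_l := (l-1)\theta^*$. Each signed measure $\epsilon_l\pi_{\theta_l}$ is a (degenerate) dipole of $\kappa$-norm $\|\pi_0\|_\kappa/k$ with both nodes collapsed to $\theta_l$, and the assumption $\|\theta^*\|\geq 1$ ensures these $2k$ dipoles are pairwise $1$-separated. Applying~\eqref{eq:almost_pythagorian_formal} with $\ell=2k$ yields $\|\nu\|_\kappa^2 \leq 2(1+c)\|\pi_0\|_\kappa^2/k$. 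For the fourth moment, exchanging sum and expectation and recognizing in~\eqref{eq:tilde_kernel} the Fourier integral that emerges,
\begin{equation*}
\mathbb{E}|\langle \phi_\omega^w,\nu\rangle|^4 = \sum_{l,l',l'',l'''\in[2k]} \epsilon_l\epsilon_{l'}\epsilon_{l''}\epsilon_{l'''}\,\kappa_w\big((l-l'+l''-l''')\theta^*\big).
\end{equation*}
Split this sum along the diagonal condition $l+l''=l'+l'''$, on which the argument of $\kappa_w$ vanishes and $(-1)^{l+l'+l''+l'''}=1$, and its complement. The diagonal contribution equals $\kappa_w(0,0)\cdot k^{-4}\sum_s N_s^2$, with $N_s := \#\{(l,l'')\in[2k]^2:l+l''=s\}$; a direct computation gives $\sum_s N_s^2 = (16k^3+2k)/3$, so this part is at least $\tfrac{16}{3k}\kappa_w(0,0)$. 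Off the diagonal, $(l-l'+l''-l''')\theta^*$ has norm $\geq\|\theta^*\|\geq 1$, hence $|\kappa_w(\cdot)|\leq c_w\kappa_w(0,0)/(2k)$ by definition of $c_w$, and the crude bound $\sum|\epsilon_l\epsilon_{l'}\epsilon_{l''}\epsilon_{l'''}|\leq (\sum_l|\epsilon_l|)^4 = 16$ controls the off-diagonal in absolute value by $8c_w\kappa_w(0,0)/k$. Combining,
\begin{equation*}
\mathbb{E}|\langle \phi_\omega^w,\nu\rangle|^4 \geq \frac{\kappa_w(0,0)}{k}\Big(\tfrac{16}{3}-8c_w\Big) = \frac{4\kappa_w(0,0)(4/3-2c_w)}{k},
\end{equation*}
and dividing by $\|\nu\|_\kappa^4 \leq 4(1+c)^2\|\pi_0\|_\kappa^4/k^2$ produces precisely $C_w k$, settling part~1.

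For part~2, plugging $w_0(\omega) = \|\pi_0\|_\kappa^{-1}|\langle \phi_\omega^1,\pi_0\rangle|$ into~\eqref{eq:tilde_kernel} and invoking Fubini together with Bochner's identity~\eqref{eq:bochner_decomposition} yields $\kappa_{w_0}(\theta,\theta') = \|\pi_0\|_\kappa^2\langle\pi_\theta,\pi_{\theta'}\rangle_\kappa$, so that $\kappa_{w_0}(\theta,\theta')/\kappa_{w_0}(0,0) = \bar\kappa(\theta,\theta')$. Any pair of normalized monopoles $\pi_\theta/\|\pi_0\|_\kappa,\pi_{\theta'}/\|\pi_0\|_\kappa$ with $\|\theta-\theta'\|\geq 1$ belongs to $\mathfrak{D}_{\neq}^2$, so by the assumption on the mutual coherence $|\bar\kappa(\theta,\theta')|\leq c/(2k)$, hence $c_{w_0}\leq c$. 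The most delicate step in the whole argument is the combinatorial identity $\sum_s N_s^2 = (16k^3+2k)/3$: it is precisely this $k^{-1}$ (rather than $k^{-2}$) scaling of the diagonal that drives the $\Omega(k)$ lower bound on the variance, and thereby demonstrates that punctual concentration estimates with a $k$-independent variance $v_0(\tau)$ cannot hold, no matter how refined the tail analysis becomes.
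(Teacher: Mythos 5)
Your proof is correct and follows essentially the same path as the paper's: expand the variance as a quadruple sum (the paper packages it as the quadratic form $(\bm{u}^{\otimes 2})^{\top}\bm{K}_w(\theta)\bm{u}^{\otimes 2}$ in Proposition~\ref{prop:inside_proof_theorem_lower_bound}), use $2k$-coherence to bound $\|\nu_{k,1}-\nu_{k,2}\|_\kappa^4$, split the sum along the diagonal $l+l''=l'+l'''$, count the diagonal exactly (your $\sum_s N_s^2=(16k^3+2k)/3$ matches the paper's $\sharp\mathcal{I}_{=}$), and bound the off-diagonal via $c_w$, recovering the same final constant. Part~2 is also handled identically through Lemma~\ref{lem:TildeKernel} and the observation that normalized monopoles at $1$-separated locations form pairs in $\mathfrak{D}_{\neq}^2$.
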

The proof of~\Cref{thm:initial_lower_bound} is given in~\Cref{proof:initial_lower_bound}. For a kernel with mutual coherence bounded by $c/2k$ with $c \leq 1/2$, we obtain  $\frac{4/3-2c_{w_{0}}}{(1+c)^{2}} \geq (4/3-1)/(3/2)^{2} = 4/27 \geq 1/7$. Since $\kappa_{w_{0}}(0,0)/\|\pi_{0}\|_{\kappa}^{4} = 1$ (by Lemma~\ref{lem:TildeKernel}) we get $C_{w_{0}} \geq 1/7$.
Finally, since the mutual coherence of $\kappa$ is bounded by $c/2k$, its $2k$-coherence is bounded by $c$ (cf~\eqref{eq:LCoherenceFromMutualCoherence}), and~\eqref{eq:MainThm2Var} implies
that the $w_{0}$-RFF sketching operator $\mathcal{A}$ with $m$ i.i.d frequencies $\omega_{j} \sim \Lambda_{0} := w_{0}^{2} \hat{\kappa}$ satisfies
\begin{equation}\label{eq:MainThm2VarDirac}
\mathbb{V}\| \mathcal{A} \nu_k \|^{2} 
\geq\frac{k/7-1}{m}.
\end{equation}

In the special case of a mixture of Diracs, since $w_{0} \equiv 1$, it is not difficult to check that $\kappa_{w_{0}} = \kappa$. For a non-negative kernel $\kappa \geq 0$, the fact that $\kappa_{w_{0}} \geq 0$ allows to improve an intermediate bound (in Equation~\eqref{eq:KernelWLowerBoundDetail} in the proof), leading to the same result where
 $4/3-2c_{w_{0}}$ is replaced with $4/3-c_{w_{0}}$ in the definition of $C_{w_{0}}$. This shows that~\eqref{eq:MainThm2VarDirac} is then valid even with a mutual coherence bounded by $c/2k$ with $c < 1$.

Under the assumptions of \Cref{thm:initial_lower_bound}, with $w = w_{0}$, we have $\mathbb{V}\|\mathcal{A} \nu_{k}\|^2 = \Omega(k/m)$. This implies that with this weight function, and even with the usual incoherence assumption, the term $v(k,\tau)$ in \eqref{eq:punctual_concentration} \emph{cannot} be bounded from above by a universal constant that is independent of $k$. 
This result highlights the difference between classical compressed sensing and sketching. Indeed, if we consider a random Gaussian matrix $A \in \mathbb{R}^{ m\times d}$ with i.i.d. entries $\mathcal{N}(0,1/m)$, it is well known that for every normalized vector $x \in \mathbb{R}^{d}$ (such that $\|x\|_{2}=1$) the variance of $\|A x\|_{2}^{2}$ \emph{does not depend on the sparsity} $k$ of the vector $x$; \citep[see e.g.][Lemma 9.8]{FoRa13}. 

\Cref{fig:rip_classical_vs_sketching} illustrates this claim: we compare the variance of $\|A x_{k}\|_{2}^{2}$ where $x_{k} \in \mathbb{R}^{d}$ is a normalized vector of sparsity $k$
to the variance of $\|\mathcal{A} \nu_{k}\|_{2}^{2}$ where $\mathcal{A}$ and $\nu_{k}$ are defined  in~\Cref{thm:initial_lower_bound}, with $\pi_{0}$ the Dirac distribution, $w =w_{0} \equiv 1$, and the $2k$-coherence of $\kappa$ is smaller than $1/2$.
We observe that the variance of $\|A x_{k}\|_{2}^{2}$ is practically flat as a function of $k$, while the variance of $\|\mathcal{A}\nu_{k}\|_{2}^{2}$ is linear in $k$. This observation shows that the study of the RIP in the set of mixtures of Diracs $\mathfrak{G}_{k}$ is not a mere extension of the existing RIP literature in Euclidean spaces.
\begin{figure}[h]
\centering
\includegraphics[width= 0.47\textwidth]{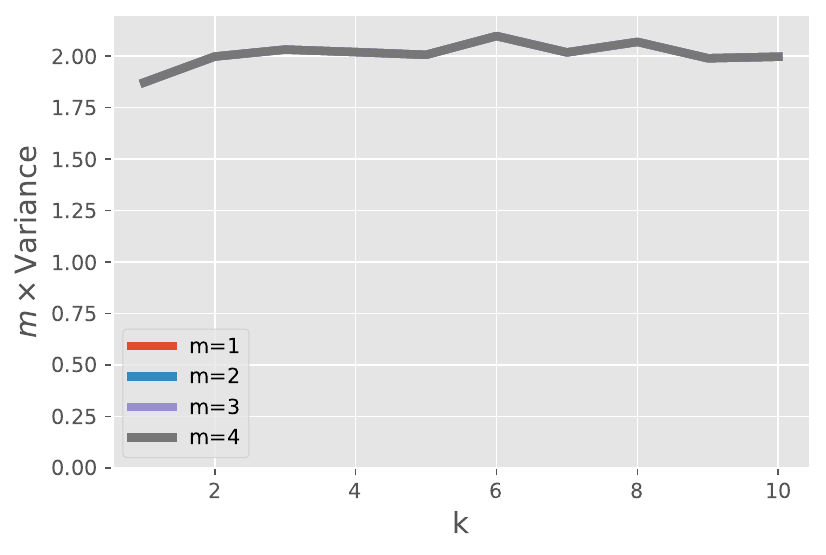}~\includegraphics[width= 0.47\textwidth]{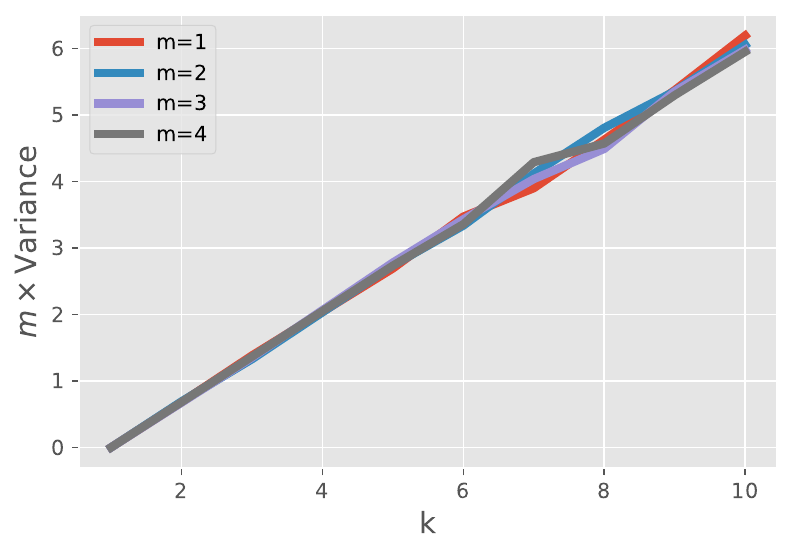}\\
\caption{The term $m \times \mathbb{V}\|Ax_{k}\|^2$ (left) compared to the term $m \times \mathbb{V}\|\mathcal{A}\nu_{k}\|^2$ (right).
}
\label{fig:rip_classical_vs_sketching}
\end{figure}

\paragraph{Limits of the proof technique proposed in this paper.}
Now, a careful examination of the analysis leading to \Cref{cor:bounds_gaussian} and \Cref{cor:bounds_dirac} suggests that the unwanted $O(k^{2}d)$ instead of $O(kd)$ behaviour of the sufficient sketch size results from the requirement of concentration inequality~\eqref{eq:assumption_psi_ell_Omega} in Theorem~\ref{thm:RIP_without_weights}. 
In particular, in order to improve the theoretical guarantees of sketching using i.i.d. random frequencies, it would be tempting to seek a weight function $w$ such that the random variable $$\psi_{\ell}(z|\omega):= \psi(\omega) f_{\ell}(z|\omega)$$ is sub-exp($\nu_\ell,b_{\ell}$)  with $\nu_{\ell} = \mathcal{O}(1/\sqrt{k})$, for each $\ell \in \{\mathrm{mm,md,dd}\}$ and $z \in \Theta_{\ell}$.

The reader can check that such an approach would indeed allow to establish guarantees with a sketch size $\mathcal{O}(kd)$. 
However, this would also imply that for $\ell \in \{\mathrm{mm,md,dd}\}$ and $ z \in \Theta_{\ell}$, the variance of $\psi_{\ell}(z|\omega)$ would satisfy $\mathbb{V} \psi_{\ell}(z|\omega) = \mathcal{O}(1/k)$.
Now, when $\kappa$ has mutual coherence bounded by $\mu<1/(2k-1)$ (this is a natural assumption in our context), the expectation of this variable satisfies $ \mathbb{E} 
\psi_{\ell}(z|\omega) = \mathcal{O}(1/k)$, hence we would obtain that 
\begin{equation}\label{eq:wishful_bound}
\mathbb{E}[\psi_{\ell}^{2}(z|\omega)] = \mathcal{O}(1/k).
\end{equation}
The following result shows that~\eqref{eq:wishful_bound} \emph{cannot hold} in the specific setting of mixture of Diracs.

\begin{proposition}\label{prop:lower_bound_psi_mm}
Consider $\mathcal{T}$ to be a location-based family  with the dirac in $0$ as a base distribution, and consider $\kappa$ to be a normalized shift-invariant kernel such that $\kappa \geq 0$. Consider $\phi_{\omega}$ as defined in~\eqref{eq:FFComponent}, then for any $\kappa$-compatible weight function $w$ and for $\omega \sim \Lambda:= w^2 \hat{\kappa}$, we have

\begin{equation*}\label{eq:lower_bound_exp_epsimm}
\forall y \in \Theta_{\mathrm{mm}}, \:\:\:\: \mathbb{E}[\psi^{2}_{\mathrm{mm}}(y|\omega)] \geq \frac{1}{4}.
\end{equation*}

\end{proposition}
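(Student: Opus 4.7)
The plan is to unpack the definitions, then apply Cauchy--Schwarz with respect to the reference density $\hat{\kappa}$, and finally exploit the assumption $\kappa \geq 0$ via Bochner. Since $\pi_0 = \delta_0$ and $\kappa$ is normalized, we have $\|\pi_0\|_\kappa^2 = \kappa(0) = 1$ and $\langle \pi_0,\phi_\omega\rangle = \phi_\omega(0) = 1/w(\omega)$, so that $\psi(\omega) = 1/w^2(\omega)$ and $\psi_{\mathrm{mm}}(y|\omega) = \cos(\langle \omega, y\rangle)/w^2(\omega)$. Under $\omega \sim \Lambda = w^2 \hat{\kappa}$ a direct computation reduces the quantity we want to bound to
\[
\mathbb{E}[\psi_{\mathrm{mm}}^2(y|\omega)] = \int \frac{\cos^2(\langle \omega, y\rangle)}{w^2(\omega)}\,\hat{\kappa}(\omega)\,d\omega.
\]

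Next I would apply Cauchy--Schwarz with respect to the Lebesgue measure, splitting the factor $\hat{\kappa}$ as $(\hat{\kappa}^{1/2}/w)\cdot (w\hat{\kappa}^{1/2})$. More precisely,
\[
\Bigl(\int |\cos(\langle\omega,y\rangle)|\,\hat{\kappa}(\omega)\,d\omega\Bigr)^2 \le \int \frac{\cos^2(\langle\omega,y\rangle)}{w^2(\omega)}\hat{\kappa}(\omega)\,d\omega \cdot \int w^2(\omega)\hat{\kappa}(\omega)\,d\omega.
\]
The second factor on the right equals $1$ by $\kappa$-compatibility of $w$ (cf.\ \eqref{eq:DefGammaNormalized}). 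Hence
\[
\mathbb{E}[\psi_{\mathrm{mm}}^2(y|\omega)] \ge \bigl(\mathbb{E}_{\hat{\kappa}}|\cos(\langle \omega, y\rangle)|\bigr)^2.
\]

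Finally, I would use the elementary inequality $|\cos \theta| \ge \cos^2 \theta$ valid for every real $\theta$, and the identity $\cos^2 \theta = \tfrac{1}{2}(1+\cos(2\theta))$. Taking expectation under $\hat{\kappa}$ and invoking Bochner's formula, $\mathbb{E}_{\hat{\kappa}}[\cos(2\langle\omega,y\rangle)] = \kappa(2y)$, so
\[
\mathbb{E}_{\hat{\kappa}}|\cos(\langle\omega, y\rangle)| \ge \mathbb{E}_{\hat{\kappa}}\cos^2(\langle\omega, y\rangle) = \tfrac{1}{2}\bigl(1+\kappa(2y)\bigr) \ge \tfrac{1}{2},
\]
where the last inequality uses the assumption $\kappa \ge 0$. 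Squaring yields the desired bound $\mathbb{E}[\psi_{\mathrm{mm}}^2(y|\omega)] \ge 1/4$.

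The computation is short and the only non-trivial step is the choice of factorization in Cauchy--Schwarz: one needs to insert the weight $w$ so that the $\kappa$-compatibility constraint $\int w^2\hat{\kappa} = 1$ cancels one side, leaving the other side expressible purely in terms of the reference Bochner density $\hat{\kappa}$. Once this is done, non-negativity of $\kappa$ does the rest, and the argument is completely uniform in $y \in \Theta_{\mathrm{mm}}$ (in fact, in all $y \in \mathbb{R}^d$).
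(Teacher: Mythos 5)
Your proof is correct and follows essentially the same route as the paper's: both unpack $\psi_{\mathrm{mm}}(y|\omega) = \cos(\langle\omega,y\rangle)/w^2(\omega)$, apply Cauchy--Schwarz with the factorization that makes $\kappa$-compatibility cancel one side, and then use $|\cos\theta| \geq \cos^2\theta$, the half-angle identity, Bochner, and $\kappa \geq 0$. The only cosmetic difference is that the paper writes the Cauchy--Schwarz step by first multiplying in the factor $\int w^2\hat{\kappa}\,d\omega = 1$, whereas you split $\hat{\kappa}^{1/2}$ explicitly; these are the same inequality.
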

This lower bound holds \emph{irrespective of how small the mutual coherence of $\kappa$ may be.}
\begin{proof}
Let $y \in \Theta_{\mathrm{mm}}$. Since $\|\pi_{0}\|_{\kappa}^{2} = \kappa(0,0) = 1$ and $|\langle \pi_{0},\phi_{\omega}\rangle| = 1/w(\omega)$, by~\eqref{eq:DefRho}-\eqref{eq:DefPsiMM} we have 
\begin{equation*}
\psi_{\mathrm{mm}}(y|\omega) := \psi(\omega)f_{\mathrm{mm}}(y|\omega) = \frac{|\langle \pi_0, \phi_{\omega}\rangle|^2}{\|\pi_0\|_{\kappa}^2} \cos(\langle \omega,y\rangle) = \frac{\cos(\langle \omega,y\rangle)}{w^2(\omega)}.
\end{equation*}
As $w$ is $\kappa$-compatible, we have $\int_{\mathbb{R}^d} w^{2}(\omega)\hat{\kappa}(\omega) \mathrm{d} \omega  = 1$, thus by Cauchy-Schwarz inequality we get
\begin{align*}
\mathbb{E}_{\omega \sim \Lambda}[\psi^{2}_{\mathrm{mm}}(y|\omega)]
= \int_{\mathbb{R}^{d}} \frac{ \cos^{2}(\langle\omega, y \rangle)}{w^{4}(\omega)} w^{2}(\omega)  \hat{\kappa}(\omega) \mathrm{d}\omega  
& = \int_{\mathbb{R}^{d}} w^{2}(\omega)  \hat{\kappa}(\omega) \mathrm{d}\omega \int_{\mathbb{R}^{d}} \frac{ \cos^{2}(\langle\omega, y \rangle)}{w^{2}(\omega)}  \hat{\kappa}(\omega) \mathrm{d}\omega \\
&  \geq \Big(\int_{\mathbb{R}^{d}} |\cos(\langle \omega,y\rangle)|  \hat{\kappa}(\omega) \mathrm{d}\omega\Big)^{2}  \\
&  \geq \Big(\int_{\mathbb{R}^{d}} \cos^{2}(\langle \omega,y\rangle)  \hat{\kappa}(\omega) \mathrm{d}\omega\Big)^{2}. 
\end{align*}
Finally, observe that $\cos(\langle \omega,y\rangle)^2 = (1+\cos(2\langle \omega,y\rangle))/2$, so that we have, using that $\kappa \geq 0$,
\begin{equation*}
\mathbb{E}_{\omega \sim \Lambda}[\psi^{2}_{\mathrm{mm}}(y|\omega)] \geq 
\Big [\frac{1}{2}\bigg( \int_{\mathbb{R}^{d}}   \hat{\kappa}(\omega) \mathrm{d}\omega  +  \int_{\mathbb{R}^{d}} \cos(2\langle \omega,y\rangle)  \hat{\kappa}(\omega) \mathrm{d}\omega \bigg)\Big]^{2}  =
\Big(\frac{1}{2}(1+\kappa(2y,0))\Big)^{2} \geq \frac{1}{4}.
\end{equation*}
\end{proof}

\begin{figure}[h]
\centering
\includegraphics[width= 0.45\textwidth]{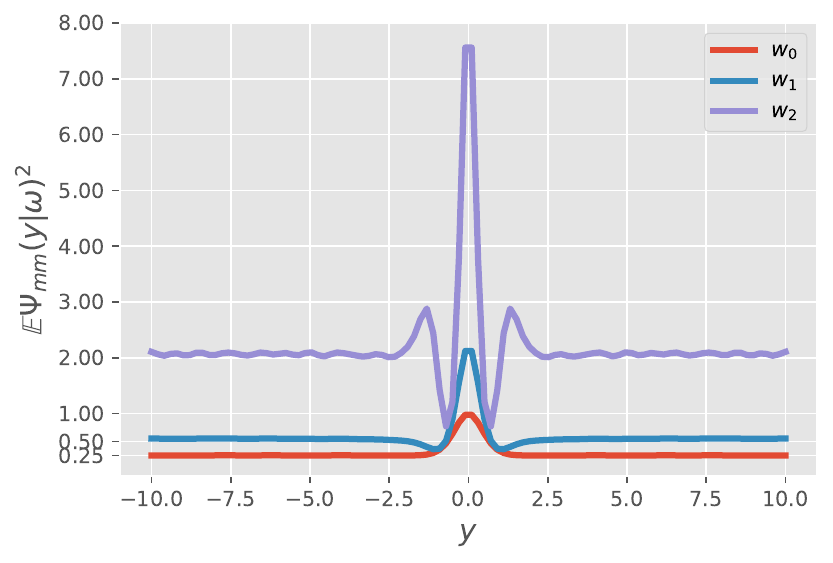}\includegraphics[width= 0.45\textwidth]{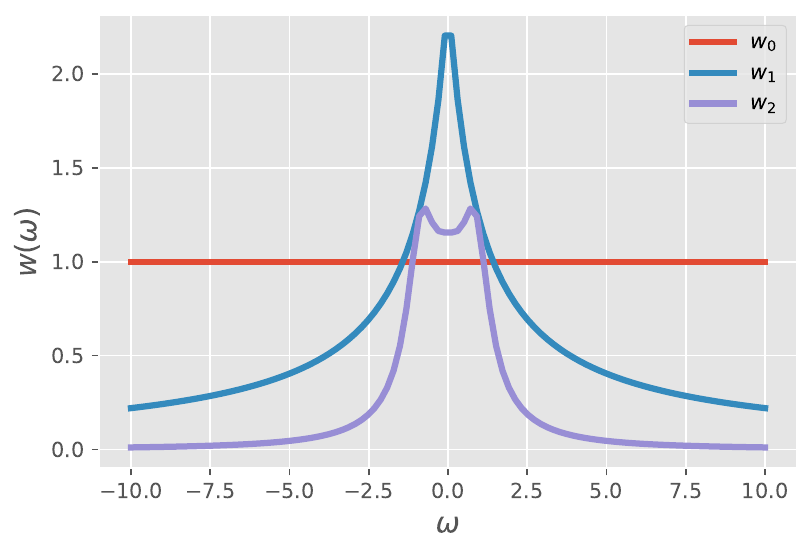}
\caption{An illustration of the lower bound of Proposition~\ref{prop:lower_bound_psi_mm} (left) for three choices of $w$ (right): $w_0(\omega) = 1, \:\: w_{1}(\omega) = (1+\|\omega\|)^{-1}, \:\: w_{2}(\omega) = (\|\omega\|^4+1)(\|\omega\|^6+1)^{-1}$. }
\end{figure}

Proposition~\ref{prop:lower_bound_psi_mm} shows that improving the dependency of the sketch size on $m$ cannot simply rely on improved concentration bounds:  obtaining sharper bounds on sketch sizes that reflect the empirical findings of~\citep{KeTrTrGr17} seems to require a substantially subtler analysis which is beyond the scope of this paper.

\section{Conclusion}\label{sec:conclusion}

In this work we revisited the theoretical analysis of the Restricted Isometry Property for sketching operators proposed  in \citep{GrBlKeTr21,GrBlKeTr20}. This property is crucial in the field of compressive learning: it measures how the sketching operator preserves the MMD distance between measures belonging to a model set of measures. In particular, the sketching operators proposed in \citep{GrBlKeTr20} are suited for models of mixtures and are based on Fourier features. Nevertheless, the proposed theoretical analysis makes some additional assumptions that are summarized by the conditions~\eqref{eq:phi_omega_conditions}.
After investigating the partial necessity of the conditions~\eqref{eq:phi_omega_conditions} in the analysis of~\citep{GrBlKeTr20}, we proposed an alternative analysis based on deterministic bounds of $\delta(\mathcal{S}_k|\mathcal{A})$, then we showed how to leverage these deterministic bounds to establish the Restricted Isometry Property for stochastic sketching operators restricted to sets of mixtures based on location based measures. In particular, we showed that our revisited analysis allows to deal with realistic settings not covered by \citep{GrBlKeTr20}. 

Beyond these contributions, this work opens the door to  further developments on the theoretical study of sketching operators used in the context of compressive learning. For instance, in the context of structured sketching introduced in~\citep{ChGrKe18}, the frequencies are rather block-i.i.d. samples but not i.i.d. samples (cf the end of Section~\ref{sec:RFF}). Theorem~\ref{thm:RIP_without_weights} remains valid in this context: indeed this result can be used without assuming that the frequencies $\omega_{1}, \dots, \omega_{m}$ are i.i.d.. The main hindrance remaining on this direction is to check that the punctual concentration  expressed by conditions~\eqref{eq:assumption_psi_m_Omega},~\eqref{eq:assumption_psi_d_Omega},~\eqref{eq:assumption_psi_ell_Omega}  hold even when using block-i.i.d. frequencies. For this purpose, the existing results on the literature may help \citep{LeSaSm13,ChSi16,ChRoWe17,MuKaBuOs18}.
Another setting where our results may be useful is the study of deterministic sketching operators. Indeed, as shown in Section~\ref{sec:main_results}, the core of our analysis is based on deterministic bounds of $\delta(\mathcal{S}_k|\mathcal{A})$ presented in Section~\ref{sec:sharp_determistic_bounds}, and recent years have witnessed an increased interest into the theoretical study of \emph{deterministic Fourier feature maps}~\citep{DaDeRe17,YaSiAvMa14}. Investigating whether deterministic sketching operators still satisfy the same guarantees as the stochastic ones is thus both a natural and challenging question. 

As shown in Section~\ref{sec:lower_bound}, neither the analysis of \citep{GrBlKeTr20} nor our analysis unfortunately achieves to explain the empirical findings of~\citep{KeTrTrGr17}, and there remains a gap between sufficient sketch sizes endowed with theoretical guarantees, which scale as $\mathcal{O}(k^2d)$, and practically observed sketch sizes, which scale as $\mathcal{O}(kd)$. On the one hand, the quadratic theoretical dependency on the 'sparsity' $k$ is not surprising given the known limits of sparse recovery guarantees exploiting dictionary coherence \citep[Chapter 5]{FoRa13}. Yet, the literature on compressive sensing manages to establish bounds essentially linear in the sparsity using random matrix techniques that do rely on mutual coherence \citep[Chapter 9]{FoRa13}. The proofs in this field exploit a fine study of the eigenvalues of random matrices, which was until now somehow overlooked in the community of compressive learning. Thus, an interesting direction of research is the study of the eigenvalues of the random matrices that appear in this context. Recent developments on the study of ridge kernel regression for random Fourier features may help \citep{AvKaMuVeZa17,LiToOgSe19}.
In particular, in this line of research, the authors investigated the impact of the frequency distribution in the quality of the approximations based on random Fourier features. The techniques developed in these works may be helpful to understand the impact of the frequency distribution in the design of sketching operators. In the same vein, alternative frequency distributions, that define other kernels than the Gaussian kernel, have manifested better empirical performance when used in sketching-based learning tasks such as mixture learning; see Section 4.2 in \citep{Cha20} for an example. This motivates to scrutinize the impact of the kernel on the design of the sketching operator.

\paragraph{Acknowledgement}

This project was supported by the AllegroAssai ANR project ANR-19-CHIA-0009. The authors
would like to thank Titouan Vayer for his constructive feedback on an early version of this work and for sketching an early version of \Cref{lemma:very_useful_bound_on_exp}. Rémi Gribonval would like to thank Felix Krahmer for interesting discussions at the Oberwolfach Workshop 2148.

 \newpage
\appendix
\section{Proofs}

\subsection{Proof of~\Cref{lem:TildeKernel}}\label{proof:tildekernel}

By~\eqref{eq:FiniteVarianceAssumptionW}, the integral in~\eqref{eq:tilde_kernel} converges hence 
the kernel $\kappa_{w}$ is well-defined and shift-invariant.
Consider arbitrary $\theta,\theta' \in \mathbb{R}^{d}$ and denote $\pi_{\theta},\pi_{\theta'}$ deduced from $\pi_{0}$ as in a location-based family. Recall that by definition, $\pi_{\theta}$ is the distribution of $X+\theta$ where $X \sim \pi_{0}$, and $\pi_{\theta'}$ is the distribution of $X+\theta'$. By~\eqref{eq:kappa_generalized_phi}, standard computations with the MMD\footnote{See \citep[Section 2.1]{MuFuSrSc17} and  \citep[proof of Proposition 6.2]{GrBlKeTr20}.} 
yield 
\begin{equation}\label{eq:MMDInnerProdLocBased}
 \langle \pi_{\theta},\pi_{\theta'}\rangle_\kappa = \int_{\mathbb{R}^d} \big|\langle \phi^{1}_{\omega}, \pi_0\rangle\big|^{2} \hat{\kappa}(\omega)  e^{ \imath \omega^{\top} (\theta-\theta')}  \mathrm{d}\omega.
\end{equation}
Specializing~\eqref{eq:MMDInnerProdLocBased} to $\theta=\theta'=0$  we get
by Cauchy-Schwarz' inequality,  since $w$ is $\kappa$-compatible
\begin{align*}
\|\pi_{0}\|_{\kappa}^{4} 
&= 
\left(\int_{\mathbb{R}^d} \big|\langle \phi^{1}_{\omega}, \pi_0\rangle\big|^{2} \hat{\kappa}(\omega)  \mathrm{d}\omega\right)^{2}
= 
\left(\int_{\mathbb{R}^d} \big|\langle \phi^{1}_{\omega}, \pi_0\rangle\big|^{2} w^{-1}(\omega) \sqrt{\hat{\kappa}(\omega)}
w(\omega) \sqrt{\hat{\kappa}(\omega)}
  \mathrm{d}\omega\right)^{2}\\
  & \leq 
\left(\int_{\mathbb{R}^d} \big|\langle \phi^{1}_{\omega}, \pi_0\rangle\big|^{4} w^{-2}(\omega) \hat{\kappa}(\omega)\mathrm{d}\omega\right)^{2} \cdot \left(\int_{\mathbb{R}^d}w^{2}(\omega) \hat{\kappa}(\omega)  \mathrm{d}\omega\right)^{2}
\stackrel{\eqref{eq:tilde_kernel} \& \eqref{eq:DefGammaNormalized}}{:=} \kappa_{w}(0,0)\, .
\end{align*}
The equality case of Cauchy-Schwarz is  when $w(\omega) \propto \big|\langle \phi^{1}_{\omega}, \pi_0\rangle\big|^{2} w^{-1}(\omega)$, i.e., $w(\omega) \propto w_{0}(\omega)$ with $w_{0}$ defined in~\eqref{eq:DefBasicWeighting}. We have
\[
\int w_{0}^{2}(\omega) \hat{\kappa}(\omega)\mathrm{d}\omega
\stackrel{\eqref{eq:DefBasicWeighting}}{=}
 \|\pi_{0}\|_{\kappa}^{-2} \cdot
\int |\langle \phi^{1}_{\omega},\pi_{0}\rangle|^{2} \hat{\kappa}(\omega)\mathrm{d}\omega
\stackrel{\eqref{eq:MMDInnerProdLocBased}}{=}
 \|\pi_{0}\|_{\kappa}^{-2} \cdot \langle \pi_{0},\pi_{0}\rangle_{\kappa}
=1\, ,
\]
hence $w_{0}$ is the only equality-case of Cauchy-Schwarz which is $\kappa$-compatible. The fact that $w_{0}(\omega)$ satisfies~\eqref{eq:FiniteVarianceAssumptionW} follows from $w_{0} \propto  \big|\langle \phi^{1}_{\omega}, \pi_0\rangle\big|^{2} w_{0}^{-1}(\omega)$. Finally we write
\begin{align*}
\kappa_{w_{0}}(\theta,\theta') 
& \stackrel{\eqref{eq:tilde_kernel}}{:=} 
\int_{\mathbb{R}^d} \big|\langle \phi^{1}_{\omega}, \pi_0\rangle\big|^{4} w_{0}^{-2}(\omega) \hat{\kappa}(\omega)  e^{ \imath \omega^{\top} (\theta-\theta')}  \mathrm{d}\omega\\ 
&\stackrel{\eqref{eq:DefBasicWeighting}}{=}
\|\pi_{0}\|_{\kappa}^{2} \cdot
\int_{\mathbb{R}^d} \big|\langle \phi^{1}_{\omega}, \pi_0\rangle\big|^{2} \hat{\kappa}(\omega)  e^{\imath \omega^{\top} (\theta-\theta')}  \mathrm{d}\omega
 \stackrel{\eqref{eq:MMDInnerProdLocBased}}{=} \|\pi_{0}\|_{\kappa}^{2} \cdot \langle \pi_{\theta},\pi_{\theta'}\rangle_\kappa.
\end{align*}

\subsection{Proof of~\Cref{thm:initial_lower_bound}}\label{proof:initial_lower_bound}

The proof relies on the following result which gives a closed formula of the variance of interest.
\begin{proposition}\label{prop:inside_proof_theorem_lower_bound}
Consider a normalized shift-invariant kernel  $\kappa$, a $\kappa$-compatible positive weight function $w$ satisfying~\eqref{eq:FiniteVarianceAssumptionW}, $\Lambda = w^{2}\hat{\kappa}$ and $\mathcal{A}$ a $\kappa$-compatible random $w$-FF sketching operator as in~\Cref{ex:KCompatibleRFF} with $m$ i.i.d. frequencies. Consider a location-based family $\mathcal{T}$ with base distribution $\pi_{0}$.
Given any location parameters $\theta_1, \dots, \theta_{2k} \in \Theta$ and weights $u_1, \dots, u_{2k} \in \mathbb{R}$, we have
\begin{equation}\label{eq:MainTensorProductLemmaLowerBound}
\mathbb{V} \| \mathcal{A} \sum_{i \in [2k]} u_i \pi_{\theta_{i}}\|^{2}  = \frac{1}{m} \Big( 
( \bm{u}^{\otimes 2} )^{\top}
\bm{K}_w(\theta) \bm{u}^{\otimes 2} - \big\|\sum\limits_{i=1}^{2k}u_{i}\pi_{\theta_i}\big\|_{\kappa}^4 \Big),
\end{equation}
where $ \bm{u}^{\otimes 2}  \in \mathbb{R}^{(2k)^2}$ is the tensor product with itself of the vector $\bm{u} \in \mathbb{R}^{2k}$ containing the $u_i$, and
\begin{equation}\label{eq:DefTensorProductKernel}
\bm{K}_w(\theta) = [\kappa_{w}(\theta_{i_1}-\theta_{i_2},\theta_{i_4}-\theta_{i_3} )]_{(i_1,i_2),(i_3,i_4) \in [2k]^2} \in 
\mathbb{R}^{(2k)^{2} \times (2k)^{2}}
\end{equation}
with $[2k] :=\{1,2,\dots, 2k\}$ and $\kappa_{w}$ is the shift-invariant kernel defined in~\eqref{eq:tilde_kernel}.
\end{proposition}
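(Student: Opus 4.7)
The plan is to exploit the i.i.d. structure of the feature map to reduce the variance of $\|\mathcal{A}\nu\|^{2}$ (where $\nu:=\sum_{i\in[2k]} u_i\pi_{\theta_i}$) to the variance of a single summand, then to compute that variance in closed form using the location-based structure and the very definition~\eqref{eq:tilde_kernel} of $\kappa_{w}$.

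\textbf{Step 1: reduction to one frequency.} Since $\mathcal{A}\nu = \frac{1}{\sqrt{m}}(\langle \phi_{\omega_j},\nu\rangle)_{j\in[m]}$, we have $\|\mathcal{A}\nu\|^{2}=\frac{1}{m}\sum_{j=1}^{m}|\langle\phi_{\omega_j},\nu\rangle|^{2}$. Because the $\omega_j$ are i.i.d. with common law $\Lambda=w^{2}\hat\kappa$,
\[
\mathbb{V}\|\mathcal{A}\nu\|^{2}=\frac{1}{m}\,\mathbb{V}_{\omega\sim\Lambda}\!\big(|\langle\phi_{\omega},\nu\rangle|^{2}\big)=\frac{1}{m}\Big(\mathbb{E}_{\omega\sim\Lambda}|\langle\phi_{\omega},\nu\rangle|^{4}-\big(\mathbb{E}_{\omega\sim\Lambda}|\langle\phi_{\omega},\nu\rangle|^{2}\big)^{2}\Big).
\]
By \eqref{eq:UnbiasedEmpiricalMMD2} (specialized to $\nu=\nu'$), the second moment is exactly $\|\nu\|_{\kappa}^{2}$, so the subtracted term matches the $\|\sum_i u_i\pi_{\theta_i}\|_{\kappa}^{4}$ appearing in \eqref{eq:MainTensorProductLemmaLowerBound}. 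It remains to identify the fourth moment with $(u^{\otimes 2})^{\top}\bm{K}_{w}(\theta)\,u^{\otimes 2}$.

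\textbf{Step 2: exploit the location-based structure.} Writing $\phi_{\omega}(x)=e^{\imath\omega^{\top}x}/w(\omega)=\phi^{1}_{\omega}(x)/w(\omega)$ and using that $X+\theta_i\sim\pi_{\theta_i}$ when $X\sim\pi_0$, we get $\langle\phi_{\omega},\pi_{\theta_i}\rangle = e^{\imath\omega^{\top}\theta_i}\,\langle\phi^{1}_{\omega},\pi_0\rangle/w(\omega)$. Setting $S(\omega):=\sum_i u_i e^{\imath\omega^{\top}\theta_i}$, this gives
\[
|\langle\phi_{\omega},\nu\rangle|^{4}=\frac{|\langle\phi^{1}_{\omega},\pi_0\rangle|^{4}}{w^{4}(\omega)}\,|S(\omega)|^{4}.
\]
Expanding $|S(\omega)|^{4}=S(\omega)^{2}\overline{S(\omega)}^{2}=\sum_{(i_1,i_2),(i_3,i_4)}u_{i_1}u_{i_2}u_{i_3}u_{i_4}\,e^{\imath\omega^{\top}(\theta_{i_1}-\theta_{i_2}+\theta_{i_3}-\theta_{i_4})}$ and taking expectation against $\Lambda(\omega)\mathrm{d}\omega=w^{2}(\omega)\hat\kappa(\omega)\mathrm{d}\omega$ leaves, inside each integral, exactly the weight $|\langle\phi^{1}_{\omega},\pi_0\rangle|^{4}\,w^{-2}(\omega)\,\hat\kappa(\omega)$ appearing in~\eqref{eq:tilde_kernel}. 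Thus each integral equals $\kappa_{w}(\theta_{i_1}-\theta_{i_2},\theta_{i_4}-\theta_{i_3})$ (note the sign convention in~\eqref{eq:tilde_kernel}), which is precisely the $((i_1,i_2),(i_3,i_4))$-entry of $\bm{K}_{w}(\theta)$ in~\eqref{eq:DefTensorProductKernel}.

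\textbf{Step 3: assemble as a quadratic form.} Grouping the sum by pairs of indices and recognizing that $(u^{\otimes 2})_{(i_1,i_2)}=u_{i_1}u_{i_2}$ yields
\[
\mathbb{E}_{\omega\sim\Lambda}|\langle\phi_{\omega},\nu\rangle|^{4}=(u^{\otimes 2})^{\top}\bm{K}_{w}(\theta)\,u^{\otimes 2},
\]
which, combined with Step~1, gives~\eqref{eq:MainTensorProductLemmaLowerBound}. The only subtlety to check carefully is Fubini/absolute convergence to swap the finite sum and the integral; this is guaranteed by assumption~\eqref{eq:FiniteVarianceAssumptionW}, which is precisely the hypothesis ensuring that $\kappa_{w}$ is well defined in \Cref{lem:TildeKernel}. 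No obstacle is expected; the argument is a direct computation with a careful bookkeeping of the indices in the tensor product.
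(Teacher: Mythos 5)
Your proof is correct and follows essentially the same route as the paper's: reduce the variance to that of a single-frequency summand, use~\eqref{eq:UnbiasedEmpiricalMMD2} to identify the second-moment term with $\|\nu\|_{\kappa}^{2}$, factor out $e^{\imath\omega^{\top}\theta_i}$ via the location-based structure, expand the fourth power as a quadruple sum, and recognize each integral as a $\kappa_{w}$ entry (after the $w^{-4}\cdot w^{2}=w^{-2}$ cancellation). Your remark about absolute integrability under~\eqref{eq:FiniteVarianceAssumptionW} is a small extra precaution the paper leaves implicit, but the argument is otherwise the same.
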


\begin{proof}[Proof of~\Cref{prop:inside_proof_theorem_lower_bound}]
As $\| \mathcal{A} \sum_{i=1}^{2k} u_i \pi_{\theta_i} \|^{2} = \frac{1}{m}\sum_{j \in [m]} \big|\langle \phi_{\omega_j}^{w}, \sum_{i=1}^{2k} u_i \pi_{\theta_i} \rangle\big|^2$
is the average of $m$ i.i.d. variables, we have
$\mathbb{V} \| \mathcal{A} \sum_{i=1}^{2k} u_i \pi_{\theta_i} \|^{2}  
=\frac{1}{m} \mathbb{V} \big|\langle \phi_{\omega}^{w}, \sum_{i=1}^{2k} u_i \pi_{\theta_i} \rangle\big|^2$
so we now characterize 
\begin{align*}
\mathbb{V}|\langle \phi_{\omega}^{w}, \sum_{i=1}^{2k} u_i \pi_{\theta_i} \rangle|^2 & = \mathbb{E}_{\omega \sim \Lambda}\big|\langle\phi_{\omega}^{w}, \sum_{i=1}^{2k} u_i \pi_{\theta_i} \rangle\big|^4 - \Big(\mathbb{E}_{\omega \sim \Lambda}\big|\langle\phi_{\omega}^{w}, \sum_{i=1}^{2k} u_i \pi_{\theta_i} \rangle\big|^2\Big)^2 \nonumber\\
&
 = \mathbb{E}_{\omega \sim \Lambda}\big|\langle\phi_{\omega}^{w}, \sum_{i=1}^{2k} u_i \pi_{\theta_i} \rangle\big|^4 - \big\| \sum_{i=1}^{2k} u_i \pi_{\theta_i} \big\|_{\kappa}^4
\end{align*}
where we used~\eqref{eq:UnbiasedEmpiricalMMD2} since $w$ is $\kappa$-compatible and $\Lambda = w^{2}\hat{\kappa}$ (see Example~\ref{ex:KCompatibleRFF}). Given the expression~\eqref{eq:FFComponent} of $\phi_{\omega} := \phi_{\omega}^{w} =\phi^{1}_{\omega}/w(\omega)$ and since $\{\pi_{\theta}\}_{\theta \in \Theta}$ is location-based, 
\[
\langle \phi_{\omega}, \pi_{\theta}\rangle 
=
\mathbb{E}_{X \sim \pi_{\theta}} \phi_{\omega}(X)
=
\mathbb{E}_{X' \sim \pi_{0}} \phi_{\omega}(X'+\theta)
=
e^{\imath\omega^{\top}\theta} 
\mathbb{E}_{X' \sim \pi_{0}} \phi_{\omega}(X')
= e^{\imath\omega^{\top}\theta} \langle \phi_{\omega}, \pi_{0} \rangle,
\]
so that we can develop \begin{align*}
 \big|\langle \phi_{\omega}, \sum_{i=1}^{2k} u_i \pi_{\theta_i} \rangle\big|^4 
 & = \sum_{i_1 =1}^{2k} \sum_{i_2 = 1}^{2k} \sum_{i_3 =1}^{2k} \sum_{i_4 = 1}^{2k}  u_{i_1}u_{i_2} u_{i_3}u_{i_4} \langle \phi_{\omega},  \pi_{\theta_{i_1}} \rangle \overline{\langle \phi_{\omega},  \pi_{\theta_{i_2}} \rangle}  \langle \phi_{\omega},  \pi_{\theta_{i_3}} \rangle \overline{\langle \phi_{\omega},  \pi_{\theta_{i_4}} \rangle} \nonumber\\
& = \sum_{i_1 =1}^{2k} \sum_{i_2 = 1}^{2k} \sum_{i_3 =1}^{2k} \sum_{i_4 = 1}^{2k}  u_{i_1}u_{i_2} u_{i_3}u_{i_4} |\langle \phi_{\omega},  \pi_{0} \rangle |^4  e^{\imath \big(\omega^{\top} (\theta_{i_1} - \theta_{i_2} + \theta_{i_3} - \theta_{i_4}) \big) }.
\end{align*}
Moreover, given the expression~\eqref{eq:ImportanceSampling} of the pdf $\Lambda(\omega)$ we have 
for every $i_1,i_2,i_3,i_4 \in [2k]$
\begin{align*}
\mathbb{E}_{\omega \sim \Lambda}|\langle \phi_{\omega},  \pi_{0} \rangle |^4  e^{\imath \big(\omega (\theta_{i_1} - \theta_{i_2} + \theta_{i_3} - \theta_{i_4}) \big) } 
& = \int_{\mathbb{R}^d}  |\langle \phi_{\omega},  \pi_{0} \rangle |^4 e^{\imath \big(\omega^{\top} (\theta_{i_1} - \theta_{i_2} + \theta_{i_3} - \theta_{i_4})\big)} w^{2}(\omega) \hat{\kappa}(\omega) d\omega \nonumber \\
& = \int_{\mathbb{R}^d}  |\langle \phi_{\omega}^{1},  \pi_{0} \rangle |^4w^{-4}(\omega) e^{\imath \big(\omega^{\top} (\theta_{i_1} - \theta_{i_2} + \theta_{i_3} - \theta_{i_4})\big)} w^{2}(\omega) \hat{\kappa}(\omega) d\omega \nonumber \\
& = \kappa_{w}(\theta_{i_1}-\theta_{i_2}, \theta_{i_4}-\theta_{i_3}),
\end{align*} 
where $\kappa_{w}$ is defined in \eqref{eq:tilde_kernel}. As a result, we have
\begin{equation*}\mathbb{E} \big|\langle \phi_{\omega}, \sum_{i=1}^{2k} u_i \pi_{\theta_i} \rangle\big|^4 
= \sum_{i_1 =1}^{2k} \sum_{i_2 = 1}^{2k} \sum_{i_3 =1}^{2k} \sum_{i_4 = 1}^{2k}  u_{i_1}u_{i_2} u_{i_3}u_{i_4} \kappa_{w}(\theta_{i_1}-\theta_{i_2}, \theta_{i_4}-\theta_{i_3})
= (\bm{u}^{\otimes 2})^\top\bm{K}_w(\theta) \bm{u}^{\otimes 2}
\end{equation*}
where, according to the notations of the proposition $\bm{u}^{\otimes 2} \in \mathbb{R}^{(2k)^{2}}$ is the vector with entries
$u^{\otimes 2}_{(i_1,i_2)} = u_{i_1}u_{i_2}$ for each pair $(i_{1},i_{2}) \in [2k] \times [2k]$, and $\bm{K}_w(\theta)$ is the square matrix of size $(2k)^{2} \times (2k)^{2}$ defined in~\eqref{eq:DefTensorProductKernel}.
Putting the pieces together yields~\eqref{eq:MainTensorProductLemmaLowerBound} as claimed.
\end{proof}

By Proposition~\ref{prop:inside_proof_theorem_lower_bound}
with $\nu_{k} := \mu_{k,1} - \mu_{k,2} = \sum\limits_{i =1}^{2k}u_{i}\pi_{\theta_i}$, where $u_i := \tfrac{(-1)^{i-1}}{k}$, $\theta_{i} := (i-1)\theta^{*}$, \begin{equation}\label{eq:VarThm2}
\mathbb{V}\| \mathcal{A}\nu_{k} \|^{2} = \frac{\mathbb{V}\| \mathcal{A}(\mu_{k,1}-\mu_{k,2}) \|^{2}}{\|\mu_{k,1}-\mu_{k,2}\|_{\kappa}^4} = \frac{1}{m}\Bigg(\frac{(\bm{u}^{\otimes 2})^{\top}\bm{K}_w(\theta) \bm{u}^{\otimes 2}}{\|\mu_{k,1}-\mu_{k,2}\|_{\kappa}^4} - 1 \Bigg).
\end{equation}
We now bound $\|\mu_{k,1}-\mu_{k,2}\|_{\kappa}^{4}$ and $(\bm{u}^{\otimes 2})^{\top}\bm{K}_w(\theta) \bm{u}^{\otimes 2}$ to get the first part of the Theorem.

{\bf Bounding $\|\mu_{k,1}-\mu_{k,2}\|_{\kappa}^4$.} 
Since $\kappa$ is shift-invariant and $\mathcal{T}$ is a location-based family we have $\|\pi_{\theta_i}\|_{\kappa}^2 = \|\pi_{0}\|_{\kappa}^{2}$ for each $i \in [2k]$. Since $\varrho(\theta_{i},\theta_{j}) = \|\theta_{i}-\theta_{j}\| = |i-j| \cdot \|\theta^{*}\| \geq 1$ for $1 \leq i \neq j \leq 2k$, the $2k$ (unnormalized) monopoles $\{u_{i}\pi_{\theta_i}\}_{i=1}^{2k}$ are pairwise $1$-separated dipoles with respect to $\rho$. As $\kappa$ has its $2k$-coherence with respect to $\mathcal{T}$ bounded by $c$ (cf Definition~\ref{eq:DefCoherence}-\eqref{eq:almost_pythagorian_formal}), it follows that
\begin{equation*}
\frac{2}{k}(1-c) \|\pi_{0}\|_{\kappa}^{2}
=
(1-c)
\sum\limits_{i =1}^{2k}\|u_{i}\pi_{\theta_i}\|_{\kappa}^2  \leq \|\mu_{1,k} - \mu_{2,k}\|_{\kappa}^2 
\leq 
(1+c)
\sum\limits_{i =1}^{2k}\|u_{i}\pi_{\theta_i}\|_{\kappa}^2 =\frac{2}{k}(1+c) \|\pi_{0}\|_{\kappa}^{2},
\end{equation*}
where we used that $u_{i}^{2} = 1/k^{2}$ for every $i$. Therefore
\begin{equation}\label{eq:bound_MMD_worst_example}
 \frac{4\|\pi_{0}\|_{\kappa}^{4}(1-c)^{2}}{k^2} \leq \|\mu_{1,k} - \mu_{2,k}\|_{\kappa}^4 \leq  \frac{4\|\pi_{0}\|_{\kappa}^{4}(1+c)^{2}}{k^2}.
\end{equation}

{\bf Bounding $(\bm{u}^{\otimes 2})^{\top}\bm{K}_w(\theta) \bm{u}^{\otimes 2} $.} Since
$u_{i_1}u_{i_2} = (-1)^{i_1+i_2}/k^2$ for each $i_1,i_2 \in [2k]$, we write
\begin{equation*}
(\bm{u}^{\otimes 2})^{\top}\bm{K}_w(\theta) \bm{u}^{\otimes 2} = \frac{1}{k^4} \sum\limits_{i_1,i_2,i_3,i_4 \in [2k]} (-1)^{i_1+i_2+i_3+i_4} \Big[\bm{K}_w(\theta)\Big]_{i_1,i_2,i_3,i_4}. 
\end{equation*}
Consider the sets
\begin{align*}
\mathcal{I}_{=} &:= \{ (i_{1},i_2,i_3,i_4) \in [2k]^4, \:\: i_1 - i_2 = i_3 -i_4 \},\\
\mathcal{I}_{+} & := \{ (i_1,i_2,i_3,i_4)\in [2k]^4, \: (-1)^{i_1+i_2+i_3+i_4} =1\}, \\
\mathcal{I}_{-} & := \{ (i_1,i_2,i_3,i_4)\in [2k]^4, \: (-1)^{i_1+i_2+i_3+i_4} =-1\},
\end{align*}
and observe that $\mathcal{I}_{=} \subset \mathcal{I}_{+}$, $\mathcal{I}_{+} \cup \mathcal{I}_{-} = [2k]^{4}$ and that the definition~\eqref{eq:DefTensorProductKernel} implies 
\begin{align}
(-1)^{i_1+i_2+i_3+i_4}\Big[\bm{K}_w(\theta)\Big]_{i_1,i_2,i_3,i_4} &
\geq 
\begin{cases}
\kappa_{w}(0,0), & \forall (i_{1},i_2,i_3,i_4) \in \mathcal{I}_{=}\\
-|\kappa_{w}(\theta_{i_{1}}-\theta_{i_{2}},\theta_{i_{4}}-\theta_{i_{3}})| & \forall (i_{1},i_2,i_3,i_4) \in \mathcal{I}_{-}\\
-|\kappa_{w}(\theta_{i_{1}}-\theta_{i_{2}},\theta_{i_{4}}-\theta_{i_{3}})| & \forall (i_{1},i_2,i_3,i_4) \in \mathcal{I}_{+}\backslash \mathcal{I}_{=}.
\end{cases}
\label{eq:KernelWLowerBoundDetail}
\end{align}
Further observe that  
since $i_1-i_2-i_3+i_4 \equiv i_1+i_2+i_3+i_4 [2]$, if $(i_1,i_2,i_3,i_4) \in \mathcal{I}_{-} \cup (\mathcal{I}_{+}\backslash \mathcal{I}_{=})$, then either $i_1-i_2-i_3+i_4 \equiv 1[2]$ or  $i_1-i_2-i_3+i_4 \neq 0$.
In both cases, we have $|i_1-i_2-i_3+i_4| \geq 1$, hence
$\|\theta_{i_1}-\theta_{i_2}-\theta_{i_3}+\theta_{i_4}\| = \|(i_1-i_2-i_3+i_4)\theta^{*}\| \geq 1,$ and
\[
|\kappa_{w}(\theta_{i_1}-\theta_{i_2}, \theta_{i_3}-\theta_{i_4})| \leq \sup_{\theta,\theta': \|\theta-\theta'\| \geq 1} |\kappa_{w}(\theta,\theta')|
\leq \frac{c_{w}}{2k} \kappa_{w}(0,0),
\]
where we used definition~\eqref{eq:TildeKernelCoherence}. 
Observe moreover that 
since $\mathcal{I}_{-}=  \{(i_{1},i_{2},i_{3},i_{4}); 
(i_{1},i_{2},i_{3}) \in [2k]^{3}, i_{4} \in [2k], i_{4} \equiv 1-(i_{1}+i_{2}+i_{3}) [2]\}$ we have $\sharp \mathcal{I}_{-} = (2k)^{3} \times k = 8k^{4}$. Similarly, $\sharp (\mathcal{I}_{+}\backslash \mathcal{I}_{=}) = \sharp\mathcal{I}_{+} - \sharp \mathcal{I}_{=} = 8k^{4} - \sharp \mathcal{I}_{=}$. Finally, as we will show below $\sharp \mathcal{I}_{=} \geq 16k^{3}/3$ (the proof is postponed at the end of the section) we obtain
\begin{align*}
\sum\limits_{(i_1,i_2,i_3,i_4) \in[2k]}  (-1)^{i_{1}+i_{2}+i_{3}+i_{4}} \Big[\bm{K}_w(\theta)\Big]_{i_1,i_2,i_3,i_4} 
& \geq \Big(\sharp \mathcal{I}_{=} -\sharp (\mathcal{I}_{+}\backslash \mathcal{I}_{=}) \cdot \frac{c_{w}}{2k}  - \sharp \mathcal{I}_{-} \cdot \frac{c_{w}}{2k}\Big) \cdot  \kappa_{w}(0,0) \\
& =
\Big(\sharp \mathcal{I}_{=}\Big(1+\frac{c_{w}}{2k}\Big)-8c_{w} k^{3}\Big) \cdot \kappa_{w}(0,0)\\
& \geq (16/3 - 8 c_{w})k^3 \cdot \kappa_{w}(0,0).
\end{align*}
Thus,
$
(\bm{u}^{\otimes 2})^{\top}\bm{K}_w(\theta) \bm{u}^{\otimes 2} \geq  \frac{8}{k} \Big(\frac{2}{3}-c_{w}\Big) \cdot \kappa_{w}(0,0)
$
and combining with~\eqref{eq:VarThm2} and~\eqref{eq:bound_MMD_worst_example} yields
$\mathbb{V}\| \mathcal{A}\nu_{k} \|^{2} \geq \frac{1}{m}(C_{w}k-1)$ with $$C_{w}k = \frac{8}{k}(2/3-c_{w}) \kappa_{w}(0,0) \frac{k^{2}}{4\|\pi_{0}\|_{\kappa}^{4}(1+c)^{2}},$$ i.e., $C_{w}=2(2/3-c_{w}) \frac{\kappa_{w}(0,0)}{\|\pi_{0}\|_{\kappa}^{4}(1+c)^{2}}$ as defined in~\eqref{eq:TildeKernelCoherence}.

To conclude the proof of~\eqref{eq:MainThm2Var} we now establish the claimed lower bound on $\sharp \mathcal{I}_{=}$. Since $\mathcal{I}_{=}$ is the disjoint union of $\mathcal{I}^{\ell}_{=} := \{(i_{1},i_{2},i_{3},i_{4}) \in [2k]^{4}, i_{1}-i_{2}=i_{3}-i_{4}=\ell\}$, $\ell \in \{-(2k-1),\ldots,0, \ldots, 2k-1\}$, and $\sharp \mathcal{I}^{\ell}_{=} = \sharp \mathcal{I}^{-\ell}_{=} = (2k-\ell)^{2}$ for $0 \leq \ell \leq 2k-1$, hence
\begin{align}\label{eq:lower_bound_I_equal_alternated_sum}
\sharp \mathcal{I}_{=} & = \sharp \mathcal{I}^{0}_{=}+2 \sum_{\ell=1}^{2k-1} \sharp \mathcal{I}^{\ell}_{=}
= (2k)^2 + \sum\limits_{\ell = 1}^{2k-1} 2(2k-\ell)^2 
 = \Big(2\sum\limits_{\ell = 0}^{2k-1} (2k-\ell)^2\Big) - (2k)^2\nonumber \\
& = \Big(2\sum\limits_{\ell' = 1}^{2k} (\ell')^2\Big) - 4k^2
 = \frac{1}{3}2k(2k+1)(4k+1) - \frac{12}{3}k^2 \geq \frac{16}{3}k^3. 
\end{align}
where we used the well known fact that $\sum_{\ell'=1}^{n} (\ell')^{2} = n(n+1)(2n+1)/6$ for every integer $n$.

We proceed to the second claim.  For $\theta,\theta'$ such that $\|\theta-\theta'\| \geq 1$, $\iota = \pi_{\theta}$ and $\iota' = \pi_{\theta'}$ are (non-normalized) $1$-separated dipoles with respect to $\varrho$. The mutual coherence assumption and Lemma~\ref{lem:TildeKernel} yield
\begin{equation*}\label{eq:upper_bound_k_coherence_double_difference}
\frac{\kappa_{w_{0}}(\theta,\theta')}{\|\pi_{0}\|_{\kappa}^{2}} 
= \langle \pi_{\theta}, \pi_{\theta'} \rangle_{\kappa}  \leq \frac{c}{2k}\|\pi_{\theta}\|_{\kappa}\|\pi_{\theta'}\|_{\kappa} = \frac{c}{2k} \|\pi_{0}\|_{\kappa}^{2}
 = \frac{c}{2k} \langle \pi_{0},\pi_{0}\rangle_\kappa 
 = \frac{c}{2k} \frac{\kappa_{w_{0}}(0,0)}{\|\pi_{0}\|_{\kappa}^{2}}.
\end{equation*}
This shows that $c_{w_{0}}$ defined in~\eqref{eq:TildeKernelCoherence} satisfies $c_{w_{0}} \leq c$.

\subsection{Proof of Proposition~\ref{prop:lower_bound_L_constant}}\label{proof:lower_bound_L_constant}

For any mixture model, kernel, and feature family, the set of normalized dipoles $\mathfrak{D}$ is included in the normalized secant set hence
\begin{align*}
\sup_{\nu \in \mathcal{S}_{k}} \|\nu\|_{\mathcal{F}} 
=   \sup_{\omega \in \mathbb{R}^{d}} \sup_{\nu \in \mathcal{S}_{k}}|\langle \phi^{w}_{\omega},\nu \rangle|
 \geq
\sup_{\omega \in \mathbb{R}^{d}} \sup_{\nu \in \mathfrak{D}}|\langle \phi^{w}_{\omega},\nu \rangle|.
\end{align*}
For $x \in \Theta$ s.t. $0 < \|x\| \leq 1$, define $\tilde{\iota}_{x} := \pi_{x}-\pi_{0}$ . Since $\Theta$ contains a neighborhood of zero, $\pi_{0}/\|\pi_{0}\|_{\kappa}, \tilde{\iota}_{x}/\|\tilde{\iota}_{x}\|_{\kappa} \in \mathfrak{D}$ as soon as $\|x\|$ is small enough hence there is $0<\delta \leq 1$ such that
\begin{align*}
\sup_{\nu \in \mathcal{S}_{k}} \|\nu\|_{\mathcal{F}}   \geq  \sup_{\omega \in \mathbb{R}^{d}} 
\max\left(
\frac{|\langle \phi^{w}_{\omega},\pi_{0} \rangle|}{\|\pi_{0}\|_{\kappa}}, 
\sup_{0<\|x\|\leq \delta}\frac{|\langle \phi^{w}_{\omega},\pi_{x}-\pi_{0} \rangle|}{\|\pi_{x}-\pi_{0}\|_{\kappa}}
\right).
\end{align*}
Since $\langle \phi^{w}_{\omega},\pi_{x}\rangle = e^{\imath \langle\omega,x\rangle} \langle \phi^{w}_{\omega},\pi_{0}\rangle$ and  $\|\pi_{x}-\pi_{0}\|_{\kappa}^{2} = 2\|\pi_{0}\|_{\kappa}^{2}(1-\bar{\kappa}(x))$  we have
\[
\sup_{0<\|x\|\leq \delta}\frac{|\langle \phi^{w}_{\omega},\pi_{x}-\pi_{0} \rangle|}{\|\pi_{x}-\pi_{0}\|_{\kappa}}
=
|\langle \phi^{w}_{\omega},\pi_{0} \rangle|
\cdot
\sup_{0<\|x\|\leq \delta}\frac{|e^{\imath \langle\omega,x\rangle}-1|}{\|\pi_{0}\|_{\kappa}\sqrt{2(1-\bar{\kappa}(x))}}.
\]
Now, since $\bar{\kappa}$ achieved its maximum at zero where it is $C^{2}$, we have for every $x \neq 0$ 
\begin{equation*}
\lim_{t \to 0} \sqrt{\frac{1-\bar{\kappa}(tx)}{t^{2}\|x\|^{2}}} = \sqrt{\frac{-x^{\mathrm{T}}\nabla^2\bar{\kappa}(0)x}{2\|x\|^2}}.
\end{equation*}
 By assumption, $\nabla^2\bar{\kappa}(0) \neq 0$ and we get
\begin{equation*}
C:=\sup_{0<\|x\|\leq \delta}\lim_{t \to 0} \sqrt{\frac{1-\bar{\kappa}(tx)}{t^{2}\|x\|^{2}}} = \sqrt{\frac{\|\nabla^2\bar{\kappa}(0)\|_{\mathrm{op}}}{2}}>0.
\end{equation*}

We obtain
\begin{align*}
\sup_{0<\|x\|\leq \delta}\frac{|e^{\imath \langle\omega,x\rangle}-1|}{
\sqrt{1-\bar{\kappa}(x)}
}
&\geq 
\sup_{0<\|x\|\leq \delta}\lim_{t \to 0}\frac{|e^{\imath t \langle\omega,x\rangle}-1|}{
\sqrt{1-\bar{\kappa}(tx)}
}
=
\sup_{0<\|x\|\leq \delta}\lim_{t \to 0}\frac{|\imath t \langle\omega,x\rangle|}{\sqrt{1-\bar{\kappa}(tx)}}\\
&=
\sup_{0<\|x\|\leq \delta}
\frac{|\langle\omega,x/\|x\|\rangle|}{\lim_{t \to 0} \sqrt{\frac{1-\bar{\kappa}(tx)}{t^{2}\|x\|^{2}}}}
 \geq \frac{1}{C}\sup_{0<\|x\|\leq \delta}
|\langle\omega,x/\|x\|\rangle| = \frac{1}{C} \|\omega\|_{\star}.
\end{align*}

Therefore 
\begin{equation*}
\sup_{\nu \in \mathcal{S}_{k}} \|\nu\|_{\mathcal{F}}   \geq  \sup_{\omega \in \mathbb{R}^{d}} \frac{|\langle \phi^{w}_{\omega},\pi_{0} \rangle|}{\|\pi_{0}\|_{\kappa}}
\max\left(1, \frac{1}{\sqrt{2}C} \|\omega\|_{\star}
\right).
\end{equation*}
To conclude we use that $(\sqrt{2}C)^{-1} = 1/\sqrt{\|\nabla^{2} \bar{\kappa}(0)\|_{\mathrm{op}}}$.

\subsection{Proof of~\Cref{prop:generic_upper_bound_worst_sketching_error}}
\label{sec:proof dipole decomp of secant bound}
Let $\nu \in \mathcal{S}_{k}$. By~\Cref{prop:embedding_normalized_secant_set}, there exist $2k$ normalized dipoles $\iota_{1}, \dots, \iota_{2k} \in \mathfrak{D}$, with $(\iota_{i},\iota_{j}) \in \mathfrak{D}_{\neq}^2$ when $i\neq j$, and coefficients $\alpha_{1}, \dots, \alpha_{2k}\geq 0$ such that
$(1+c)^{-1}\leq \sum_{i \in [2k]} \alpha_{i}^2 \leq (1-c)^{-1}$ that satisfy
$\nu = \sum_{i \in [2k]} \alpha_i \iota_i$.
We notice that
\[
\sum\limits_{i\neq j \in [2k]} \alpha_i\alpha_j  
= (\sum_{i \in [2k]} \alpha_{i})^{2}-
\sum_{i \in [2k]} \alpha_{i}^{2}
  \leq (\sqrt{2k}\|\alpha\|_{2})^2-\|\alpha\|_{2}^{2}  \leq \frac{2k-1}{1-c}.
\]
Since $\mathcal{A}\pi_{\theta}$ is well defined for probability distributions in the family $\mathcal{T}$, the action of $\mathcal{A}$ is well defined on $k$-mixtures, hence on elements of the normalized secant set. We have
\begin{align*}
\|\mathcal{A} \nu \|_{2}^2 - \sum\limits_{i \in [2k]} \alpha_{i}^2  & = \sum\limits_{i,j \in [2k]} \alpha_i \alpha_j \langle \mathcal{A}\iota_i, \mathcal{A}\iota_j \rangle - \sum\limits_{i \in [2k]}\alpha_i ^2 \nonumber\\
& = \sum\limits_{i \in [2k]}\alpha_i ^2( \| \mathcal{A}\iota_i\|_{2}^2 -1) +  \sum\limits_{i \neq j \in [2k]} \alpha_i \alpha_j \langle \mathcal{A}\iota_i, \mathcal{A}\iota_j \rangle,
\end{align*}
we obtain
\begin{align*}
\|\mathcal{A} \nu \|_{2}^2 -1 &=  \sum\limits_{i \in [2k]} \alpha_{i}^2-1 + \sum\limits_{i \in [2k]}\alpha_i ^2( \| \mathcal{A}\iota_i \|_{2}^2 -1) +
  \sum\limits_{i \neq j \in [2k]} \alpha_i \alpha_j \langle \mathcal{A}\iota_i, \mathcal{A}\iota_j \rangle.
\end{align*}
For $i \neq j$, since $\langle \mathcal{A}\iota_j, \mathcal{A}\iota_i \rangle$ is the complex conjugate of $\langle \mathcal{A}\iota_i, \mathcal{A}\iota_j \rangle$, we have $\langle \mathcal{A}\iota_i, \mathcal{A}\iota_j \rangle + \langle \mathcal{A}\iota_j, \mathcal{A}\iota_i \rangle = \mathfrak{Re}\langle \mathcal{A}\iota_i, \mathcal{A}\iota_j \rangle +  \mathfrak{Re}\langle \mathcal{A}\iota_j, \mathcal{A}\iota_i\rangle$ hence 
$\sum\limits_{i \neq j \in [2k]} \alpha_i \alpha_j \langle \mathcal{A}\iota_i, \mathcal{A}\iota_j \rangle
=  \sum\limits_{i \neq j \in [2k]} \alpha_i \alpha_j \mathfrak{Re}\langle \mathcal{A}\iota_i, \mathcal{A}\iota_j \rangle$.
As a result
\begin{align*}
\big| \|\mathcal{A} \nu \|_{2}^2 - 1 \big| &\leq \big| 1 - \sum\limits_{i \in [2k]} \alpha_{i}^2 \big| + \Big|\sum\limits_{i \in [2k]}\alpha_i ^2( \| \mathcal{A}\iota_i\|_{2}^2 -1)\Big| +  \Big|\sum\limits_{i \neq j \in [2k]} \alpha_i \alpha_j \mathfrak{Re}\langle \mathcal{A}\iota_i, \mathcal{A}\iota_j \rangle\Big| \nonumber \\
&\leq \big| 1 - \sum\limits_{i \in [2k]} \alpha_{i}^2 \big| + \sum\limits_{i \in [2k]}\alpha_i ^2 \sup\limits_{\iota \in \mathfrak{D}}\big| \| \mathcal{A}\iota \|_{2}^2 -1\big| +  \sum\limits_{i \neq j \in [2k]} \alpha_i \alpha_j \sup\limits_{(\iota,\iota') \in \mathfrak{D}_{\neq}^2}
\big|\mathfrak{Re}\langle \mathcal{A}\iota, \mathcal{A}\iota' \rangle\big|
. 
\end{align*}
Now, since $(1+c)^{-1} \leq \sum_{i \in [2k]} \alpha_{i}^2 \leq (1-c)^{-1}$
we have $\big| 1- \sum_{i \in [2k]} \alpha_{i}^2 \big|  \leq c/(1-c)$, hence
\begin{align*}
\big| \|\mathcal{A} \nu \|_{2}^2 - 1 \big|  & \leq \frac{1}{1-c} \Bigg(c+ \sup\limits_{\iota \in \mathfrak{D}}\big| \| \mathcal{A}\iota \|_{2}^2 -1\big| +  (2k-1) \sup\limits_{(\iota,\iota') \in \mathfrak{D}_{\neq}^2}
\big|\mathfrak{Re}\langle \mathcal{A}\iota, \mathcal{A}\iota' \rangle\big|
\Bigg).
\end{align*}
Since this holds for every $\nu \in \mathcal{S}_{k}$, this establishes~\eqref{eq:upper_bound_worst_sketching_error_two_sides} using the definitions of $\delta(\cdot|\mathcal{A})$ (see~\eqref{eq:DefDelta}) and of $\mu(\mathfrak{D}_{\neq}^{2}|\mathcal{A})$ (see~\eqref{eq:DefGamma}).
\qed

\subsection{Proof of~\Cref{thm:upper_bound_sup_monopoles_dipoles} }\label{sec:proofs_second_decomposition}
\label{app:pf_Th3-4}
\label{proof:upper_bound_sup_monopoles_dipoles}
To prove~\Cref{thm:upper_bound_sup_monopoles_dipoles} we rely on a generic formula expressing $\|\mathcal{A}\iota\|_{2}^2$ for $\iota \in \mathfrak{D}$ that depends on a scalar $\alpha \in [0,1]$, which reflects how {\em balanced} the normalized dipole $\iota$ is, and on a vector $x$, which reflects the {\em relative position} between the supports of the two monopoles that form $\iota$. We also exploit an expression of $\mathfrak{Re} \langle \mathcal{A}\iota, \mathcal{A}\iota' \rangle$ for $(\iota,\iota') \in \mathfrak{D}_{\neq}^2$ that depends on two scalars $\alpha, \alpha'$ reflecting how balanced $\iota$ and $\iota'$ are respectively, and on two relative-position vectors $x,x'$. 
The proof of the following proposition is deferred to~\Cref{sec:proof_param_delta}.

\begin{proposition}\label{prop:param_delta}
Consider $\mathcal{T}= (\Theta, \rho, \mathcal{I})$ a location-based family with base distribution $\pi_{0}$ where $\rho(\cdot,\cdot) = \|\cdot-\cdot\|$ for some norm $\|\cdot\|$, and $\kappa$ a normalized shift-invariant kernel that is locally characteristic with respect to $\mathcal{T}$. Consider
$\mathcal{A}$ a WFF sketching operator (\Cref{def:RFFsketching}) with frequencies $\omega_{1},\ldots,\omega_{m}$, $\Theta_{\mathrm{d}}$ and $\psi(\omega)$ as defined in~\eqref{eq:bound_gammma_d} and~\eqref{eq:DefRho}, and $\bar{\kappa}$ as defined in~\eqref{eq:DefNormalizedKernel}.
\begin{itemize}
\item For any normalized dipole $\iota \in \mathfrak{D}$, there exists $\alpha \in [0,1]$ and a vector $x \in \Theta_{\mathrm{d}}$ 
such that 
\begin{equation}\label{eq:param_delta_dipole}
\|\mathcal{A}\iota\|_{2}^2 = \frac{1}{m} \sum\limits_{j =1}^{m} \psi(\omega_j)  \frac{(1-\alpha)^2 + 2\alpha (1-\cos \langle \omega_j, x \rangle)}{(1-\alpha)^2+2\alpha(1-\bar{\kappa}(x))} .
\end{equation}
The case of a normalized monopole $\iota  \in \mathfrak{M}$ corresponds to $\alpha =0$ (and arbitrary $x$), while the case of a balanced normalized dipole corresponds to $\alpha =1$.
Vice-versa, for any $\alpha \in [0,1]$ and $x \in \Theta_{\mathrm{d}}$ there is $\iota \in \mathfrak{D}$ such that this equality holds.
\item For $(\iota,\iota') \in \mathfrak{D}_{\neq}^2$, there exist $s,s' \in \{-1,1 \}$, $\alpha, \alpha' \in [0,1]$, $\theta_{1}, \theta_{2}, \theta_{1}', \theta_{2}' \in \Theta$ that satisfy 
\begin{align}\label{eq:param_delta_1separateddipoles}
\forall i,j \in \{1,2\}, & \:\:\:\: \varrho(\theta_i, \theta_j') \geq 1, \nonumber \\
 0<\varrho( \theta_1,\theta_2 ) \leq 1, & \:\:\:\: 0<\varrho(\theta_1',\theta_2') \leq 1,
\end{align}
 such that
\begin{equation}\label{eq:cross_dipole_expression_using_alphas_and_x_proof}
\mathfrak{Re} \langle \mathcal{A}\iota, \mathcal{A}\iota' \rangle = ss'\frac{a - \alpha' b - \alpha c + \alpha \alpha' d}{\sqrt{(1-\alpha)^2 + 2\alpha(1-\bar{\kappa}(x))} \sqrt{(1-\alpha')^2 + 2\alpha'(1-\bar{\kappa}(x'))}},
\end{equation}
where $x:= \theta_{1}-\theta_2 \in \Theta_{\mathrm{d}}$ and $x':= \theta_{1}'-\theta_2' \in \Theta_{\mathrm{d}}$, and $a,b,c$ and $d$ are defined as follows
\begin{equation}\label{eq:cross_dipole_expression_using_alphas_and_x_paramsabcd}
\begin{aligned}
a &= \frac{1}{\|\pi_{0}\|_{\kappa}^2}\mathfrak{Re}\langle \mathcal{A}\pi_{\theta_1}, \mathcal{A}\pi_{\theta_1'} 
\rangle, &
 b &= \frac{1}{\|\pi_{0}\|_{\kappa}^2}\mathfrak{Re}\langle \mathcal{A}\pi_{\theta_1}, \mathcal{A}\pi_{\theta_2'} \rangle,
\\
c &= \frac{1}{\|\pi_{0}\|_{\kappa}^2}\mathfrak{Re}\langle \mathcal{A}\pi_{\theta_2}, \mathcal{A}\pi_{\theta_1'} \rangle,
& d &= \frac{1}{\|\pi_{0}\|_{\kappa}^2}\mathfrak{Re}\langle \mathcal{A}\pi_{\theta_2}, \mathcal{A}\pi_{\theta_2'} \rangle.
\end{aligned}
\end{equation}
The case where $\iota$ is a monopole (resp. a balanced dipole) corresponds to $\alpha=0$ (resp. $\alpha = 1$) and similarly for $\iota'$ and $\alpha'$.

\end{itemize}

\end{proposition}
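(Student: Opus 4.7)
The plan is to carry out a direct algebraic unfolding using the parametrization of $\mathfrak{D}$ recalled just before the proposition statement (attributed to \citep[Lemma C.1]{GrBlKeTr20}): every $\iota \in \mathfrak{D}$ can be written $\iota = \nu/\|\nu\|_\kappa$ with $\nu = \|\pi_0\|_\kappa^{-1} s(\pi_{\theta'} - \alpha\pi_{\theta})$, $s \in \{-1,1\}$, $\alpha \in [0,1]$, and $x := \theta'-\theta$ satisfying $0 < \|x\| \le 1$, hence $x \in \Theta_{\mathrm{d}}$. Using shift-invariance of $\kappa$ and the relation $\langle \pi_{\theta'},\pi_{\theta}\rangle_{\kappa} = \|\pi_0\|_\kappa^2 \bar\kappa(x)$, I would first compute
\[
\|\nu\|_{\kappa}^{2}
= \|\pi_0\|_\kappa^{-2} \bigl(\|\pi_{\theta'}\|_\kappa^{2} - 2\alpha\langle \pi_{\theta'},\pi_\theta\rangle_{\kappa} + \alpha^{2}\|\pi_\theta\|_\kappa^{2}\bigr)
= (1-\alpha)^{2} + 2\alpha\bigl(1-\bar{\kappa}(x)\bigr),
\]
which is positive thanks to local characteristicness of $\kappa$ on $\Theta_{\mathrm{d}}$. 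This furnishes the denominator of \eqref{eq:param_delta_dipole}.

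For Part~1, I would next evaluate $\|\mathcal{A}\iota\|_2^{2} = \|\mathcal{A}\nu\|_2^{2}/\|\nu\|_{\kappa}^{2}$. Using the WFF definition $\mathcal{A}\nu = m^{-1/2}(\langle \phi_{\omega_j},\nu\rangle)_{j\in[m]}$ and the location-based identity $\langle \phi_{\omega_j},\pi_{\vartheta}\rangle = e^{\imath \omega_{j}^\top \vartheta}\langle \phi_{\omega_j},\pi_{0}\rangle$ (valid for $\phi_\omega$ of the form \eqref{eq:FFComponent}), I would factor out the global phase $e^{\imath\omega_j^\top \theta}$ to obtain
\[
\bigl|\langle \phi_{\omega_j}, \pi_{\theta'}-\alpha\pi_{\theta}\rangle\bigr|^{2}
= |\langle \phi_{\omega_j},\pi_{0}\rangle|^{2}\cdot \bigl|e^{\imath \omega_j^\top x}-\alpha\bigr|^{2},
\]
and then expand $|e^{\imath\omega_j^\top x}-\alpha|^{2} = 1 - 2\alpha\cos\langle\omega_j,x\rangle + \alpha^{2} = (1-\alpha)^{2} + 2\alpha(1-\cos\langle\omega_j, x\rangle)$. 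Dividing by $\|\pi_{0}\|_\kappa^{2}$ and recognizing $\psi(\omega_{j}) = |\langle\phi_{\omega_j},\pi_{0}\rangle|^{2}/\|\pi_{0}\|_\kappa^{2}$ produces exactly \eqref{eq:param_delta_dipole}. The identifications $\alpha = 0$ (monopole) and $\alpha = 1$ (balanced dipole) are immediate from the parametrization of $\mathfrak{M}$ and $\hat{\mathfrak{D}}$; conversely, for arbitrary $\alpha\in[0,1]$ and $x\in\Theta_{\mathrm{d}}$, picking any $\theta,\theta'\in\Theta$ with $\theta'-\theta=x$ (which exist by definition of $\Theta_{\mathrm{d}}$) yields a valid $\iota \in \mathfrak{D}$ attaining \eqref{eq:param_delta_dipole}.

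For Part~2, I would write $\iota = \nu/\|\nu\|_\kappa$ and $\iota' = \nu'/\|\nu'\|_\kappa$ using the same parametrization, with parameters $(s,\alpha,\theta_{1},\theta_{2})$ and $(s',\alpha',\theta_{1}',\theta_{2}')$ where $\theta_{1},\theta_{2}$ play the role of $(\theta',\theta)$ above, and similarly for primed quantities. The $1$-separation assumption on $(\iota,\iota')$ translates directly into $\rho(\theta_i,\theta_j')\geq 1$ for $i,j\in\{1,2\}$, which are exactly the constraints \eqref{eq:param_delta_1separateddipoles}. Then, by $\mathbb{R}$-bilinearity of $(\mu,\mu')\mapsto \mathfrak{Re}\langle \mathcal{A}\mu,\mathcal{A}\mu'\rangle$,
\[
\|\pi_{0}\|_{\kappa}^{2}\,\mathfrak{Re}\langle \mathcal{A}(\pi_{\theta_{1}}{-}\alpha\pi_{\theta_{2}}),\mathcal{A}(\pi_{\theta_{1}'}{-}\alpha'\pi_{\theta_{2}'})\rangle
= \|\pi_{0}\|_{\kappa}^{2}\,\bigl(a - \alpha' b - \alpha c + \alpha\alpha' d\bigr)\cdot \|\pi_{0}\|_{\kappa}^{2},
\]
after identifying $a,b,c,d$ with the four pairwise terms in \eqref{eq:cross_dipole_expression_using_alphas_and_x_paramsabcd}. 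Dividing by $\|\nu\|_\kappa \|\nu'\|_\kappa$ (each of which, by Part~1, equals the square root of the denominator factors in \eqref{eq:cross_dipole_expression_using_alphas_and_x_proof}) and pulling out the product of signs $ss'$ yields exactly \eqref{eq:cross_dipole_expression_using_alphas_and_x_proof}.

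No serious obstacle is expected: the proof is essentially an unfolding of definitions, and the only care needed is to (i) consistently track the normalization factors $\|\pi_{0}\|_\kappa$ appearing in the parametrization of $\mathfrak{D}$, and (ii) use local characteristicness to rule out a vanishing denominator when $\alpha=1$. The only mildly delicate bookkeeping step is matching the roles of $(\theta,\theta')$ in the parametrization with the $(\theta_{1},\theta_{2})$ of the statement, and checking that the phases $e^{\imath\omega_{j}^\top \theta_{1}}$ and $e^{\imath\omega_{j}^\top \theta_{1}'}$ that survive after factorization cancel inside $\mathfrak{Re}\langle \cdot,\cdot\rangle$ thanks to the pairwise structure of $a,b,c,d$ rather than trying to reduce everything to a single relative variable as in Part~1.
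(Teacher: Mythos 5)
Your proof is correct and follows essentially the same route as the paper's: invoke the parametrization of $\mathfrak{D}$ from Lemma C.1 of the cited reference, compute $\|\tilde\iota\|_\kappa^2$ and $\|\mathcal{A}\tilde\iota\|_2^2$ via shift-invariance and the phase factorization, take quotients, and for the cross term expand by $\mathbb{R}$-bilinearity to identify $a,b,c,d$. The only cosmetic remark is that your middle display in Part~2 carries a spurious multiplication of both sides by $\|\pi_0\|_\kappa^2$, so it reads $\|\pi_0\|_\kappa^2\,\mathfrak{Re}\langle\cdots\rangle = \|\pi_0\|_\kappa^4(a-\alpha'b-\alpha c+\alpha\alpha'd)$; this is arithmetically consistent with \eqref{eq:cross_dipole_expression_using_alphas_and_x_paramsabcd} but would be cleaner stated as $\mathfrak{Re}\langle\mathcal{A}\tilde\iota,\mathcal{A}\tilde\iota'\rangle = ss'(a-\alpha'b-\alpha c+\alpha\alpha'd)$, after absorbing the $s/\|\pi_0\|_\kappa$ prefactors, before the final division by $\|\tilde\iota\|_\kappa\|\tilde\iota'\|_\kappa$.
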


\paragraph{Proof of~\eqref{eq:bound_diagonal_sketch_simplification_2}.}
Since normalized monopoles and balanced normalized dipoles are special cases of normalized dipoles, we have $\mathfrak{M} \subset \mathfrak{D}$ and $\hat{\mathfrak{D}} \subset \mathfrak{D}$ hence by the definition~\eqref{eq:DefDelta} of $\delta(\mathfrak{D}|\mathcal{A}) $ as a supremum we trivially have
\[
\delta(\mathfrak{D}|\mathcal{A}) \geq \max\big(\delta(\mathfrak{M}|\mathcal{A}), \delta(\hat{\mathfrak{D}}|\mathcal{A})\big).
\]
To establish~\eqref{eq:bound_diagonal_sketch_simplification_2} we show the converse inequality.
First observe that for any normalized monopole $\iota_{\mathfrak{M}} \in \mathfrak{M}$ we have
$\|\mathcal{A}\iota_{\mathfrak{M}}\|_{2}^2 = \frac{1}{m} \sum_{j =1}^{m}  \psi(\omega_j)$. Since $\varrho$ is a norm,  this is a direct consequence of Proposition~\ref{prop:param_delta} and shows that $\delta(\mathfrak{M}|\mathcal{A}) = |1-\|\mathcal{A}\iota_{\mathfrak{M}}\|_{2}^2|$ independently of the choice of the monopole $\iota_{\mathfrak{M}}$.

Now, consider an arbitrary normalized dipole  $\iota \in \mathfrak{D}$. 
If $\iota$ is either a normalized monopole or a balanced dipole, we trivially have $|1-\|\mathcal{A}\iota\|_{2}^{2}| \leq \max(\delta(\mathfrak{M}|\mathcal{A}),\delta(\hat{\mathfrak{D}}|\mathcal{A}))$. Otherwise, 
by Proposition~\ref{prop:param_delta} again, there exists $\alpha \in (0,1)$ such that \begin{equation*}
\|\mathcal{A}\iota\|_{2}^2 = \frac{1}{m} \sum\limits_{j =1}^{m} \psi(\omega_j) \frac{ (1-\alpha)^2 + 2\alpha (1-\cos \langle \omega_j, x \rangle) }{(1-\alpha)^2 + 2\alpha(1-\bar{\kappa}(x))}.
\end{equation*}
Define a {\em balanced} normalized dipole as
$\iota_{\hat{\mathfrak{D}}} := \frac{\pi_{0}- \pi_{x}}{\|\pi_{0}- \pi_{x}\|_{\kappa}} \in \hat{\mathfrak{D}}$.
Proposition~\ref{prop:param_delta} again yields
\begin{equation*}
\|\mathcal{A} \iota_{\hat{\mathfrak{D}}}\|_{2}^2 = \frac{1}{m} \sum\limits_{j =1}^{m}  \psi(\omega_j) \frac{1-\cos \langle \omega_j, x \rangle}{1-\bar{\kappa}(x)}
\end{equation*}
so that, with simple algebraic manipulations, we have
\begin{align*}
\|\mathcal{A}\iota\|_{2}^2 & =  \frac{(1-\alpha)^2 \frac{1}{m} \sum\limits_{j =1}^{m} \psi(\omega_j) + 2\alpha \frac{1}{m} \sum\limits_{j =1}^{m}  \psi(\omega_j)(1-\cos \langle \omega_j, x \rangle)}{(1-\alpha)^2 + 2\alpha(1-\bar{\kappa}(x))} \\
& =  \frac{(1-\alpha)^2 \|\mathcal{A}\iota_{\mathfrak{M}}\|_{2}^2 + 2\alpha (1-\bar{\kappa}(x)) \|\mathcal{A}\iota_{\hat{\mathfrak{D}}}\|_{2}^2}{(1-\alpha)^2 + 2\alpha(1-\bar{\kappa}(x))}.
\end{align*}
We deduce that
\begin{equation*} \min\big( \|\mathcal{A}\iota_{\mathfrak{M}}\|_{2}^2, \|\mathcal{A}\iota_{\hat{\mathfrak{D}}}\|_{2}^2  \big) \leq
\|\mathcal{A}\iota\|_{2}^2 \leq \max\big( \|\mathcal{A}\iota_{\mathfrak{M}}\|_{2}^2, \|\mathcal{A}\iota_{\hat{\mathfrak{D}}}\|_{2}^2  \big).
\end{equation*}
Moreover, since $\iota_{\hat{\mathfrak{D}}} \in \hat{\mathfrak{D}}$, we have
$
\max(1-\|\mathcal{A}\iota_{\hat{\mathfrak{D}}}\|_{2}^2,\|\mathcal{A}\iota_{\hat{\mathfrak{D}}}\|_{2}^2-1) = |1-\|\mathcal{A}\iota_{\hat{\mathfrak{D}}}\|_{2}^2| \leq \delta(\hat{\mathfrak{D}}|\mathcal{A})
$
hence
\begin{align*}
1- \|\mathcal{A}\iota\|_{2}^2  
& \leq 1- \min \big( \|\mathcal{A}\iota_{\mathfrak{M}}\|_{2}^2 , \|\mathcal{A}\iota_{\hat{\mathfrak{D}}}\|_{2}^2 \big) \\
&  =  \max \big( 1-\|\mathcal{A}\iota_{\mathfrak{M}}\|_{2}^2 , 1-\|\mathcal{A}\iota_{\hat{\mathfrak{D}}}\|_{2}^2  \big) 
 \leq \max\big( \delta(\mathfrak{M}|\mathcal{A}), \delta(\hat{\mathfrak{D}}|\mathcal{A})\big),\\
\|\mathcal{A}\iota\|_{2}^2 -1 
& \leq \max\big( \|\mathcal{A}\iota_{\mathfrak{M}}\|_{2}^2-1, \|\mathcal{A}\iota_{\hat{\mathfrak{D}}}\|_{2}^2-1  \big) 
 \leq \max \big( \delta(\mathfrak{M}|\mathcal{A}), \delta(\hat{\mathfrak{D}}|\mathcal{A})\big),
\end{align*}
This shows that $|1-\|\mathcal{A}\iota\|_{2}^2| \leq \max \big( \delta(\mathfrak{M}|\mathcal{A}), \delta(\hat{\mathfrak{D}}|\mathcal{A})\big)$ and establishes~\eqref{eq:bound_diagonal_sketch_simplification_2} as claimed. \qed

\paragraph{Proof of~\eqref{eq:bound_cross_sketch_simplification}.}
Recall that the families defined in~\eqref{eq:DefineMDfamilies} satisfy 
 $\mathfrak{M}_{\neq}^2,\mathfrak{M}\times\hat{\mathfrak{D}}_{\neq},\hat{\mathfrak{D}}_{\neq}^2 \subset \mathfrak{D}_{\neq}^2$. Hence, by the definition~\eqref{eq:DefGamma} of $\mu(\cdot|\mathcal{A})$ as a supremum we trivially have
\begin{equation*}
\mu(\mathfrak{D}_{\neq}^2|\mathcal{A}) \geq 
\max \big( \mu(\mathfrak{M}_{\neq}^2|\mathcal{A}), \mu(\mathfrak{M}\times\hat{\mathfrak{D}}_{\neq}|\mathcal{A}),\mu(\hat{\mathfrak{D}}_{\neq}^2|\mathcal{A}) \big).
\end{equation*}
This yields the lower bound in \eqref{eq:bound_cross_sketch_simplification}. To establish the upper bound we will use a  result which proof is postponed to Section~\ref{sec:proof_bound_on_ratio_2D}.

\begin{proposition}\label{prop:bound_on_ratio_2D}
Let $a,b,c,d \in \mathbb{R}$ and $e,f \in [0,1)$ and consider the function $g$ defined on $[0,1]\times[0,1]$ by
\begin{equation}\label{eq:funbound_on_ratio_2D}
h(u,v) = \frac{a - bu -cv +duv}{\sqrt{1+u^2-2eu} \sqrt{1+v^2-2fv}}.
\end{equation}
We have
\begin{equation}
\sup\limits_{ (u,v) \in [0,1]^2} \:\: \big|h(u,v)\big| \leq 3 \max \Big( |a|,|b|,|c|,|d|, \frac{|b-a|}{\sqrt{1-e}} , \frac{|d-c|}{\sqrt{1-e}} , \frac{|d-b|}{\sqrt{1-f}} , \frac{|c-a|}{\sqrt{1-f}} , \frac{|a - b  -c+d|}{\sqrt{1-e}\sqrt{1-f}} \Big). \nonumber
\end{equation}
\end{proposition}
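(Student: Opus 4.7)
The approach is to perform a bilinear expansion of the numerator $N(u,v) := a - bu - cv + duv$ in the basis $\{(1-u)(1-v),\, u(1-v),\, (1-u)v,\, uv\}$ adapted to the four corners of $[0,1]^2$, then to bound each monomial divided by the denominator $D_1(u)D_2(v) := \sqrt{1+u^2-2eu}\sqrt{1+v^2-2fv}$ by an elementary argument. A direct computation gives
\[
N(u,v) = a\,(1-u)(1-v) + (a-b)\,u(1-v) + (a-c)\,(1-u)v + (a-b-c+d)\,uv,
\]
so the triangle inequality reduces the problem to controlling the four ratios $(1-u)(1-v)/(D_1(u)D_2(v))$, $u(1-v)/(D_1(u)D_2(v))$, $(1-u)v/(D_1(u)D_2(v))$, and $uv/(D_1(u)D_2(v))$ separately.

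The central observation is the identity $1 + u^2 - 2eu = (1-u)^2 + 2u(1-e)$, which yields the two elementary inequalities $(1-u)/\sqrt{1+u^2-2eu} \leq 1$ and $u/\sqrt{1+u^2-2eu} \leq 1/\sqrt{2(1-e)}$, and symmetrically for $v$ with $f$ in place of $e$. Multiplying the appropriate pair of $u$- and $v$-estimates for each of the four monomials then gives
\[
|h(u,v)| \leq |a| + \tfrac{1}{\sqrt{2}}\cdot\tfrac{|a-b|}{\sqrt{1-e}} + \tfrac{1}{\sqrt{2}}\cdot\tfrac{|a-c|}{\sqrt{1-f}} + \tfrac{1}{2}\cdot\tfrac{|a-b-c+d|}{\sqrt{1-e}\sqrt{1-f}}.
\]

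Each of the four quantities on the right-hand side appears (after at most reversing the sign inside the absolute value) among the nine listed in the $\max M$ in the proposition, so the bound above is at most $M\,(1+\sqrt{2}+1/2) = (3/2+\sqrt{2})\,M$, and since $3/2+\sqrt{2} \approx 2.91 < 3$ the claim follows. I do not anticipate a real obstacle here: the one slightly delicate point is recognizing the identity $1+u^2-2eu=(1-u)^2+2u(1-e)$, which is precisely what forces the factors $\sqrt{1-e}$ and $\sqrt{1-f}$ to appear in the final bound. The remaining five quantities $|b|,|c|,|d|,|d-b|/\sqrt{1-f},|d-c|/\sqrt{1-e}$ listed in the statement are in fact not needed for this route; they would arise naturally by performing the analogous corner expansion at any of the three other corners of $[0,1]^2$, and their inclusion in the $\max$ presumably reflects what is convenient when the lemma is invoked downstream in the proof of~\eqref{eq:bound_cross_sketch_simplification}.
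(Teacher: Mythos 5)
Your proof is correct, and it takes a genuinely different route from the paper's. The paper establishes a one-dimensional auxiliary lemma (\Cref{lemma:bound_on_ratio_1D}), namely that $\phi_{\alpha,\beta,\gamma}(t) := (\alpha - \beta t)/\sqrt{1+t^2-2\gamma t}$ satisfies $|\phi_{\alpha,\beta,\gamma}(t)| \leq \sqrt{3}\max(|\alpha|,|\beta|,|\beta-\alpha|/\sqrt{2(1-\gamma)})$ on $[0,1]$, proved by comparing endpoint values with the critical-point value via an explicit (and somewhat fiddly) computation. It then observes that $h(u,v)$, viewed as a function of $u$ with $v$ frozen, has the form $\phi_{\alpha(v),\beta(v),e}(u)$, and that $\alpha(v),\beta(v),\beta(v)-\alpha(v)$ are themselves of the form $\phi_{\cdot,\cdot,f}(v)$, so two applications give the factor $\sqrt{3}\cdot\sqrt{3}=3$. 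Your approach instead writes the numerator in the bilinear corner basis $N(u,v)=a(1-u)(1-v)+(a-b)u(1-v)+(a-c)(1-u)v+(a-b-c+d)uv$ and controls each monomial over the denominator directly via the identity $1+u^2-2eu=(1-u)^2+2u(1-e)$, which needs no calculus at all. Your route is therefore more elementary, uses only four of the nine quantities in the max (whereas the paper's 1D-lemma factorization naturally produces all nine), and yields the slightly sharper constant $3/2+\sqrt{2}\approx 2.914 < 3$. Both proofs are valid; yours could replace the paper's at the cost of no longer having the 1D lemma available for potential reuse elsewhere (it is not, in fact, used elsewhere in the paper).
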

Consider an arbitrary $1$-separated pair of normalized dipoles, $(\iota,\iota') \in \mathfrak{D}_{\neq}^2$, and denote $s,s' \in \{-1,1 \}$, $\alpha, \alpha' \in [0,1]$, $\theta_{1}, \theta_{2}, \theta_{1}', \theta_{2}' \in \Theta$, $x$, $a,b,c,d$ the parameters satisfying~\eqref{eq:param_delta_1separateddipoles}-\eqref{eq:cross_dipole_expression_using_alphas_and_x_proof}-\eqref{eq:cross_dipole_expression_using_alphas_and_x_paramsabcd} as given by Proposition~\ref{prop:param_delta}. As $x,x' \in \Theta_{\mathrm{d}}$ we have $0<\varrho(x,0) \leq 1$ and $0<\varrho(x',0) \leq 1$, and since $\kappa$ is \emph{locally characteristic}
\footnote{See Definition~\ref{def:kernel_assumptions}.} with respect to $\mathcal{T}$ and as $\kappa \geq 0$ implies $\bar{\kappa} \geq 0$ (cf~\eqref{eq:MMDInnerProd} and \eqref{eq:DefNormalizedKernel}) , this yields $e:=\bar{\kappa}(x'), f:=\bar{\kappa}(x) \in 
[0,1)$
so that
the expression~\eqref{eq:cross_dipole_expression_using_alphas_and_x_proof} reads as $h(\alpha',\alpha)$ with $h$ as in~\eqref{eq:funbound_on_ratio_2D}.
Since $\alpha, \alpha' \in [0,1]$ and $e,f \in [0,1)$, by Proposition~\ref{prop:bound_on_ratio_2D} applied to the absolute value of the expression~\eqref{eq:cross_dipole_expression_using_alphas_and_x_proof} we get
\begin{equation*}
\big| \mathfrak{Re}\langle \mathcal{A}\iota, \mathcal{A}\iota' \rangle \big| \leq 3 \max \Big( |a|,|b|,|c|,|d|, \frac{|b-a|}{\sqrt{1-e}} , \frac{|c-d|}{\sqrt{1-e}} , \frac{|b-d|}{\sqrt{1-f}} , \frac{|a-c|}{\sqrt{1-f}} , \frac{|b-a - c+d|}{\sqrt{1-e}\sqrt{1-f}} \Big).
\end{equation*}
Now, observe that \eqref{eq:cross_dipole_expression_using_alphas_and_x_paramsabcd} in Proposition~\ref{prop:param_delta} implies that every $t \in \{a,b,c,d\}$ can be written as $t = \mathfrak{Re}\langle \mathcal{A}\nu, \mathcal{A}\nu'\rangle$ where 
$
(\nu,\nu') \in \mathfrak{M}_{\neq}^2$, hence
\begin{equation*}
\max( |a|, |b|, |c|, |d| )  \leq \sup\limits_{(\xi,\xi') \in \mathfrak{M}_{\neq}^2} \big| \mathfrak{Re}\langle \mathcal{A}\xi, \mathcal{A}\xi' \rangle \big| = \mu(\mathfrak{M}_{\neq}^2|\mathcal{A}).
\end{equation*}
Similarly, observe that \eqref{eq:cross_dipole_expression_using_alphas_and_x_paramsabcd} implies also that
$|b-a|/\sqrt{1-e} = |\mathfrak{Re}\langle \mathcal{A}\nu, \mathcal{A} \nu' \rangle|$ where
\begin{equation*}
\nu:= \frac{\pi_{\theta_{1}}}{\|\pi_{\theta_{1}}\|_{\kappa}} \in \mathfrak{M}; \:\:\:\: \nu':= \frac{\pi_{\theta_{1}'}-\pi_{\theta_{2}'}}{\|\pi_{\theta_{1}'}-\pi_{\theta_{2}'}\|_{\kappa}} \in \hat{\mathfrak{D}},
\end{equation*}
and by~\eqref{eq:param_delta_1separateddipoles} we have $(\nu,\nu') \in \mathfrak{M} \times \hat{\mathfrak{D}}_{\neq}$ hence
\begin{equation*}
\frac{|b-a|}{\sqrt{1-e}} \leq \sup\limits_{(xi,\xi') \in \mathfrak{M} \times \hat{\mathfrak{D}}_{\neq}} \big|\mathfrak{Re}\langle \mathcal{A}\xi, \mathcal{A}\xi' \rangle \big| =: \mu(\mathfrak{M}\times\hat{\mathfrak{D}}_{\neq}|\mathcal{A}).
\end{equation*}
By symmetry, the same argument is valid for $|c-d|/\sqrt{1-e} , |b-d|/\sqrt{1-f}$ and $|a-c|/\sqrt{1-f}$. Therefore
\begin{equation*}
\max\Big( \frac{|b-a|}{\sqrt{1-e}} , \frac{|c-d|}{\sqrt{1-e}} , \frac{|b-d|}{\sqrt{1-f}} , \frac{|a-c|}{\sqrt{1-f}} \Big) \leq 
\mu(\mathfrak{M}\times\hat{\mathfrak{D}}_{\neq}|\mathcal{A}).
\end{equation*}
Finally, with $\nu := (\pi_{\theta_{1}}-\pi_{\theta_{2}})/\|\pi_{\theta_{1}}-\pi_{\theta_{2}}\|_{\kappa}$ and $\nu' := (\pi_{\theta'_{1}}-\pi_{\theta'_{2}})/\|\pi_{\theta'_{1}}-\pi_{\theta'_{2}}\|_{\kappa}$,  we have $(\nu,\nu') \in \hat{\mathfrak{D}}_{\neq}^2$ (by ~\eqref{eq:param_delta_1separateddipoles}) and as a result
\begin{equation*}
\frac{|b-a - c+d|}{\sqrt{1-e}\sqrt{1-f}}  = \big|\mathfrak{Re}\langle \mathcal{A}\nu, \mathcal{A}\nu' \rangle \big|
\leq \sup\limits_{(\xi,\xi') \in \hat{\mathfrak{D}}_{\neq}^2} \big| \mathfrak{Re}\langle \mathcal{A}\xi, \mathcal{A}\xi' \rangle \big|=: \mu(\hat{\mathfrak{D}}_{\neq}^2|\mathcal{A}) .
\end{equation*}
Combining all of the above yields \begin{equation*}
\big| \mathfrak{Re}\langle \mathcal{A}\iota, \mathcal{A}\iota' \rangle \big| \leq 
3\max \big( 
\mu(\mathfrak{M}_{\neq}^2|\mathcal{A}), 
\mu(\mathfrak{M}\times\hat{\mathfrak{D}}_{\neq}|\mathcal{A}),
\mu(\hat{\mathfrak{D}}_{\neq}^2|\mathcal{A}) \big).
\end{equation*}
As this holds for every $(\iota,\iota') \in \mathfrak{D}_{\neq}^2$ this establishes \eqref{eq:bound_cross_sketch_simplification}.\qed
\subsubsection{Proof of Proposition~\ref{prop:param_delta}}\label{sec:proof_param_delta}

Consider a normalized dipole $\iota \in \mathfrak{D}$. Since $\rho$ is a norm we can apply Lemma C.1 in \citep{GrBlKeTr20} hence there exists a dipole $\tilde{\iota}$ such that $\iota = \frac{\tilde{\iota}}{\|\tilde{\iota}\|_{\kappa}}$, with 
$\tilde{\iota} = \frac{s}{\|\pi_0\|_{\kappa}} ( \pi_{\theta_1} - \alpha \pi_{\theta_2})$,
where $s \in \{-1,1 \}$, $\alpha \in [0,1]$ and $x:= \theta_{1}-\theta_2 \in \Theta-\Theta$ satisfies $0<\|x\| \leq 1$. Since $\kappa$ is locally characteristic we have $\|\tilde{\iota}\|_{\kappa}>0$ hence the ratio $\tilde{\iota}/\|\tilde{\iota}\|_{\kappa}$ indeed makes sense. The case of a normalized monopole $\iota \in \mathfrak{M}$ corresponds to $\alpha=0$ and an arbitrary $x$, while the case of a balanced dipole corresponds to $\alpha = 1$.
Moreover, since $\kappa$ is translation invariant and $\mathcal{T}$ is a location-based family by~\eqref{eq:ShiftInvariantKernelNorm} we have
$\|\pi_{\theta_1}\|_{\kappa} = \|\pi_{\theta_2}\|_{\kappa} = \|\pi_{0}\|_{\kappa}$,
and 
$\langle \pi_{\theta_1}, \pi_{\theta_2} \rangle_{\kappa} = \bar{\kappa}(\theta_1, \theta_2)\cdot \|\pi_{\theta_{1}}\|_{\kappa}\|\pi_{\theta_{2}}\|_{\kappa} = \bar{\kappa}(x)\cdot \|\pi_{0}\|_{\kappa}^2$ where we recall that the $\mathcal{T}$-normalized kernel $\bar{\kappa}$ is defined in~\eqref{eq:DefNormalizedKernel}.
Therefore
\begin{align*}
\|\tilde{\iota}\|_{\kappa}^2 &= (1-\alpha)^2 + 2\alpha(1-\bar{\kappa}(x)),\\
\|\mathcal{A}\tilde{\iota}\|_{2}^2 &= \frac{1}{m} \sum\limits_{j =1}^{m} \psi(\omega_j) \Big( (1-\alpha)^2 + 2\alpha (1-\cos \langle \omega_j, x \rangle) \Big).
\end{align*}
Since $\|\mathcal{A}\iota\|_{2}^{2} = \|\mathcal{A}\tilde{\iota}\|_{2}^{2}/\|\tilde{\iota}\|_{\kappa}^{2}$, taking the quotient yields~\eqref{eq:param_delta_dipole} as claimed. 
Vice-versa for $\alpha \in [0,1]$ and $x \in \Theta_{\mathrm{d}}$, there are $\theta_{1},\theta_{2} \in \Theta^{2}$ such that $0<\|\theta_{1}-\theta_{2}\| \leq 1$ and setting $\iota = (\pi_{\theta_{1}}-\alpha\pi_{\theta_{2}})/\|\pi_{\theta_{1}}-\alpha\pi_{\theta_{2}}\|_{\kappa}$ yields a normalized dipole satisfying the desired expression.

Similarly, for any $1$-separated pair of normalized dipoles $(\iota,\iota') \in \mathfrak{D}_{\neq}^2$
we write $\iota = \frac{\tilde{\iota}}{\|\tilde{\iota}\|_{\kappa}}, \:\: \iota' = \frac{\tilde{\iota}'}{\|\tilde{\iota}'\|_{\kappa}}$,
with 
\begin{equation*}
\tilde{\iota} = \frac{s}{\|\pi_0\|_{\kappa}} ( \pi_{\theta_1} - \alpha \pi_{\theta_2}), \:\: \tilde{\iota}' = \frac{s'}{\|\pi_0\|_{\kappa}} ( \pi_{\theta_1'} - \alpha' \pi_{\theta_2'}),
\end{equation*}
where $s,s' \in \{-1,1 \}$, $\alpha, \alpha' \in [0,1]$ and $x:= \theta_{1}-\theta_2 \in \Theta-\Theta$ and $x':= \theta_{1}'-\theta_2' \in \Theta-\Theta$ satisfy $0<\|x\| \leq 1$ and  $0<\|x'\| \leq 1$. The $1$-separation assumption means that for every
$i,j \in \{1,2\}$ we have$\| \theta_i - \theta_j' \| \geq 1$. Since
$\|\pi_{\theta_1}\|_{\kappa} = \|\pi_{\theta_2}\|_{\kappa} = \|\pi_{\theta_1'}\|_{\kappa} = \|\pi_{\theta_2'}\|_{\kappa} = \|\pi_{0}\|_{\kappa}$
and 
$\langle \pi_{\theta_1}, \pi_{\theta_2} \rangle_{\kappa}  = \|\pi_0\|_{\kappa}^2\bar{\kappa}(x)$,
and
$\langle \pi_{\theta_{1}'}, \pi_{\theta_{2}'} \rangle_{\kappa} =  
\|\pi_0\|_{\kappa}^2\bar{\kappa}(x')$
we obtain
\begin{align*}
\|\tilde{\iota}\|_{\kappa}^2 &= (1-\alpha)^2 + 2\alpha(1-\bar{\kappa}(x)),\\
\|\tilde{\iota}'\|_{\kappa}^2 &= (1-\alpha')^2 + 2\alpha'(1-\bar{\kappa}(x'))\\
\mathfrak{Re} \big(\langle \mathcal{A}\tilde{\iota}, \mathcal{A}\tilde{\iota}' \rangle \big) 
& = \frac{ss'}{\|\pi_0\|_{\kappa}^2}
\mathfrak{Re} \Big(\big\langle \mathcal{A} (\pi_{\theta_1} - \alpha \pi_{\theta_2}) , \mathcal{A}(\pi_{\theta_1^{'}} - \alpha \pi_{\theta_2^{'}}) \big\rangle \Big)
 = ss'\left(a - \alpha' b - \alpha c + \alpha \alpha' d\right),
\end{align*}
with $a,b,c,d$ as in~\eqref{eq:cross_dipole_expression_using_alphas_and_x_paramsabcd}.
Since $\mathfrak{Re} \big(\langle \mathcal{A}\iota, \mathcal{A}\iota' \rangle \big) = \mathfrak{Re} \big(\langle \mathcal{A}\tilde{\iota}, \mathcal{A}\tilde{\iota}' \rangle \big)/(\|\tilde{\iota}\|_{\kappa}\|\tilde{\iota}'\|_{\kappa})$ we obtain~\eqref{eq:cross_dipole_expression_using_alphas_and_x_proof}. 
The special cases of monopoles and balanced dipoles, 
are proved similarly as above and left to the reader.
\qed

\subsubsection{Proof of Proposition~\ref{prop:bound_on_ratio_2D}}\label{sec:proof_bound_on_ratio_2D}

We will use a technical Lemma which proof is postponed to the end of the section.
\begin{lemma}\label{lemma:bound_on_ratio_1D}
Consider $\alpha,\beta \in \mathbb{R}$, $\gamma \in [0,1)$ and the function $\phi_{\alpha,\beta,\gamma}$ defined on $[0,1]$ by
\begin{equation*}\label{eq:DefIntermediatePhiFunction}
\phi_{\alpha,\beta,\gamma}(t) = \frac{\alpha - \beta t}{\sqrt{1+t^2-2\gamma t}}.
\end{equation*}
We have
\begin{equation*}
\forall t \in [0,1], \:\: |\phi_{\alpha,\beta,\gamma}(t)| \leq \sqrt{3} \max\Big(|\alpha|, |\beta|,\frac{|\beta-\alpha|}{\sqrt{2(1-\gamma)}}\Big).
\end{equation*}
\end{lemma}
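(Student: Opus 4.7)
I would prove the lemma by reducing it to a polynomial non-negativity statement that is settled by elementary discriminant analysis. By the scale-invariance of the bound, I can first assume without loss of generality that $M := \max(|\alpha|,|\beta|,|\beta-\alpha|/\sqrt{2(1-\gamma)}) = 1$. The three constraints then read $\alpha^2 \le 1$, $\beta^2 \le 1$, and $\delta^2 \le 2(1-\gamma)$ with $\delta := \beta-\alpha$, and the claim becomes
\[ (\alpha-\beta t)^2 \le 3\bigl(1 - 2\gamma t + t^2\bigr), \qquad t \in [0,1]. \]

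The key step is to choose a decomposition of $\alpha-\beta t$ aligned with the decomposition $1 - 2\gamma t + t^2 = (1-t)^2 + 2(1-\gamma)t$ of the denominator. The natural choice mirrors the $\alpha$-parametrization already used for dipoles in this paper: write $\alpha - \beta t = \alpha(1-t) - \delta t$. Setting $u := 1-t$ and $v := t$ with $u,v \in [0,1]$, the target inequality reads
\[ (3-\alpha^2) u^2 + 2\alpha\delta\, uv + 6(1-\gamma) v - \delta^2 v^2 \ge 0. \]
Using $\alpha^2 \le 1$ gives $3-\alpha^2 \ge 2\alpha^2$; using $\delta^2 \le 2(1-\gamma)$ gives $6(1-\gamma) v \ge 3\delta^2 v$; and using $v \le 1$ gives $\delta^2 v(3-v) \ge 2\delta^2 v$. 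Combining these three reductions, it suffices to show
\[ \alpha^2 u^2 + \alpha\delta\, uv + \delta^2 v \ge 0. \]
Viewed as a quadratic in $u$, this has leading coefficient $\alpha^2 \ge 0$ and discriminant $(\alpha\delta v)^2 - 4\alpha^2\delta^2 v = \alpha^2\delta^2 v(v-4) \le 0$ for $v \in [0,1]$, and is therefore non-negative (the degenerate case $\alpha=0$ being immediate since it reduces to $\delta^2 v \ge 0$). This closes the argument.

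The main obstacle I would expect is choosing the right decomposition. A naive triangle inequality $|\alpha - \beta t| \le |\alpha| + |\beta| t$ gives a constant that cannot capture the $(1-\gamma)^{-1/2}$ blow-up admitted in the third term of $M$; similarly, a direct discriminant analysis of the quadratic $(3-\beta^2)t^2 - 2(3\gamma - \alpha\beta)t + (3-\alpha^2)$ in $t$ yields a non-negativity condition on $(\alpha,\beta,\gamma)$ that is not generally implied by the three bounds defining $M$ (one can check this fails e.g.\ for $\gamma$ close to $1$ with $\delta$ saturating its bound). The $(1-t,t)$-split mirroring the structure of the denominator is what turns the problem into a trivial discriminant check. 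Note that the bound $|\beta|\le M$ is not actually used in this argument, so the same proof in fact yields the slightly stronger bound $\sqrt{3}\max(|\alpha|,|\beta-\alpha|/\sqrt{2(1-\gamma)})$; the constant $\sqrt{3}$ is not expected to be tight, being the result of the three mildly wasteful bounds $3-\alpha^2 \ge 2\alpha^2$, $6(1-\gamma)\ge 3\delta^2$, and $3-v \ge 2$.
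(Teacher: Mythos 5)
Your proof is correct, and it takes a genuinely different route from the paper. The paper establishes the bound by calculus: it writes $g(t) := \phi_{\alpha,\beta,\gamma}^2(t) = P(t)/Q(t)$, checks the endpoints $g(0)=\alpha^2$ and $g(1)=(\beta-\alpha)^2/(2(1-\gamma))$, and then identifies any interior critical point from $g'(t)=0$, solving the resulting algebraic relation to obtain the closed form $g(t)=\bigl((\alpha-\beta)^2+2\alpha\beta(1-\gamma)\bigr)/\bigl((1+\gamma)(1-\gamma)\bigr)$ at the critical point, which is then bounded by $3\max(\cdot)$. Your approach avoids the case split on the location of the maximum entirely: after homogenizing to $M=1$, you convert the target to a polynomial inequality, and the crucial move is the $(u,v)=(1-t,t)$ split matching the decomposition $Q(t)=(1-t)^2+2(1-\gamma)t$ of the denominator, which after three elementary relaxations (each verified to be valid: $3-\alpha^2\ge 2\alpha^2$ from $\alpha^2\le 1$; $6(1-\gamma)v\ge 3\delta^2 v$ from $\delta^2\le 2(1-\gamma)$; $\delta^2 v(3-v)\ge 2\delta^2 v$ from $v\le 1$) reduces the claim to $\alpha^2 u^2 + \alpha\delta uv + \delta^2 v \ge 0$, settled by the discriminant computation $\alpha^2\delta^2 v(v-4)\le 0$ for $v\in[0,1]$. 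I checked each step and the algebra is right. What your approach buys is that it is purely algebraic (no derivative or critical-point computation), it avoids the degenerate-case bookkeeping of the paper's proof (e.g.\ the paper must separately handle $(\alpha,\beta)=(0,0)$, boundary maxima, and $\beta=0$), and as you note it actually yields the slightly sharper bound $\sqrt{3}\max\bigl(|\alpha|,|\beta-\alpha|/\sqrt{2(1-\gamma)}\bigr)$, since $|\beta|$ is never used. What the paper's approach buys is an exact identification of the supremum on the interior, should a tighter constant ever be needed. Your side remark about why a direct discriminant analysis of the quadratic in $t$ ``fails'' is heuristic rather than rigorously demonstrated (the example you gesture at is not spelled out, and the discriminant condition is over all $t\in\mathbb{R}$ rather than $[0,1]$, so one would also need to rule out roots escaping the interval), but this does not affect the validity of your main argument.
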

Consider $(u,v) \in [0,1]^2$, $f \in [0,1)$, and define $ g := \frac{1}{\sqrt{1+v^2-2fv}}$. We have
\begin{equation*}
h(u,v)  = \frac{ga - gbu -gcv +gduv}{\sqrt{1+u^2-2eu}} = \frac{(ga -gcv) - (gb  -gdv)u }{\sqrt{1+u^2-2eu}} = \phi_{ga -gcv,gb  -gdv,e}(u)
\end{equation*}
hence, by Lemma~\ref{lemma:bound_on_ratio_1D} 
\begin{align*}|h(u,v)| & = |\phi_{ga -gcv,gb  -gdv,e}(u)|  \leq \sqrt{3} \max \Big(|ga -gcv|,|gb  -gdv|, \frac{|gb  -gdv - ga +gcv|}{\sqrt{1-e}} \Big).
\end{align*}
Now, we can use Lemma~\ref{lemma:bound_on_ratio_1D}  again to get
\begin{align*}|ga -gcv| & = \frac{|a-cv|}{\sqrt{1+v^2-2fv}}  = |\phi_{a,c,f}(v)|  \leq \sqrt{3} \max \Big(|a|,|c|, \frac{|c-a|}{\sqrt{1-f}} \Big),\\
|gb  -gdv| & = \frac{|b-dv|}{\sqrt{1+v^2-2fv}}  = |\phi_{b,d,f}(v)|  \leq \sqrt{3} \max \Big(|b|,|d|, \frac{|d-b|}{\sqrt{1-f}} \Big),\\
|gb  -gdv - ga +gcv| & = |gb - ga  +(gc-gd)v|  = \frac{|b - a  -(d-c)v|}{\sqrt{1+v^2-2fv}} 
 = |\phi_{b-a,d-c,f}(v)| \nonumber \\
& \leq \sqrt{3} \max \Big( |b-a|, |d-c|, \frac{|(d-c)-(b-a)|}{\sqrt{1-f}} \Big).
\end{align*}
Combining the above inequalities 
we obtain
\begin{equation*}
|h(u,v)| \leq 3 \max \Big( |a|,|b|,|c|,|d|, \frac{|b-a|}{\sqrt{1-e}} , \frac{|d-c|}{\sqrt{1-e}} , \frac{|d-b|}{\sqrt{1-f}} , \frac{|c-a|}{\sqrt{1-f}} , \frac{|a - b  -c+d|}{\sqrt{1-e}\sqrt{1-f}} \Big).\qed
\end{equation*}

\begin{proof}[Proof of Lemma~\ref{lemma:bound_on_ratio_1D}]

Equivalently, we bound $c := \sup_{t \in [0,1]} g(t)$ where $g(t) := |\phi_{\alpha,\beta,\gamma}(t)|^{2} = P(t)/Q(t)$, $P(t) := (\beta t-\alpha)^{2}$, and $Q(t) := 1+t^{2}-2\gamma t$. The bound $c \leq 3\max(\alpha^{2},\beta^{2}, (\beta-\alpha)^{2}/(1-\gamma))$ is trivial if $\alpha = \beta = 0$, so we now assume $(\alpha,\beta) \neq 0$. 
We have $g(0) = \alpha^{2}$ and $g(1) = (\beta-\alpha)^{2}/(2(1-\gamma))$ so the bound is also trivial if the maximum is achieved at a boundary point, so to conclude we now assume that $c$ is achieved at an interior point $t \in (0,1)$, which must satisfy $g'(t)=0$. Since $g' = (P'Q-PQ')/Q^{2}$ the fact that $g'(t) = 0$ reads as 
\begin{align*}
0 &= P'(t)Q(t)-P(t)Q'(t)\\
& = 2\beta(\beta t-\alpha)(1+t^{2}-2\gamma t)-(\beta t-\alpha)^{2} 2(t-\gamma)
= 2(\beta t-\alpha) \big(\beta(1+t^{2}-2\gamma t) -(\beta t-\alpha) (t-\gamma)\big)\\
&= 2(\beta t-\alpha) \big(\beta +\bcancel{\beta t^{2}}-2\beta\gamma t -\bcancel{\beta t^{2}}+\alpha t+\beta\gamma t-\alpha\gamma\big)
 = 2(\beta t-\alpha) \big( (\alpha-\beta\gamma)t-(\alpha\gamma-\beta)\big).
\end{align*} 
Since we assume $(\alpha,\beta) \neq (0,0)$, we have $P(t)/Q(t) = g(t) \geq \max(g(0),g(1)) = \max (\alpha^{2},   (\beta-\alpha)^{2}/(2(1-\gamma))) > 0$, hence $P(t) \neq 0$, i.e. $\beta t-\alpha\neq0$, thus the location of the maximum satisfies
\[
(\alpha-\beta\gamma)t = \alpha\gamma-\beta.
\]
This implies that $\alpha \neq \beta\gamma$ (otherwise we would have both $\alpha = \beta\gamma$ and $\alpha\gamma=\beta$, hence $\beta = \alpha\gamma = (\beta\gamma)\gamma= \beta\gamma^{2}$, and similarly $\alpha = \alpha \gamma^{2}$; since $0 \leq \gamma < 1$ this would contradict the fact that $(\alpha,\beta) \neq (0,0)$).
Moreover, since $P'(t)Q(t)-P(t)Q'(t) = 0$ we have $g(t) = P(t)/Q(t) = P'(t)/Q'(t) = 2\beta(\beta t-\alpha)/(2(t-\gamma))$. Since $g(t)>0$ this shows that $\beta \neq 0$, and we conclude that
\begin{align*}
g(t) &= \frac{\beta(\beta t-\alpha)}{t-\gamma} =
\beta
\frac{(\alpha-\beta\gamma)(\beta t-\alpha)}{(\alpha-\beta\gamma)(t-\gamma)}
=
\beta 
\frac{(\alpha\gamma-\beta)\beta-(\alpha-\beta\gamma)\alpha}
{(\alpha\gamma-\beta)-(\alpha-\beta\gamma)\gamma}\\
&=
\beta \frac{2\alpha\beta\gamma-\alpha^{2}-\beta^{2}}
{\beta(\gamma^{2}-1)}
=\frac{\alpha^{2}+\beta^{2}-2\alpha\beta\gamma}{1-\gamma^{2}}
= \frac{(\alpha-\beta)^{2}+2\alpha\beta(1-\gamma)}{(1+\gamma)(1-\gamma)}\\
&\leq \frac{(\alpha-\beta)^{2}}{1-\gamma}+2|\alpha\beta|
\leq \frac{(\alpha-\beta)^{2}}{1-\gamma} +2\max\left(\alpha^{2},\beta^{2}\right)
\leq 3\max\left(\alpha^{2},\beta^{2},\frac{(\alpha-\beta)^{2}}{1-\gamma}\right).
\end{align*}

\end{proof}

\subsection{Proof of Proposition~\ref{thm:parametrization_functions_cross_dipoles}}\label{proof:cor_parametrization_functions_cross_dipoles}

Equalities \eqref{eq:bound_gammma_m}, \eqref{eq:bound_gammma_d}, \eqref{eq:bound_gammma_mm}, \eqref{eq:bound_gammma_md} and ~\eqref{eq:bound_gammma_dd} are straightforward applications of the following lemma.

\begin{lemma}\label{lemma:existence_of_theta_l} 
Under the assumptions and notations of Proposition~\ref{thm:parametrization_functions_cross_dipoles}, there exist sets $\Theta_{\mathrm{md}} \subset \Theta_{\mathrm{d}} \times \Theta_{\mathrm{mm}}$ and $\Theta_{\mathrm{dd}} \subset  \Theta_{\mathrm{d}} \times \Theta_{\mathrm{d}} \times \Theta_{\mathrm{mm}}$ such that, 
\begin{enumerate}
\item \label{it:ProofItemM} For every $\iota \in \mathfrak{M}$, we have $1 - \|\mathcal{A}\iota\|_{2}^2 = 1-\Psi_{\mathrm{m}}(\bm{\Omega})$, for every $\bm{\Omega}$ and $\mathcal{A}=\mathcal{A}_{\bm{\Omega}}$.
\item \label{it:ProofItemD} For every $\iota \in \hat{\mathfrak{D}}$, there is $x \in \Theta_{\mathrm{d}}$ such that $1 - \|\mathcal{A}\iota\|_{2}^2 = 1-\Psi_{\mathrm{d}}(x|\bm{\Omega})$ for every $\bm{\Omega}$ and $\mathcal{A}=\mathcal{A}_{\bm{\Omega}}$; and vice-versa \rvs{for every $x \in \Theta_{\mathrm{d}}$, there is $\iota \in \hat{\mathfrak{D}}$ such that $1 - \|\mathcal{A}\iota\|_{2}^2 = 1-\Psi_{\mathrm{d}}(x|\bm{\Omega})$ for every $\bm{\Omega}$}.
\item \label{it:ProofItemMM} For $(\iota, \iota') \in \mathfrak{M}_{\neq}^2$, there are $s \in \{-1,1\}$, $y \in \Theta_{\mathrm{mm}}$ s.t. $\mathfrak{Re}\langle \mathcal{A} \iota, \mathcal{A} \iota' \rangle = s\Psi_{\mathrm{mm}}(y|\bm{\Omega})$ for every $\bm{\Omega}$ and $\mathcal{A}=\mathcal{A}_{\bm{\Omega}}$, and vice-versa.
\item \label{it:ProofItemMD} For $(\iota, \iota') \in \mathfrak{M}\times \hat{\mathfrak{D}}_{\neq}$ there are $s \in \{-1,1\}$, $(x,y) \in \Theta_{\mathrm{md}}$ s.t. $\mathfrak{Re}\langle \mathcal{A} \iota, \mathcal{A} \iota' \rangle = s\Psi_{\mathrm{md}}(x, y|\bm{\Omega})$ for every $\bm{\Omega}$ and $\mathcal{A}=\mathcal{A}_{\bm{\Omega}}$, and vice-versa.
\item \label{it:ProofItemDD} For $(\iota, \iota') \in \hat{\mathfrak{D}}_{\neq}^2$ there are $s \in \{-1,1\}$, $(x,x',y) \in \Theta_{\mathrm{dd}}$ s.t. $\mathfrak{Re} \langle \mathcal{A} \iota, \mathcal{A} \iota' \rangle = s\Psi_{\mathrm{dd}}(x,x', y|\bm{\Omega})$, for every $\bm{\Omega}$ and $\mathcal{A}=\mathcal{A}_{\bm{\Omega}}$, and vice-versa.
\end{enumerate}
\end{lemma}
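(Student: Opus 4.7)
My plan is to prove the five items of the lemma in turn, each time invoking \Cref{prop:param_delta} to reduce the normalized (pairs of) dipoles to explicit finite-dimensional parameters $(\alpha, s, \theta_1, \theta_2)$ and $(\alpha, \alpha', s, s', \theta_1, \theta_2, \theta_1', \theta_2')$, then carrying out a direct trigonometric calculation to match the resulting sum against the empirical mean $\Psi_\ell(\cdot\,|\bm{\Omega})$ of the claim.

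For item~\ref{it:ProofItemM}, every $\iota\in\mathfrak{M}$ corresponds to $\alpha=0$ in~\eqref{eq:param_delta_dipole}, so that $\|\mathcal{A}\iota\|_2^2 = \frac{1}{m}\sum_j \psi(\omega_j) = \Psi_{\mathrm{m}}(\bm{\Omega})$, independently of $s$ and $\theta$. For item~\ref{it:ProofItemD}, balanced normalized dipoles correspond to $\alpha=1$, and \eqref{eq:param_delta_dipole} together with the identity $1-\cos u = 2\sin^2(u/2)$ yield $\|\mathcal{A}\iota\|_2^2 = \Psi_{\mathrm{d}}(x|\bm{\Omega})$ where $x = \theta_1-\theta_2 \in \Theta_{\mathrm{d}}$; conversely, any $x\in\Theta_{\mathrm{d}}$ arises as $\theta_1-\theta_2$ for some admissible pair since $\Theta_{\mathrm{d}} := \{x\in\Theta-\Theta: 0<\|x\|\leq 1\}$.

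Items~\ref{it:ProofItemMM}--\ref{it:ProofItemDD} all rely on the expansion of $\mathfrak{Re}\langle\mathcal{A}\iota,\mathcal{A}\iota'\rangle$ in~\eqref{eq:cross_dipole_expression_using_alphas_and_x_proof}--\eqref{eq:cross_dipole_expression_using_alphas_and_x_paramsabcd} specialized to $(\alpha,\alpha')\in\{0,1\}^2$, together with the elementary computation
\[
\tfrac{1}{\|\pi_0\|_\kappa^2}\mathfrak{Re}\langle\mathcal{A}\pi_\theta,\mathcal{A}\pi_{\theta'}\rangle
= \tfrac{1}{m}\sum_j \psi(\omega_j) \cos\langle \omega_j,\theta-\theta'\rangle,
\]
which follows from $\langle\phi_\omega,\pi_\theta\rangle = e^{\imath\omega^\top\theta}\langle\phi_\omega,\pi_0\rangle$ since $\mathcal{T}$ is location-based. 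The monopole--monopole case gives directly $ss'\Psi_{\mathrm{mm}}(y|\bm{\Omega})$ with $y=\theta_1-\theta_1'\in\Theta_{\mathrm{mm}}$. For the monopole--dipole case ($\alpha=0$, $\alpha'=1$), I will apply $\cos A - \cos B = -2\sin(\tfrac{A+B}{2})\sin(\tfrac{A-B}{2})$ to the difference $a-b$, which, after setting $x=\theta_1'-\theta_2'$ and $y=\theta_1-\theta_1'$, reproduces exactly the numerator of $f_{\mathrm{md}}$ in~\eqref{eq:DefPsiMD}. For the dipole--dipole case, the second difference $a-b-c+d$ factors (via two applications of the same identity) as $4\sin(\langle\omega,x_1\rangle/2)\sin(\langle\omega,x_2\rangle/2)\cos(\langle\omega,y'\rangle)$ with $x_1=\theta_1-\theta_2$, $x_2=\theta_1'-\theta_2'$, $y'=(\theta_1+\theta_2)/2-(\theta_1'+\theta_2')/2$, so choosing $y = y' + x_1/2 - x_2/2 = \theta_1-\theta_1'$ reproduces $f_{\mathrm{dd}}$ in~\eqref{eq:DefPsiDD}.

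Finally, I will define $\Theta_{\mathrm{md}}$ (resp.\ $\Theta_{\mathrm{dd}}$) as the image of the map $(\theta_1,\theta_1',\theta_2')\mapsto(\theta_1'-\theta_2',\theta_1-\theta_1')$ (resp.\ $(\theta_1,\theta_2,\theta_1',\theta_2')\mapsto(\theta_1-\theta_2,\theta_1'-\theta_2',\theta_1-\theta_1')$) over tuples satisfying the $1$-separation conditions \eqref{eq:param_delta_1separateddipoles}; these images sit inside $\Theta_{\mathrm{d}}\times\Theta_{\mathrm{mm}}$ and $\Theta_{\mathrm{d}}\times\Theta_{\mathrm{d}}\times\Theta_{\mathrm{mm}}$ respectively, and the ``vice-versa'' directions follow by construction. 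The main technical nuisance will be the dipole--dipole trigonometric reduction and, more subtly, the fact that the sets $\Theta_{\mathrm{md}},\Theta_{\mathrm{dd}}$ must encode the \emph{full} $1$-separation condition $\rho(\theta_i,\theta_j')\geq 1$ for all $i,j$ rather than merely the single constraint $\|y\|\geq 1$; this is why the statement only asserts \emph{existence} of these sets as subsets of the natural product domains, rather than equalities.
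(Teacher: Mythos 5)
Your proposal matches the paper's proof in all essentials: each item is reduced via Proposition~\ref{prop:param_delta} to explicit parameters $(\alpha,\alpha',s,s',\theta_i,\theta_j')$ specialized to $\{0,1\}$, the formula $\tfrac{1}{\|\pi_0\|_\kappa^2}\mathfrak{Re}\langle\mathcal{A}\pi_\theta,\mathcal{A}\pi_{\theta'}\rangle = \tfrac{1}{m}\sum_j\psi(\omega_j)\cos\langle\omega_j,\theta-\theta'\rangle$ is used to express $a,b,c,d$, and the same product-to-sum identities transform $a-b$ and $a-b-c+d$ into the numerators of $f_{\mathrm{md}}$ and $f_{\mathrm{dd}}$ with the same substitution $x=\theta_1-\theta_2$, $x'=\theta_1'-\theta_2'$, $y=\theta_1-\theta_1'$. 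Defining $\Theta_{\mathrm{md}}$ and $\Theta_{\mathrm{dd}}$ as the images of the parameter maps over $1$-separated tuples (which is what the paper does, by declaring them to be the sets of parameter tuples satisfying the corresponding identities) is also the same step, and correctly accounts for the subtlety you flag about encoding all four separation constraints rather than only $\|y\|\geq 1$.
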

\paragraph{Proof of \autoref{it:ProofItemM}:}
Proposition~\ref{prop:param_delta} yields
$\|\mathcal{A}\iota\|_{2}^2 = \frac{1}{m} \sum_{j =1}^{m} \psi(\omega_j)$
 for every $\iota \in \mathfrak{M}$. \qed
\paragraph{Proof of  \autoref{it:ProofItemD}:}
Proposition~\ref{prop:param_delta} yields that for every $\iota \in \hat{\mathfrak{D}}$ there is $x \in \Theta_{\mathrm{d}}$ such that
$1- \|\mathcal{A}\iota\|_{2}^2 = 1- \frac{1}{m} \sum_{j =1}^{m} \psi(\omega_j) \frac{  1-\cos \langle \omega_j, x \rangle }{1-\bar{\kappa}(x)} = 1-\frac{1}{m}\sum_{j =1}^{m} \psi(\omega_{j}) f_{d}(x|\omega_j)$, and vice-versa, where we used that for $t \in \mathbb{R}$ we have $1-\cos t = 2 \sin^{2}(t/2)$. \qed

The remaining items use the second part of Proposition~\ref{prop:param_delta} which gives a generic formula for $\mathfrak{Re} \langle \mathcal{A}\iota, \mathcal{A}\iota' \rangle$: when $(\iota,\iota') \in \mathfrak{D}_{\neq}^2$ there are
$s,s' \in \{-1,1 \}$, $\alpha, \alpha' \in [0,1]$, $\theta_{1}, \theta_{2}, \theta_{1}', \theta_{2}' \in \Theta$ satisfying \eqref{eq:param_delta_1separateddipoles}-\eqref{eq:cross_dipole_expression_using_alphas_and_x_proof} with 
$x:= \theta_{1}-\theta_2 \in \Theta_{\mathrm{d}}$, $x':= \theta_{1}'-\theta_2' \in \Theta_{\mathrm{d}}$, and $a,b,c$ and $d$ are defined by \eqref{eq:cross_dipole_expression_using_alphas_and_x_paramsabcd}.
We will also use that given that $\mathcal{A}$ is a $(\kappa,w)$-FF sketching operator (cf Definition~\ref{def:RFFsketching}) and $\pi_{\theta}$ is obtained by translation of $\pi_{0}$ we have
\(
\mathfrak{Re} \langle \mathcal{A}\pi_{\theta_{i}}, \mathcal{A}\pi_{\theta'_{j}} \rangle = 
\tfrac{1}{m} \sum_{j=1}^{m} |\langle \pi_{0},\phi_{\omega_{j}} \rangle|^{2} \cos \langle \omega_{j},\theta_{i}-\theta'_{j}\rangle,
\)
and that $y := \theta_1 - \theta_1'$ satisfies $y \in \Theta-\Theta$ and $\|y\| = \|\theta_{1}-\theta_{1}'\|   \geq 1$ by \eqref{eq:param_delta_1separateddipoles}, thus $y \in \Theta_{\mathrm{mm}}$.

\paragraph{Proof of \autoref{it:ProofItemMM}}

When $(\iota,\iota') \in \mathfrak{M}_{\neq}^2$, we have $\alpha  = \alpha'= 0$ 
hence~\eqref{eq:cross_dipole_expression_using_alphas_and_x_proof}-\eqref{eq:cross_dipole_expression_using_alphas_and_x_paramsabcd} yield 
\begin{align*}
\mathfrak{Re}\langle \mathcal{A}\iota, \mathcal{A}\iota' \rangle & = ss' a   = \frac{ss'}{\|\pi_0\|_{\kappa}^2}\frac{1}{m}\sum\limits_{j =1}^{m} |\langle \pi_{0}, \phi_{\omega_{j}} \rangle|^2 \cos \big(\langle \omega_{j}, \theta_1 - \theta_1' \rangle \big)  = \frac{ss'}{m}\sum\limits_{j =1}^{m} \psi(\omega_{j}) \cos\big( \langle\omega_{j}, y \rangle \big).
\end{align*}
Vice-versa for such $s,s',y$ it is easy to exhibit $(\iota,\iota') \in \mathfrak{M}_{\neq}^2$ satisfying the same expression.\qed

\paragraph{Proof of  \autoref{it:ProofItemMD}}
When $(\iota, \iota') \in \mathfrak{M}\times \hat{\mathfrak{D}}_{\neq}$, $\alpha = 0$ and $\alpha' =1$, which similarly yields
\begin{align*}
\mathfrak{Re}\langle \mathcal{A}\iota, \mathcal{A}\iota' \rangle & = ss' \frac{a - b }{ \sqrt{2(1-\bar{\kappa}(x'))}}
 = \frac{ss'}{m} \sum\limits_{j = 1}^{m} \psi(\omega_{j}) \frac{\cos \big(\langle \omega_j, \theta_1 - \theta_1'\rangle \big) - \cos \big(\langle \omega_j, \theta_1 - \theta_2' \rangle \big) }{ \sqrt{2(1-\bar{\kappa}(x'))}}.
 \end{align*}
 Now, using the identity $ \cos(u) - \cos(v) = -2\sin((u-v)/2)\sin((u+v)/2)$, we get
 \begin{align*}
 \cos \big(\langle \omega_j, \theta_1 - \theta_1'\rangle \big) - \cos \big(\langle \omega_j, \theta_1 - \theta_2' \rangle \big) 
& = \cos \big(\langle \omega_j, y\rangle \big) - \cos \big(\langle \omega_j, y + x' \rangle \big) \\
& =  2
\sin \big(\langle \omega_j, x'\rangle/2 \big)\sin \big(\langle \omega_j, y+x'/2\rangle \big).
\end{align*}
Thus 
\begin{equation}\label{eq:identity_cross_iota_md}
  \mathfrak{Re}\langle \mathcal{A}\iota, \mathcal{A}\iota' \rangle = ss'\Psi_{\mathrm{md}}((x',y)|\bm{\Omega}),
\end{equation}
where $x' = \theta'_{1}-\theta'_{2}$ and $y = \theta_{1}-\theta'_{1}$ satisfy $x' \in \Theta_{\mathrm{d}}$ and $y  \in \Theta_{\mathrm{mm}}$.
We define $\Theta_{\mathrm{md}}$ as the set of all couples $(x',y) \in \Theta_{\mathrm{d}} \times \Theta_{\mathrm{mm}}$ that satisfy~\eqref{eq:identity_cross_iota_md} for some $(\iota,\iota') \in (\mathfrak{M} \times \hat{\mathfrak{D}})_{\neq}$ and $s,s' \in \{-1,1\}$.

\paragraph{Proof of  \autoref{it:ProofItemDD}}
When $(\iota, \iota') \in \hat{\mathfrak{D}}_{\neq}^{2}$, $\alpha = 1$ and $\alpha' =1$, 
hence~\eqref{eq:cross_dipole_expression_using_alphas_and_x_proof}-\eqref{eq:cross_dipole_expression_using_alphas_and_x_paramsabcd} similarly yield
\begin{align*}
\mathfrak{Re}\langle \mathcal{A}\iota, \mathcal{A}\iota' \rangle & = ss'\frac{a  - b  - c  + d}{\sqrt{2(1-\bar{\kappa}(x))} \sqrt{ 2(1-\bar{\kappa}(x'))}} \nonumber \\
& = \frac{ss'}{m} \sum\limits_{j = 1}^{m} \psi(\omega_{j}) \frac{\cos \big( \langle \omega_j, \theta_1 - \theta_1' \rangle \big) - \cos\big( \langle \omega_j, \theta_1 - \theta_2' \rangle \big) - \cos\big( \langle \omega_j, \theta_2 - \theta_1' \rangle \big) + \cos\big( \langle \omega_j, \theta_2 - \theta_2' \rangle \big)}{ \sqrt{2(1-\bar{\kappa}(x))}\sqrt{2(1-\bar{\kappa}(x'))}}.
\end{align*}
Since
$\cos(u)-\cos(v) = -2\sin(\frac{u-v}{2})\sin(\frac{u+v}{2})$ and
$\sin(u)-\sin(v) = 2\sin(\frac{u-v}{2})\cos(\frac{u+v}{2})$, and denoting $y:= \theta_{1} - \theta_{1}'$, $x:= \theta_{1}-\theta_2$, and $x':= \theta_{1}' - \theta_{2}'$,
we get $x,x' \in \Theta_{\mathrm{d}}$, $y  \in \Theta_{\mathrm{mm}}$ and
\begin{align*}
\cos \big( \langle \omega_j, \theta_1 - \theta_1' \rangle \big) &- \cos\big( \langle \omega_j, \theta_1 - \theta_2' \rangle \big) - \cos\big( \langle \omega_j, \theta_2 - \theta_1' \rangle \big) + \cos\big( \langle \omega_j, \theta_2 - \theta_2' \rangle \big)\\
& = \cos \big( \langle \omega_j, y \rangle \big) - \cos\big( \langle \omega_j, y+x' \rangle \big) - \cos\big( \langle \omega_j, y-x \rangle \big) + \cos\big( \langle \omega_j, y-x+x' \rangle \big)\\
& = 2\sin \big(\langle \omega_j, x'/2 \rangle \big)\sin\big(\langle \omega_j, y+x'/2 \rangle \big) -2\sin \big( \langle\omega_j,x'/2 \rangle\big)\sin\big(\langle \omega_j, y-x+x'/2 \rangle  \big)\\
& = 2 \sin\big(\langle \omega_j, x'/2 \rangle \big)\Big(\sin\big( \langle \omega_j, y+x'/2 \rangle \big) -\sin\big( \langle \omega_j, y-x+x'/2 \rangle \big)\Big)\\
& = 4 \sin\big(\langle \omega_j, x/2 \big)\sin\big(\langle\omega_j, x'/2\rangle \big)\cos\big( \langle \omega_j, y+x'/2-x/2 \rangle \big).
\end{align*}
Thus 
\begin{equation}\label{eq:identity_cross_iota_dd}
  \mathfrak{Re}\langle \mathcal{A}\iota, \mathcal{A}\iota' \rangle = ss'\Psi_{\mathrm{dd}}((x,x',y)|\bm{\Omega}).
\end{equation}
We define $\Theta_{\mathrm{dd}}$ as the set of all triplets $(x,x',y)  \in \Theta_{\mathrm{d}}\times \Theta_{\mathrm{d}} \times \Theta_{\mathrm{mm}}$ that satisfy~\eqref{eq:identity_cross_iota_dd} for some $(\iota,\iota') \in \hat{\mathfrak{D}}_{\neq}^2$ and $s,s' \in \{-1,1\}$.

\subsection{Proof of Theorem~\ref{thm:delta_bounds}}
\label{sec:proofthemdeltabounds}

Consider $\ell \in \{\mathrm{d},\mathrm{md},\mathrm{mm},\mathrm{dd}\}$ and $z,z' \in \Theta_{\ell}$, and denote 
\[
\Delta_{\ell,\bm{\Omega}}(z,z') := \left|\Psi_{\ell}(z|\bm{\Omega})-\Psi_{\ell}(z'|\bm{\Omega})\right|.
\]
The result will follow if we exhibit $z_{1},\ldots,z_{T}$, $2 \leq T \leq 6$ such that
$z_{1}=z$, $z_{T}=z'$ such that
\begin{align}
\max_{1 \leq t \leq T-1}  \Delta_{\ell}(z_{t},z_{t+1})  \leq \Delta_{\ell}(z,z')  \label{eq:ReverseTriangle}
\end{align}
and if we can find smooth functions $u \in [0,1] \mapsto f_{\ell,t}(u|\omega)$ such that for every $\omega \in \mathbb{R}^{d}$ and every $t$
\begin{align}
f_{\ell,t}(0|\omega) = f_{\ell}(z_{t}|\omega), & \quad f_{\ell,t}(1|\omega) = f_{\ell}(z_{t+1}|\omega)\, ,\label{eq:FJKEndpoints}\\ 
\label{eq:BoundFJKDerivative}
\sup_{0<u < 1} |f'_{\ell,t}(u|\omega)| &\leq G(\omega) \Delta_{\ell}(z_{t},z_{t+1})\\
\text{where}\quad G(\omega) &:= C_{\kappa} \sum_{i=1}^{3}\|\omega\|_{a,\star}^{i}. \nonumber
\end{align}
Indeed, this will imply by the triangle inequality, the mean value theorem, and~\eqref{eq:M_Omega_def} that
\begin{align*}
\Delta_{\ell,\bm{\Omega}}(z,z') 
& = 
\Delta_{\ell,\bm{\Omega}}(z_{1},z_{T}) 
\leq \sum_{t=1}^{T-1} \Delta_{\ell,\bm{\Omega}}(z_{t},z_{t+1}) 
= \sum_{t=1}^{T-1} \left|
\frac{1}{m}\sum_{j=1}^{m}\psi_{m}(\omega_{j})  [f_{\ell,t}(1|\omega_{j})-f_{\ell,t}(0|\omega_{j})]
\right|
\\
& \leq 
\sum_{t=1}^{T-1} 
\frac{1}{m}\sum_{j=1}^{m}\psi_{m}(\omega_{j})  \left| [f_{\ell,t}(1|\omega_{j})-f_{\ell,t}(0|\omega_{j})]
\right|
 \leq 
 \frac{1}{m}
\sum_{t=1}^{T-1} 
\sum_{j=1}^{m}\psi_{m}(\omega_{j})  \sup_{0<u < 1} |f'_{\ell,t}(u|\omega_{j})|
\\
&\leq 
 \frac{1}{m}
\sum_{t=1}^{T-1} 
\sum_{j=1}^{m}\psi_{m}(\omega_{j})  G(\omega_{j}) \Delta_{\ell}(z_{t},z_{t+1})
=
\left( \frac{1}{m} \sum_{j=1}^{m}\psi_{m}(\omega_{j})  G(\omega_{j})\right) \cdot 
\sum_{t=1}^{T-1}  \Delta_{\ell}(z_{t},z_{t+1})
\\
& \leq
\left( \frac{1}{m} \sum_{j=1}^{m}\psi_{m}(\omega_{j})  G(\omega_{j})\right) \cdot 
6  \Delta_{\ell}(z,z')
= 6 \Psi_{0}(\bm{\Omega}) \cdot C_{\kappa} \cdot\ 
 \Delta_{\ell}(z,z').
\end{align*}
\subsubsection{Construction of $z_{t}$, $1 \leq t \leq T$}
First we focus on the construction of $z_{1},\ldots, z_{T}$ satifying the inequality~\eqref{eq:ReverseTriangle}:
\begin{itemize}
\item When $\ell=\mathrm{d}$, we have $z=x$ and $z'=x'$ where $x, x' \in \Theta_{\mathrm{d}}$ (cf~\eqref{eq:bound_gammma_d}). Observe that  $r:= \|x\|_{a}, r' := \|x'\|_{a}$ satisfy $r,r'>0$, and that $n :=x/r$, $n':= x'/r'$ satisfy $\|n\|_{a} = \|n'\|_{a}=1$.
We set $z_{1} = x$, $z_{2} = \tilde{x}$, $z_{3} = x'$ where $\tilde{x} := \|x'\|_{a} \cdot (x/\|x\|_{a})$ satisfies $\|\tilde{x}\|_{a} = \|x'\|_{a}$. 
Given the definition~\eqref{eq:DefMetricD} we have $\Delta_{\ell}(z_{1},z_{2}) = |r-r'| = |\|x\|_{a}-\|x'\|_{a}| \leq \Delta_{\ell}(x,x') = \Delta_{\ell}(z,z')$ and similarly $\Delta_{\ell}(z_{2},z_{3}) = \|n-n'\|_{a} \leq \Delta_{\ell}(z,z')$ as claimed.
\item When $\ell = \mathrm{mm}$, $z=y$, $z'=y'$, where $y,y' \in \Theta_{\mathrm{mm}}$ and we simply set $z_{1}=z$, $z_{2}=z'$.
\item When $\ell = \mathrm{md}$, $z = (x,y)$ and $z' = (x',y')$ where $x,x' \in \Theta_{\mathrm{d}}$, $y,y' \in \Theta_{\mathrm{mm}}$. Setting $\tilde{x}$ as above we define $z_{1} = z = (x,y)$, $z_{2} = (x,y')$, $z_{3} = (\tilde{x},y')$ and $z_{4} = (x',y')$. It is not difficult to check that the inequality~\eqref{eq:ReverseTriangle} holds given the definition~\eqref{eq:DefMetricMD} of $\Delta_{\mathrm{md}}$.
\item Finally, when $\ell = \mathrm{dd}$, $z=(x_1,x_2,y)$ and $z'=(x_1',x_2',y')$ where $x_{i} \in \Theta_{d}$ and $y,y' \in \Theta_{\mathrm{mm}}$. It is easy to check~\eqref{eq:ReverseTriangle} with $z_{1} = (x_{1},x_{2},y)$, $z_{2} = (x_{1},x_{2},y')$, $z_{3} = (\tilde{x}_{1},x_{2},y')$, $z_{4} = (x_{1}',x_{2},y')$, $z_{5} = (x_{1}',\tilde{x}_{2},y')$, $z_{6} = (x_{1}',x_{2}',y')$ given the definition~\eqref{eq:DefMetricDD} of $\Delta_{\mathrm{dd}}$.
\end{itemize}
To complete the proof of Theorem~\ref{thm:delta_bounds} we now exhibit $f_{\ell,t}$ satisfying
\eqref{eq:FJKEndpoints}-\eqref{eq:BoundFJKDerivative} by treating each case $\ell= \mathrm{d}, \ell=\mathrm{mm}, \ell=\mathrm{md}, \ell =\mathrm{dd}$ and pair $(z_{t},z_{t+1})$. 
First we build the functions and show that they satisfy~\eqref{eq:FJKEndpoints}. Then observing their common structure we establish the bound~\eqref{eq:BoundFJKDerivative}.

\subsubsection{Construction of $f_{\ell,t}$ satisfying~\eqref{eq:FJKEndpoints}}

\paragraph{The case $\ell=\mathrm{d}$.} For $u \in (0,1)$, we define $\bar{r}(u):= r+u(r'-r)$ and $\bar{n}(u):= n+u(n'-n)$, which satisfies $\|\bar{n}(u)\|_{a} = \|(1-u)n+un'\|_{a} \leq (1-u)\|n\|_{a}+u\|n'\|_{a} = 1$, and we set \begin{align}
f_{\mathrm{d},1}(u|\omega) & 
 := 2
\frac{\sin^{2}(\langle \omega, \bar{r}(u) n \rangle/2)}{1-\tilde{\kappa}(\bar{r}(u))}
\stackrel{\eqref{eq:DefAlphaFn} }{=}
2\alpha^{2}(\bar{r}(u)) \frac{\sin^{2}(\bar{r}(u) \langle \omega, n \rangle/2)}{(\bar{r}(u))^{2}} \, ,
\label{eq:DefFd1}\\
f_{\mathrm{d},2}(u|\omega) & 
:=
\frac{
2\sin^{2}(r'\langle \omega, \bar{n}(u) \rangle/2)}{1-\tilde{\kappa}(r')}
\stackrel{\eqref{eq:DefAlphaFn} }{=}
2\alpha^{2}(r') \frac{\sin^{2}(r'\langle \omega, \bar{n}(u) \rangle/2)}{(r')^{2}} 
\label{eq:DefFd2}\end{align}
Property~\eqref{eq:FJKEndpoints} follows from~\eqref{eq:DefPsiD} and \eqref{eq:DefRadialKernel} since $z_{1} = x = r \cdot n = \bar{r}(0)\cdot n$, $z_{2} = \tilde{x} = r' \cdot n = \bar{r}(1) \cdot n = r' \bar{n}(0)$, $z_{3} = x' 
= r' \cdot n' = r' \cdot \bar{n}(1)$ and $\|n\|_{a}= \|n'\|_{a}=1$ so that $\bar{\kappa}(z_{2}) = \bar{\kappa}(z_{3}) = \tilde{\kappa}(r')$.

\paragraph{The case $\ell=\mathrm{mm}$.}
For $u \in (0,1)$, we define $\bar{y}(u):= y+u(y'-y)$ and
\begin{align}
f_{\mathrm{mm},1}(u|\omega) 
& :=  \cos( \langle \omega, \bar{y}(u) \rangle).\label{eq:DefFmm1}
\end{align}
Property~\eqref{eq:FJKEndpoints} follows by~\eqref{eq:DefPsiMM} since $z_{1} = y=\bar{y}(0)$ and $z_{2}=y' = \bar{y}(1)$.

\paragraph{The case $\ell = md$.}
For any $x,y,\omega$, by~\eqref{eq:DefPsiMD},~\eqref{eq:DefRadialKernel},~\eqref{eq:DefAlphaFn}, since
$2\sin v \sin w = \cos(v-w)-\cos(v+w)$ 
\begin{align*}
f_{\mathrm{md}}(x,y|\omega) 
= \sqrt{2}\frac{\sin\big(\langle \omega, x \rangle/2 \big)\sin\big(\langle \omega, y +x/2 \rangle \big) 
}{\sqrt{1-\tilde{\kappa}(\|x\|_{a})}}
&=
\sqrt{2}\alpha(\|x\|_{a})  \frac{ \sin\big(\langle \omega, x \rangle/2 \big)\sin\big(\langle \omega, y +x/2 \rangle \big) }{\|x\|_{a}} \\
& = 
\frac{1}{\sqrt{2}}\alpha(\|x\|_{a}) \frac{\cos(\langle \omega,y\rangle) -\cos(\langle \omega, y\rangle+\langle \omega, x\rangle)}{\|x\|_{a}}.
\end{align*}
Since $z_{1} = (x,y)=(x,\bar{y}(0))$, 
$z_{2}=(x,y') = (x,\bar{y}(1)) = (\bar{r}(0) \cdot n,y')$, $z_{3}=(\tilde{x},y') = (\bar{r}(1)\cdot n,y') = (r' \cdot \bar{n}(0),y')$, $z_{4} 
= (x',y') = (r' \cdot \bar{n}(1),y')$, Property~\eqref{eq:FJKEndpoints} holds with
\begin{align}
f_{\mathrm{md},1}(u|\omega) &
:=
\sqrt{2}\alpha(r) \frac{\sin\big(r\langle \omega, n \rangle/2 \big)\sin\big(\langle \omega, \bar{y}(u) +x/2 \rangle \big)}{r}\, ,\label{eq:DefFmd1}\\
f_{\mathrm{md},2}(u|\omega) & 
:= 
\sqrt{2}\alpha(\bar{r}(u)) \frac{\sin\big(\bar{r}(u) \langle \omega, n \rangle/2\big)\sin\big(\langle \omega,y'+\bar{r}(u) \cdot n/2 \rangle\big)}{\bar{r}(u)} \, ,\label{eq:DefFmd2}\\
 f_{\mathrm{md},3}(u|\omega) & 
:= 
\frac{1}{\sqrt{2}}\alpha(r') \frac{\cos(\langle \omega,y'\rangle) -\cos(\langle \omega, y'\rangle+r' \langle \omega, \bar{n}(u)\rangle)}{r'}
\, .\label{eq:DefFmd3}
\end{align}

\paragraph{The case $\ell = \mathrm{dd}$.}
For any $x_{1},x_{2},y,\omega$,~\eqref{eq:DefPsiDD} yields using~\eqref{eq:DefRadialKernel}-\eqref{eq:DefAlphaFn}
\begin{align*}
f_{\mathrm{dd}}((x_{1},x_{2},y)|\omega) 
&= 2\frac{\sin\big(\langle \omega, x_{1} \rangle/2\big)\sin\big(\langle \omega, x_{2} \rangle/2\big)\cos\big(\langle \omega, y + x_{2}/2-  x_{1}/2 \rangle \big)}{\sqrt{1-\tilde{\kappa}(\|x_{1}\|_a)}\sqrt{1-\tilde{\kappa}(\|x_{2}\|_a)}}\\
& = 2\alpha(\|x_{1}\|_{a})\alpha(\|x_{2}\|_{a})
 \frac{\sin\big(\langle \omega, x_{1} \rangle/2\big)\sin\big(\langle \omega, x_{2} \rangle/2\big)\cos\big(\langle \omega, y + x_{2}/2-  x_{1}/2 \rangle \big)}{\|x_{1}\|_{a}\|x_{2}\|_{a}} 
\end{align*}
Reasoning as above establishes Property~\eqref{eq:FJKEndpoints} holds with $z_{1} = (x_{1},x_{2},y)$, $z_{2} = (x_{1},x_{2},y')$, $z_{3} = (\tilde{x}_{1},x_{2},y')$, $z_{4} = (x_{1}',x_{2},y')$, $z_{5} = (x_{1}',\tilde{x}_{2},y')$, $z_{6} = (x_{1}',x_{2}',y')$,  $\bar{y}(u) :=  y + u(y'-y)$, where $\tilde{x}_{i}, r_{i},r'_{i}, \bar{r}_{i}(u)$, $n_{i},n'_{i},\bar{n}_{i}(u)$, $i \in \{1,2\}$ were defined in the same way as in the case $\ell = \mathrm{d}$, and
\begin{align}
f_{\mathrm{dd},1}(u|\omega) & := 2\alpha(r_{1})\alpha(r_{2})
\frac{\sin\big(r_{1}\langle \omega, n_{1} \rangle/2\big)\sin\big(r_{2}\langle \omega, n_{2} \rangle/2\big)\cos\big(\langle \omega, \bar{y}(u) + r_{2}n_{2}/2-  r_{1}n_{1}/2 \rangle \big)}{r_{1}r_{2}}\label{eq:DefFdd1}\\
f_{\mathrm{dd},2}(u|\omega) & :=  2\alpha(\bar{r}_{1}(u))\alpha(r_{2})
\frac{\sin\big(\bar{r}_{1}(u)\langle \omega, n_{1} \rangle/2\big)\sin\big(r_{2}\langle \omega, n_{2} \rangle/2\big)\cos\big(\langle \omega, y' + r_{2}n_{2}/2-  \bar{r}_{1}(u)\cdot n_{1}/2 \rangle \big)}{\bar{r}_{1}(u) r_{2}}\label{eq:DefFdd2}\\
f_{\mathrm{dd},3}(u|\omega) & :=  2\alpha(r'_{1})\alpha(r_{2})
\frac{\sin\big(r'_{1}\langle \omega, \bar{n}_{1}(u) \rangle/2\big)\sin\big(r_{2}\langle \omega, n_{2} \rangle/2\big)\cos\big(\langle \omega, y' + r_{2}n_{2}/2-  r'_{1} \cdot \bar{n}_{1}(u)/2 \rangle \big)}{r'_{1}r_{2}}\label{eq:DefFdd3}\\
f_{\mathrm{dd},4}(u|\omega) & :=  2\alpha(r'_{1})\alpha(\bar{r}_{2}(u))
\frac{\sin\big(r'_{1}\langle \omega, n'_{1} \rangle/2\big)\sin\big(\bar{r}_{2}(u)\langle \omega, n_{2} \rangle/2\big)\cos\big(\langle \omega, y' + \bar{r}_{2}(u)n_{2}/2-  r'_{1}n'_{1}/2 \rangle \big)}{r'_{1}\bar{r}_{2}(u)}\label{eq:DefFdd4}\\
f_{\mathrm{dd},5}(u|\omega) & :=  2\alpha(r'_{1})\alpha(r'_{2})
\frac{\sin\big(r'_{1}\langle \omega, n'_{1} \rangle/2\big)\sin\big(r'_{2}\langle \omega, \bar{n}_{2}(u) \rangle/2\big)\cos\big(\langle \omega, y' + r'_{2}\bar{n}_{2}(u)/2-  r'_{1}n'_{1}/2 \rangle \big)}{r'_{1}r'_{2}}\label{eq:DefFdd5}
\end{align}

\subsubsection{Proof of the bound~\eqref{eq:BoundFJKDerivative}}
To continue we gather a few observations. First, since $\sinc(t) := \sin(t)/t = \int_{0}^{1} \cos(xt)dx$ for every $t \neq 0$ (and $\sinc(0)=1$) we have $\sinc'(t) = \int_{0}^{1} -x\sin(xt)dx$ hence $\max(|\sinc(t)|,|\sinc'(t)|) \leq 1$. Now, by definition of the dual norm $\|\cdot\|_{a,\star}$, for every $\omega, v \in \mathbb{R}^{d}$ we have $|\langle \omega,v\rangle| \leq \|\omega\|_{a,\star} \|v\|_{a}$.
Thus, by definition~\eqref{eq:conditions_on_alpha} of $C_{\kappa}$ for every $0<t \leq R$ we have 
\begin{equation}\label{eq:SincBound}
\forall v \in \mathbb{R}^{d},\quad
|\alpha(t) \sin(t\langle \omega,v\rangle/2)/t| = |\alpha(t)\tfrac{\langle \omega,v\rangle}{2} \sinc(t\langle \omega,v\rangle/2)| \leq \frac{\sqrt{C_{\kappa}}}{2} \|\omega\|_{a,\star} \|v\|_{a}.
\end{equation}
Recalling that $\bar{y}(u) := y+u(y'-y)$, $\bar{n}(u) := n+u(n'-n)$, $\bar{r}(u) := r+u(r'-r)$, and $\omega \in \mathbb{R}^{d}$ we now bound the following auxiliary functions and their derivatives, with arbitrary $\phi \in \mathbb{R}$
\begin{align*}
g_{0,\phi}(u) := \cos(\langle \omega, \bar{y}(u)\rangle+\phi)\, ,\ 
g_{1}(u) := \alpha(\bar{r}(u)) \frac{\sin(\bar{r}(u) \langle \omega, n\rangle/2)}{\bar{r}(u)}\, ,\ 
g_{2}(u) := \alpha(r') \frac{\sin(r' \langle \omega,\bar{n}(u)\rangle/2)}{r'}\, .
\end{align*}
Since $0<r' \leq R$ we have
\begin{align*}
|g'_{0,\phi}(u)| &= |\sin(\langle \omega, \bar{y}(u)\rangle+\phi) \cdot \langle \omega,y'-y\rangle | \leq  |\langle \omega,y'-y\rangle| \leq \|\omega\|_{a,\star}\cdot  \|y'-y\|_{a}\, ,\\
|g'_{2}(u)| & = |\alpha(r') \cos(r'\langle \omega,\bar{n}(u)\rangle/2) \cdot  \tfrac{\langle \omega,n'-n\rangle}{2} |
\leq \frac{\sqrt{C_{\kappa}} \|\omega\|_{a,\star}}{2}  \cdot \|n'-n\|_{a}\, .
\end{align*}
As $g_{1}(u) = \alpha(\bar{r}(u)) \frac{\langle \omega,n\rangle}{2} \sinc(\bar{r}(u) \langle \omega, n\rangle/2)$, $\|n\|_{a}=1$, and $\bar{r}(u) \leq \max (r,r') \leq R$ we get 
\begin{align*}
|g'_{1}(u)| &=  \Big|\alpha'(\bar{r}(u)) \sinc(\bar{r}(u) \langle \omega, n\rangle/2) + 
\alpha(\bar{r}(u)) \frac{\langle \omega, \cdot n\rangle}{2} \sinc'(\bar{r}(u) \langle \omega, n\rangle/2)\Big|
\cdot \Big|\frac{\langle \omega,n\rangle}{2} \cdot (r'-r)\Big|\\
&\leq \frac{\sqrt{C_{\kappa}}\|\omega\|_{a,\star}}{2} \cdot (1+\|\omega\|_{a,\star}/2)  \cdot |r'-r|
\end{align*}
Since $\|\bar{n}(u)\|_{a} \leq \max(\|n\|_{a},\|n'\|_{a}) = 1$ we also get using~\eqref{eq:SincBound}
\[
\max(|g_{1}(u)|,|g_{2}(u)|) \leq \frac{\sqrt{C_{\kappa}}}{2} \|\omega\|_{a,\star} \max( \|n\|_{a} ,\|\bar{n}(u)\|_{a} ) = \frac{\sqrt{C_{\kappa}}}{2} \|\omega\|_{a,\star}\, .
\]
We are now equipped to proceed.
\paragraph{The case $\ell=\mathrm{d}$.} 
Expressions~\eqref{eq:DefFd1}-\eqref{eq:DefFd2} yield $f_{\mathrm{d},1}(u|\omega) = 2 g_{1}^{2}(u)$, $f_{\mathrm{d},2}(u|\omega) = 2g_{2}^{2}(u)$. By~\eqref{eq:DefMetricD} and the choice of $z_{1}=x,z_{2}=\tilde{x},z_{3}=x'$ we have $\Delta_{d}(z_{1},z_{2}) = |r'-r|$,  $\Delta_{d}(z_{2},z_{3}) = \|n'-n\|_{a}$ and we obtain  the bound~\eqref{eq:BoundFJKDerivative} since
\begin{align*}|f'_{\mathrm{d},1}(u)|  &= |4g_{1}(u)g'_{1}(u)| \leq C_{\kappa} (\|\omega\|_{a,\star}^{2}+\|\omega\|_{a,\star}^{3}/2)  
\cdot |r'-r| \leq G(\omega) \cdot  \Delta_{d}(z_{1},z_{2})\, ,\\
|f'_{\mathrm{d},2}(u)|  &= |4g_{2}(u)g'_{2}(u)| \leq C_{\kappa} \|\omega\|_{a,\star}^{2} \cdot  \|n'-n\|_{a} \leq G(\omega) \cdot  \Delta_{d}(z_{2},z_{3})\, .
\end{align*}
\paragraph{The case $\ell=\mathrm{mm}$.}
By the expression~\eqref{eq:DefFmm1} we have $f_{\mathrm{mm},1} = g_{0}$. The bound~\eqref{eq:BoundFJKDerivative} follows from
\begin{align*}|f_{\mathrm{mm},1}'(u)| 
&=  |g'_{0,0}(u)| 
\leq \|  \omega_j \|_{a,\star} \|y'-y\|_{a}  \stackrel{\eqref{eq:conditions_on_alpha} \& \eqref{eq:DefMetricMM}}{\leq} 
G(\omega) \Delta_{\mathrm{mm}}(z_{1},z_{2}).
\end{align*}
\paragraph{The case $\ell = \mathrm{md}$.}
By the identity $\cos(\theta-\pi/2) = \sin(\theta)$ we have $f_{\mathrm{md},1}(u) \stackrel{\eqref{eq:DefFmd1}}{=} \sqrt{2}g_{1}(0) g_{0,\phi}(u)$ with $\phi :=\langle \omega,x\rangle/2 -\pi/2$, and by~\eqref{eq:DefFmd2} $f_{\mathrm{md},2}(u) = \sqrt{2} g_{1}(u) \sin(\psi(u))$ with $\psi(u) := \langle \omega, y' + \bar{r}(u)n/2\rangle$, and $\psi'(u) = \langle \omega,n\rangle (r'-r)/2$. Combining with~\eqref{eq:DefFmd3} we obtain the bound
~\eqref{eq:BoundFJKDerivative} since
\begin{align*}|f_{\mathrm{md},1}'(u)| 
&=  |\sqrt{2}g_{1}(0)g'_{0,\phi}(u)|  
\leq \frac{\sqrt{C_{\kappa}}}{\sqrt{2}} \|  \omega_j \|_{a,\star}^{2} \|y'-y\|_{a}  \stackrel{C_{\kappa} \geq 1 \& \eqref{eq:DefMetricMD}}{\leq} 
G(\omega) \Delta_{\mathrm{md}}(z_{1},z_{2})\, , \\
|f'_{\mathrm{md},2}(u)| & = \sqrt{2} |g'_{1}(u) \sin(\psi(u)) + g_{1}(u) \cos(\psi(u))\langle \omega,n\rangle(r'-r)/2 |\nonumber\\
& \leq \sqrt{2}\left(\frac{\sqrt{C_{\kappa}}\|\omega\|_{a,\star}}{2}(1+\|\omega\|_{a,\star}/2)|r'-r|+\frac{\sqrt{C_{\kappa}}\|\omega\|_{a,\star}}{2} \|\omega\|_{a,\star}|r'-r|/2\right)\\
& \leq 
\frac{C_{\kappa} \|\omega\|_{a,\star}}{\sqrt{2}} (1+\|\omega\|_{a,\star})
\cdot |r'-r|
\leq G(\omega) \Delta_{\mathrm{md}}(z_{2},z_{3})\, ,\\
|f'_{\mathrm{md},3}(u)|
& = \frac{1}{\sqrt{2}}|\alpha(r')\sin(\langle\omega, y'\rangle+r' \langle \omega, \bar{n}(u)\rangle) \cdot \langle\omega,n'-n\rangle|\nonumber\\
& \leq \frac{\sqrt{C_{\kappa}}\|\omega\|_{a,\star}}{\sqrt{2}} \cdot \|n'-n\|_{a}
\leq G(\omega) \Delta_{\mathrm{md}}(z_{3},z_{4})\, .
\end{align*}

\paragraph{The case $\ell = \mathrm{dd}$}
Denote $g_{i,j}$, $i,j=1,2$ the functions defined as $g_{i}$ with $r_{j},r'_{j}$, etc. instead of $r,r'$ etc.
By~\eqref{eq:DefFdd1}
we have $f_{\mathrm{dd},1}(u) = 2g_{1,1}(0)g_{1,2}(0) g_{0,\phi}(u)$ with $\phi := r_{2}n_{2}/2-r_{1}n_{1}/2$, and by~\eqref{eq:DefFdd2}, $f_{\mathrm{dd},2}(u) = 2 g_{1,1}(u)g_{1,2}(0)\cos(\psi(u))$ with $\psi_{2}(u) := \langle \omega,y'+r_{2}n_{2}/2\rangle-\bar{r}_{1}(u) \langle \omega, n_{1}\rangle/2$, $\psi_{2}'(u) = -\langle \omega, n_{1}\rangle (r'_{1}-r_{1})/2$ hence
\begin{align*}
|f'_{\mathrm{dd},1}(u)| & = |2g_{1,1}(0)g_{1,2}(0)g'_{0,\phi}(u)| \leq \frac{C_{\kappa}}{2}\|\omega\|_{a,\star}^{3} \cdot \|y'-y\|_{a} \leq G(\omega) \Delta_{\mathrm{dd}}(z_{1},z_{2})\, ,\\
|f'_{\mathrm{dd},2}(u)| & = |2g_{1,2}(0)| \cdot |g_{1,1}'(u)\cos(\psi(u))+g_{1,1}(u)\sin(\psi(u))\langle \omega, n_{1}\rangle (r'_{1}-r_{1})/2| \\
& \leq \sqrt{C_{\kappa}}\|\omega\|_{a,\star} \left(\frac{\sqrt{C_{\kappa}}\|\omega\|_{a,\star}}{2}(1+\|\omega\|_{a,\star}/2)|r'_{1}-r_{1}|+\frac{\sqrt{C_{\kappa}}\|\omega\|_{a,\star}}{2}\|\omega\|_{a,\star}|r'_{1}-r_{1}|/2\right)\\
& \leq \frac{C_{\kappa}}{2}\|\omega\|_{a,\star}^{2} 
\left(1+3\|\omega\|_{a,\star}/4\right) |r'_{1}-r_{1}|
 \leq G(\omega) \Delta_{\mathrm{dd}}(z_{2},z_{3})\, .
\end{align*}
Similarly by~\eqref{eq:DefFdd4} $f_{\mathrm{dd},4}(u) = 2 g_{1,1}(1)g_{1,2}(u)\cos(\psi_{4}(u))$ with $\psi_{4}(u) := 
\langle \omega, y' -  r'_{1}n'_{1}/2 \rangle+ \bar{r}_{2}(u)\langle \omega,n_{2}\rangle/2$ and the same reasoning yields the same bound. This establishes the bound~\eqref{eq:BoundFJKDerivative}  for $t=1,2,4$.

By~\eqref{eq:DefFdd3} and the identity $2\sin v \cos w = \sin(v+w)+\sin(v-w)$ we write 
\begin{align*}
f_{\mathrm{dd},3}(u) 
&= 
g_{1,2}(0) \frac{\alpha(r'_{1})}{r'_{1}} 
2\sin\big(r'_{1}\langle \omega, \bar{n}_{1}(u) \rangle/2\big)\cos\big(\langle \omega, y' + r_{2}n_{2}/2-  r'_{1} \cdot \bar{n}_{1}(u)/2 \rangle \big)\\
&= g_{1,2}(0) \frac{\alpha(r'_{1})}{r'_{1}} 
\Big(
\sin(\langle \omega,y'+r_{2}n_{2}/2\rangle) + 
\sin(\underbrace{r'_{1} \langle \omega,\bar{n}_{1}(u)\rangle -\langle \omega, y'+r_{2}n_{2}/2\rangle}_{\psi_{3}(u)})
\Big)\\
|f'_{\mathrm{dd},3}(u)|
& = |g_{1,2}(0) \frac{\alpha(r'_{1})}{r'_{1}} \cos(\psi_{3}(u))\psi_{3}'(u)| 
\leq \frac{\sqrt{C_{\kappa}} \|\omega\|_{a,\star} }{2}\frac{\sqrt{C_{\kappa}}}{r'_{1}} |\psi_{3}'(u)|
= \frac{C_{\kappa}\|\omega\|_{a,\star}}{2} |\langle \omega,n'_{1}-n_{1}\rangle|\\
& \leq \frac{C_{\kappa}\|\omega\|_{a,\star}^{2}}{2} \cdot \|n'_{1}-n_{1}\|_{a} \leq G(\omega) \Delta_{\mathrm{dd}}(z_{3},z_{4}).
\end{align*}
The same reasoning works from \eqref{eq:DefFdd5} for $t=5$. This establishes the bound~\eqref{eq:BoundFJKDerivative}  for $t=3,5$.

\subsubsection{Proof of Proposition~\ref{prop:covering_delta_metric}}
\label{pf:prop:covering_delta_metric}
We denote by $\mathbb{B}_{a}$ (resp. $\mathbb{S}_{a}$) the unit ball (resp. unit sphere) with respect to $\|\cdot\|_{a}$, $R_{d} := \sup_{x \in \Theta_{d}}\|x\|_{a}$, and $R_{\mathrm{mm}} := \sup_{x \in \Theta_{\mathrm{mm}}} \|x\|_{a}$. Observe that 
\[
\max(R_{d},R_{\mathrm{mm}}) = \sup_{x \in \Theta_{d} \cup \Theta_{\mathrm{mm}}} \|x\|_{a} \stackrel{\eqref{eq:bound_gammma_d}-\eqref{eq:bound_gammma_mm}}{=} \sup_{x \in \Theta-\Theta} \|x\|_{a} = \operatorname{diam}_{a}(\Theta) =: D.
\]

\paragraph{An upper bound of $\mathcal{N}_{d}(\tau)$.}
Denote $I := (0,R_{d}] \subseteq \mathbb{R}$. We will soon show that 
\begin{equation}\label{eq:CoverProd1}
\mathcal{N}_{d}(\tau) := \mathcal{N}(\Theta_{d}, \Delta_{d}, \tau)
\leq 
\mathcal{N}( I,|\cdot|,\tau/2) 
\times
\mathcal{N}\big(\mathbb{S}_{a},\|\cdot\|_{a},\tfrac{\tau}{2(R_{d}+1)}\big).
\end{equation}
By~Lemma A.1 in \citep{GrBlKeTr20}, since  $\mathbb{S}_{a} \subset \mathbb{B}_{a}$, we have
$\mathcal{N}\big(\mathbb{S}_{a},\|\cdot\|_{a},\tfrac{\tau}{2(R_{d}+1)}\big) \leq \mathcal{N}\big(\mathbb{B}_{a},\|\cdot\|_a,\tfrac{\tau}{4(R_{d}+1)}\big)$.
Moreover, by Lemma 5.7 in \citep{Wain19}, for every $\tau>0$ the inequality $\mathcal{N}\big(\mathbb{B}_{a},\|\cdot\|_{a},\frac{\tau}{(4(R_{d}+1)}\big) \leq  \big(1+8(R_{d}+1)/\tau\big)^d$ is valid.
Finally, since
$\mathcal{N}(I,|\cdot|,\tau/2) \leq 1+2R_{d}/\tau \leq 1+8(R_{d}+1)/\tau$ we obtain
\[
\mathcal{N}_{d}(\tau) \leq \big(1+8(R_{d}+1)/\tau\big)^{d+1}
\leq \big(1+8(D+1)/\tau\big)^{d+1}.
\]
We now establish~\eqref{eq:CoverProd1}. Denote $N_{1} := \mathcal{N}( I,|\cdot|,\tau/2)$, $N_{2} := \mathcal{N}(\mathbb{S}_{a},\|\cdot\|_{a},\tau/(2(R_{d}+1))) $, and consider $(r_i)_{i \in [N_{1}]}$ a covering of $I$ with respect to $|\cdot|$ at scale $\tau/2$ and $(s_i)_{i \in [N_{2}]}$ a covering of $\mathbb{S}_{a}$ with respect to $\|\cdot\|_{a}$ at scale $\tau/(2(R_{d}+1))$. We show that the family $(r_i \cdot s_j)_{(i,j) \in [N_{1}] \times [N_{2}]}$ is a covering of $\Theta_{d}$ with respect to the metric $\Delta_{d}$ at scale $\tau$.
For this, consider an arbitrary $x \in \Theta_{d} $ (recall the definition~\eqref{eq:bound_gammma_d}) and define $r := \|x\|_{a}$ and $n := x/r$. By definition of $R_{d}$ and $I$ we have $r \in I$, and $\|n\|_{a} \in \mathbb{S}_{a}$, hence there are $i \in [N_{1}]$, $j \in [N_{2}]$ such that $|r - r_{i}| \leq \tau/2$ and $\|n - s_{j}\|_{a} \leq \tau/(2(R_{d}+1))$.
To reach the conclusion we show that $\Delta_{d}(x,r_{i}s_{j}) \leq \tau$. Indeed
\begin{align*}
\Delta_{d}(x,r_{i}s_{j}) &= \Delta_{d}(rn,r_{i}s_{j})  \stackrel{\eqref{eq:DefMetricD}}{=} 
 \|rn-r_{i}s_{j}\|_{a} + \|n - s_{j}\|_{a} 
  \leq \|rn- r_{i}n \|_{a} + \| r_{i}n - r_{i}s_{j}\|_{a} + \|n - s_{j}\|_{a} \\
& \leq |r-r_i|\|n\|_{a} + (r_i+1)\| n - s_{j}\|_{a} 
 \leq \tau/2 + (R_{d}+1) \tau/(2(R_{d}+1)) \leq \tau.
\end{align*}

\paragraph{An upper bound of $\mathcal{N}_{\mathrm{mm}}(\tau)$.}
By definition of $R_{\mathrm{mm}}$ we have $\Theta_{\mathrm{mm}} \subset R_{\mathrm{mm}}\cdot \mathbb{B}_{a}$. By  \citep[Lemma A.1]{GrBlKeTr20} \citep[Lemma 5.7]{Wain19}  and \eqref{eq:DefMetricMM} we have
\begin{align*}
\mathcal{N}_{\mathrm{mm}}(\tau) 
:= \mathcal{N}(\Theta_{\mathrm{mm}}, \|\cdot\|_{a}, \tau)
& \leq \mathcal{N}\big(R_{\mathrm{mm}} \cdot \mathbb{B}_{a}, \|\cdot\|_{a}, \tau/2 \big)
= \mathcal{N}\big(\mathbb{B}_{a}, \|\cdot\|_{a}, \tau/(2R_{\mathrm{mm}}) \big)\\
&\leq \big(1+4R_{\mathrm{mm}}/\tau\big)^d
\leq \big(1+4D/\tau\big)^{d}.
\end{align*}
\paragraph{An upper bound of $\mathcal{N}_{\mathrm{md}}(\tau)$.}
By~\eqref{eq:bound_gammma_md}, we have  $\Theta_{\mathrm{md}} \subset \Theta_{d} \times \Theta_{\mathrm{mm}}$. Thus by \citep[Lemma A.1]{GrBlKeTr20}, we have
\begin{equation*}
\mathcal{N}_{\mathrm{md}}(\tau) = \mathcal{N}(\Theta_{\mathrm{md}}, \Delta_{\mathrm{md}}, \tau) \leq \mathcal{N}(\Theta_{d} \times \Theta_{\mathrm{mm}}, \Delta_{\mathrm{md}}, \tau/2).
\end{equation*}
Now, given the definition~\eqref{eq:DefMetricMD} of $\Delta_{\mathrm{md}}$
we have
\begin{equation*}
\mathcal{N}_{\mathrm{md}}(\tau) \leq  \mathcal{N}(\Theta_{d} \times \Theta_{\mathrm{mm}}, \Delta_{\mathrm{md}}, \tau/2) 
 \leq \mathcal{N}(\Theta_{d} , \Delta_{d}, \tau/4) \mathcal{N}( \Theta_{\mathrm{mm}}, \Delta_{\mathrm{mm}}, \tau/4).
\end{equation*}
Indeed, with $(x_i)_{i \in [\mathcal{N}_{d}( \tau/4)]}$ a covering of $\Theta_{d}$ with respect to $\Delta_{d}$ at scale $\tau/4$ and $(y_j)_{j \in [\mathcal{N}_{\mathrm{mm}}( \tau/4)]}$ a covering of $\Theta_{\mathrm{mm}}$ with respect to $\Delta_{\mathrm{mm}}$ at scale $\tau/4$ it is straightforward to show using~\eqref{eq:DefMetricMD}  that $(x_i,y_j)_{(i,j) \in [\mathcal{N}_{d}( \tau/4)] \times [\mathcal{N}_{\mathrm{mm}}( \tau/4)]}$ covers $\Theta_{d} \times \Theta_{\mathrm{mm}}$ with respect to $\Delta_{\mathrm{md}}$ at scale $\tau/2$. Combined with the above estimates we get
\[
\mathcal{N}_{\mathrm{md}}(\tau) 
\leq \mathcal{N}_{d}(\tau/4) \mathcal{N}_{\mathrm{mm}}(\tau/4)
\leq \big(1+32(D+1)/\tau \big)^{d+1} \big(1+16D/\tau\big)^d. 
\]
\paragraph{An upper bound of $\mathcal{N}_{\mathrm{dd}}(\tau)$}
By~\eqref{eq:bound_gammma_dd}, we have $\Theta_{\mathrm{dd}} \subset \Theta_{d} \times \Theta_{d} \times \Theta_{\mathrm{mm}}$, thus a similar argument yields
\begin{align*}
\mathcal{N}_{\mathrm{dd}}(\tau)  \leq 
 \mathcal{N}(\Theta_{d} \times \Theta_{d} \times \Theta_{\mathrm{mm}}, \Delta_{\mathrm{dd}}, \tau/2) 
& \stackrel{\eqref{eq:DefMetricDD}}{\leq} 
\mathcal{N}(\Theta_{d}, \Delta_{d},\tau/8)^{2} \mathcal{N}(\Theta_{\mathrm{mm}}, \Delta_{\mathrm{mm}},\tau/4) \nonumber\\
& = \mathcal{N}_{d}(\tau/8)\mathcal{N}_{d}(\tau/8)\mathcal{N}_{\mathrm{mm}}(\tau/4) \\
& \leq \big(1+64(D+1)/\tau\big)^{2(d+1)}  \big(1+16D/\tau\big)^d\\
& \leq  \big(1+64(D+1)/\tau\big)^{3d+2} . 
\end{align*}
To conclude, observe that the last bound dominates all the previous ones.
 \subsection{Proof of Theorem~\ref{thm:RIP_without_weights}}\label{sec:proof_RIP_without_weights}
Since the average marginal density of the $\omega_{j}$'s satisfies~\eqref{eq:ImportanceSamplingStructured}, we have
\begin{align}
\mathbb{E} \Psi_{\mathrm{m}}(\bm{\Omega})&=1\label{eq:ex_psi_m_mu}\\
\mathbb{E} \Psi_{\mathrm{d}}(z|\bm{\Omega})&=1,\quad \forall z \in \Theta_{\mathrm{d}}\label{eq:ex_psi_d_mu}\\
|\mathbb{E} \Psi_{\mathrm{\ell}}(z|\bm{\Omega})| &\leq \mu\quad \forall \ell \in \{\mathrm{mm},\mathrm{md},\mathrm{dd}\},\ \forall z \in \Theta_{\ell}.\label{eq:ex_psi_ell_mu}
\end{align}
The proof of~\eqref{eq:ex_psi_ell_mu} (\eqref{eq:ex_psi_m_mu} and~\eqref{eq:ex_psi_d_mu} are obtained similarly) is postponed to the end of this section. By~\eqref{eq:ex_psi_ell_mu} we have
\begin{equation*}
|\Psi_{\mathrm{\ell}}(z|\bm{\Omega})| \leq |\Psi_{\mathrm{\ell}}(z|\bm{\Omega}) - \mathbb{E} \Psi_{\mathrm{\ell}}(z|\bm{\Omega})|+\mu ,\quad \forall \ell \in \{\mathrm{mm},\mathrm{md},\mathrm{dd}\},\ \forall z \in \Theta_{\ell}
\end{equation*}
hence \eqref{eq:assumption_psi_m_Omega},\eqref{eq:assumption_psi_d_Omega},\eqref{eq:assumption_psi_ell_Omega} yield
\begin{align}
\label{eq:assumption_psi_m_Omega__}
\mathbb{P}\Big(|\Psi_{\mathrm{m}}(\bm{\Omega})-1| > \frac{\tau}{4} \Big) & \leq 2\exp\Big(-\frac{m}{v}\Big)\, ,\\
\label{eq:assumption_psi_d_Omega__}
\forall z \in \Theta_{\mathrm{d}}, \:\:  \mathbb{P}\Big(|\Psi_{\mathrm{d}}(z|\bm{\Omega})-1| > \frac{\tau}{8} \Big) 
&\leq 2\exp\Big(-\frac{m}{v}\Big)\, ,\\
\label{eq:assumption_psi_ell_Omega__}
\forall \ell \in \{\mathrm{mm,md,dd}\}, \:\: \forall z \in \Theta_{\ell}, \:\:  
\mathbb{P}\Big(|\Psi_{\ell}(z|\bm{\Omega})| > \mu+ \frac{\tau}{16k}\Big) 
& \leq 2\exp\Big(-\frac{m}{v}\Big)\, .
\end{align}
Now, \rvs{as mentioned in \Cref{sec:kernel_coherence} (after \Cref{eq:DefCoherenceSkOp}),} the mutual incoherence assumption implies that $\kappa$ is locally characteristic with respect to $\mathcal{T}$ and that its $2k$-coherence is bounded by $c = (2k-1)\mu$. Since $\kappa \geq 0$, all assumptions of~\Cref{prop:generic_upper_bound_worst_sketching_error,thm:upper_bound_sup_monopoles_dipoles,thm:parametrization_functions_cross_dipoles}  and~\Cref{thm:delta_bounds} thus hold.
For each $\ell \in \{\mathrm{d,mm,md,dd}\}$ consider a covering $(z_i^{\ell})_{i \in [\mathcal{N}_{\ell}(\tau')]}$ of $\Theta_{\ell}$ with respect to $\Delta_{\ell}$ at scale 
\begin{equation}\label{eq:IntermediateScaleMainThmPf}
\tau' := \frac{\tau}{96 Mk \cdot C_{\kappa}}.
\end{equation}
By~\Cref{thm:delta_bounds}, if
\begin{align}
\label{eq:MainThmObjective1}
\Psi_{0}(\bm{\Omega}) &\leq M\, ,\\
\label{eq:MainThmObjective2}
\max_{i \in [\mathcal{N}_{\mathrm{d}}(\tau')]} |\Psi_{\mathrm{d}}(z_{i}^{\mathrm{d}}|\bm{\Omega})-1| &\leq \tau/8\\
\label{eq:MainThmObjective3}
\max_{\ell \in \{\mathrm{mm,md,dd}\}} \max_{i \in [\mathcal{N}_{\ell}(\tau')]} |\Psi_{\ell}(z_{i}^{\ell}|\bm{\Omega})| &\leq 
\mu+\frac{\tau}{16k}
\end{align}
then by~\eqref{eq:MetricDomination}, for each $\ell \in \{\mathrm{mm,md,dd}\}$ and $z \in \Theta_{\ell}$ there is $i \in [\mathcal{N}_{\ell}(\tau')]$ such that 
\begin{align*}
|\Psi_{\ell}(z|\bm{\Omega})| 
\leq 
|\Psi_{\ell}(z_{i}^{\ell}|\bm{\Omega})|+|\Psi_{\ell}(z|\bm{\Omega})-\Psi_{\ell}(z_{i}^{\ell}|\bm{\Omega})| 
&\leq \mu+\frac{\tau}{16k} + 6 M \cdot C_{\kappa} \cdot \underbrace{\Delta_{\ell}(z,z_{i}^{\ell})}_{\leq \tau'}\\
& \leq \mu+\frac{\tau}{16k} + \frac{\tau}{16k} \leq 
\mu+\frac{\tau}{8k}\, ,
\end{align*}
and similarly for each $z \in \Theta_{\mathrm{d}}$ there is $i \in [\mathcal{N}_{\mathrm{d}}(\tau')]$ such that 
\begin{align*}
|\Psi_{\mathrm{d}}(z|\bm{\Omega})-1| 
\leq 
|\Psi_{\ell}(z_{i}^{\mathrm{d}}|\bm{\Omega})-1|+|\Psi_{\mathrm{d}}(z|\bm{\Omega})-\Psi_{\ell}(z_{i}^{\mathrm{d}}|\bm{\Omega})| 
&\leq \frac{\tau}{8} + \frac{\tau}{8} \leq  \frac{\tau}{4}.
\end{align*}
Thus, when~\eqref{eq:MainThmObjective1}-\eqref{eq:MainThmObjective2}-\eqref{eq:MainThmObjective3} hold we have
\begin{align*}
\sup_{z \in \Theta_{\mathrm{d}}}|\Psi_{\mathrm{d}}(z|\bm{\Omega})-1|  \leq \frac{\tau}{4} \quad\text{and}\quad
\max_{\ell \in \{\mathrm{mm,md,dd}\}} \sup\limits_{z\in \Theta_{\ell}}|\Psi_{\ell}(z|\bm{\Omega})|  \leq 
\mu+\frac{\tau}{8k}.
\end{align*}
If in addition we have 
\begin{align}
\label{eq:MainThmObjective4}
|\Psi_{\mathrm{m}}(\bm{\Omega})-1| & \leq \frac{\tau}{4}
\end{align}
then by~\Cref{thm:parametrization_functions_cross_dipoles}  and the bound~\eqref{eq:MainDeterministicBound}, which follows from~\Cref{prop:generic_upper_bound_worst_sketching_error,thm:upper_bound_sup_monopoles_dipoles}, 
we obtain
\begin{align*}
\max\big(\delta(\mathfrak{M}|\mathcal{A}),\delta(\hat{\mathfrak{D}}|\mathcal{A})\big) &\leq \tau/4,\\
(2k-1) \max\big(\mu(\mathfrak{M}_{\neq}^2| \mathcal{A}),\mu(\hat{\mathfrak{D}}_{\neq}^2| \mathcal{A}),\mu(\mathfrak{M}\times\hat{\mathfrak{D}}_{\neq}| \mathcal{A}))\big) & \leq (2k-1)\mu+\tau/4 = c+\tau/4,\\
 \delta(\mathcal{S}_{k}|\mathcal{A}) & \leq \frac{1}{1-c}\big(c+\frac{\tau}{4} + 3(c+\frac{\tau}{4})\big) = \frac{4c+\tau}{1-c}.
\end{align*}
Observe that, since $0<\tau<1-5c$, we have $4c+\tau < 1-c$ hence $(4c+\tau)/(1-c) < 1$. 
 
To conclude, we bound the probability $p$ that one of the inequalities~\eqref{eq:MainThmObjective1}-\eqref{eq:MainThmObjective2}-\eqref{eq:MainThmObjective3}-\eqref{eq:MainThmObjective4} fails to hold. 
Using~\eqref{eq:IntermediateScaleMainThmPf}, 
denoting  $D := \operatorname{diam}_{a}(\Theta)$, we have $1+64(D+1)/\tau' = 1+6144 M k (D+1)C_{\kappa}/\tau = 1+C/\tau$.
By a union bound combining~\eqref{eq:MOmega_leq_M}-\eqref{eq:assumption_psi_m_Omega__}-\eqref{eq:assumption_psi_d_Omega__}-\eqref{eq:assumption_psi_ell_Omega__}
and by~\Cref{prop:covering_delta_metric} we obtain
\begin{align*}
p & \leq 2 \exp\Big(-\frac{m}{v}\Big) \cdot \Big(
\frac{\gamma}{2}+1+\sum_{\ell \in \{\mathrm{d,mm,md,dd}\}}\mathcal{N}_{\ell}(\tau')
\Big)
\\
& \leq 2\exp\Big(-\frac{m}{v}\Big) \cdot \Big(\frac{\gamma}{2}+1+4\big(1+64(D+1)/\tau'\big)^{3d+2} \Big)
 \leq 
(\gamma+10) \exp\Big(-\frac{m}{v}\Big) \cdot \big(1+C/\tau\big)^{3d+2}. \qedhere
\end{align*}

{\bf Proof of~\eqref{eq:ex_psi_ell_mu}.}
Let $\ell \in \{\mathrm{mm}, \mathrm{md}, \mathrm{dd} \}$ and let $z \in \Theta_{\ell}$. First, by \Cref{lemma:existence_of_theta_l}, there exists $(\iota,\iota') \in \mathfrak{D}_{\neq}^{2}$  and $\xi \in \{-1,1\}$
such that $\Psi_{\ell}(z|\bm{\Omega}) = \xi \mathfrak{Re}\langle \mathcal{A}\iota, \mathcal{A}\iota'\rangle$ for any choice of $\bm{\Omega}$ (and of the corresponding sketching operator $\mathcal{A}=\mathcal{A}_{\bm{\Omega}}$). In the following, we show that $ \mathbb{E}_{\bm{\Omega}} \mathfrak{Re} \langle \mathcal{A}\iota, \mathcal{A}\iota'\rangle = \langle \iota, \iota' \rangle_{\kappa}$, so that, by the definition~\eqref{eq:DefMutualCoherenceKernel} of the mutual coherence with respect to $\mathcal{T}$, we get
$|\mathbb{E}_{\bm{\Omega}}\Psi_{\ell}(z|\bm{\Omega})| = |\langle \iota, \iota' \rangle_{\kappa}| \leq \mu$. \rvs{ Since $\langle \iota, \iota' \rangle_{\kappa} \in \mathbb{R}$, it is enough to prove that $ \mathbb{E}_{\bm{\Omega}} \langle \mathcal{A}\iota, \mathcal{A}\iota'\rangle = \langle \iota, \iota' \rangle_{\kappa}$. Now,} remember that $ \langle \mathcal{A}\iota, \mathcal{A}\iota'\rangle = \sum_{j=1}^{m} \langle  \phi_{\omega_j}, \iota \rangle \overline{\langle \phi_{\omega_j}, \iota' \rangle}/m$, so that
\begin{align*}
\mathbb{E}_{\bm{\Omega}}\langle \mathcal{A}\iota, \mathcal{A}\iota'\rangle = \frac{1}{m}\sum\limits_{j=1}^{m} \mathbb{E}_{\bm{\Omega}} \langle  \phi_{\omega_j}, \iota \rangle \overline{\langle \phi_{\omega_j}, \iota' \rangle }  & = \frac{1}{m}\sum\limits_{j=1}^{m} \int_{\mathbb{R}^{d}} \Lambda_{j}(\omega)\langle  \phi_{\omega}, \iota \rangle \overline{\langle \phi_{\omega}, \iota' \rangle }  \mathrm{d}\omega \\
& = \int_{\mathbb{R}^{d}}\frac{1}{m}\sum\limits_{j=1}^{m}  \Lambda_{j}(\omega)\langle  \phi_{\omega}, \iota \rangle \overline{\langle \phi_{\omega}, \iota' \rangle } \mathrm{d}\omega \\
& \stackrel{\eqref{eq:ImportanceSamplingStructured}}{=}  \int_{\mathbb{R}^{d}} w(\omega)^2 \hat{\kappa}(\omega) \langle  \phi_{\omega}, \iota \rangle \overline{\langle \phi_{\omega}, \iota' \rangle } \mathrm{d}\omega  = \langle \iota, \iota' \rangle_{\kappa}.
\end{align*}

\subsection{Proof of Theorem~\ref{thm:RIP_without_weights_random_bis} and its corollaries}\label{sec:proof_strong_result}
In this section, we prove Theorem~\ref{thm:RIP_without_weights_random_bis}, Corollary~\ref{cor:bounds_gaussian} and~Corollary~\ref{cor:bounds_dirac}. We start by establishing \Cref{lem:BoundFellOmegaP},  and a few lemmas to deal with sub-exponential random variables.

\subsubsection{Proof of~\Cref{lem:BoundFellOmegaP}}\label{proof:BoundFellOmegaP}
{\bf The case $\ell=\mathrm{d}$.} With $x= z \in \Theta_{d}$, $x' := x/\|x\|_{a}$ we have $\|x'\|_{a}=1$. As $|\sin t| \leq |t|$ for all $t$
\begin{align*}
|f_{\mathrm{d}}(z|\omega)| 
\stackrel{\eqref{eq:DefPsiD}}{=} 2\frac{\sin^2(\langle \omega,x \rangle/2)}{1-\tilde{\kappa}(\|x\|_{a})}
& \stackrel{\eqref{eq:DefAlphaFn}}{=} 2\alpha^{2}(\|x\|_{a})\frac{\sin^2(\langle \omega,x\rangle/2)}{\|x\|_{a}^2} 
\stackrel{\eqref{eq:conditions_on_alpha}}{\leq} \frac{1}{2} C_{\kappa}\langle \omega,x'\rangle^2  \leq C_{\kappa} |\langle \omega,x'\rangle|^2. 
\end{align*}
This establishes~\eqref{eq:PfThm5MainStep} with $p=p_{\mathrm{d}}=2$.\\
{\bf The case $\ell = \mathrm{mm}$.}
Denote $y = z \in \Theta_{\mathrm{mm}}$. We have $|f_{\mathrm{mm}}(z|\omega)| \stackrel{\eqref{eq:DefPsiMM}}{=} |\cos(\langle \omega, y \rangle)| \leq 1$.
This establishes~\eqref{eq:PfThm5MainStep} with $p=p_{\mathrm{mm}}=0$.\\
{\bf The case $\ell = \mathrm{md}$.}
With $(x,y) = z \in \Theta_{\mathrm{md}}$ and $x' := x/\|x\|_{a}$ we have $\|x'\|_{a}=1$ and
\begin{align}
| f_{\mathrm{md}}(z|\omega)| & \stackrel{\eqref{eq:DefPsiMD}}{=} 
2 \Bigg|\frac{\sin\big(\langle \omega, x/2 \rangle \big)\sin\big( \langle \omega, y
+x/2 \rangle \big)}{ \sqrt{2(1-\bar{\kappa}(x))}}\Bigg| 
\stackrel{\eqref{eq:DefAlphaFn}}{\leq}
\sqrt{2} \alpha(\|x\|_{a}) \Bigg|\frac{\sin\big(\langle \omega, x/2 \rangle \big)}{ \|x\|_{a} }\Bigg| \nonumber\\
&  \stackrel{\eqref{eq:conditions_on_alpha}}{\leq}  
\sqrt{2}\sqrt{C_{\kappa}}\bigg|\Big\langle \omega,\frac{  x   }{2 \|x\|_{a}  } \Big\rangle\bigg|
\leq \sqrt{C_{\kappa}} |\langle \omega,x' \rangle|. \nonumber
\end{align}
This establishes~\eqref{eq:PfThm5MainStep} with $p=p_{\mathrm{md}}=1$.\\
{\bf The case $\ell = \mathrm{dd}$.}
With $(x_1,x_2,y) = z \in \Theta_{\mathrm{dd}}$ and $x'_{i} = x_{i}/\|x_{i}\|_{a}$ we have $\|x'_{i}\|_{a}=1$ and
\begin{align*}
|f_{\mathrm{dd}}(z|\omega)| & \stackrel{\eqref{eq:DefPsiDD}}{=} 4\Bigg|\frac{\sin\big(\langle \omega, x_1/2 \rangle \big)\sin\big(\langle\omega, x_2/2\rangle \big)\cos\big( \langle \omega, y+x_2/2-x_1/2 \rangle \big)}{ \sqrt{2(1-\bar{\kappa}(x_1))}\sqrt{2(1-\bar{\kappa}(x_2))}}\Bigg| \nonumber \\
& \stackrel{\eqref{eq:DefAlphaFn}}{\leq}
 2 \alpha(\|x_1\|_{a})\alpha(\|x_2\|_{a})  \Bigg|\frac{\sin\big(\langle \omega, x_1/2 \rangle \big)\sin\big(\langle\omega, x_2/2\rangle \big)}{ \|x_1\|_{a} \|x_2\|_{a}}\Bigg| \nonumber\\
& \stackrel{\eqref{eq:conditions_on_alpha}}{\leq} \frac{C_{\kappa}}{2} \big| \langle \omega, x_1/\|x_1\|_{a} \rangle \langle \omega, x_2/\|x_2\|_{a} \rangle \big|   
= \frac{C_{\kappa}}{2}  \big| \langle \omega, x'_1 \rangle \langle \omega, x'_2 \rangle \big|   
 \leq \frac{C_{\kappa}}{4} \Big(\langle \omega,x'_1\rangle^2+\langle \omega,x'_2 \rangle^2\Big).
\end{align*}

\subsubsection{Some properties of sub-exponential random variables}
\label{sec:proofsubexps}

\begin{proof}[Proof of~\Cref{lemma:sub_exp_2}]
Denote $E := 2 \mathbb{E}Y$. If $E=0$ then $Y=X=0$ almost surely, hence $X$ (and $Y$) are both sub-exp($\nu',\beta'$) for any choice of $\nu',\beta' \geq 0$ so the result is trivial. Assume now that $E>0$. 
Since $|X-\mathbb{E}X| \leq |X|+|\mathbb{E}X| \leq Y+\mathbb{E}Y$ almost surely, we get \begin{equation*}
\mathbb{E}e^{\lambda(X-\mathbb{E}X)} = \sum_{q=0}^{+\infty}\frac{\lambda^{q}}{q!}\mathbb{E}(X-\mathbb{E}X)^{q} \leq 1+\sum_{q=2}^{+\infty}\frac{|\lambda|^{q}}{q!}\mathbb{E}|X-\mathbb{E}X|^{q} \leq 1+\sum_{q=2}^{+\infty}\frac{|\lambda|^{q}}{q!}\mathbb{E}(Y+\mathbb{E}Y)^{q}.
\end{equation*}
Using the binomial formula this yields
\begin{align*}
\mathbb{E}e^{\lambda(X-\mathbb{E}X)}  
& \leq 1+\sum_{q=2}^{+\infty}\frac{|\lambda|^{q}}{q!}\sum\limits_{j=0}^{q}{q \choose j} \mathbb{E}(Y- \mathbb{E}Y)^{j}E^{q-j}\\
& = 1 + \sum_{j=0}^{\infty} \frac{|\lambda|^{j}}{j!} \mathbb{E}(Y- \mathbb{E}Y)^{j} \cdot \sum_{q \geq \max(j,2)} 
\frac{|\lambda|^{q-j} j!}{q!}{q \choose j} E^{q-j}.
\end{align*}
Observe that $\mathbb{E}(Y- \mathbb{E}Y)^{0} = 1$, $\mathbb{E}(Y- \mathbb{E}Y)^{1} = 0$, and 
\begin{align*}
\sum_{q \geq \max(j,2)} 
\frac{|\lambda|^{q-j} j!}{q!}{q \choose j} E^{q-j} & =
\sum_{q \geq \max(j,2)} 
\frac{|\lambda|^{q-j}}{(q-j)!} E^{q-j} = 
\begin{cases}
\sum_{k \geq 0} \frac{(|\lambda| E)^{k}}{k!} = e^{|\lambda| E}, & j \geq 2\\
\sum_{k \geq 2} \frac{(|\lambda| E)^{k}}{k!} = e^{|\lambda| E}-1-|\lambda|E, & j =0.
\end{cases}
\end{align*}
Since $Y$ is sub-exp($\nu,\beta$), when $|\lambda| \leq 1/\beta$ we obtain (using again that $\mathbb{E}(Y-\mathbb{E}Y)^{j} = 0$ for $j=1$)
\begin{align*}
\mathbb{E}e^{\lambda(X-\mathbb{E}X)}  & \leq 
1 + 
(e^{|\lambda|E}-1-|\lambda|E) + 
\sum_{j=2}^{\infty} \frac{|\lambda|^{j}}{j!} \mathbb{E}(Y- \mathbb{E}Y)^{j} e^{|\lambda| E}\nonumber\\
& =  \Big(1+\sum_{j=2}^{\infty} \frac{|\lambda|^{j}}{j!} \mathbb{E}(Y- \mathbb{E}Y)^{j}\Big)e^{|\lambda| E}-|\lambda|E
= \mathbb{E} e^{|\lambda|(Y-\mathbb{E}Y)}e^{|\lambda|E}  -|\lambda|E \nonumber\\
& \leq e^{\nu^{2}\lambda^{2}/2 + |\lambda|E} -|\lambda|E.
\end{align*}
To conclude we use a technical lemma which proof is postponed to  \Cref{sec:subexpas}.
\begin{lemma}\label{lemma:very_useful_bound_on_exp}
For $\alpha >0$, we have
\begin{equation*}\label{eq:ineq_exp_alpha}
\forall t \geq 0, \:\: e^{\alpha t^2/2+t}-t\leq e^{(\alpha+2) t^2}.
\end{equation*}
\end{lemma}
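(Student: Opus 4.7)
The plan is to rearrange the inequality into a more symmetric form and then reduce it to two applications of the universal bound $e^{-y} \geq 1-y$. Writing the claim as
\[
t + e^{(\alpha+2)t^2} \geq e^{\alpha t^2/2 + t},
\]
dividing by $e^{\alpha t^2/2+t}>0$ yields the equivalent statement
\[
e^{u(t)} + t e^{-\alpha t^2/2 - t} \geq 1, \qquad u(t) := (\alpha/2+2)t^2 - t.
\]
Setting $t_0 := 2/(\alpha+4)$, one checks that $u(t) \geq 0$ exactly when $t=0$ or $t \geq t_0$; on that regime the first term alone is already $\geq 1$, so the inequality is trivial (this handles the case $t=0$ with equality as well as all large $t$).

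The main step is therefore the range $0 < t < t_0$, where $u(t) = -s$ with $s := t - (\alpha/2+2)t^2 > 0$. The universal estimate $e^{-s} \geq 1-s$ reduces the goal to
\[
t\, e^{-\alpha t^2/2 - t} \;\geq\; s \;=\; t\bigl(1-(\alpha/2+2)t\bigr),
\]
i.e., after dividing by $t>0$, to
\[
e^{-\alpha t^2/2 - t} \;\geq\; 1 - (\alpha/2+2)t.
\]
Applying $e^{-y} \geq 1-y$ once more with $y = \alpha t^2/2 + t$ gives $e^{-\alpha t^2/2 - t} \geq 1 - \alpha t^2/2 - t$, so it suffices to check the purely algebraic inequality
\[
1 - \alpha t^2/2 - t \;\geq\; 1 - (\alpha/2+2)t,
\]
which simplifies to $(\alpha/2+1)t \geq \alpha t^2/2$, i.e., $t \leq 1 + 2/\alpha$. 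Since $t < t_0 = 2/(\alpha+4) < 1/2 < 1 + 2/\alpha$ for every $\alpha>0$, this is automatic, and the lemma follows.

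The only subtle point is noticing that one should introduce the threshold $t_0 = 2/(\alpha+4)$ and split cases accordingly; once that split is made, everything else collapses to two invocations of $e^{-y}\geq 1-y$ plus an elementary comparison. The verification that $t_0 < 1+2/\alpha$ (which makes the final algebraic step valid on the whole sub-threshold interval) is immediate from $\alpha>0$ and should pose no real obstacle.
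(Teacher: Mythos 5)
Your proof is correct, and it takes a genuinely different and more elementary route than the paper's. The paper normalizes by $e^{(\alpha+2)t^2}$ to get a function $\varphi(t)=e^{-(c-\alpha)t^2/2+t}-te^{-ct^2/2}$ with $\varphi(0)=1$, then proves monotonicity of $\varphi$ on $\mathbb{R}_+$ by a fairly involved sign analysis of $\varphi'$, split over two intervals and relying on auxiliary facts such as $(1-2x)^2\leq(1-x)^3$ for $x\in(0,1/2]$ and a separate maximization of $u\mapsto ue^{-\alpha u^2/2-u}$. You instead normalize by $e^{\alpha t^2/2+t}$, isolate the threshold $t_0=2/(\alpha+4)$ at which the first term alone suffices, and below that threshold apply the single universal bound $e^{-y}\geq 1-y$ twice, reducing everything to the linear-in-$t$ comparison $t\leq 1+2/\alpha$, which holds a fortiori since $t_0<1/2$. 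Both arguments are complete, but yours replaces a calculus/monotonicity proof with a short chain of algebraic inequalities and no derivative computation at all, which is both shorter and easier to verify; it does rely on the (correct) observation that the reduction via $e^{-y}\geq 1-y$ goes in the right direction at each of the two applications, which you have stated carefully. The only notational caveat is that your threshold $t_0=2/(\alpha+4)$ differs by a factor of two from the paper's break point $1/(c-\alpha)=1/(\alpha+4)$, but that is just a consequence of normalizing by a different exponential and causes no issue.
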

Applying the lemma with $\alpha = (\nu/E)^{2}$,  $t = |\lambda|E$, we obtain 
\begin{equation*}
|\lambda| \leq 1/\beta\ \Longrightarrow
\mathbb{E}e^{\lambda(X-\mathbb{E}X)}  \leq e^{\frac{\nu^{2}\lambda^{2}}{2} + |\lambda|E} -|\lambda|E
= e^{\frac{\alpha t^{2}}{2}+t}-t \leq e^{(\alpha+2)t^{2}} = e^{\frac{2(\nu^{2}+2E^{2})\lambda^{2}}{2}}.
\label{eq:TowardsSubExpXDominatedByY0}
\end{equation*}
This shows that $X$ is sub-exp($\nu',\beta$) with $(\nu')^{2} = 2(\nu^{2}+2E^{2})=2\nu^{2}+16(\mathbb{E}Y)^{2}$. 
\end{proof}

\begin{lemma}\label{lemma:sub_exp_example}
If $X \sim \mathcal{N}(0,1)$ then $X^{2}$ is sub-exp($2,4$) and $|X|$ is sub-exp($4,4$).
\end{lemma}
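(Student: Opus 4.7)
For the first claim, my plan is to exploit the fact that $X^{2}$ has an explicit moment generating function: since $X \sim \mathcal{N}(0,1)$, we have $\mathbb{E}X^{2} = 1$ and $\mathbb{E}e^{\lambda X^{2}} = (1-2\lambda)^{-1/2}$ for $\lambda < 1/2$ (the MGF of a $\chi^{2}_{1}$ variable). Consequently, for $|\lambda|\leq 1/4$,
\[
\ln \mathbb{E} e^{\lambda (X^{2}-\mathbb{E}X^{2})}
= -\lambda -\tfrac{1}{2}\ln(1-2\lambda).
\]
Expanding $-\tfrac{1}{2}\ln(1-2\lambda) = \sum_{k\geq 1}(2\lambda)^{k}/(2k)$ (valid since $|2\lambda|\leq 1/2<1$) and cancelling the $k=1$ term, I would reduce the bound to
\[
\Bigl|\sum_{k \geq 2} \tfrac{(2\lambda)^{k}}{2k}\Bigr|
\leq \tfrac{1}{4}\sum_{k\geq 2}|2\lambda|^{k}
= \tfrac{|2\lambda|^{2}}{4(1-|2\lambda|)}
\leq 2\lambda^{2},
\]
where I used $1/(2k)\leq 1/4$ for $k\geq 2$ and $1-|2\lambda|\geq 1/2$ for $|\lambda|\leq 1/4$. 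Comparing with Definition~\ref{eq:DefSubExp} this shows that $X^{2}$ is sub-exp($2,4$).

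For the second claim, my plan is to leverage a standard concentration inequality for $1$-Lipschitz functions of a standard Gaussian. Since $x\mapsto |x|$ is $1$-Lipschitz and $X\sim\mathcal{N}(0,1)$, the Gaussian (Tsirelson--Ibragimov--Sudakov / Borell) concentration inequality yields
\[
\mathbb{E}\,e^{\lambda(|X|-\mathbb{E}|X|)} \leq e^{\lambda^{2}/2}\quad \text{for every } \lambda\in \mathbb{R},
\]
i.e.\ $|X|$ is sub-Gaussian with parameter $1$. Restricting to $|\lambda|\leq 1/4$ gives $\lambda^{2}/2 \leq 8\lambda^{2} = \lambda^{2}\cdot 4^{2}/2$, so $|X|$ is a fortiori sub-exp($4,4$). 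Alternatively, if one prefers an elementary route avoiding Gaussian concentration, I would instead bound $\mathbb{E}e^{\lambda|X|}\leq \mathbb{E}e^{\lambda X}+\mathbb{E}e^{-\lambda X}=2 e^{\lambda^{2}/2}$ and use $\mathbb{E}|X|=\sqrt{2/\pi}$ together with the inequality $\ln 2 - \lambda\sqrt{2/\pi}\leq (1/2)\cdot(\lambda^{2}\cdot 4^{2}/2 - \lambda^{2}/2)$ on the range $|\lambda|\leq 1/4$, which is a direct numerical check.

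Neither step presents a genuine obstacle: the first is a clean Taylor-series estimate on the $\chi^{2}_{1}$ log-MGF, and the second is an immediate corollary of Gaussian Lipschitz concentration (or a one-line MGF comparison). The only point requiring mild care is verifying that the geometric bound in the first part matches the constants claimed in the statement, which is why I would write out the range $|\lambda|\leq 1/4$ explicitly and check the factor $2$ on the right-hand side.
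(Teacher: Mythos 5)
For the first claim ($X^{2}$ sub-exp$(2,4)$), your argument is essentially the same as the paper's: both start from the explicit $\chi^{2}_{1}$ MGF $\mathbb{E}e^{\lambda X^{2}}=(1-2\lambda)^{-1/2}$, and the only difference is how the resulting inequality $e^{-\lambda}/\sqrt{1-2\lambda}\leq e^{2\lambda^{2}}$ on $|\lambda|\leq 1/4$ is verified — you use a Taylor-series bound on the log-MGF while the paper just asserts the inequality. Your geometric-series estimate is correct, so this part is fine.

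For the second claim, your \emph{primary} route — $|X|$ is $1$-Lipschitz in $X$, so by Gaussian (Tsirelson--Ibragimov--Sudakov / Borell) concentration $\mathbb{E}e^{\lambda(|X|-\mathbb{E}|X|)}\leq e^{\lambda^{2}/2}$ for all $\lambda$, which is a fortiori the sub-exp$(4,4)$ bound on $|\lambda|\leq 1/4$ — is correct and genuinely different from the paper's. The paper instead stays elementary: it dominates $|X|\leq X^{2}+1/4$ a.s. and invokes Lemma~\ref{lemma:sub_exp_2} (its sub-exponential-domination lemma), paying a looser constant for self-containedness. Your route is cleaner and in fact proves the strictly stronger statement that $|X|$ is sub-Gaussian with parameter $1$; the paper's route is weaker but fits naturally into the lemma infrastructure it has already set up (Lemma~\ref{lemma:sub_exp_2} is reused elsewhere), which is presumably why the authors chose it.

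Your \emph{alternative} "elementary" route, however, has a real gap. You propose to use $\mathbb{E}e^{\lambda|X|}\leq 2e^{\lambda^{2}/2}$ together with the claimed inequality $\ln 2-\lambda\sqrt{2/\pi}\leq \tfrac{1}{2}(16\lambda^{2}/2-\lambda^{2}/2)=\tfrac{15}{4}\lambda^{2}$ on $|\lambda|\leq 1/4$. This is false: at $\lambda=0$ the left side is $\ln 2\approx 0.69$ and the right side is $0$, and in fact it fails for every $|\lambda|\leq 1/4$ (at $\lambda=1/4$, $\ln 2-\tfrac14\sqrt{2/\pi}\approx 0.49$ while $\tfrac{15}{4}\cdot\tfrac{1}{16}\approx 0.23$). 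The underlying problem is structural: the sub-exp definition requires $\mathbb{E}e^{\lambda(X-\mathbb{E}X)}\leq e^{\lambda^{2}\nu^{2}/2}$, which is an equality at $\lambda=0$, so no additive $\ln 2$ slack can be absorbed by a purely quadratic right-hand side in a neighbourhood of $0$. Any "MGF of $|X|$ with a factor $2$" argument must first remove that factor (e.g.\ by centering more carefully or bounding the centered MGF directly), so this route as written does not work. Stick with the Gaussian-concentration argument, or with the paper's domination argument.
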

\begin{proof}
Since $Y := X^{2}$ follows the chi-squared distribution of one degree of freedom, by~\citep{EvHaPeFo11} we have
\begin{equation*}
\mathbb{E}e^{\lambda Y} = \frac{1}{\sqrt{1-2\lambda}}, \:\: \forall |\lambda|<\frac{1}{2}.
\end{equation*}
In particular, since $\mathbb{E}Y = 1$, then 
\begin{equation*}
\mathbb{E}e^{\lambda (Y-\mathbb{E}Y)} = \mathbb{E}e^{\lambda (Y-1)} = \frac{e^{-\lambda}}{\sqrt{1-2\lambda}} \leq e^{\frac{2^2\lambda^2}{2}} , \:\: \forall |\lambda|\leq\frac{1}{4}.
\end{equation*}
This is the definition of a sub-exp($2,4$) random variable.

Now, considering $Z := |X|$, since $|t| \leq t^2+1/4$ for each $t \in \mathbb{R}$, we have
\begin{equation*}
Z = |X| \leq \frac{1}{4}+X^2. \:\: \text{a.s.}
\end{equation*} 
Since $X^2$ is sub-exp($2,4$), $X^2+1/4$ is also sub-exp($2,4$). As $\mathbb{E}(X^2 +1/4) = 5/4 \leq \sqrt{2}$, by \Cref{lemma:sub_exp_2}, $X$ is sub-exp($\nu,4$), with 
$\nu^{2} := 2 \times 2^2 + 4(5/4)^2 \leq 16$, hence $Z$ is sub-exp($4,4$).

\end{proof}

\begin{lemma}\label{lemma:sub_exp_sum}
Consider $X_{i}$, $i=1,2$ two real-valued random variables (possibly non independent), assumed to be respectively sub-exp($\nu_{i},\beta_{i}$). Then $X_{1}+X_{2}$ is sub-exp($\nu,\beta$) where
\begin{equation*}
\nu:= \nu_1 +\nu_2, \:\: \text{and} \:\:\beta:= \max\Big(\frac{\beta_{1}(\nu_1 + \nu_2)}{\nu_1},\frac{\beta_{2}(\nu_1 + \nu_2)}{\nu_2}\Big).
\end{equation*}
\end{lemma}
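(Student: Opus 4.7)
The plan is to prove the lemma via Hölder's inequality with carefully chosen exponents tuned so that the parameters $\nu_1,\nu_2$ combine additively into $\nu=\nu_1+\nu_2$. Throughout I will assume $\nu_1,\nu_2>0$ (the degenerate case $\nu_i=0$ makes $X_i$ a constant almost surely and reduces trivially). Denote $Y_i := X_i - \mathbb{E}X_i$ so that $\mathbb{E}e^{\lambda Y_i} \leq e^{\lambda^2 \nu_i^2/2}$ for $|\lambda|\leq 1/\beta_i$. The goal is to control $\mathbb{E}e^{\lambda(Y_1+Y_2)}$ for $|\lambda|\leq 1/\beta$ by $e^{\lambda^2(\nu_1+\nu_2)^2/2}$.

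The first step is to apply Hölder's inequality with conjugate exponents
\[
p := \frac{\nu_1+\nu_2}{\nu_1}, \qquad q := \frac{\nu_1+\nu_2}{\nu_2},
\]
which satisfy $1/p+1/q=1$. This yields
\[
\mathbb{E}e^{\lambda(Y_1+Y_2)} \leq \bigl(\mathbb{E}e^{p\lambda Y_1}\bigr)^{1/p}\bigl(\mathbb{E}e^{q\lambda Y_2}\bigr)^{1/q}.
\]
The second step is to invoke the sub-exp$(\nu_i,\beta_i)$ property of each $Y_i$, which is valid provided $|p\lambda|\leq 1/\beta_1$ and $|q\lambda|\leq 1/\beta_2$, i.e., provided
\[
|\lambda| \leq \min\!\left(\frac{1}{p\beta_1},\frac{1}{q\beta_2}\right) = \min\!\left(\frac{\nu_1}{\beta_1(\nu_1+\nu_2)}, \frac{\nu_2}{\beta_2(\nu_1+\nu_2)}\right) = \frac{1}{\beta},
\]
where $\beta$ is exactly as defined in the statement. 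Under this restriction, each factor satisfies $(\mathbb{E}e^{p\lambda Y_1})^{1/p}\leq e^{p\lambda^2\nu_1^2/2}$ and analogously for $Y_2$.

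The third and final step is the algebraic identity that makes the particular choice of $p,q$ work: substituting the chosen values,
\[
p\nu_1^2 + q\nu_2^2 = \nu_1(\nu_1+\nu_2) + \nu_2(\nu_1+\nu_2) = (\nu_1+\nu_2)^2 = \nu^2.
\]
Combining the bounds therefore gives $\mathbb{E}e^{\lambda(Y_1+Y_2)} \leq e^{\lambda^2 \nu^2/2}$ for all $|\lambda|\leq 1/\beta$, which is the required sub-exp$(\nu,\beta)$ estimate for $X_1+X_2$. There is no real obstacle here; the only subtlety is the judicious choice of Hölder exponents, which is essentially forced by the desired additive combination $\nu_1+\nu_2$ and can be discovered by solving $p\nu_1^2+q\nu_2^2=(\nu_1+\nu_2)^2$ jointly with $1/p+1/q=1$. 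Note that independence of $X_1$ and $X_2$ is not needed, which is important for the intended applications to non-i.i.d.\ (block-i.i.d.) structured frequencies.
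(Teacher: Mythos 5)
Your proof is correct and takes essentially the same approach as the paper: Hölder's inequality with the same exponents $p=(\nu_1+\nu_2)/\nu_1$, $q=(\nu_1+\nu_2)/\nu_2$, followed by the algebraic identity $p\nu_1^2+q\nu_2^2=(\nu_1+\nu_2)^2$. You are slightly more careful than the paper's version in explicitly centering the variables and flagging the degenerate case $\nu_i=0$, but the argument is the same.
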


\begin{proof}
Let $p = (\nu_1 + \nu_2)/\nu_1$ and $q = (\nu_1 + \nu_2)/\nu_2$, so that $1/p+1/q = 1$. By H{\"o}lder's inequality and~\Cref{eq:DefSubExp}, if $|\lambda| \leq \min(\frac{1}{p\beta_1},\frac{1}{q\beta_2}) = \min\Big(\frac{\nu_1}{\beta_{1}(\nu_1 + \nu_2)},\frac{\nu_2}{\beta_{2}(\nu_1 + \nu_2)}\Big)$ then
\begin{align*}
\mathbb{E}e^{\lambda(X_{1}+X_{2})} 
& \leq (\mathbb{E}e^{\lambda p X_{1}})^{1/p}(\mathbb{E}e^{\lambda q X_{2}})^{1/q} 
\leq e^{\frac{\lambda^2 p^2\nu_{1}^2}{2p}}e^{\frac{\lambda^2 q^2\nu_{2}^2}{2q}}
 \leq e^{\frac{\lambda^2 (p\nu_{1}^2+q\nu_{2}^2)}{2}}
 = e^{\frac{\lambda^2 (\nu_{1}+\nu_{2})^2}{2}}. 
\end{align*}
\end{proof}

\subsubsection{Proof of \Cref{thm:RIP_without_weights_random_bis}}
\label{proof:RIP_without_weights_random_bis}

The proof of \Cref{thm:RIP_without_weights_random_bis} leverages \Cref{thm:RIP_without_weights}. 
Before exploiting this theorem we check that the basic required assumptions are met: 
i) $\mathcal{T}= (\Theta, \rho, \mathcal{I})$, $\kappa \geq 0$ and $\|\cdot\|_{a}$ are satisfying the assumptions required in~\Cref{thm:delta_bounds}, 
ii) $\kappa$ is assumed to have its mutual coherence with respect to $\mathcal{T}$ bounded by $0<\mu<1/10$, and $k$ satisfies $1 \leq k <\frac{1}{10 \mu}$, 
iii) $w$ is a $\kappa$-compatible weight function and the average marginal density of the $\omega_{j}$'s satisfies~\eqref{eq:ImportanceSamplingStructured},
iv) the assumption \eqref{eq:MOmega_leq_M} holds. 

Now, we move to check the more technical assumptions: \eqref{eq:assumption_psi_m_Omega},\eqref{eq:assumption_psi_d_Omega},\eqref{eq:assumption_psi_ell_Omega} with $v$ defined in~\eqref{eq:variance_structured}. 
For this purpose, we prove that $\Psi_{m}(\bm{\Omega})$ is sub-exp($\nu'/\sqrt{m/b},\beta' / (m/b)$) and for $\ell \in \{ \mathrm{d}, \mathrm{mm},\mathrm{md},\mathrm{dd} \}$, and for $z \in \Theta_{\ell}$, $\Psi_{\ell}(z|\bm{\Omega})$ is sub-exp($\nu'\sqrt{m/b},\beta' /(m/b)$), where we recall that $b$ is the block size, and $\nu',\beta'$ will be specified in due time to derive \eqref{eq:assumption_psi_m_Omega},\eqref{eq:assumption_psi_d_Omega},\eqref{eq:assumption_psi_ell_Omega} using \eqref{eq:ConcentrationSubExp}.

First, consider $\ell \in \{ \mathrm{d}, \mathrm{mm},\mathrm{md},\mathrm{dd} \}$ and $z \in \Theta_{\ell}$, and observe that
\begin{equation*}
\Psi_{\ell}(z|\bm{\Omega}) = \frac{1}{m/b}\sum\limits_{j=1}^{m/b} \underbrace{\frac{1}{b} \sum\limits_{i = 1}^{b}\psi(\omega_{(j-1)b+i})f_{\ell}(z|\omega_{(j-1)b+i})}_{=: X_{j}},
\end{equation*}
where by assumption the random variables $X_{j}$ are independent and identically distributed.
A well-known property of sub-exp random variables is that if $X_{1}, \dots, X_{n}$ are independent sub-exp($\nu,\beta$)  then  $\frac{1}{n}\sum_{j=1}^{n}X_j$ is sub-exp($\nu/\sqrt{n},\beta/n$). Thus, in order to prove that $\Psi_{\ell}(z|\bm{\Omega})$ is sub-exp($\nu'/\sqrt{m/b},\beta' / (m/b)$), it is enough to prove that the random variable 
\[
X := \frac{1}{b} \sum_{i = 1}^{b}\psi(\omega_i)f_{\ell}(z|\omega_{i})
\]
is sub-exp($\nu',\beta'$). 
For this purpose, we make use of \Cref{lem:BoundFellOmegaP,lemma:sub_exp_2}.

Consider arbitrary $x'_{t} \in \mathbb{R}^{d}$, $t \in \{0,1,2\}$, s.t. $\|x'_{t}\|_{a}=1$, and for $t,p \in \{0,1,2\}$ denote, $Z_{t,p} := \frac{1}{b}\sum_{i = 1}^{b}\psi(\omega_i) (\sqrt{C_{\kappa}}|\langle \omega_i,x'_{t}\rangle|)^p$. By assumption, each $Z_{t,p}$
 is sub-exp($\nu,\beta$) with $|\mathbb{E}(Z_{t,p})| \leq B$. 
We distinguish two cases
\begin{itemize}
\item if $\ell \in \{\mathrm{d},\mathrm{mm},\mathrm{md}\}$, since $\psi(.) \geq 0$, the first claim of \Cref{lem:BoundFellOmegaP}
 implies that $|X| \leq Z_{0,p}$ for some choice of $x'_{0}$, hence by \Cref{lemma:sub_exp_2} $X$ is sub-exp($\nu',\beta$),
where
\begin{equation}\label{eq:DefNuPrimeThm5}
\nu':= 
\sqrt{2\nu^{2}+16B^{2}}.
\end{equation}
\item if $\ell = \mathrm{dd}$ by the second claim of \Cref{lem:BoundFellOmegaP} we similarly get the existence of $x'_{t}\in \mathbb{R}^{d}$ satisfying $\|x'_{t}\|_{a}=1$, $t =1,2$, such that  
$|X| \leq Z'$ with $Z' := (Z_{1,2}+Z_{2,2})/4$.
By Lemma~\ref{lemma:sub_exp_sum}, $Z_{1,2}+Z_{2,2}$ is sub-exp($2\nu,2\beta$), and $|\mathbb{E}(Z_{1,2}+Z_{2,2})| \leq 2B$, hence $Z' := (Z_{1,2}+Z_{2,2})/4$
is also sub-exp($\nu,\beta$) with $|\mathbb{E}(Z')| \leq B$. By \Cref{lemma:sub_exp_2} we also get that $X$ is sub-exp($\nu',\beta$) with $\nu'$ as in~\eqref{eq:DefNuPrimeThm5}. 
\end{itemize}
Now, consider $\ell = \mathrm{m}$. Similarly, to prove that $\Psi_{m}(\bm{\Omega})$ is sub-exp($\nu'/\sqrt{m/b},\beta' /(m/b)$), it is enough to prove that $X := \frac{1}{b} \sum_{i = 1}^{b}\psi(\omega_{i})$ is sub-exp($\nu',\beta'$).  Since $\psi(\omega) = Z_{0,0}$ for any choice of $x'_{0}$, this is indeed true with $\nu'$ as in~\eqref{eq:DefNuPrimeThm5} and $\beta'=\beta$. 

Now, we use~\eqref{eq:ConcentrationSubExp} applied to $\Psi_{\mathrm{m}}(\bm{\Omega})$ with $t = \frac{\tau}{4}$, and to $\Psi_{\mathrm{d}}(z|\bm{\Omega})$ with $t = \frac{\tau}{8}$, and to $\Psi_{\ell}(z|\bm{\Omega})$ with $t = \frac{\tau}{16k}$ for $\ell \in \{ \mathrm{d},\mathrm{mm},\mathrm{md},\mathrm{dd}\}$ and $z \in \Theta_{\ell}$, to get 
that the left hand side of~\eqref{eq:assumption_psi_m_Omega},\eqref{eq:assumption_psi_d_Omega},\eqref{eq:assumption_psi_ell_Omega} is bounded from above by
\begin{align*}
\max_{t \in \{\frac{\tau}{4},\frac{\tau}{8},\frac{\tau}{16k}\}}
 2 \exp\left(-\frac{(m/b)t^{2}}{2\nu'^{2}+\beta t}\right)
 =2 \exp\left(-\frac{(m/b)\left(\frac{\tau}{16k}\right)^{2}}{2\nu'^{2}+\beta \left(\frac{\tau}{16k}\right)}\right)
 =2 \exp\left(-\frac{m \tau^{2}}{256bk^{2}(2\nu'^{2}+\beta \left(\frac{\tau}{16k}\right)}\right)
\end{align*}
where we used that $t \mapsto t^{2}/(2(\nu')^{2}+\beta t)$ is an increasing function.
We conclude by observing that, with $v$ as defined in \eqref{eq:variance_structured}, we have
\begin{equation*}
256bk^{2}(2\nu'^{2}+\beta \frac{\tau}{16k}) \leq
256bk^2(2\nu'^{2}+\beta \tau) =   \tau^{2}v,
\end{equation*}

To prove the variant of the theorem, we first reason as above to show that the modified assumptions imply that 
for arbitrary $x'_{t} \in \mathbb{R}^{d}$, $t \in \{0,1,2\}$, s.t. $\|x'_{t}\|_{a}=1$ the random variable
$Y_{p} :=  \frac{1}{b}\sum_{i = 1}^{b}B_{\psi}
 (\sqrt{C_{\kappa}}|\langle \omega,x'_{0}\rangle|)^{p}$  is sub-exp($B_{\psi}\nu,B_{\psi}\beta$) with $|\mathbb{E}Y_{p}| \leq B_{\psi}\beta$, and similarly for $Y' :=  \frac{1}{b}\sum_{i = 1}^{b}B_{\psi} C_{\kappa} (\langle \omega,x'_{1}\rangle^{2}+\langle \omega,x'_{2}\rangle^{2})/4$. Then, using the definition of $B_{\psi}$ we obtain that:
for $\ell = \mathrm{m}$, $X :=  \frac{1}{b}\sum_{i = 1}^{b}\psi(\omega_{i})$ satisfies $|X| \leq Y_{0}$; 
for $\ell \in \{\mathrm{d},\mathrm{mm},\mathrm{md}\}$ and $z \in \Theta_{\ell}$,
$X := \frac{1}{b}\sum_{i = 1}^{b} \psi(\omega_{i})f_{\ell}(z|\omega_{i})$ satisfies $|X| \leq Y_{p}$ for an appropriate choice of $x'_{0}$, $p \in \{0,1,2\}$;
for $\ell = \mathrm{dd}$ and $z \in \Theta_{\ell}$, 
$X :=  \frac{1}{b}\sum_{i = 1}^{b}\psi(\omega_{i})f_{\ell}(z|\omega_{i})$ satisfies $|X| \leq Y'$ for an appropriate choice of $x'_{1},x'_{2}$. The same reasoning as above yields that $X$ is sub-exp($B_{\psi}\nu',B_{\psi}\beta$) with $\nu'$ as in~\eqref{eq:DefNuPrimeThm5}. We conclude similarly once we observe that $256bk^{2}(2B_{\psi}^{2}\nu'^{2}+B_{\psi}\beta \tau) = \tau^2 v'$.

\subsubsection{Proofs of Corollary~\ref{cor:bounds_gaussian} and Corollary~\ref{cor:bounds_dirac}}\label{proof:bounds_gaussian}

\Cref{cor:bounds_gaussian} and \Cref{cor:bounds_dirac} are direct consequences of \Cref{thm:RIP_without_weights_random_bis}. 
To see why, we check that the assumptions of these theorems are met in these settings.

\paragraph{Checking that the assumptions on $\mathcal{T}$ and $\kappa$ hold, and controlling $C_{\kappa}$, $\mu$.}

First, observe that, in the setting of Gaussian (resp. Dirac) mixtures of \Cref{ex:KernelExamplesIncoherent} (resp. of \Cref{ex:DiracMixture}), the kernel satisfies $\kappa \geq 0$, and~\eqref{eq:DefRadialKernel} holds with $\|\cdot\|_{a} = \|\cdot\|_{\bm{\Sigma}}$ (resp. with $\|\cdot\|_{a} = \|\cdot\|_{2}$) and $\tilde{\kappa}(r) = e^{-r^{2}/\sigma^{2}}$
with $\sigma := \sqrt{2(2+s^2)}$ (resp. with $\sigma := \sqrt{2}s$). In both settings we have $\varrho = \|\cdot\|_{a}/\epsilon$, 
and by the definition of $\Theta_{\mathrm{d}}$ (see~\eqref{eq:bound_gammma_d}, with $\|\cdot\| := \|\cdot\|_{a}/\epsilon$), we have $R := \sup_{x \in \Theta_{\mathrm{d}}} \|x\|_{a}\leq \epsilon$. 
Thus, by Lemma~\ref{lemma:sup_alpha_alpha_prime}, the constant $C_{\kappa}$ from~\eqref{eq:conditions_on_alpha} satisfies
$
C_{\kappa} \leq \max\big(1,\sqrt[4]{3}R,\sqrt[4]{3}\sigma \big)^2
\leq \max\big(1 , \sqrt{3}\epsilon^2, \sqrt{3}\sigma^2 \big).
$ 
Now we proceed separately for the two settings.
\begin{itemize}
\item For Gaussian mixtures, as $\sqrt{2+s^2} \leq \epsilon(4\sqrt{\log(ek)})^{-1}$, we get $\sigma = \sqrt{2(2+s^{2})} \leq \epsilon$, and
 by \citep[Theorem 5.16, Lemma 6.10]{GrBlKeTr20} $\kappa$ is locally characteristic with mutual coherence with respect to $\mathcal{T}$ bounded by $\mu \leq 12/(16(10k-1))$.
\item For mixtures of Diracs, since $s \leq \epsilon(4\sqrt{\log(5ek)})^{-1}$, we get $\sigma :=\sqrt{2}s \leq \epsilon$ and the bound on the coherence holds by \citep[Theorem 5.16, Lemma 6.10]{GrBlKeTr20}.
\end{itemize}
In both cases, we get $\mu \leq 12/(16(10k-1))m \leq 1/(10k)$ and $\sigma \leq \epsilon$. The latter implies
\begin{equation}\label{eq:bound_on_C_kappa_final}
C_{\kappa} \leq
\max\big(1 , \sqrt{3}\epsilon^2, \sqrt{3}\sigma^2 \big)
= \max(1,\sqrt{3}\epsilon^{2}).
\end{equation}
Since $B_{\psi} :=\sup_{\omega \in \mathbb{R}^d} \psi(\omega)$ is finite in both settings, we may use the 
variant of \Cref{thm:RIP_without_weights_random_bis}. Indeed, by \eqref{eq:psi_expression_gaussian} (resp. by~\eqref{eq:psi_expression_dirac}) $B_{\psi} = (1+2s^{-2})^{d/2}$ for Gaussian mixtures (resp. $B_{\psi} =1$ for mixtures of Diracs). Now, we check that the assumptions expressed in \Cref{cond:new_thm_1} (in its variant 
involving $Z'_{p}$) and \Cref{cond:new_thm_2} of \Cref{thm:RIP_without_weights_random_bis}
 hold in both settings. 

\paragraph{Checking the variant of \Cref{cond:new_thm_1} in \Cref{thm:RIP_without_weights_random_bis}} 

We show the existence of $\nu,\beta,B>0$ such that the random variables $Z'_{p}$ defined in \eqref{eq:Z_def_structured_bis} are sub-exp($\nu,\beta$) with $|\mathbb{E}(Z_{p})| \leq B$. Since the frequencies $\omega_{1}, \dots, \omega_m$ are i.i.d., we consider a block size $b = 1$ and 
$Z'_p = (\sqrt{C_{\kappa}} |\langle \omega, x \rangle|)^p$
with $\omega \sim \Lambda$.
We will indeed prove that for each $x \in \mathbb{R}^d$ s.t. $\|x\|_{a} =1$, the random variables $ |\sqrt{C_{\kappa}}\langle \omega, x \rangle|^{p}$, $p \in \{0,1,2\}$ are sub-exp$(\nu,\beta)$ with $|\mathbb{E}Z_p| \leq B$, 
where
\begin{equation}\label{eq:properties_on_B_and_nu}
B := \max(1,C_{\kappa}s^{-2}),
\quad \textit{and} \quad \nu = \beta = 4 B.
\end{equation}
This is done in the following. To handle both settings in a common framework, define $\bm{\Sigma} = \mathbb{I}_{d}$ for the setting of a mixture of Diracs, so that in both cases we have $\|\cdot\|_{a} = \|\cdot\|_{\bm{\Sigma}}$. 
Thus, a vector $x \in \mathbb{R}^{d}$ satisfies $\|x\|_{a} = \|x\|_{\bm{\Sigma}} = 1$ if, and only if, $\|\bm{\Sigma}^{-1/2}x\|_{2} =1$. 
Since $\omega \sim \mathcal{N}(0,s^{-2} \bm{\Sigma}^{-1})$ we have $s\bm{\Sigma}^{1/2}\omega \sim \mathcal{N}(0,\mathbb{I}_d)$ hence $s \langle \omega,x \rangle = s\langle\bm{\Sigma}^{1/2}\omega,\bm{\Sigma}^{-1/2}x\rangle \sim \mathcal{N}(0,1)$, so that $(\mathbb{E}|\langle\omega,x\rangle|)^{2} \leq \mathbb{E}|\langle\omega,x\rangle|^{2} = 1/s^{2}$, hence using also that $|\langle\omega,x\rangle|^{0} \equiv 1$ we obtain
\begin{equation*}
\max_{p \in \{0,1,2\}} \mathbb{E}|\sqrt{C_{\kappa}}\langle\omega,x\rangle|^{p} \leq \max(1,\sqrt{C_{\kappa}}s^{-1},C_{\kappa}s^{-2}) =  B.
\end{equation*}

By Lemma~\ref{lemma:sub_exp_example}, $s|\langle\omega,x\rangle|$ is sub-exp($4,4$) and $s^2\langle\omega,x\rangle^2$ is sub-exp($2,4$), hence $|\sqrt{C_{\kappa}}\langle\omega,x\rangle|$ is sub-exp($4\sqrt{C_{\kappa}}/s,4\sqrt{C_{\kappa}}/s$), and $C_{\kappa}\langle\omega,x\rangle^2$ is sub-exp($2C_{\kappa}/s^2,4C_{\kappa}/s^2$). Observe that $\max(4\sqrt{C_{\kappa}}/s,2C_{\kappa}/s^{2}) \leq 4 B$ and $\max(4\sqrt{C_{\kappa}}/s,4C_{\kappa}/s^{2}) \leq 4B$ to conclude that $|\sqrt{C_{\kappa}}\langle\omega,x\rangle|$ and $C_{\kappa}\langle\omega,x\rangle^2$ are indeed both sub-exp($\nu,b$) with $\nu=b =4B$. 
The same holds for $p=0$ since $|\sqrt{C_{\kappa}}\langle\omega,x\rangle|^{0} \equiv 1$.

NB: in the setting of Gaussian mixtures, for $x\in \mathbb{R}^d$ such that $\|x\|_{\bm{\Sigma}}=1$, $\psi(\omega)$ and $\psi(\omega)|\langle\omega,x\rangle|$ and $\psi(\omega)\langle\omega,x\rangle^2$ are bounded and we may alternatively have used Hoeffding's inequality \citep{Hoef94} instead of Theorem~\ref{thm:RIP_without_weights_random_bis}.
We chose to use the latter as it allows to encompass both settings under the same reasoning.

We move now to check that~\Cref{cond:new_thm_2} holds in both settings.

\paragraph{Identifying $M$ such that~\Cref{cond:new_thm_2} in \Cref{thm:RIP_without_weights_random_bis} holds with $\gamma=1$ for Gaussian mixtures with any $v>0$.}
We show that 
\begin{equation}
\label{eq:MGaussianProof}
|\psi(\omega) f_{0}(\omega)| \leq M := 4B_{\psi},\ \forall \omega,
\end{equation}
hence~\eqref{eq:assumption_on_psi0_thm5} holds 
for any $v>0$. In particular, \eqref{eq:assumption_on_psi0_thm5} holds for $v = v_{k}(\tau)$, with $v_{k}(\tau)$ defined in \eqref{eq:DefVKTau}. Indeed, for every $t \geq 0$ we have $t^2 \leq (t+t^3)/2$ (since $t(t-1)^2 \geq 0$), hence given the definition~\eqref{eq:M_Omega_def} of $f_{0}$ and since $\|\cdot\|_{a,\star} = \|\cdot\|_{\bm{\Sigma}^{-1}}$ we have $f_{0}(\omega) \leq \frac{3}{2}(\|\omega\|_{\bm{\Sigma}^{-1}}+ \|\omega\|_{\bm{\Sigma}^{-1}}^{3})$ for every $\omega$, so that
\begin{align*}
|\psi(\omega) f_{0}(\omega)| 
 \stackrel{\eqref{eq:psi_expression_gaussian}}{=} B_\psi e^{-\omega^{\mathrm{T}} \bm{\Sigma} \omega/2} f_{0}(\omega)
& \leq \frac{3}{2}B_\psi e^{-\|\omega\|_{\bm{\Sigma}^{-1}}^{2}/2} (\|\omega\|_{\bm{\Sigma}^{-1}}+\|\omega\|_{\bm{\Sigma}^{-1}}^{3})\\
& \leq \frac{3}{2}B_\psi \varphi(\|\omega\|_{\bm{\Sigma}^{-1}}) \leq 4B_{\psi} \end{align*}
since $\varphi(t):= (t+t^3)e^{-t^2/2}$ satisfies $\varphi(t) \leq 8/3$ for $t \in \mathbb{R}$. 
Indeed, we have 
\[
\sup_{t \in \mathbb{R}}\varphi(t) \leq \sup_{t \in \mathbb{R}}\varphi_{1}(t)+\sup_{t \in \mathbb{R}}\varphi_{2}(t),
\]
where $\varphi_{1}(t):= te^{-t^2/2}$, and $\varphi_{2}(t):= t^3e^{-t^2/2}$, and it is easy to prove that $\sup_{t \in \mathbb{R}}\varphi_{1}(t) = \varphi_{1}(1) = e^{-1/2}  \leq 1$, 
and $\sup_{t \in \mathbb{R}}\varphi_{2}(t) = \varphi_{2}(\sqrt{3}) = 3\sqrt{3}e^{-3/2} \approx 1.16 \leq 5/3 \approx 1.66$.

\paragraph{Identifying $M$ such that~\Cref{cond:new_thm_2} holds with $\gamma=1$ for mixtures of Diracs with $v = v_{k}(\tau)$.} 
Since $\psi(\omega) = 1$ (cf \eqref{eq:psi_expression_dirac}) we have $\Psi_{0}(\bm{\Omega}) = \frac{1}{m} \sum_{j=1}^{m} f_{0}(\omega_{j})$, with $f_{0}(\omega)$  defined in~\eqref{eq:M_Omega_def}. Since $\|\cdot\|_{a}=\|\cdot\|_{2}$ we have $\|\cdot\|_{a,\star}=\|\cdot\|_{2}$ hence $f_{0}(\omega) = \sum_{t=1}^{3}\|\omega\|_{2}^{t}$. As $\omega \sim \mathcal{N}(0,s^{-2} \mathbb{I}_{d})$, \Cref{prop:tail_sum_norm_omega_alpha} yields
\begin{align*}
\forall t \geq 0,\quad \mathbb{P}\Big( \Psi_{0}(\bm{\Omega}) >  \tilde{M}(t) \Big) \leq \exp\bigg( -\frac{(m t)^{2/3}}{2} \bigg),
\end{align*}
with $ \tilde{M}(t) :=1+ \frac{8}{s^3}\big(\sqrt{\frac{8}{\pi}}d^{3/2} + t  \big)$. 
We define
\begin{equation}\label{eq:DefC0}
C_{0} := 7B_{\psi}B \end{equation}
and recall that $C_{0} \geq 7$ since $B_{\psi} \geq 1$ and $B \geq 1$. 
Therefore, for $0<\tau \leq 1$, and $v_{k}(\tau)$ as defined in~\eqref{eq:DefVKTauBis}, we have 
\begin{align*}
 \mathbb{P}\Big( \Psi_{0}(\bm{\Omega}) >  \tilde{M}(\tau^{3/2}m^{1/2}) \Big) \leq \exp\bigg( -\frac{m \tau}{2} \bigg) \leq \exp(-m/v_k(\tau)),
\end{align*}
since $2/\tau \leq 2/\tau^2 \leq 2(C_{0}/\tau)^{2} \leq v_{k}(\tau)$. 

In other words, ~\eqref{eq:assumption_on_psi0_thm5} holds with $M = \tilde{M}(\tau^{3/2}m^{1/2}) $ and $v = v_{k}(\tau)$.\\
{\bf Wrapping up the proof.}

To complete the proof of \Cref{cor:bounds_gaussian} and \Cref{cor:bounds_dirac} we use Theorem~\ref{thm:RIP_without_weights_random_bis} and the last step is to give explicit upper bounds of the constants $C$ and $v'$ respectively defined in~\eqref{eq:C_def_thm5} and~\eqref{eq:variance_structured_bis}. 

We start with $v'$, and we show that it is upper bounded by $v_{k}(\tau)$ defined in~\eqref{eq:DefVKTau} (resp. in~\eqref{eq:DefVKTauBis}). By \eqref{eq:properties_on_B_and_nu}, $\nu = \beta = 4B$ hence, by~\eqref{eq:variance_structured} $\nu' = \sqrt{2}\sqrt{\nu^2 + 8 B^2} = \sqrt{48 B^2} \leq \sqrt{49B^2} = 7B$. Hence, $B_{\psi}^{2}\nu'^{2} \leq (7B_{\psi}B)^{2} = C_{0}^{2}$ and $B_{\psi}\beta = 4B_{\psi}B =\frac{4}{7}C_{0}$ where $C_{0}$ is defined in~\eqref{eq:DefC0}. Since the block size is $b=1$, by~\eqref{eq:variance_structured_bis} we obtain
\begin{align*}
v' = \frac{256k^2b(2B_{\psi}^{2}\nu'^2+B_{\psi}\beta\tau)}{\tau^2} &\leq512k^2  \left((C_{0}/\tau)^{2}+\frac{2}{7}(C_{0}/\tau)\right) \\
&\leq512k^2  \left((C_{0}/\tau)^{2}+\frac{1}{3}(C_{0}/\tau)\right). \end{align*}
This matches the expressions of $v_{k}(\tau)$ used in~\eqref{eq:DefVKTau} and ~\eqref{eq:DefVKTauBis}. Soon we will also prove that $C_{0}$ satisfies the bounds expressed in~\eqref{eq:DefVKTau} and ~\eqref{eq:DefVKTauBis}.

As for $C$, observe that by definition~\eqref{eq:C_def_thm5}, 
$C = 6144MC_{\kappa} \cdot k(1+\mathrm{diam}_{a}(\Theta))$, so that we only need to give an upper bound of $6144 MC_{\kappa}$ to get the expressions that appear in~\eqref{eq:DefCofVKTau}  and \eqref{eq:DefCofVKTauBis}.

To conclude we bound $C_{0}$ and $6144MC_{\kappa}$. First, by~\eqref{eq:bound_on_C_kappa_final}, we have $C_{\kappa} \leq \max(1,\sqrt{3}\epsilon^2)$, and by~\eqref{eq:properties_on_B_and_nu} $B = \max(1,C_{\kappa}s^{-2})$. We study separately the two settings:
\begin{itemize}
\item For mixtures of Gaussians, by~\eqref{eq:SepVsScaleGMM} we have $\epsilon \geq \max(s,1)$ hence $C_\kappa \leq \sqrt{3}\epsilon^{2}$, $B \leq \sqrt{3}(\epsilon/s)^{2}$. Since $B_{\psi} = (1+2s^{-2})^{d/2}$, 
we get by~\eqref{eq:DefC0}
$C_{0} = 7BB_{\psi} \leq 7\sqrt{3}\epsilon^{2}s^{-2}(1+2s^{-2})^{d/2}$ as claimed in~\eqref{eq:DefVKTau}. Since $M  = 4 B_{\psi}$ by~\eqref{eq:MGaussianProof}, we get $6144 M C_{\kappa} \leq (4\times 6144\sqrt{3}) \epsilon^{2}B_{\psi}\leq 43000 \epsilon^{2} B_{\psi}$ as claimed in~\eqref{eq:DefCofVKTau}.
\item For mixtures of Diracs, by \eqref{eq:SepVsScaleDiracs} we have $\epsilon \geq s$ hence 
 $\sqrt{3}\epsilon^{2}s^{-2}\geq 1$. As $C_{\kappa}s^{-2} \leq \max(s^{-2},\sqrt{3}\epsilon^{2}s^{-2})$, it follows that $B = \max(1,C_{\kappa}s^{-2}) \leq  \max(1,s^{-2},\sqrt{3}\epsilon^{2}s^{-2}) 
  = \max(s^{-2},\sqrt{3}\epsilon^{2}s^{-2}) = s^{-2} \max(1,\sqrt{3}\epsilon^{2})$. Since $B_{\psi} = 1$ it follows by~\eqref{eq:DefC0} that
$C_{0} = 7BB_{\psi} =7B \leq 7 s^{-2}\max(1,\sqrt{3}\epsilon^{2})$ as claimed in \eqref{eq:DefVKTauBis}.
Finally observe that $c:= \sqrt{8/\pi} \approx 1.6$ so that $cd^{3/2}+t \geq 1$ for every $t>0$ hence $\tilde{M}(t) = 1+\frac{8}{s^{3}}(c d^{3/2}+t) \leq (1+8/s^{3})(cd^{3/2}+t) \leq (1+2/s)^{3}(c d^{3/2}+t) \leq (1+2/s)^{3}(2d^{3/2}+t)$. 
Since $M  = \tilde{M}(m^{1/2}\tau^{3/2})$ it follows that
$6144 M C_{\kappa} \leq 6144 (1+2/s)^{3} \max(1,\sqrt{3}\epsilon^{2}) (2d^{3/2}+\sqrt{m}\tau^{3/2})$ as claimed in~\eqref{eq:DefCofVKTauBis}.
\end{itemize}

\subsubsection{Proof of~\Cref{cor:bounds_str}}\label{proof:bounds_str}

The proof follows the same steps as the proofs of \Cref{cor:bounds_gaussian} and \Cref{cor:bounds_dirac} given in \Cref{proof:bounds_gaussian}, even though the $\omega_{j}$ are not independent, we can still prove that \Cref{cond:new_thm_1} and \Cref{cond:new_thm_2} of \Cref{thm:RIP_without_weights_random_bis} hold.

Indeed, by \Cref{lemma:sub_exp_sum}, the random variable $d \times Z_{p} = \sum_{j=1}^{d}\psi(\omega_{j})(\sqrt{C_{\kappa}}|\langle \omega_{j},x \rangle|)^p$ is sub-exp($d\nu, d\beta$), since it is a sum of $d$ random variables that are sub-exp($\nu,\beta$), thus $Z_{p}$ (and similarly $Z'_{p}$) is sub-exp($\nu,\beta$), and \Cref{cond:new_thm_1} holds.

As for \Cref{cond:new_thm_2}, we distinguish the two cases:
\begin{itemize}
\item 
For mixtures of Gaussians, the variables $\psi(\omega_j)f_{0}(\omega_j)$ are bounded and the same reasoning
as in the proof of \Cref{cor:bounds_gaussian} establishes \eqref{eq:assumption_on_psi0_thm5} with the same constant $M$ and any $v>0$. Gathering all of the above shows that for Gaussian mixture models, sketching with the considered structured random Fourier features satisfies the RIP: \eqref{eq:the_RIP}
holds with $\gamma = 1$, $v = d v_{k}(\tau)$, where $v_{k}(\tau)$ given in \eqref{eq:DefVKTau}.
\item For mixtures of Diracs, we  decompose $\bm{\Omega}$ into $d$ (non-adjacent) blocks of $m/d$ i.i.d. random variables, 
$\bm{\Omega}_i:=\{ \omega_{i}, \omega_{i+d},\ldots, \omega_{i+(m/d-1)d} \}$, $i \in [d]$ and write
\begin{equation*}
\Psi_{0}(\bm{\Omega}) = \frac{1}{d}\sum\limits_{i=1}^{d} \frac{1}{m/d} \sum\limits_{j =0}^{m/d-1}\psi(\omega_{i+d(j-1)})f_{0}(\omega_{i+d(j-1)}) = \frac{1}{d} \sum_{i=1}^{d}\Psi_{0}(\bm{\Omega}_{i}),
\end{equation*}
and observe that, since the frequencies are block-i.i.d., the columns of each $\bm{\Omega}_{i}$ are independent so that we can use ~\Cref{prop:tail_sum_norm_omega_alpha} to obtain with the same reasoning as in the proof of \Cref{cor:bounds_dirac} 
\begin{equation*}
\mathbb{P}\Big( \Psi_{0}(\bm{\Omega}_i) >M \Big) \leq \exp\Big( -
\frac{m/d}{v_{k}(\tau)}
\Big),
\end{equation*}
where $M:= \tilde{M}(\tau^{3/2}(m/d)^{1/2})$.
A union bound yields \begin{align*}
\mathbb{P}\Big( \Psi_{0}(\bm{\Omega})>M \Big) & \leq \mathbb{P}\Big( \cup_{i = 1}^{d} \{ \Psi_{0}(\bm{\Omega}_i)>M \} \Big)
 \leq d \exp\Big( -\frac{m}{dv_{k}(\tau)} \Big),
\end{align*}
and \Cref{cond:new_thm_2} holds with $\gamma = d$, $v = d v_{k}(\tau)$ and $M:= \tilde{M}(\tau^{3/2}(m/d)^{1/2})$. \qedhere
\end{itemize}

\subsubsection{Some helpful results}

\begin{lemma}\label{lemma:sup_alpha_alpha_prime}
Consider $\sigma>0$ and for $r  \geq 0$ define $\alpha_{\sigma}(r):= \frac{r}{\sqrt{1-e^{-r^2/\sigma^2}}}$.
For each $R>0$ we have
\begin{equation*}
\sup\limits_{r \in [0,R]}|\alpha_{\sigma}(r)| \leq \sqrt[4]{3}\max(\sigma,R),
\quad \text{and}\quad 
\sup\limits_{r \in (0,R]}|\alpha'_{\sigma}(r)| \leq 1 .
\end{equation*}
\end{lemma}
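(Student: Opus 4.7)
The plan is to perform the reduction $u := r/\sigma$ so that $\alpha_{\sigma}(r) = \sigma g(u)$ with $g(u) := u/\sqrt{1-e^{-u^2}}$; this collapses the dependence on $\sigma$ and turns both bounds into statements about the single universal function $g$. Observe that $g(u)^2 = 1/h(u^2)$ with $h(x) = (1-e^{-x})/x = \int_0^1 e^{-xt}\,dt$, which is strictly decreasing on $(0,\infty)$. Hence $g$ is strictly increasing on $(0,\infty)$, with $g(0^+)=1$ and $g(u) \sim u$ as $u \to \infty$.

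For the first inequality, monotonicity of $g$ gives $\sup_{r \in [0,R]} \alpha_{\sigma}(r) = \sigma\, g(R/\sigma)$. Split into two cases. If $R \le \sigma$, bound $\sigma g(R/\sigma) \le \sigma g(1) = \sigma/\sqrt{1-e^{-1}}$ and verify numerically that $(1-e^{-1})^{-1/2} \le \sqrt[4]{3}$ (equivalently $1-e^{-1} \ge 1/\sqrt{3}$, which holds since $e^{-1} < 1 - 1/\sqrt{3}$). If $R > \sigma$, one needs $g(u) \le \sqrt[4]{3}\, u$ for $u \ge 1$, which is equivalent to $(1-e^{-u^2})^2 \ge 1/3$, i.e.\ $1-e^{-u^2} \ge 1/\sqrt{3}$; and this holds for all $u \ge 1$ by the same inequality $1-e^{-1} \ge 1/\sqrt{3}$ and monotonicity. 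In both cases $\sigma g(R/\sigma) \le \sqrt[4]{3}\,\max(\sigma,R)$.

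For the derivative bound, straightforward differentiation (easiest via $\alpha^{2}(r) = r^{2}/(1-e^{-u^{2}})$ and the quotient rule) yields the closed form
\begin{equation*}
\alpha_{\sigma}'(r) \;=\; F(u) \;:=\; \frac{1-(1+u^2)e^{-u^2}}{(1-e^{-u^2})^{3/2}}, \qquad u = r/\sigma.
\end{equation*}
Non-negativity of the numerator follows from $N(u) := 1-(1+u^{2})e^{-u^{2}}$ satisfying $N(0)=0$ and $N'(u) = 2u^{3}e^{-u^{2}} \ge 0$. It remains to show $F(u) \le 1$, i.e.\ $N(u) \le (1-e^{-u^2})^{3/2}$.

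The key step, and the main technical nuisance, is this last inequality. I would substitute $s := 1-e^{-u^2} \in [0,1)$, so $u^{2} = -\ln(1-s)$ and the inequality becomes
\begin{equation*}
G(s) \;:=\; s + (1-s)\ln(1-s) - s^{3/2} \;\le\; 0, \qquad s \in [0,1).
\end{equation*}
Verify $G(0) = 0$, $G(1^{-}) = 0$, and $G'(s) = -\ln(1-s) - \tfrac{3}{2}\sqrt{s}$ with $G'(0) = 0$ and $G'(1^{-}) = +\infty$. The second derivative $G''(s) = \tfrac{1}{1-s} - \tfrac{3}{4\sqrt{s}}$ changes sign exactly once on $(0,1)$ (at the unique root of $3t^{2}+4t-3 = 0$ with $t = \sqrt{s}$), going from $-\infty$ at $0^{+}$ to $+\infty$ at $1^{-}$. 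Hence $G'$ is first decreasing then increasing on $(0,1)$; starting from $G'(0) = 0$ it becomes negative, reaches a minimum, and climbs back to $+\infty$, crossing zero at exactly one point $s_{1}$. Consequently $G$ decreases on $(0,s_{1})$ from $G(0)=0$ to a negative minimum and then increases back to $G(1^{-}) = 0$, so $G \le 0$ throughout, which yields $F \le 1$ and therefore $|\alpha_{\sigma}'(r)| \le 1$ as required.
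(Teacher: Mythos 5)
Your proof is correct. For the first bound you take essentially the same route as the paper: reduce to the dimensionless $g(u)=u/\sqrt{1-e^{-u^2}}$, show the sup is attained at $\min(R/\sigma,1)$ (via monotonicity of $g$; the paper uses a convexity argument for $u/(1-e^{-u})$ but the content is the same), and reduce to checking $1-e^{-1}\geq 1/\sqrt{3}$.

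For the derivative bound, though, you take a genuinely different route, and it is cleaner than the paper's. Both proofs land on the same closed form
$\alpha'_\sigma(r)=\bigl(1-(1+u^2)e^{-u^2}\bigr)/(1-e^{-u^2})^{3/2}$ with $u=r/\sigma$ and non-negativity of the numerator. But to show the ratio is $\leq 1$, the paper splits into $u\geq 1$ (where it replaces $1+u^2$ by $2$ to reduce to $(1-2x)\leq(1-x)^{3/2}$ with $x=e^{-u^2}$, settled by a cubic factorization valid for $x\leq(\sqrt5-1)/2$) and $0<u\leq 1$ (where it introduces $(1-e^{-u^2}(1+u^2))/u^3$ and performs a somewhat delicate critical-point analysis). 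You instead substitute $s=1-e^{-u^2}$, reducing the entire inequality over $u>0$ to
$G(s)=s+(1-s)\ln(1-s)-s^{3/2}\leq 0$ on $[0,1)$,
and settle it by a single convexity-type argument: $G(0)=G(1^-)=0$, $G'(0)=0$, and $G''$ is strictly increasing (since $G'''(s)=(1-s)^{-2}+\tfrac{3}{8}s^{-3/2}>0$), so $G''$ changes sign exactly once, $G'$ is unimodal and vanishes at exactly one interior point, and $G$ dips below zero and returns to zero. This eliminates the case split, avoids the ad hoc replacement of $1+u^2$ by $2$, and gives a self-contained sign analysis that is easy to audit. The one small strengthening worth adding is what I just noted: the fact that $G''$ changes sign exactly once (rather than possibly several times) is most cleanly justified by observing $G'''>0$, rather than by solving the equation $G''=0$ explicitly.
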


\begin{proof}
First we show that $\alpha_{\sigma}(r) = \sigma \alpha_{1}(r/\sigma) \leq (1-e^{-1})^{-1/2}\max(\sigma,r)$ when $r \geq 0$. This implies the first bound as $(1-e^{-1})^{-1/2} \approx 1.26 \leq 1.316 \approx \sqrt[4]{3}$. When $\sigma \leq r$ we have
\[
|\alpha_{\sigma}(r)| =  \frac{r}{\sqrt{1-e^{-r^2/\sigma^2}}} \leq \frac{r}{\sqrt{1-e^{-1}}} \leq (1-e^{-1})^{-1/2} r 
= (1-e^{-1})^{-1/2} \max(\sigma,r).
\]
When $0 \leq r \leq \sigma$, we prove below that $|\alpha_{1}(t)| \leq (1-e^{-1})^{-1/2}$ for every $t \in [0,1]$, so that
\[
|\alpha_{\sigma}(r)| = \sigma |\alpha_{1}(r/\sigma)| \leq (1-e^{-1})^{-1/2}\sigma = (1-e^{-1})^{-1/2} \max(\sigma,r).
\]
To show that $|\alpha_{1}(t)| \leq (1-e^{-1})^{-1/2}
$ on $[0,1]$ observe that since $u \mapsto e^{-u}$ is convex, it has non-decreasing slopes so that $(e^{-u}-1)/u \leq (e^{-1}-1)/1$, $\forall u \in [0,1]$. This reads $u/(1-e^{-u}) \leq 1/(1-e^{-1})$ and implies $|\alpha_{1}(t)|^{2} = t^{2}/(1-e^{-t^{2}}) \leq 1/(1-e^{-1})$ for $t \in [0,1]$, thus $|\alpha_{1}(t)| \leq 1/\sqrt{1-e^{-1}}$.

We now prove that $|\alpha'_{1}(t)| \leq 1$, $\forall t >0$. This implies the second bound since $\alpha_{\sigma}'(r) = \alpha_{1}'(r/\sigma)$. Writing $\alpha_{1}(t) = t [v(t)]^{-1/2}$ 
with $v(t) := 1-e^{-t^{2}}$ we get 
\begin{align*}
\alpha'_{1}(t) &= [v(t)]^{-1/2}-\frac{t}{2}v'(t)[v(t)]^{-3/2} = [v(t)]^{-3/2}\left(v(t)-tv'(t)/2\right)
= [v(t)]^{-3/2} \left(1-e^{-t^{2}}(1+t^{2})\right)
\end{align*}
For each $t \geq 0$, since $e^{t^{2}} \geq 1+t^{2}$, it follows that $|\alpha'_{1}(t)| = \alpha'_{1}(t)$.
When $t\geq 1$, since $0 < x := e^{-t^{2}} \leq 1/e \approx 0.368 < 1/2 < 0.618 \approx (\sqrt{5}-1)/2$ we have 
$(x-1)^{3}+(1-2x)^{2} = x^{3}+x^{2}-x = x(x^{2}+x-1) = x[x+(1+\sqrt{5})/2][x+(1-\sqrt{5})/2] \leq 0$ hence $(1-2x)^{2} \leq (1-x)^{3}$, and since $1-2x >0$ it follows that $1-2x\leq (1-x)^{3/2}$. Thus, we obtain
\[
|\alpha'_{1}(t)| = \frac{1-e^{-t^{2}}(1+t^{2})}{\sqrt{1-e^{-t^2}}^{3}} \leq \frac{1-2e^{-t^{2}}}{\sqrt{1-e^{-t^2}}^{3}} 
=\frac{1-2x}{(1-x)^{3/2}}
\leq 1,
\]
Now, when $0<t \leq 1$, since $[v(t)]^{-3/2} = (\alpha_{1}(t)/t)^{3}$, 
using that $|\alpha_{1}(t)| \leq (1-e^{-1})^{-1/2} \max(1,t) = (1-e^{-1})^{-1/2}$ and $(1-e^{-1})^{-3/2} \approx 1.99 \leq 2$ we get
$$|\alpha'_{1}(t)| \leq (1-e^{-1})^{-3/2} \frac{1-e^{-t^{2}}(1+t^{2})}{t^3} \leq 2 \frac{1-e^{-t^{2}}(1+t^{2})}{t^3}. $$ 
It is enough to show that $g(t) := (1-e^{-t^{2}}(1+t^{2}))/t^{3} \leq 1/2$, $\forall t \in (0,1]$. We have $g'(t) =  t^{-4}(-3+e^{-t^2}(3+3t^2+2t^4))$ hence $\mathtt{sign}(g'(t)) = -\mathtt{sign}(e^{t^{2}}-(1+t^{2}+\frac{2}{3}t^{4}))$. Thus there is a neighborhood of zero in which $g$ is increasing, since $\mathtt{sign}(g'(t)) = -\mathtt{sign}(1+t^{2}+\frac{1}{2}t^{4}+O(t^{6})-(1+t^{2}+\frac{2}{3}t^{4})) = -\mathtt{sign}(-\frac{1}{6}t^{4}+O(t^{6})) = +1$ for $t$ small enough. 
Since $g$ is continuously differentiable, its supremum on $(0,1]$ is thus either equal to $g(1) = 1-2/e \approx 0.264 < 1/2$ or to $g(t_{*})$, for some local maximum $0< t_{*} \leq 1$ which must satisfy $g'(t_{*}) = 0$. To conclude without further characterizing the existence or value of such a root, we establish that any such root must satisfy $g(t_{*}) \leq 1/2$.  Indeed using that $g'(t_{*})=0$ if, and only if $e^{t_{*}^{2}} = 1+t_{*}^{2}+\frac{2}{3}t_{*}^{4}$, we obtain
\begin{equation*}
g(t_{*}) = 
\frac{1-e^{-t_{*}^{2}}(1+t_{*}^{2})}{t_{*}^{3}} 
=
\frac{e^{t_{*}^{2}} -(1+t_{*}^{2})}{e^{t_{*}^{2}} t_{*}^{3}} 
=
\frac{1+t_{*}^{2}+\frac{2}{3}t_{*}^{4}-(1+t_{*}^{2})}{(1+t_{*}^{2}+\frac{2}{3}t_{*}^{4})t_{*}^{3}}
= \frac{\frac{2}{3}t_{*}}{1+t_{*}^2+\frac{2}{3}t_{*}^4}.
\end{equation*}
We finally distinguish two cases: 
i) if $t_{*} <1/\sqrt{2}$ then $g(t_{*}) \leq \frac{2}{3}t_{*} \leq \frac{\sqrt{2}}{3} \leq 1/2$; 
ii) if $1/\sqrt{2}  \leq t_{*} \leq 1$ then $g(t_{*}) \leq \frac{\frac{2}{3}}{1+\frac{1}{2} + \frac{2}{3} (1/2)^{2}} = 0.4 < 1/2$.
\end{proof}

\begin{proposition}\label{prop:tail_sum_norm_omega_alpha}
Consider $s >0$, and let $\omega_1, \dots, \omega_m$ be i.i.d. samples from $\mathcal{N}(0, \frac{1}{s^2}\mathbb{I}_d)$. Then
\begin{equation}\label{eq:fluctuation_omega3}
\forall \tau \geq 0, \:\:  \mathbb{P}\Big( \frac{1}{m}\sum\limits_{j=1}^{m} \|\omega_j\|^3 > \frac{4}{s^3}\big(d^{3/2}\sqrt{8/\pi} + \tau \big) \Big) \leq \exp\bigg( -\frac{(m\tau)^{2/3}}{2} \bigg).
\end{equation}
Moreover, as a consequence we have for $\tau > 0$
\begin{equation}\label{eq:bound_prop_psi_0_finalone}
\mathbb{P}\Big( \frac{1}{m}\sum\limits_{j=1}^{m} \sum_{t = 1}^{3} \|\omega_j\|_{2}^t > 1+ \frac{8}{s^3}\big(d^{3/2}\sqrt{8/\pi} + \tau \big) \Big) \leq \exp\bigg( -\frac{(m\tau)^{2/3}}{2} \bigg) .
\end{equation}

\end{proposition}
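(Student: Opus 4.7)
The plan is to apply Gaussian (Borell--TIS) concentration not to the target quantity directly, but to its cube root, which turns out to be Lipschitz with the correct scaling. Setting $W_j := s\omega_j \sim \mathcal{N}(0,\mathbb{I}_{d})$ so that $W := (W_1,\ldots,W_m) \sim \mathcal{N}(0,\mathbb{I}_{md})$, the quantity of interest is $h(W) := \bigl(\tfrac{1}{m}\sum_{j=1}^{m}\|\omega_j\|_2^3\bigr)^{1/3}$. I would first show that $h$ is $1/(sm^{1/3})$-Lipschitz with respect to the Euclidean norm on $\mathbb{R}^{md}$. This is obtained by chaining the reverse triangle inequality $\bigl|\|W_j\|_2-\|W'_j\|_2\bigr|\leq \|W_j-W'_j\|_2$, Minkowski's inequality for the $\ell^{3}$ norm on $\mathbb{R}^{m}$, and the monotonicity $\|\cdot\|_{\ell^3}\leq \|\cdot\|_{\ell^2}$ on $\mathbb{R}^{m}$.

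The Borell--TIS inequality then gives $\mathbb{P}(h(W) > \mathbb{E}h(W)+u) \leq \exp(-s^{2}m^{2/3}u^{2}/2)$ for every $u>0$, and choosing $u=\tau^{1/3}/s$ produces precisely the exponent $-(m\tau)^{2/3}/2$ appearing in~\eqref{eq:fluctuation_omega3}. Jensen's inequality applied to the concave map $x\mapsto x^{1/3}$ yields $\mathbb{E}h(W) \leq (\mathbb{E}\|\omega_1\|_2^{3})^{1/3}$. To bound $\mathbb{E}\|\omega_1\|_2^{3}$ sharply, I would exploit the identity $\mathbb{E}[\chi_d^{3}] = (d+1)\mathbb{E}[\chi_d]$, which follows from $\Gamma(x+1)=x\Gamma(x)$ applied to the standard moment formula $\mathbb{E}[\chi_d^{k}]=2^{k/2}\Gamma((d+k)/2)/\Gamma(d/2)$, combined with $\mathbb{E}[\chi_d]\leq \sqrt{d}$ (Jensen on the concave square root). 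For $d\geq 2$ this yields $(d+1)\sqrt{d}= (1+1/d)d^{3/2} \leq \tfrac{3}{2}d^{3/2} \leq \sqrt{8/\pi}\,d^{3/2}$; for $d=1$, direct integration gives $\mathbb{E}[\chi_1^{3}]=\sqrt{8/\pi}$ with equality. In either case $\mathbb{E}\|\omega_1\|_2^{3}\leq d^{3/2}\sqrt{8/\pi}/s^3$.

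Cubing the concentration bound and using the elementary inequality $(a+b)^{3}\leq 4(a^{3}+b^{3})$ for $a,b\geq 0$ (which follows from the power-mean inequality $\tfrac{a+b}{2}\leq \bigl(\tfrac{a^{3}+b^{3}}{2}\bigr)^{1/3}$) produces the constant $4$ in~\eqref{eq:fluctuation_omega3}. For the consequence~\eqref{eq:bound_prop_psi_0_finalone}, Young's inequality with exponents $3/t$ and $3/(3-t)$ gives $x^{t} \leq (3-t)/3 + tx^{3}/3$ for $t\in\{1,2\}$ and $x\geq 0$; summing over $t\in\{1,2,3\}$ yields $\sum_{t=1}^{3} x^{t} \leq 1 + 2x^{3}$. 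Applied pointwise to $x=\|\omega_j\|_2$ and averaged over $j$, this reduces~\eqref{eq:bound_prop_psi_0_finalone} to~\eqref{eq:fluctuation_omega3}, with the factor $4$ absorbed into the stated factor $8$.

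The main conceptual point is the identification of the cube root as the natural Lipschitz variable to which to apply Gaussian concentration: this is what converts the $\exp(-t^{2}/2)$ Gaussian tail into the sub-Weibull tail $\exp(-(m\tau)^{2/3}/2)$ of the average of $\|\omega_j\|^3$ in a way that scales optimally in $m$, whereas a direct Bernstein-type approach would be hampered by the absence of a finite moment generating function for $\|\omega\|_2^{3}$. The only routine but somewhat delicate step is the bound on $\mathbb{E}[\chi_d^{3}]$, where the case $d=1$ (achieving equality) must be handled separately from $d\geq 2$ (where the looser Jensen bound $\mathbb{E}[\chi_d]\leq \sqrt{d}$ suffices thanks to the slack $\tfrac{3}{2}\leq \sqrt{8/\pi}$).
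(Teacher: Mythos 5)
Your proposal is correct and takes essentially the same route as the paper: apply Gaussian (Tsirelson--Ibragimov--Sudakov) concentration to the cube root, which is shown to be Lipschitz by the same chain of reverse triangle, Minkowski and $\ell^3\leq\ell^2$ inequalities, then cube and apply $(a+b)^3\leq 4(a^3+b^3)$ plus Jensen, with $t+t^2+t^3\leq 1+2t^3$ for the consequence. The only (minor) deviations are cosmetic: you carry the normalization $1/m$ inside $h$ rather than dividing at the end, and you bound $\mathbb{E}\chi_d^3$ via the identity $\mathbb{E}\chi_d^3=(d+1)\mathbb{E}\chi_d\leq(d+1)\sqrt{d}$ with a case split at $d=1$, whereas the paper uses convexity of $t\mapsto t^{p/2}$ to get $\mathbb{E}\|\omega\|_2^p\leq d^{p/2}\mathbb{E}|g|^p$ uniformly in $d$.
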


\begin{proof}
First,  to prove~\eqref{eq:fluctuation_omega3} when $\omega_1, \dots, \omega_m$ are i.i.d. samples from $\mathcal{N}(0, \frac{1}{s^2}\mathbb{I}_d)$, it is enough to deal with the case $s=1$. Next, for $s=1$, denoting $\Omega \in \mathbb{R}^{dm}$ the concatenation of $\omega_{1}, \dots, \omega_{m} \in \mathbb{R}^{d}$, the vector $\Omega$ has independent standard normal random entries when $\omega_{1}, \dots, \omega_{m}$ are i.i.d. samples from $\mathcal{N}(0,\mathbb{I}_d)$, and for any $2 \leq p < \infty$ the function $f_{p}: \mathbb{R}^{dm} \rightarrow \mathbb{R}$ defined by
$f_{p}(\Omega) := (\sum_{j=1}^{m}\|\omega_{j}\|_{2}^{p})^{1/p}$
is (as we will soon show) $1$-Lipschitz with respect to the Euclidean norm in $\mathbb{R}^{dm}$. Therefore, we may use the Tsirelson-Ibragimov-Sudakov inequality~\citep{TsIbSu76}, a.k.a. concentration of a random variable that writes as a Lipschitz function of a Gaussian vector \citep[Theorem 5.6]{BoLuMa13}, to obtain $\mathbb{P}( f_{p}(\Omega) - \mathbb{E}f_{p}(\Omega) \geq t ) \leq \exp( -t^2/2)$, for each $t \geq 0$, or equivalently
\begin{equation*} \forall t \geq 0, \:\: \mathbb{P}\Big( \big[f_{p}(\Omega)\big]^{p} \geq \big[\mathbb{E}f_{p}(\Omega) + \tau \big]^p \Big) \leq \exp\big( -t^2/2 \big).
\end{equation*}
By convexity we have $(a+b)^{p} = 2^{p}(a/2+b/2)^{p} \leq 2^{p-1}(a^{p}+b^{p})$ for every $a,b \in \mathbb{R}_{+}$, hence  
\begin{equation*}
\forall t \geq 0, \:\:  \mathbb{P}\Big( \big[f_{p}(\Omega)\big]^{p} \geq 2^{p-1}\big(\big[\mathbb{E}f_{p}(\Omega)\big]^p + t^p \big) \Big) \leq \exp\big( -t^2/2 \big). 
\end{equation*}
We now show that $\big[\mathbb{E}f_{p}(\Omega)\big]^p \leq md^{p/2}\mathbb{E}|g|^p$. The convexity of $t \mapsto t^{p/2}$ ($p \geq 2$) on $\mathbb{R}_{+}$ yields
\begin{align*}
\mathbb{E}\|\omega\|_{2}^p 
&= d^{p/2} \mathbb{E}\Big( \frac{1}{d}\sum\limits_{i=1}^{d} \omega_{i}^2 \Big)^{p/2} 
\leq d^{p/2} \mathbb{E} \left[\frac{1}{d}\sum\limits_{i=1}^{d} |\omega_{i}|^p\right]  = d^{p/2}\mathbb{E}|g|^{p}\\
\intertext{where $\omega \sim \mathcal{N}(0,\mathbb{I}_{d}),\ g \sim \mathcal{N}(0,1)$. B convexity of $t \mapsto t^{p}$ and Jensen's inequality, it follows that}
\big[\mathbb{E}f_{p}(\Omega)\big]^p 
& \leq \mathbb{E} \big[f_{p}(\Omega)\big]^p =
 \sum_{j=1}^{m}\mathbb{E}\|\omega_{j}\|_{2}^p=m\mathbb{E}\|\omega\|_{2}^p \leq md^{p/2} \mathbb{E}|g|^{p}.
\end{align*}
As a result
\begin{equation*} \forall t \geq 0, \:\: \mathbb{P}( \big[f_{p}(\Omega)\big]^{p} \geq 2^{p-1}(md^{p/2}\mathbb{E}|g|^{p} + t^p ) ) \leq \exp( -t^2/2 ),
\end{equation*} 
or equivalently $\mathbb{P}\Big( \frac{1}{m}\sum_{j=1}^{m} \|\omega_j\|^p \geq 2^{p-1}\big(d^{p/2}\mathbb{E}|g|^{p} + \tau \big) \Big) \leq \exp( -(m\tau)^{2/p}/2)$ for each $\tau \geq0$.
Since $\mathbb{E}|g|^{3} = \sqrt{8/\pi}$ \citep[Chapter 11]{EvHaPeFo11}, considering $p=3$ yields~\eqref{eq:fluctuation_omega3} (for $s=1$) as claimed.
Now, observe that
for $t \geq 0$ we have $t^{3}-t-(t^{2}-1) = (t^{2}-1)(t-1) = (t-1)^2(t+1) \geq 0$ hence $t^3+1 \geq t^2 +t$ and $t+t^2+t^3 \leq 1+2t^3$.  Thus $ \sum_{i = 1}^{3} \|\omega_j\|_{2}^i \leq 1 + 2\|\omega_j\|_{2}^3$, and we deduce
~\eqref{eq:bound_prop_psi_0_finalone}  from~\eqref{eq:fluctuation_omega3}.

To complete the proof, we show that $f_{p}$ is $1$-Lipschitz with respect to $\|\cdot\|_{2}$. Denoting $v_{\Omega}: = (\|\omega_j\|_{2})_{j\in [m]}  \in \mathbb{R}^{m}$, observe that $f_{p}(\Omega) = \|v_{\Omega}\|_{p}$. Thus, for $\Omega, \Omega' \in \mathbb{R}^{dm}$, since $p \geq 2$, we have
\begin{equation*}
\big|f_{p}(\Omega) - f_{p}(\Omega') \big|  = \big|\|v_{\Omega}\|_{p} - \|v_{\Omega'}\|_{p}\big|  \leq \big\|v_{\Omega} - v_{\Omega'}\big\|_{p}  \leq \big\|v_{\Omega} - v_{\Omega'}\big\|_{2}.
\end{equation*}
Finally,
\begin{equation*}
\big\|v_{\Omega} - v_{\Omega'}\big\|_{2}^{2}  
= \sum\limits_{j = 1}^{m} (\|\omega_j\|_{2}- \|\omega'_j\|_{2})^2 
\leq \sum\limits_{j = 1}^{m} \|\omega_j-\omega'_j\|_{2}^2  
= \|\Omega - \Omega' \|_{2}^{2}.
\end{equation*}

\end{proof}

\subsubsection{Proof of \Cref{lemma:very_useful_bound_on_exp}}
\label{sec:subexpas}

Denote $c  := 2(\alpha+2)$ and $\varphi(t):= e^{-(c-\alpha)t^{2}/2 +t}-te^{-ct^2/2}$.
Since $\varphi(0) = 1$, it is enough to prove that $\varphi$ is non-increasing on $\mathbb{R}_{+}$. Since $\varphi$ is $\mathcal{C}^{1}$, we study the sign of 
$\varphi'(t) = \Big( -(c-\alpha)t +1 + (ct^2 - 1)e^{-\alpha t^2/2 -t} \Big)e^{-(c-\alpha)t^2/2 +t}$
which is the sign of 
\(
\psi(t):= 1-(c-\alpha)t + (ct^2 - 1)e^{-\alpha t^2/2 -t}.
\)
To show that $\psi(t) \leq 0$ for each $t \in \mathbb{R}_{+}$ we study its sign on the intervals $(0,\frac{1}{c-\alpha})$ and $(\frac{1}{c-\alpha}, +\infty)$. As a preliminary we record that since $\alpha >0$, we have 
\begin{equation}\label{eq:RecordAlphaProp}
\sqrt{c}/(c-\alpha) = \sqrt{2(2+\alpha)}/(\alpha+4) \leq 1/2.
\end{equation}
\paragraph{Case of $t \in (0,\frac{1}{c-\alpha})$.} 
Since $1-(c-\alpha)t > 0$, we will get that $\psi(t) \leq 0$ if we show that
\begin{equation*}\label{eq:TmpPsiAlpha2}
 \frac{ 1-ct^2}{ 1-(c-\alpha)t}  \geq e^{\alpha t^2/2 +t}.
\end{equation*}
Since $t \in (0, 1/(c-\alpha))$, using~\eqref{eq:RecordAlphaProp} we have $\sqrt{c}t < (c-\alpha)t<1$ hence
\begin{equation*}\label{eq:alpha_1_simplification}
\frac{ 1-ct^2}{ 1-(c-\alpha)t}   = \frac{(  1-\sqrt{c}t)(1+\sqrt{c}t)}{ 1-(c-\alpha)t}  \geq 
1+\sqrt{c}t.
\end{equation*}
Denoting $h(u):= (1+\sqrt{c}u)e^{-\alpha u^2/2 -u}$, it is enough to prove that $h(t) \geq 1 = h(0)$, which will follow if we establish that $h'(u) = \Big(\sqrt{c}(1 -u -\alpha u^2 )-
(\alpha u +1)\Big)e^{-\alpha u^2/2 -u} \geq 0$ on $(0,1/(c-\alpha))$, or equivalently that the quadratic function 
$\sqrt{c}(1 -u -\alpha u^2 )-(\alpha u +1)$ takes non-negative values at $u=0$ and at $u=1/(c-\alpha)$. Indeed, its  evaluation at $u=0$ yields $\sqrt{c}-1 = \sqrt{2(2+\alpha)} - 1 > 0$, while its evaluation on $1/(c-\alpha) = 1/(\alpha+4)$ is lower bounded by $8/(\alpha+4)^2 >0$.

\paragraph{Case of $t \in (\frac{1}{c-\alpha}, +\infty)$.}
Since $1-(c-\alpha)t < 0$, we get  that $\psi(t) \leq 0$ as soon as
\begin{equation*}\label{eq:ineq_t_larger_alpha4}
\frac{ct^2 - 1}{(c-\alpha)t -1} \leq e^{\alpha t^2/2 +t}.
\end{equation*}
Since $t > 1/(c-\alpha)$, we have $(c-\alpha)t >1$, and using~\eqref{eq:RecordAlphaProp} we get
\begin{equation*}
\frac{\sqrt{c}t - 1}{(c-\alpha)t -1} \leq \frac{\sqrt{c}t}{(c-\alpha)t} \leq \frac{1}{2}.
\end{equation*}
Therefore
\begin{equation*}\label{eq:alpha_2_simplification}
\frac{ct^2 - 1}{(c-\alpha)t -1}   = \frac{(\sqrt{c}t - 1)(\sqrt{c}t + 1)}{(c-\alpha)t -1}  
\leq \frac{1}{2} + \frac{1}{2}\sqrt{c}t 
\leq \frac{e^{\alpha t^2/2 +t}}{2}+ \frac{\sqrt{c}t}{2}.
\end{equation*}
Denoting $g(u):= u e^{-\alpha u^2/2 -u}$, it is thus enough to show
that $\sqrt{c} g(t) \leq 1$ to conclude. Since $g'(u) = -(-1 +u +\alpha u^2 )e^{-\alpha u^2/2 -u}$,
the unique $u \geq 0$ such that $g'(u) = 0$ is $u_{\alpha} := 2/(\sqrt{4\alpha+1} + 1)$, which satisfies $\alpha u_{\alpha}^2+ u_{\alpha}-1=0 $, and
the maximum of  $g(u)$ on $\mathbb{R}_{+}$ is at $u=u_{\alpha}$. As a result
\begin{equation}\label{eq:g_alpha_12}
\sqrt{c}g(t) \leq  \sqrt{c}g(u_{\alpha}) 
= 2\frac{\sqrt{2(2+\alpha)}}{\sqrt{4\alpha+1} + 1}e^{-1/2-u_{\alpha}/2}.
\end{equation}
To conclude, we show that the r.h.s. is bounded by one, by distinguishing two cases.
On the one hand, if $\alpha \geq 2$, we have $2\sqrt{2(2+\alpha)}/(\sqrt{4\alpha+1} + 1) \leq \sqrt{2}$, and since $u_{\alpha} \geq 0$ the r.h.s. of~\eqref{eq:g_alpha_12} is upper bounded by $e^{-1/2}\sqrt{2} \leq 1$. On the other hand, if $\alpha \leq 2$, we 
have $u_{\alpha}/2 \geq 1/4$ and $2\sqrt{2(2+\alpha)}/(\sqrt{4\alpha+1} + 1) \leq 2$ (the latter inequality holds for any 
$\alpha \geq 0$), so that the r.h.s. of~\eqref{eq:g_alpha_12} is upper bounded by $2e^{-3/4} \leq 1$. In both cases, we get as claimed that $\sqrt{c}g_{\alpha}(t) \leq 1$. \qed

\newpage
\bibliographystyle{alpha}
\bibliography{references}
\end{document}